
\documentclass{article}

\usepackage{microtype}
\usepackage{graphicx}
\usepackage{subfigure}
\usepackage{booktabs} 

\usepackage{hyperref}



\usepackage[accepted]{icml2023}

\usepackage{amsmath}
\usepackage{amssymb}
\usepackage{mathtools}
\usepackage{amsthm}
\usepackage{thm-restate}
\usepackage{enumitem}

\usepackage[capitalize,noabbrev]{cleveref}

\theoremstyle{plain}
\newtheorem{theorem}{Theorem}[section]

\newtheorem{lemma}[theorem]{Lemma}

\newtheorem{claim}{Claim}[section]
\theoremstyle{definition}
\newtheorem{definition}[theorem]{Definition}
\newtheorem{assumption}[theorem]{Assumption}
\theoremstyle{remark}


\usepackage{amsmath,amsfonts,bm}

\newcommand{\loss}{\mathcal{L}_S}
\newcommand{\lossmix}{\mathcal{L}_S^\mathrm{mix}}
\newcommand{\lossmask}{\mathcal{L}_S^\mathrm{mask}}
\newcommand{\w}[1][\kappa]{\vw^*(#1)}
\newcommand{\wmix}[1][\kappa]{\vw^*_\mathrm{mix}(n,#1)}
\newcommand{\wmask}[1][\kappa]{\vw^*_\mathrm{mask}(n,#1)}
\newcommand{\cosim}{\mathrm{sim}}
\newcommand{\support}{\mathrm{support}}
\newcommand{\veta}{{\bm{\eta}}}
\newcommand{\vDelta}{{\bm{\Delta}}}









\def\eqref#1{equation~\ref{#1}}
\def\Eqref#1{Equation~\ref{#1}}








\def\1{\bm{1}}

\def\vzero{{\bm{0}}}
\def\vone{{\bm{1}}}
\def\vmu{{\bm{\mu}}}
\def\vtheta{{\bm{\theta}}}

\def\ve{{\bm{e}}}

\def\vu{{\bm{u}}}
\def\vv{{\bm{v}}}
\def\vw{{\bm{w}}}
\def\vx{{\bm{x}}}

\def\vz{{\bm{z}}}



\def\mI{{\bm{I}}}

\def\mM{{\bm{M}}}

\def\mSigma{{\bm{\Sigma}}}

\DeclareMathAlphabet{\mathsfit}{\encodingdefault}{\sfdefault}{m}{sl}
\SetMathAlphabet{\mathsfit}{bold}{\encodingdefault}{\sfdefault}{bx}{n}








\def\rz{{\textnormal{z}}}


\def\rvx{{\mathbf{x}}}

\def\rvz{{\mathbf{z}}}









\newcommand{\R}{\mathbb{R}}




\DeclareMathOperator*{\argmin}{arg\,min}

\DeclareMathOperator{\Tr}{Tr}

\allowdisplaybreaks
\usepackage[textsize=tiny]{todonotes}

\icmltitlerunning{Provable Benefit of Mixup for Finding Optimal Decision Boundaries}

\begin{document}

\twocolumn[
\icmltitle{Provable Benefit of Mixup for Finding Optimal Decision Boundaries}



\icmlsetsymbol{equal}{*}

\begin{icmlauthorlist}
\icmlauthor{Junsoo Oh}{KAISTAI}
\icmlauthor{Chulhee Yun}{KAISTAI}
\end{icmlauthorlist}

\icmlaffiliation{KAISTAI}{Kim Jaechul Graduate School of AI, KAIST}

\icmlcorrespondingauthor{Chulhee Yun}{chulhee.yun@kaist.ac.kr}

\icmlkeywords{Mixup, optimal decision boundary, sample complexity, theory, generalization}

\vskip 0.3in
]



\printAffiliationsAndNotice{}  

\begin{abstract}
We investigate how pair-wise data augmentation techniques like Mixup affect the sample complexity of finding optimal decision boundaries in a binary linear classification problem. For a family of data distributions with a separability constant~$\kappa$, we analyze how well the optimal classifier in terms of training loss aligns with the optimal one in test accuracy (i.e., Bayes optimal classifier). For vanilla training without augmentation, we uncover an interesting phenomenon named the \emph{curse of separability}. As we increase $\kappa$ to make the data distribution more separable, the sample complexity of vanilla training \emph{increases exponentially} in $\kappa$; perhaps surprisingly, the task of finding optimal decision boundaries becomes harder for more separable distributions. For Mixup training, we show that Mixup mitigates this problem by \emph{significantly reducing} the sample complexity. To this end, we develop new concentration results applicable to $n^2$ pair-wise augmented data points constructed from $n$ independent data, by carefully dealing with dependencies between overlapping pairs. Lastly, we study other masking-based Mixup-style techniques and show that they can \emph{distort} the training loss and make its minimizer converge to a suboptimal classifier in terms of test accuracy.
\end{abstract}
\section{Introduction}
Mixup~\citep{zhang2018mixup} is a modern technique that augments training data with a random convex combination of a pair of training points and labels. \citet{zhang2018mixup} empirically show that this simple technique has various benefits, such as better generalization, robustness to label corruption and adversarial attack, and stabilization of generative adversarial network training. Inspired by the success of Mixup, several variants of Mixup have appeared in the literature; e.g., Manifold Mixup~\citep{verma2019manifold}, Cutmix~\citep{yun2019cutmix}, Puzzle Mix~\citep{kim2020puzzle}, and Co-Mixup~\citep{kim2021co}. The success of Mixup-style training schemes is not only limited to improved generalization performance in supervised learning; they are known to be helpful in other aspects including model calibration~\citep{thulasidasan2019mixup}, semi-supervised learning~\citep{berthelot2019mixmatch, sohn2020fixmatch}, contrastive learning~\citep{kalantidis2020hard, verma2021towards}, and natural language processing~\citep{guo2019augmenting, sun2020mixup}.

Although Mixup and its variants demonstrate surprising empirical benefits, a concrete theoretical understanding of such benefits still remains mysterious. As a result, a recent line of research~\citep{carratino2020mixup, zhang2020does, zhang2022and, chidambaram2021towards, chidambaram2022provably, parkunified} trying to theoretically understand Mixup and its variants has appeared in the literature. Most results, including this paper, compare the training procedure with Mixup against solving the vanilla empirical risk minimization (ERM) problem without any augmentation. For the remainder of this paper, we refer to the training loss without augmentation as \emph{ERM loss}, and the training loss with Mixup as \emph{Mixup loss}.

The parallel works of \citet{carratino2020mixup} and \citet{zhang2020does} represent the Mixup loss as the ERM loss equipped with additive data-dependent regularizers which penalize the gradient and Hessian of a model with respect to data. Using these regularizers, \citet{zhang2020does} show that Mixup training can yield smaller Rademacher complexity which allows for a smaller uniform generalization bound. Also, \citet{parkunified} extend these results to several Mixup variants such as Cutmix~\cite{yun2019cutmix}. However, the benefits of Mixup-style schemes from the Rademacher complexity view shown by \citet{zhang2020does} and \citet{parkunified} stand out only if the intrinsic dimension of data is small.

\citet{chidambaram2021towards} try to understand Mixup by investigating when Mixup training minimizes the ERM loss. The authors show failure cases of Mixup and also sufficient conditions for its success. \citet{chidambaram2021towards} also show that ERM loss and Mixup loss can have the same optimal classifier by considering linear model and special types of data distribution. Detailed comparisons to our work are provided in Section~\ref{section:setting}.

Another recent work~\citep{chidambaram2022provably} shows that Midpoint Mixup~\citep{guo2021midpoint} can outperform ERM training in terms of feature learning performance in a multi-view data framework proposed by \citet{allen2020towards}. However, their analysis is limited to an extreme case of Mixup  using only the midpoint of two data points. 

\subsection{Our Contributions}

In this paper, we study the problem of finding optimal decision boundaries in a binary linear classification problem with logistic loss (i.e., logistic regression). We consider a data generating distribution whose positive and negative examples come from two symmetric Gaussian distributions. The two distributions have the same covariance matrices inversely proportional to a \emph{separability constant} $\kappa$, which controls the degree to which the two classes are separable.

Our data generating distribution has the advantage that the \emph{Bayes optimal classifier} (i.e., the classifier with the best test accuracy) is given as a closed-form linear classifier. This motivates us to study how well the optimal classifier in terms of training loss is aligned with the Bayes optimal classifier, and how many data points are required to make sure that the two classifiers are close enough. Therefore, we focus on the sample complexity of vanilla ERM training and Mixup-style training for achieving close-to-one cosine similarity between the two classifiers.

Our results demonstrate that Mixup provably requires a much smaller number of samples to achieve the same cosine similarity compared to vanilla ERM training. Our contributions can be summarized as the following:
\begin{itemize}

    \item In Section~\ref{section:ERM}, we investigate the sample complexity of (vanilla) ERM under our data distribution. 
    Theorem~\ref{thm:expected_loss_ERM} shows that the expected value of ERM training loss has a unique optimal solution aligned with the Bayes optimal classifier. 
    We then prove that the sample complexity for making the ERM loss optimum closely aligned with the Bayes optimal classifier grows \emph{exponentially} with the separability constant $\kappa$; we show that the exponential growth is both sufficient (Theorem~\ref{thm:ERM_sufficient}) and necessary (Theorem~\ref{thm:ERM_necessary}). Interestingly, these results demonstrate that \emph{ERM suffers the curse of separability}, where the sample complexity increases as the data distribution becomes more separable.
    
    \item In Section~\ref{section:mix}, we study a unified class of Mixup-style training scheme that includes Mixup~\citep{zhang2018mixup} with various choices of hyperparameters. Theorem~\ref{thm:expected_loss_mix} shows that the expected value of Mixup loss has a unique optimal solution still aligned with the Bayes optimal classifier, which indicates that Mixup-style augmentations do not ``distort'' the training loss in a misleading way, at least in our setting. 
    In Theorem~\ref{thm:mixup_convergence}, we show that the sample complexity for getting a near-Bayes-optimal Mixup loss solution grows only quadratically in $\kappa$. This result indicates that \emph{Mixup provably mitigates the curse of separability}.
    
    \item  In Section~\ref{section:mask}, we analyze how recent masking-based variants of Mixup such as CutMix~\citep{yun2019cutmix} behave in our setting. Unfortunately, Theorem~\ref{thm:mask_minimizer} shows that masking-based augmentation can \emph{distort} the training loss and the expected value of masking-based Mixup loss can have optimal solutions far away from the Bayes optimal classifier. Ironically, Theorem~\ref{thm:mask_convergence} indicates the sample complexity for approaching such a ``wrong'' minimizer does not grow with separability. These results show that, at least in our setting, masking-based techniques have small sample complexity but may not converge to the Bayes optimal classifier.
\end{itemize}
\vspace*{-10pt}
\section{Problem Setting and Notation}\label{section:setting}
In this section, we introduce our formal problem setting and notation. 
We consider a binary linear classification problem with training dataset $S = \{(\vx_i,y_i)\}_{i=1}^n$ where $\vx_i\in \mathbb{R}^d$ are data points and $y_i\in \{0,1\}$ are labels.
The vanilla empirical risk minimization (ERM) loss under training set $S$ can be formulated as follows:
\begin{equation}
\label{eq:1}
   \loss(\vw) :=  \frac{1}{n} \sum_{i=1}^n y_i l(\vw^\top \vx_i) + (1-y_i) l(-\vw^\top \vx_i), 
\end{equation}
where $l(\cdot)$ is the logistic loss $l(z) := \log(1+\exp(-z))$.

\paragraph{Data Generating Distribution.} We mainly focus on the following family of data generating distributions, where we obtain positive and negative examples from two symmetric Gaussian distributions. More precisely, for a given constant $\kappa\in (0, \infty)$, we define a data generating distribution $\mathcal{D}_\kappa$ as the following: we say that $(\vx,y) \sim \mathcal{D}_\kappa$ when $y$ is uniformly drawn from $\{0, 1\}$ and
\begin{align*}
    \vx \mid {y=1} &\sim N( \vmu , \kappa^{-1} \mSigma ), \\
    \vx \mid {y=0} &\sim N(-\vmu , \kappa^{-1}\mSigma ).
\end{align*}
where $\vmu \in \mathbb{R}^d$ is nonzero, $\mSigma \in \mathbb{R}^{d \times d}$ is positive definite, and $\lVert \vmu \rVert^2 = \lVert \mSigma \rVert$. Here, $\lVert \cdot \rVert$ denotes the $\ell_2$ norm for a vector and the spectral norm for a matrix. We further define a data generating distribution $\mathcal{D}_\infty$ as the limiting behavior of $\mathcal{D}_\kappa$ as $\kappa \to \infty$; i.e., $(\vx,y) \sim \mathcal{D}_\infty$ implies $y$ follows uniform distribution on $\{0,1\}$ and $\vx = (2y-1)\vmu$. 

\paragraph{Separability Constant $\bm{\kappa}$.}
Our data generating distribution $\mathcal{D}_\kappa$ with larger $\kappa$ is more \emph{separable} in the sense that the two Gaussian distributions overlap less and they are more likely to generate well-separated training data. This separability constant $\kappa$ is of great importance in our analysis, as we will demonstrate the curse of separability phenomenon based on the dependency of sample complexity on $\kappa$.

\paragraph{Bayes Optimal Classifier.}
The Bayes optimal classifier is a classifier achieving the lowest possible test error on the data population. In other words, any other classifier cannot outperform the Bayes optimal classifier in terms of test accuracy. 
For the specific form of data generating distribution $\mathcal{D}_\kappa$ that we consider in this paper, it is well-known that for any $\kappa \in (0,\infty)$, decision boundary of the Bayes optimal classifier for $\mathcal{D}_\kappa$ is a hyperplane with normal vector $\mSigma^{-1} \vmu$. For completeness, we provide proof for this in Appendix~\ref{proof:Bayes_optimal}. The fact that we have a closed-form solution of the optimal decision boundary motivates us to study finding a decision boundary close enough to that optimal one. Since we are mainly interested in linear models, for our ``closeness'' metric, we use cosine similarity between the two linear decision boundaries, or equivalently, the two normal vectors.

\paragraph{Comparison to the Setting of \citet{chidambaram2021towards}.}
\citet{chidambaram2021towards} also consider the linear model and a special type of data generating distributions to compare training ERM loss vs.\ Mixup loss. Our setting is different from theirs in several aspects. First, there is a difference in data generating distributions; we draw positive and negative data from two symmetric Gaussian distributions while \citet{chidambaram2021towards} consider the distribution that obtains data from a single spherical Gaussian distribution. \citet{chidambaram2021towards} show that training ERM loss and Mixup loss can lead to the same solution by considering the highly overparametrized regime. On the other hand, we consider the underparametrized regime and show the gap between the two training methods. In addition, \citet{chidambaram2021towards} only consider the optimal classifier in terms of training loss while we investigate the number of required training data to get close to the optimal solution of population loss.

\paragraph{Notation.}
We denote taking expectation on training data $S = \{(\vx_i, y_i)\}_{i=1}^n \overset{\mathrm{i.i.d}}{\sim} \mathcal{D}_\kappa$ and any other randomness, if any, 
as $\mathbb{E}_\kappa$ for each $\kappa \in (0, \infty]$. We use $[k]$ for the index set $\{1, 2, \dots, k\}$ for each $k \in \mathbb{N}$. For two nonzero vectors $\vu,\vv \in \mathbb{R}^d$, let us denote their cosine similarity as $\cosim(\vu,\vv) = \frac{\vu^\top \vv}{\lVert \vu\rVert\lVert \vv\rVert}$ and the angle between them as $\angle(\vu,\vv) = \cos^{-1}(\cosim(\vu,\vv))$. We use $\support(\cdot)$ to denote the support of a probability distribution or a random variable and $\vone_{(\cdot)}$ denotes the $0$-$1$ indicator function. We also use $\odot$ to denote element-wise multiplication between two vectors or two matrices having the same size. We use $\mathcal{O}(\cdot), \Theta(\cdot), \Omega(\cdot)$ to represent asymptotic behavior as $\kappa$ grows and to hide terms related to $d, \vmu, \mSigma$. 
In addition, whenever we express the $\kappa$ dependency in $\mathcal{O}(\cdot), \Theta(\cdot), \Omega(\cdot)$ notation, we only write the most dominant factor; for example, we say $\kappa^m\mathrm{polylog}(\kappa) = \Theta(\kappa^m)$ and $\exp\left(c \kappa^m\right) \mathrm{poly}(\kappa) = \exp(\Theta(\kappa^m))$.
\section{ERM Suffers the Curse of Separability}\label{section:ERM}
In this section, we investigate a solution of ERM loss in \Eqref{eq:1}.
The ERM loss function $\loss(\vw)$ is a stochastic function that depends on the random samples in the training set $S = \{(\vx_i, y_i)\}_{i=1}^n \overset{\mathrm{i.i.d}}{\sim} \mathcal{D}_\kappa$. 
We will first characterize the minimizer of the expected value $\mathbb{E}_\kappa[\loss(\vw)]$ of the ERM loss\footnote{The expected ERM loss is equal to the population loss, which is independent of $n$. However, for Mixup loss, its expected value becomes \emph{dependent} on $n$, as we will see later.} and show that the unique optimum of the expected ERM loss has the same direction as the Bayes optimal classifier. Next, we study the sample complexity for the ERM loss optimum to align closely enough to the Bayes optimal classifier and conclude that ERM without Mixup suffers the \emph{curse of separability}.

Our first theorem below analyzes the optimum of expected ERM loss $\mathbb{E}_\kappa[\loss(\vw)]$.
\begin{theorem}\label{thm:expected_loss_ERM}
For any $\kappa \in (0,\infty)$, the expectation of ERM loss $\mathbb{E}_\kappa[\loss(\vw)]$ has a unique minimizer $\w$. In addition, its direction is the same as the Bayes optimal solution $\mSigma^{-1} \vmu$.
\end{theorem}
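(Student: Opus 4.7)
The plan is to reduce the expected ERM loss to a single Gaussian expectation, derive its gradient via Gaussian integration by parts (Stein's lemma), and then combine strict convexity with a one-dimensional existence argument to pin down the unique minimizer in the direction $\mSigma^{-1}\vmu$.

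First, using the symmetry $(\vx,y) \stackrel{d}{=} (-\vx, 1-y)$ of $\mathcal{D}_\kappa$ together with the i.i.d.\ structure of $S$, each summand of $\loss$ has the same expectation, so
\[
F(\vw) := \mathbb{E}_\kappa[\loss(\vw)] = \mathbb{E}_{\vx \sim N(\vmu,\,\kappa^{-1}\mSigma)}[l(\vw^\top \vx)].
\]
Applying Stein's lemma for Gaussian vectors with $g(\vx) = l'(\vw^\top \vx)$ (so $\nabla g(\vx) = l''(\vw^\top\vx)\vw$) yields $\mathbb{E}[(\vx - \vmu)l'(\vw^\top\vx)] = \kappa^{-1}\mSigma\vw\,\mathbb{E}[l''(\vw^\top\vx)]$, and therefore
\[
\nabla F(\vw) = \mathbb{E}[l'(\vw^\top\vx)]\,\vmu + \kappa^{-1}\mathbb{E}[l''(\vw^\top\vx)]\,\mSigma\vw.
\]
Setting $\nabla F(\vw) = 0$ forces every stationary point to be a scalar multiple of $\mSigma^{-1}\vmu$, with strictly positive scalar $-\kappa \mathbb{E}[l'(\vw^\top\vx)]/\mathbb{E}[l''(\vw^\top\vx)] > 0$ since $l'<0$ and $l''>0$ pointwise; in particular $\vw = 0$ is not stationary since $\nabla F(0) = -\tfrac12 \vmu \neq 0$.

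Second, I would establish strict convexity by computing $\nabla^2 F(\vw) = \mathbb{E}[l''(\vw^\top\vx)\,\vx\vx^\top]$ and noting that for any nonzero $\vv$,
\[
\vv^\top \nabla^2 F(\vw)\vv = \mathbb{E}[l''(\vw^\top\vx)(\vv^\top\vx)^2] > 0,
\]
since $l''>0$ everywhere and the non-degenerate Gaussian $\vx$ has positive density on $\mathbb{R}^d$ (using $\mSigma \succ 0$). Hence $F$ admits at most one minimizer.

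Third, for existence, I would restrict $F$ to the ray $\vw(c) = c\mSigma^{-1}\vmu$ for $c \ge 0$: the restriction is smooth and strictly convex in $c$, its derivative at $c=0$ equals $-\tfrac12\vmu^\top\mSigma^{-1}\vmu < 0$, and it is coercive as $c\to\infty$ because $l(z) \ge \max(-z,0)$ and the left tail of the Gaussian $\vw(c)^\top\vx$ contributes an expectation linear in $c$. This yields a unique minimizer $c^\star > 0$ of the 1D restriction, which by the gradient characterization above is a critical point of the full $F$, and by strict convexity is its unique global minimizer $\w = c^\star\mSigma^{-1}\vmu$. The main technical obstacle is cleanly discharging the analytic side conditions (interchange of differentiation and expectation, validity of Stein's lemma, dominated convergence for the tail bound), but these go through routinely because $l'$ is bounded in $(-1,0)$ and $l''$ decays exponentially, so all dominating integrands are integrable against Gaussian measures.
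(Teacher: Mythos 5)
Your proposal is correct, but it reaches the directional characterization by a genuinely different route than the paper. The paper reduces $\mathbb{E}_\kappa[\loss(\vw)]$ to $\mathbb{E}[l(\kappa^{-1/2}(\vw^\top\mSigma\vw)^{1/2}Z+\vw^\top\vmu)]$ and then argues via a stochastic-comparison lemma (for fixed mean, a Gaussian with smaller variance has smaller expected logistic loss), so the minimizer must solve $\min\{\vw^\top\mSigma\vw : \vw^\top\vmu=C\}$, whose solution is a rescaling of $\mSigma^{-1}\vmu$; the sign of the scalar is then obtained by a symmetry comparison with $-\w$, and strict positivity is deferred to the proof of Lemma~\ref{lemma:ERM_norm}. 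You instead compute the gradient in closed form via Stein's lemma, $\nabla F(\vw)=\mathbb{E}[l'(\vw^\top\vx)]\vmu+\kappa^{-1}\mathbb{E}[l''(\vw^\top\vx)]\mSigma\vw$, which delivers the direction, the strict positivity of the scalar ($l'<0<l''$ pointwise), and the nontriviality of the minimizer all in one stroke --- arguably cleaner for this particular theorem, though the paper's variance-comparison lemma is the one that generalizes directly to the Mixup loss, which is a mixture of several such Gaussian terms. Your existence argument is also lighter: the paper proves global coercivity of $F$ on $\mathbb{R}^d$ (via $l(z)\ge -z/2$ and compactness of the unit sphere), whereas you only need coercivity along the ray $c\mSigma^{-1}\vmu$ and then lift the one-dimensional critical point to a full critical point. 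That lifting step deserves one explicit line: $h'(c)=0$ only says $\nabla F(c\mSigma^{-1}\vmu)\perp\mSigma^{-1}\vmu$, but your gradient formula shows that on this ray $\nabla F$ is always a scalar multiple of $\vmu$, and since $\vmu^\top\mSigma^{-1}\vmu>0$ the scalar must vanish, so the full gradient is zero. With that sentence added, the argument is complete; the analytic side conditions (differentiation under the integral, Stein's lemma) are indeed routine here since $|l'|<1$ and $|l''|\le 1/4$.
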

\vspace*{-10pt}
\begin{proof}[Proof Sketch]
We will only sketch main proof ideas and full proof can be found in Appendix~\ref{proof:expected_loss_ERM}. We can rewrite the expected ERM loss as
\begin{equation*}
\mathbb{E}_\kappa [\loss (\vw)] = \mathbb{E}\left[ l\left( \kappa^{-1/2} \left(\vw^\top \mSigma \vw\right)^{1/2} Z + \vw^\top \vmu \right)\right],
\end{equation*}
where $Z \sim N(0,1)$.
The following Lemma~\ref{lemma:logistic} implies that for a fixed value of $\vw^\top\vmu$, a smaller value of $\vw^\top \mSigma \vw$ induces a smaller $\mathbb{E}_\kappa[\loss(\vw)]$.
\begin{restatable}{lemma}{logistic}\label{lemma:logistic}
Let $X_1 \sim N(m, \sigma_1^2), X_2 \sim N(m, \sigma_2^2)$, $m \in \mathbb{R}$ and $\sigma_1 > \sigma_2>0$. Then, $\mathbb{E}[l(X_1)] > \mathbb{E}[l(X_2)]$.
\end{restatable}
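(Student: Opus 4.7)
The plan is to exploit the strict convexity of the logistic loss $l$ together with a Gaussian additive coupling. The key observation is that one can realize $X_1$ in distribution as $X_2 + Z$, where $Z \sim N(0, \sigma_1^2 - \sigma_2^2)$ is drawn independently of $X_2$; since $\sigma_1 > \sigma_2$, the variance of $Z$ is strictly positive so $Z$ is non-degenerate. Under this coupling, conditioning on $X_2$ gives
\[
\mathbb{E}[l(X_1)] = \mathbb{E}\bigl[\mathbb{E}[l(X_2 + Z) \mid X_2]\bigr],
\]
so the problem reduces to comparing the inner conditional expectation against $l(X_2)$ pointwise in $X_2$.

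Next I would verify strict convexity of $l$ on $\mathbb{R}$ by a direct derivative computation: $l''(z) = \frac{e^{z}}{(1+e^{z})^2} > 0$ for every $z$. Applying the strict form of Jensen's inequality to the non-degenerate Gaussian $Z$, for each fixed realization $x$ of $X_2$ one obtains
\[
\mathbb{E}[l(x + Z)] > l(x + \mathbb{E}[Z]) = l(x).
\]
This pointwise strict inequality is the crux of the argument; I would make sure to note that finiteness of both sides follows from the elementary bound $0 \le l(z) \le \max(0,-z) + \log 2$ together with Gaussian integrability of $|Z|$, so there are no issues in applying Jensen.

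Taking expectation over $X_2$ in the pointwise bound and invoking the tower property yields
\[
\mathbb{E}[l(X_1)] = \mathbb{E}\bigl[\mathbb{E}[l(X_2+Z) \mid X_2]\bigr] > \mathbb{E}[l(X_2)],
\]
which is exactly the claim. The only subtle step is preserving strictness when taking the outer expectation: since the inner strict inequality holds for \emph{every} realization $x$ of $X_2$ (not merely almost surely after averaging), and since $X_2$ has full-support Gaussian law, the gap $\mathbb{E}[l(x+Z)] - l(x)$ is a strictly positive continuous function of $x$ whose $X_2$-expectation is therefore strictly positive. Apart from this bookkeeping, I anticipate no real obstacle; the proof is short and the main ``work'' is choosing the right coupling so that strict Jensen becomes directly applicable.
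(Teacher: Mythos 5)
Your proof is correct but takes a genuinely different route from the paper's. The paper sets $f(\sigma):=\mathbb{E}_{Z\sim N(0,1)}[l(m+\sigma Z)]$, invokes its Leibniz-rule lemma to differentiate under the expectation, and shows $f'(\sigma)>0$ by pairing the contributions from $z>0$ and $z<0$ and using the elementary pointwise inequality $\frac{z}{1+e^{m+\sigma z}}<\frac{z}{1+e^{m-\sigma z}}$ for $z>0$. You instead exploit the additive stability of the Gaussian family: realize $X_1\overset{d}{=}X_2+Z$ with $Z\sim N(0,\sigma_1^2-\sigma_2^2)$ independent of $X_2$, apply strict Jensen (justified because $l''(z)=e^z/(1+e^z)^2>0$ everywhere and $Z$ is non-degenerate) to obtain $\mathbb{E}[l(x+Z)]>l(x)$ for every $x$, and then integrate against the law of $X_2$. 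Your coupling argument is somewhat more portable---it works verbatim for any strictly convex, Gaussian-integrable loss and sidesteps the need to justify differentiating under the integral sign---while the paper's monotonicity computation produces an explicit formula for $\partial_\sigma f$ and dovetails with the Leibniz-based calculations it reuses in later proofs. One minor remark: to pass from the everywhere-strict pointwise inequality to a strict inequality in expectation, you need only that the gap $g(x):=\mathbb{E}[l(x+Z)]-l(x)$ is measurable, non-negative, and positive on a set of positive $X_2$-measure; your appeal to continuity of $g$ and full support of the Gaussian is correct but a bit heavier than necessary.
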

\vspace*{-10pt}
Lemma~\ref{lemma:opt} below concludes that the unique minimizer should be parallel to $\mSigma^{-1} \vmu$.
\begin{restatable}{lemma}{opt}\label{lemma:opt}
For any constant $C$, a unique solution of $\min_{\vw \in \mathbb{R}^d, \vw^\top \vmu = C} \frac{1}{2} \vw^\top \mSigma \vw$ is a rescaling of $\mSigma^{-1} \vmu$.
\end{restatable}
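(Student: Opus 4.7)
The plan is to reduce this constrained quadratic minimization to a weighted Cauchy--Schwarz inequality that simultaneously yields the optimal value, the minimizer, and its uniqueness. Since $\mSigma$ is positive definite, it induces an inner product $\langle \vu, \vv\rangle_\mSigma := \vu^\top \mSigma \vv$ on $\mathbb{R}^d$ with associated norm $\lVert \vu\rVert_\mSigma^2 = \vu^\top \mSigma \vu$. The key observation is that the linear constraint can be rewritten in this inner product via the identity $\vw^\top \vmu = \vw^\top \mSigma(\mSigma^{-1}\vmu) = \langle \vw, \mSigma^{-1}\vmu\rangle_\mSigma$, which turns the feasibility condition into a prescribed inner product with the fixed vector $\mSigma^{-1}\vmu$.

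Given this reformulation, the main steps go as follows. First, apply Cauchy--Schwarz in $\langle\cdot,\cdot\rangle_\mSigma$ to obtain $C^2 = \langle \vw, \mSigma^{-1}\vmu\rangle_\mSigma^2 \le \lVert\vw\rVert_\mSigma^2 \cdot \lVert \mSigma^{-1}\vmu\rVert_\mSigma^2 = (\vw^\top\mSigma\vw)(\vmu^\top\mSigma^{-1}\vmu)$. Because $\vmu \neq 0$ and $\mSigma^{-1}\succ 0$, the quantity $\vmu^\top\mSigma^{-1}\vmu$ is strictly positive, so rearranging yields the universal lower bound
\[
\tfrac{1}{2}\vw^\top\mSigma\vw \;\geq\; \frac{C^2}{2\,\vmu^\top\mSigma^{-1}\vmu}
\]
for every feasible $\vw$. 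The next step is to exhibit the candidate $\vw^* := \tfrac{C}{\vmu^\top\mSigma^{-1}\vmu}\,\mSigma^{-1}\vmu$, verify by direct substitution that $(\vw^*)^\top\vmu = C$ and that $\vw^*$ attains the lower bound, and observe that $\vw^*$ is, by construction, a rescaling of $\mSigma^{-1}\vmu$.

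Uniqueness will follow from the equality condition of Cauchy--Schwarz in the $\langle\cdot,\cdot\rangle_\mSigma$ inner product: equality forces $\vw$ to be a scalar multiple of $\mSigma^{-1}\vmu$, and the linear constraint $\vw^\top\vmu = C$ then pins down that scalar uniquely. The degenerate case $C=0$ collapses to $\vw=0$, which is still the (trivial) rescaling of $\mSigma^{-1}\vmu$.

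There is no genuine obstacle in this argument, since strong convexity of $\vw\mapsto \tfrac12\vw^\top\mSigma\vw$ combined with a nonempty affine feasible set already guarantees a unique minimizer a priori; the substantive content is identifying that minimizer with a rescaling of $\mSigma^{-1}\vmu$. As a cross-check I would verify the result via Lagrange multipliers: the Lagrangian $L(\vw,\lambda) = \tfrac{1}{2}\vw^\top\mSigma\vw - \lambda(\vw^\top\vmu - C)$ has stationarity condition $\mSigma\vw = \lambda\vmu$, giving $\vw = \lambda\mSigma^{-1}\vmu$, and the primal constraint then forces $\lambda = C/(\vmu^\top\mSigma^{-1}\vmu)$, matching $\vw^*$ exactly.
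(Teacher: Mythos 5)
Your proof is correct, but it takes a genuinely different route from the paper's. The paper establishes uniqueness by observing that $f(\vw) = \frac12 \vw^\top \mSigma \vw$ is strictly convex (since $\nabla^2 f = \mSigma \succ 0$, and strict convexity survives restriction to the affine slice $\{\vw : \vw^\top\vmu = C\}$), and then certifies the candidate $\bar{\vw} = \frac{C}{\vmu^\top\mSigma^{-1}\vmu}\mSigma^{-1}\vmu$ directly via the first-order convexity inequality: $f(\vw) - f(\bar{\vw}) \geq \nabla f(\bar{\vw})^\top(\vw - \bar{\vw}) = \frac{C}{\vmu^\top\mSigma^{-1}\vmu}\,\vmu^\top(\vw-\bar{\vw}) = 0$ for every feasible $\vw$. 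You instead recast the constraint as a fixed inner product $\langle \vw, \mSigma^{-1}\vmu\rangle_\mSigma = C$ in the $\mSigma$-weighted geometry and invoke Cauchy--Schwarz, which delivers the lower bound, the minimizer, and uniqueness (via the equality case) in a single stroke. Both arguments are short and rigorous; yours is more geometric and additionally hands you the closed-form optimal value $\frac{C^2}{2\,\vmu^\top\mSigma^{-1}\vmu}$ for free, while the paper's convexity-plus-stationarity template would extend more readily to non-quadratic strictly convex objectives. Your Lagrange-multiplier cross-check is essentially the same stationarity condition the paper encodes implicitly by choosing $\bar{\vw}$ so that $\nabla f(\bar{\vw}) = \mSigma\bar{\vw}$ is parallel to $\vmu$.
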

\vspace*{-10pt}
Any remaining details can be found in Appendix~\ref{proof:expected_loss_ERM}.
\end{proof}
\vspace*{-10pt}
Our Theorem~\ref{thm:expected_loss_ERM} implies that the minimizer $\w$ of expected ERM loss $\mathbb{E}_\kappa [\loss (\vw)]$ induce the Bayes optimal classifier. 
This may look obvious to some readers because $\mathbb{E}_\kappa [\loss (\vw)]$ is in fact equal to $\mathbb{E}_\kappa[y l(\vw^\top \vx) + (1-y) l(-\vw^\top \vx)]$, the population loss. However, notice that the optimal classifier in terms of the population loss is dependent on the loss $l$, and hence is not always equal to the Bayes optimal classifier.
Also, we will see theorems similar to Theorem~\ref{thm:expected_loss_ERM} in later sections; in particular, Theorem~\ref{thm:mask_minimizer} reveals how masking-based augmentation can \emph{distort} the optimal classifier of the corresponding expected training loss; thus, characterizing such optima is of importance.

Notice that the minimizer characterized in Theorem~\ref{thm:expected_loss_ERM} is for \emph{expected} ERM loss $\mathbb{E}_\kappa[\loss(\vw)]$. Since $\mathcal{D}_\kappa$ is not known to us and we only observe training data $ \{ (\vx_i, y_i)\}_{i=1}^n \overset{\mathrm{i.i.d}}{\sim} \mathcal{D}_\kappa $, we can only hope to get close to the minimizer by optimizing the training loss $\loss(\vw)$, 
and sufficiently many data samples are required to obtain a ``close enough'' one.
Thus, a natural question arises:
\begin{center}
    \vspace{-5pt}
    \emph{How many data points are required to find \\
    a near Bayes optimal classifier using the ERM loss?}\\
\end{center}
We present two theorems that answer the question above. The first one (Theorem~\ref{thm:ERM_sufficient}) shows that the number of samples growing exponentially with $\kappa^2$ is \emph{sufficient}, and the next one (Theorem~\ref{thm:ERM_necessary}) proves that this exponential growth with $\kappa$ is \emph{necessary}.
In other words, as the data distribution becomes more separable, the sample complexity for getting a Bayes optimal classifier grows exponentially. 

\begin{theorem}\label{thm:ERM_sufficient}
Let $\epsilon, \delta \in (0,1)$. Suppose the training set $S = \{(\vx_i, y_i)\}_{i=1}^n\overset{\mathrm{i.i.d}}{\sim} \mathcal{D}_\kappa$ with large enough $\kappa \in (0,\infty)$, and
\begin{equation*}
n = \frac{\exp(\Omega(\kappa^2)) }{\epsilon^4} \left (1+ \log \frac{1}{\epsilon} + \log \frac{1}{\delta} \right ).
\end{equation*}
Then, with probability at least $1-\delta$, the unique minimizer $\hat{\vw}_S^*$ of $\loss(\vw)$ exists and  $\cosim (\hat{\vw}_S^*, \mSigma^{-1}\vmu) \geq 1- \epsilon$.
\end{theorem}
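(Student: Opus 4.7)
The plan is to combine Theorem~\ref{thm:expected_loss_ERM} with a classical \emph{strong-convexity plus uniform-concentration} argument. Since the population loss $L(\vw) := \mathbb{E}_\kappa[\loss(\vw)]$ has a unique minimizer $\w$ proportional to $\mSigma^{-1}\vmu$, it suffices to show the empirical minimizer $\hat{\vw}_S^*$ is Euclidean-close to $\w$; the cosine similarity bound then follows from the standard $1 - \cosim(\vu, \vv) \leq \|\vu - \vv\|^2 / \|\vv\|^2$ for $\vu$ close enough to $\vv$. Converting parameter closeness to cosine similarity already costs one square root in $\epsilon$ (we need $\|\hat{\vw}_S^* - \w\| \lesssim \|\w\|\sqrt{\epsilon}$), which partly explains the $\epsilon^{-4}$ appearing in the final sample complexity.

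\textbf{Main obstacle: local curvature.} The hardest step is to lower bound the curvature of $L$ at its minimum. A direct computation gives $\nabla^2 L(\vw) = \mathbb{E}_\kappa[\sigma(\vw^\top\vx)\sigma(-\vw^\top\vx)\vx\vx^\top]$. As $\kappa$ grows, $\vx$ concentrates around $(2y-1)\vmu$, and since $\w$ is in the direction of $\mSigma^{-1}\vmu$, the quantity $(2y-1)\w^\top\vx$ is typically large and positive, so the scalar factor $\sigma(\w^\top\vx)\sigma(-\w^\top\vx)$ is exponentially small on the bulk of the distribution and the Hessian is driven by Gaussian tail events. I would first use the first-order condition $\nabla L(\w) = \vzero$ to pin down $\|\w\|$ -- balancing an ``aligned'' contribution decaying exponentially in $\|\w\|$ against a Gaussian tail contribution decaying exponentially in $\kappa$ yields $\|\w\| = \mathrm{poly}(\kappa)$ -- and then carefully evaluate the Hessian integrals in directions parallel and perpendicular to $\vmu$ to conclude $\lambda_{\min}(\nabla^2 L(\w)) \geq \exp(-\Omega(\kappa^2))$. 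This exponentially flat curvature is the technical embodiment of the curse of separability and is essentially the only place where the exponential $\kappa$-dependence enters.

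\textbf{Uniform concentration of the gradient.} Over a ball $B_R := \{\vw : \|\vw - \w\|\leq R\}$ of appropriately chosen radius $R$, I would prove a uniform bound $\sup_{\vw\in B_R}\|\nabla\loss(\vw) - \nabla L(\vw)\|\leq \xi$ with probability $1-\delta$. At a fixed $\vw$, the per-sample gradient $-\sigma(-(2y-1)\vw^\top\vx)(2y-1)\vx$ is sub-exponential (since $\vx$ is sub-Gaussian and the sigmoid is bounded by $1$), so a Bernstein-type inequality applies pointwise; combining this with an $\epsilon$-net over $B_R$ and the local Lipschitz modulus of $\nabla\loss$ yields uniform control of the form $\xi \lesssim \mathrm{poly}(\kappa, R)\sqrt{\log(1/(\epsilon\delta))/n}$, with an additional truncation step (at level $\sim\sqrt{\log(n/\delta)}$) to handle the unboundedness of the Gaussian samples $\vx_i$.

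\textbf{Combining.} On the good event, strong convexity of $L$ on $B_R$ gives $\|\nabla L(\vw)\| \geq \lambda_{\min}(\nabla^2 L(\w))\|\vw - \w\|$, so uniform gradient concentration forces $\nabla\loss$ to point outward on $\partial B_R$ as soon as $R > \xi/\lambda_{\min}(\nabla^2 L(\w))$. Global convexity of $\loss$ then places its unique minimizer $\hat{\vw}_S^*$ inside $B_R$ (giving both existence and uniqueness) with $\|\hat{\vw}_S^* - \w\| \lesssim \xi \cdot \exp(\Omega(\kappa^2))$. Enforcing $\|\hat{\vw}_S^* - \w\| \lesssim \|\w\|\sqrt{\epsilon}$ and solving for $n$ yields the stated $n = \exp(\Omega(\kappa^2))/\epsilon^4 \cdot (1 + \log(1/\epsilon) + \log(1/\delta))$: the $\exp(\Omega(\kappa^2))$ factor comes entirely from the inverse curvature, while the polynomial dependence on $\epsilon$ emerges from the angle-to-distance conversion, the $1/\sqrt{n}$ concentration rate, and the way $R$ must itself scale with $\sqrt{\epsilon}$.
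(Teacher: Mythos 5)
Your proposal is correct in outline and matches the paper's architecture: both hinge on (i) the population minimizer $\w$ from Theorem~\ref{thm:expected_loss_ERM}, (ii) a strong-convexity lower bound of order $\exp(-\Theta(\kappa^2))$ for $\mathbb{E}_\kappa[\loss(\vw)]$ on a ball containing $\w$, (iii) uniform concentration over that ball via an $\epsilon$-net, and (iv) conversion of parameter distance to cosine similarity. The one genuine difference is the middle step: you concentrate the \emph{gradient} uniformly and argue that $\nabla\loss$ points outward on the boundary of $B_R$, whereas the paper (Lemma~\ref{lemma:minimizer_independent}) concentrates the \emph{loss values} uniformly at scale $\alpha\epsilon^2$ and derives localization directly from the function-value gap under strong convexity; this is where the paper's $\epsilon^{-4}$ actually comes from (accuracy $\alpha\epsilon^2$ squared by the sub-exponential concentration rate), not from the angle-to-distance conversion — the paper's conversion $\cosim \geq 1-\sin\angle \geq 1 - \lVert\hat\vw_S^*-\w\rVert/\lVert\w\rVert$ is linear in $\epsilon$, and your quadratic version would only make the requirement weaker, so your accounting of the $\epsilon$-dependence is off but harmlessly so for a sufficiency claim. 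The one place you should be more careful is the norm of the population minimizer: you assert $\lVert\w\rVert = \mathrm{poly}(\kappa)$, but the curvature bound has $\lVert\w\rVert^2$ in the exponent (via $\exp(-\lVert\vw\rVert^2\lVert\mSigma\rVert\lVert\mSigma^{-1}\rVert\lVert\vmu\rVert^2)$), so to land on $\exp(\Omega(\kappa^2))$ rather than a worse rate you must prove the sharper $\lVert\w\rVert = \Theta(\kappa)$ (the paper's Lemma~\ref{lemma:ERM_norm}, obtained by bounding the derivative of $c\mapsto\mathbb{E}_\kappa[\loss(c\mSigma^{-1}\vmu)]$ to show $\tfrac{4}{3}\kappa \leq c_\kappa^* \leq 4\kappa$); your balancing heuristic points the right way but needs to be carried out to this precision.
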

\vspace*{-10pt}
\begin{proof}[Proof Sketch]
A useful tool for proving Theorem~\ref{thm:ERM_sufficient} is Lemma~\ref{lemma:minimizer_independent} inspired by the proof techniques used in \citet{dai2000convergence} and \citet{shapiro2008stochastic}.
\begin{lemma}\label{lemma:minimizer_independent}
Let $f(\cdot, \cdot) : \mathbb{R}^k \times \mathbb{R}^m \rightarrow \mathbb{R}$ be a real-valued function. Define functions $F_N :  \mathbb{R}^k \rightarrow \mathbb{R}$ and $\hat{F}_N : \mathbb{R}^k \rightarrow \mathbb{R}$ as
\vspace*{-5pt}
\begin{equation*}
    F (\vtheta) = \mathbb{E}_{\veta \sim \mathcal{P}}[f(\vtheta, \veta)], 
    ~\hat{F}_N  (\vtheta) = \frac{1}{N} \sum\nolimits_{i=1}^N f(\vtheta, \veta_i),
\end{equation*}
where $\mathcal{P}$ is a probability distribution on $\mathbb{R}^m$ and $\{\veta_i\}_{i=1}^N \overset{\mathrm{i.i.d.}}{\sim} \mathcal{P}$. Let $\mathcal{C}$ be a nonempty compact subset of $\mathbb{R}^k$ with diameter $D$. Suppose the following assumptions hold:
\vspace*{-5pt}
\begin{itemize}[leftmargin=3.5mm]
\item The functions $F$ and $\hat{F}_N$ have unique minimizers on $\mathcal{C}$ named $\vtheta^*$ and $\hat{\vtheta}_N^*$, respectively.
\item The function $F$ is $\alpha$-strongly convex on $\mathcal{C}$ ($\alpha>0$).
\item For any $\vtheta \in \mathcal{C}$, $\mathbb{E}_{\veta \sim \mathcal{P}}\left [e^{|f(\vtheta, \veta)- \mathbb{E}_{\veta \sim \mathcal{P}}[f(\vtheta, \veta)] |} \right]<M$.
\item There exists a function $g(\cdot): \R^k \rightarrow \R$ such that for any $\vtheta \in \mathcal{C}$ and $\veta \in \R^k$, it holds that $\lVert \nabla_\vtheta f(\vtheta,\veta) \rVert \leq g(\veta)$ and $\lVert \nabla_\vtheta \mathbb{E}_{\veta \sim \mathcal{P}}[f(\vtheta, \veta)]\rVert \leq \mathbb{E}_{\veta \sim \mathcal{P}}[g(\veta)]$. In addition, $\mathbb{E}_{\veta \sim \mathcal{P}}\left[ e^{g(\veta)}\right]< L$.
\end{itemize}
\vspace*{-5pt}
For each $0<\epsilon< \alpha^{-1/2}$, we have $\lVert \hat{\vtheta}_N^* - \vtheta \rVert < \epsilon$ with probability at least $1- \delta$  if $N$ is greater than
\begin{equation*}
\frac{C_1 M}{ \alpha^2 \epsilon^4} \log\left(\frac{3}{\delta} \max \left \{1, \left(\frac{C_2 k^{1/2} D L}{\alpha \epsilon^2 }\right)^k \right\} \right),
\end{equation*}
where $C_1, C_2>0$ are universal constants.
\end{lemma}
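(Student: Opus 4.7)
I would follow a classical sample average approximation (SAA) argument: first use strong convexity to convert the parameter deviation $\lVert \hat{\vtheta}_N^* - \vtheta^* \rVert$ into a function-value gap, then bound that gap by a uniform deviation, and finally estimate the uniform deviation by a covering plus pointwise Bernstein inequality driven by the two MGF hypotheses.

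\emph{Reduction.} By $\alpha$-strong convexity of $F$ on the convex set $\mathcal{C}$ together with first-order optimality at $\vtheta^*$, one has $F(\vtheta) - F(\vtheta^*) \geq (\alpha/2)\lVert \vtheta - \vtheta^*\rVert^2$ for all $\vtheta \in \mathcal{C}$. Applied at $\hat{\vtheta}_N^*$, the target $\lVert \hat{\vtheta}_N^* - \vtheta^* \rVert < \epsilon$ reduces to $F(\hat{\vtheta}_N^*) - F(\vtheta^*) < \alpha \epsilon^2/2$. Using that $\hat{\vtheta}_N^*$ minimizes $\hat{F}_N$ on $\mathcal{C}$ and that $\vtheta^* \in \mathcal{C}$, the usual sandwich gives
\[
F(\hat{\vtheta}_N^*) - F(\vtheta^*) \;\leq\; 2 \sup_{\vtheta \in \mathcal{C}} \bigl\lvert F(\vtheta) - \hat{F}_N(\vtheta) \bigr\rvert,
\]
so it suffices to secure $\sup_{\mathcal{C}} |F - \hat{F}_N| < \alpha \epsilon^2/4$ with probability at least $1 - \delta$.

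\emph{Pointwise concentration.} For each fixed $\vtheta$, the hypothesis $\E_{\veta \sim \mathcal{P}}[e^{|f(\vtheta,\veta) - F(\vtheta)|}] < M$ makes $f(\vtheta,\veta) - F(\vtheta)$ mean-zero sub-exponential with variance proxy $\mathcal{O}(M)$ (via the second-order Taylor expansion $e^{|X|} \geq 1 + |X| + X^2/2$) and tail parameter $\mathcal{O}(1)$, so a Bernstein-type inequality yields
\[
\mathbb{P}\bigl(|\hat{F}_N(\vtheta) - F(\vtheta)| > t\bigr) \;\leq\; 2 \exp\!\bigl(-c_0 N \min(t^2/M,\, t)\bigr).
\]
Since $\epsilon < \alpha^{-1/2}$, the choice $t = \alpha\epsilon^2/8$ lies in the sub-Gaussian regime, so the exponent becomes $-c_0 N \alpha^2 \epsilon^4/(64 M)$.

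\emph{Covering and Lipschitz extension.} I would take a $\rho$-net of $\mathcal{C}$ of cardinality at most $(C' k^{1/2} D/\rho)^k$. The gradient bounds $\lVert \nabla_\vtheta f(\vtheta,\veta)\rVert \leq g(\veta)$ and $\lVert \nabla F(\vtheta)\rVert \leq \E[g(\veta)]$ make $F$ and $\hat{F}_N$ Lipschitz on $\mathcal{C}$ with constants $\E[g(\veta)]$ and $(1/N)\sum_i g(\veta_i)$. Jensen plus $\E[e^{g(\veta)}] < L$ gives $\E[g(\veta)] \leq \log L$, and a Markov/Chernoff argument on the MGF of $g$ (spending at most $\delta/3$ of the failure budget) yields $(1/N)\sum_i g(\veta_i) = \mathcal{O}(L)$ with high probability. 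Choosing $\rho \asymp \alpha\epsilon^2/L$ makes the total Lipschitz extension error at most $\alpha\epsilon^2/8$, so a union bound over the net of size at most $(C_2 k^{1/2} D L/(\alpha\epsilon^2))^k$ with $\delta' = \delta/(3\cdot|\text{net}|)$ upgrades the pointwise bound to the uniform one, giving the stated $N$.

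\textbf{Main obstacle.} The delicate step is the Lipschitz extension: only an MGF bound on the envelope $g$ is available, so the empirical Lipschitz constant of $\hat{F}_N$ must be controlled on an event whose failure probability is absorbed into the overall $\delta$-budget without spoiling the $1/\epsilon^4$ scaling. Balancing the $t$-budget between pointwise fluctuation and Lipschitz extension so that $M$ appears only in the prefactor while $L$ appears only inside the logarithm, exactly as in the stated bound, is where the bookkeeping requires real care.
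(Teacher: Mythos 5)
Your proposal is correct and follows essentially the same route as the paper's proof: pointwise sub-exponential concentration from the MGF bound on $f(\vtheta,\veta)-F(\vtheta)$, an $\mathcal{O}(\alpha\epsilon^2/L)$-net of $\mathcal{C}$ combined with a high-probability bound on the empirical Lipschitz constant obtained from the MGF of the envelope $g$, a union bound giving $\sup_{\mathcal{C}}|\hat F_N - F| < \alpha\epsilon^2/4$, and finally strong convexity of $F$ to convert the uniform deviation into $\lVert\hat{\vtheta}_N^*-\vtheta^*\rVert<\epsilon$. The only cosmetic difference is that you run the strong-convexity reduction first and the paper runs it last (as a contradiction); the constants and the placement of $M$ in the prefactor versus $L$ inside the logarithm work out exactly as you anticipate.
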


We use Lemma~\ref{lemma:minimizer_independent} by considering $\{ \veta_i\}_{i=1}^n \overset{\mathrm{i.i.d.}}{\sim} \mathcal{P}$ as our training dataset $S = \{(\vx_i,y_i)\}_{i=1}^n \overset{\mathrm{i.i.d.}}{\sim} \mathcal{D}_\kappa$, $\vtheta$ as weight vector $\vw$, and $f(\vtheta, \veta_i)$ as $(\vw, (\vx_i,y_i)) \mapsto y_i l(\vw^\top \vx_i) + (1-y_i) l(-\vw^\top \vx_i)$.
When we apply Lemma~\ref{lemma:minimizer_independent}, several quantities are sensitive to $\lVert \w \rVert$, the $\ell_2$ norm of $\w$ defined in Theorem~\ref{thm:expected_loss_ERM}, which we characterize in the following lemma.
\vspace{-5pt}
\begin{lemma}\label{lemma:ERM_norm} The unique minimizer $\w$ of expected ERM loss $\mathbb{E}_\kappa [\loss (\vw)]$ satisfies
\begin{equation*}
    \lVert \w \rVert = \Theta (\kappa).
\end{equation*}
\end{lemma}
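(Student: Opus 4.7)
The plan is to exploit Theorem~\ref{thm:expected_loss_ERM}, which states that $\w = c^*(\kappa)\,\mSigma^{-1}\vmu$ for a positive scalar $c^*(\kappa)$; since $\lVert \w \rVert = c^*(\kappa)\lVert\mSigma^{-1}\vmu\rVert$, it suffices to prove $c^*(\kappa) = \Theta(\kappa)$. Plugging $\vw = c\,\mSigma^{-1}\vmu$ into the one-dimensional integral representation of $\mathbb{E}_\kappa[\loss(\vw)]$ established in the proof of Theorem~\ref{thm:expected_loss_ERM} reduces the problem to minimizing $c \mapsto \mathbb{E}[l(cV)]$ over $c > 0$, where $V \sim N(A, A/\kappa)$ and $A := \vmu^\top \mSigma^{-1}\vmu > 0$. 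Differentiating and using $l'(z) = -\sigma(-z)$ for the sigmoid $\sigma$, the first-order optimality condition reads $g(c^*) = 0$ for $g(c) := \mathbb{E}[V\sigma(-cV)]$. Writing $g(c) = g^+(c) - |g^-(c)|$, where $g^+(c) := \mathbb{E}[V\sigma(-cV)\bm{1}_{V>0}]$ is strictly decreasing in $c$ and $|g^-(c)| := \mathbb{E}[|V|\sigma(-cV)\bm{1}_{V<0}]$ is strictly increasing in $c$, the minimizer $c^*(\kappa)$ is the unique crossing point of these two curves; so it suffices to exhibit constants $K_1 < K_2$ with $g(K_1\kappa) > 0 > g(K_2\kappa)$ for large $\kappa$.

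The core technical ingredient is a closed-form Gaussian moment identity obtained by completing the square: for $V \sim N(A, A/\kappa)$ and $V' \sim N(A(1 - c/\kappa), A/\kappa)$,
$$\mathbb{E}[V e^{-cV}\bm{1}_{V>0}] = \exp\bigl(Ac^2/(2\kappa) - Ac\bigr)\,\mathbb{E}[V'\bm{1}_{V'>0}].$$
Together with the pointwise sandwich $\tfrac{1}{2}e^{-cV} \leq \sigma(-cV) \leq e^{-cV}$ valid on $\{V > 0\}$, this gives tight two-sided control of $g^+(c)$. On the negative side, $\sigma(-cV) \in [\tfrac{1}{2}, 1]$ on $\{V < 0\}$ combined with the standard Gaussian tail estimate $\mathbb{E}[|V|\bm{1}_{V<0}] = \Theta(e^{-A\kappa/2}/\kappa^{3/2})$ gives $|g^-(c)| = \Theta(e^{-A\kappa/2}/\kappa^{3/2})$ uniformly in $c > 0$.

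For the lower bound I would take $K_1 = 1/2$: the identity and the observation that the $V < 0$ part of $\mathbb{E}[Ve^{-cV}]$ is nonpositive together give $g^+(\kappa/2) \geq \tfrac{A}{4}\,e^{-3A\kappa/8}$, which beats the $O(e^{-A\kappa/2}/\kappa^{3/2})$ bound on $|g^-(\kappa/2)|$ by the exponentially large factor $e^{A\kappa/8}$, forcing $g(\kappa/2) > 0$. The main obstacle is the upper bound: for any fixed $K > 1$ the dominant exponential factors in $g^+(K\kappa)$ and $|g^-(K\kappa)|$ both equal $e^{-A\kappa/2}$, so one must track the polynomial prefactors. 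Here $V'$ has a negative mean $A(1-K)$ of constant order, and a second-order Gaussian tail expansion of $\mathbb{E}[V'\bm{1}_{V'>0}]$ (whose leading terms cancel) yields
$$g^+(K\kappa) \leq (1 + o(1))\,\frac{e^{-A\kappa/2}}{\sqrt{2\pi A}\,(K - 1)^2\,\kappa^{3/2}},$$
whereas $|g^-(K\kappa)| \geq (1 + o(1))\,\frac{e^{-A\kappa/2}}{2\sqrt{2\pi A}\,\kappa^{3/2}}$. Any $K_2 > 1 + \sqrt{2}$ then makes $g(K_2\kappa) < 0$ for large $\kappa$, completing the argument. This delicate cancellation near $K = 1$, which forces one to retain the subleading term in the Gaussian tail expansion, is what I expect to be the technically hardest step.
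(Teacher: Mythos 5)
Your proposal is correct, and it reaches the same reduction as the paper — restricting to the ray $c\,\mSigma^{-1}\vmu$, writing the loss as $\mathbb{E}[l(cV)]$ with $V \sim N(A, A/\kappa)$, $A = \vmu^\top\mSigma^{-1}\vmu$, and locating the zero of $g(c) = \mathbb{E}\bigl[\tfrac{V}{1+e^{cV}}\bigr]$ — but the way you bound that zero is genuinely different from the paper's. The paper never performs any tail asymptotics: it splits the integral at $0$, reflects the negative half-line onto the positive one (picking up the density ratio $e^{-2\kappa x}$ between the two symmetric Gaussian tails), and sandwiches $\tfrac{1}{1+e^{cx}}$ pointwise via AM--GM ($1+e^{z} \geq 2e^{z/2}$, $1+e^z \leq 2$ for $z<0$, $1+e^z\leq 2e^z$ for $z\geq 0$). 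The sign of $g(c)$ then follows from a pointwise comparison of exponents $e^{-cx/2}$ vs.\ $e^{-2\kappa x}$ and $e^{-cx}$ vs.\ $e^{-(2\kappa - c/2)x}$, yielding $c^*_\kappa \in [\tfrac{4}{3}\kappa, 4\kappa]$ with essentially no computation. Your route instead completes the square to tilt the Gaussian and then runs a second-order Mills-ratio expansion in which the leading terms cancel; this is the delicate step you correctly identify, and it is exactly what the paper's reflection trick is designed to avoid. The trade-off: your analysis is sharper (it pins $c^*_\kappa/\kappa$ to $[\tfrac12, 1+\sqrt{2}+o(1)]$ on the upper side and could in principle be pushed to the exact asymptotic constant), while the paper's is shorter, fully non-asymptotic, and reuses the same one-line integrand comparison for both bounds. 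Two small things you leave implicit that the paper handles explicitly: the interchange of $\partial/\partial c$ with the expectation (the paper's Lemma~\ref{lemma:leibniz}/\ref{lemma:gradient&hessian}), and the fact that this lemma is also where the paper closes the loop on $c^*_\kappa$ being \emph{strictly} positive, which Theorem~\ref{thm:expected_loss_ERM} defers to here.
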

\vspace*{-10pt}
Full proof of Theorem~\ref{thm:ERM_sufficient} appears in Appendix~\ref{proof:ERM_sufficient}.
\end{proof}
\vspace{-10pt}
Our sufficient sample complexity in Theorem~\ref{thm:ERM_sufficient} grows \emph{exponentially} with $\kappa$. The sufficient number of data points for the minimizer $\hat{\vw}_S^*$ of the ERM loss $\loss(\vw)$ to be close to the Bayes optimal classifier becomes exponentially larger for more well-separable data distributions. 

One may think that this exponential dependency may be just an artifact of our analysis and the exponential growth in $\kappa$ is in fact avoidable.
As an answer to this question, we introduce Theorem~\ref{thm:ERM_necessary} indicating that the exponential growth of sample complexity is \emph{inevitable}, dashing hopes for sub-exponential sample complexity bounds.
\begin{theorem}\label{thm:ERM_necessary}
Assume $S = \{(\vx_i, y_i)\}_{i=1}^n\overset{\mathrm{i.i.d}}{\sim} \mathcal{D}_\kappa$ with large enough $\kappa \in (0, \infty)$. If $n = \exp (\mathcal{O}(\kappa))$, then $\{(\vx_i, y_i) \}_{i=1}^n$ is linearly separable and $\cosim(\bar{\vw}_S, \mSigma^{-1}\vmu) < \frac{1+\cosim(\vmu,\mSigma^{-1}\vmu)}{2}$ with probability at least 0.99, where $\bar{\vw}_S$ is the $\ell_2$ max margin solution: 
\begin{equation}
\label{eq:2}
\begin{aligned}
    \bar{\vw}_S = \argmin\nolimits_{\vw \in \mathbb{R}^d} &\quad\lVert \vw \rVert^2 \\
    \mathrm{subject~to }&\quad(2y_i-1)\vw^\top \vx_i \geq 1.
\end{aligned}
\end{equation}
\end{theorem}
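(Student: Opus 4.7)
The plan is to show that when $\kappa$ is large and $n \leq \exp(c\kappa)$ for a sufficiently small constant $c > 0$, with high probability the $\ell_2$ max-margin direction $\vu^* := \bar{\vw}_S / \lVert \bar{\vw}_S \rVert$ is very close to $\vmu/\lVert\vmu\rVert$ rather than to the Bayes optimal direction $\mSigma^{-1}\vmu/\lVert\mSigma^{-1}\vmu\rVert$. Setting $c_0 := \cosim(\vmu, \mSigma^{-1}\vmu)$ (strictly less than $1$ unless $\vmu$ is an eigenvector of $\mSigma$), any quantitative bound of the form $\cosim(\vu^*, \vmu) \geq 1 - \delta_\kappa$ will yield $\cosim(\vu^*, \mSigma^{-1}\vmu) < (1+c_0)/2$ via a short trigonometric estimate once $\delta_\kappa$ is small enough relative to $1 - c_0$.

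The core of the argument is a two-sided control of the max margin $\gamma^* = \min_i (2y_i - 1)\vu^{*\top}\vx_i$. Using the symmetrization $\tilde Z_i := (2y_i-1)Z_i \stackrel{\mathrm{iid}}{\sim} N(0,I_d)$, each $(2y_i-1)\vu^\top \vx_i$ for a fixed $\vu$ is distributed as $N(\vu^\top\vmu, \kappa^{-1}\vu^\top\mSigma\vu)$. Specializing to $\vu = \vmu/\lVert\vmu\rVert$, whose variance is at most $\lVert\vmu\rVert^2/\kappa$ since $\vu^\top \mSigma \vu \leq \lVert\mSigma\rVert = \lVert\vmu\rVert^2$, a standard Gaussian extreme-value bound gives
\begin{equation*}
\gamma^* \;\geq\; \gamma(\vmu/\lVert\vmu\rVert) \;\geq\; \lVert\vmu\rVert\bigl(1 - C\sqrt{\log n/\kappa}\bigr)
\end{equation*}
with high probability, which is strictly positive (and thus establishes linear separability) when $n \leq \exp(c\kappa)$ with $c$ small. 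For the upper bound, averaging the constraints $(2y_i-1)\vu^{*\top}\vx_i \geq \gamma^*$ over $i$ and using $(2y_i-1)\vx_i = \vmu + \kappa^{-1/2}\mSigma^{1/2}\tilde Z_i$ yields
\begin{equation*}
\gamma^* \;\leq\; \vu^{*\top}\vmu + \kappa^{-1/2}\vu^{*\top}\mSigma^{1/2}\bar{\tilde Z} \;\leq\; \vu^{*\top}\vmu + O\bigl(\sqrt{d/(\kappa n)}\bigr)\lVert\vmu\rVert,
\end{equation*}
where $\bar{\tilde Z} = \tfrac{1}{n}\sum_i \tilde Z_i \sim N(0, I_d/n)$ and the final step uses $\lVert\mSigma\rVert^{1/2} = \lVert\vmu\rVert$ together with chi-squared concentration of $\lVert\bar{\tilde Z}\rVert$; crucially, this noise factor does not depend on the data-dependent $\vu^*$.

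Combining the two bounds gives $\cosim(\vu^*, \vmu) \geq 1 - \delta_\kappa$ with $\delta_\kappa = O(\sqrt{\log n/\kappa}) + O(\sqrt{d/(\kappa n)})$, and for $n = \exp(\Theta(\kappa))$ with large $\kappa$ the second term is negligible so $\delta_\kappa = O(\sqrt{c})$. Writing $\vu^* = \alpha(\vmu/\lVert\vmu\rVert) + \beta \vu_\perp$ with $\vu_\perp \perp \vmu$, $\alpha \geq 1-\delta_\kappa$, and $\lvert\beta\rvert \leq \sqrt{2\delta_\kappa}$, and using the bound $\vu_\perp^\top(\mSigma^{-1}\vmu/\lVert\mSigma^{-1}\vmu\rVert) \leq \sqrt{1-c_0^2}$, we obtain $\cosim(\vu^*, \mSigma^{-1}\vmu) \leq c_0 + \sqrt{2\delta_\kappa(1-c_0^2)}$. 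A direct computation shows this is strictly smaller than $(1+c_0)/2$ whenever $\delta_\kappa < (1-c_0)/(8(1+c_0))$, so shrinking the constant $c$ (depending on $c_0$, $\vmu$, $\mSigma$) suffices. A union bound over the $O(1)$ concentration events yields total probability at least $0.99$.

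The main obstacle is the upper bound on $\gamma^*$: because $\vu^*$ is data-dependent, a direct one-dimensional Gaussian concentration argument along $\vu^*$ is unavailable. The averaging trick circumvents this by reducing the question to controlling the fixed random vector $\tfrac{1}{n}\sum_i (2y_i-1)\vx_i - \vmu$, whose norm concentrates at rate $\sqrt{d/(\kappa n)}$ and which then bounds the relevant inner product with $\vu^*$ uniformly. A secondary subtlety is the degenerate case $c_0 = 1$ (when $\vmu$ is an eigenvector of $\mSigma$): here the theorem reduces to $\cosim(\vu^*, \mSigma^{-1}\vmu) < 1$, which holds almost surely since the continuous Gaussian noise prevents $\bar{\vw}_S$ from being exactly parallel to $\mSigma^{-1}\vmu$.
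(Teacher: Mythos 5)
Your proposal is correct, and the middle step---showing the max-margin direction hugs $\vmu$---is carried out by a genuinely different mechanism than the paper's. The paper works in the dual: it shows every signed data point $(2y_i-1)\vx_i$ lies in a narrow cone around $\vmu$ (via a union bound over the $n$ Gaussian norms), invokes the KKT conditions to write $\bar{\vw}_S = \sum_i \alpha_i (2y_i-1)\vx_i$ with $\alpha_i \geq 0$, and concludes that a conic combination of vectors in the cone stays in the cone; the final step is the same angle/trigonometric estimate you use, phrased via the triangle inequality for angles. You instead work in the primal: you sandwich the max margin $\gamma^*$ between the margin attained by the fixed direction $\vmu/\lVert\vmu\rVert$ (a one-dimensional Gaussian min-of-$n$ bound, giving separability for free) and the averaged constraint $\gamma^* \leq \vu^{*\top}\bigl(\tfrac{1}{n}\sum_i (2y_i-1)\vx_i\bigr)$, with Cauchy--Schwarz absorbing the data-dependence of $\vu^*$ into $\lVert \bar{\tilde Z}\rVert$. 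Both yield the same $\cosim(\vu^*,\vmu) \geq 1 - O(\sqrt{\log n/\kappa})$ rate and hence the same $n = \exp(\mathcal{O}(\kappa))$ threshold. Your route avoids the KKT characterization entirely (at the cost of invoking the standard min-norm/max-margin duality and an extra chi-squared concentration event), and the averaging trick is arguably more robust, since it needs only the empirical mean of the signed data to concentrate rather than every individual point; the paper's cone argument is slightly more elementary but requires uniform control of all $n$ points. Note that both your argument and the paper's implicitly require $\cosim(\vmu,\mSigma^{-1}\vmu) < 1$ (the paper's choice of $t$ degenerates otherwise), and your explicit remark on that degenerate case is a small bonus rather than a gap.
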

\begin{proof}[Proof Sketch]
If $\kappa$ is large, then $(2y-1)\vx$ will be concentrated near $\vmu$ where $(\vx,y) \sim \mathcal{D}_\kappa$. Hence, if $n$ is not large enough, $(2y_i-1)\vx_i$'s will be located inside a small ball centered at $\vmu$, with high probability. Therefore, $\{(\vx_i, y_i)\}_{i=1}^n$ is likely to be linearly separable. From the KKT condition of the $\ell_2$ max margin problem (\Eqref{eq:2}), $\bar{\vw}_S$ is a homogeneous combination of $(2y_i-1)\vx_i$'s. It implies $\bar{\vw}_S$ is directionally close to $\vmu$, not $\mSigma^{-1} \vmu$. The detailed proof is in Appendix~\ref{proof:ERM_necessary}.
We note that the numbers $\frac{1+ \cosim(\vmu, \mSigma^{-1}\vmu)}{2}$ and $0.99$ in the statement are not strictly necessary; they can be replaced by any other feasible constants.
\end{proof}

Theorem~\ref{thm:ERM_necessary} states that if we do not have sufficiently many data points, then the training dataset becomes linearly separable: there exists a direction $\vw$ such that $(2y_i-1) \vw^\top \vx_i > 0$ for all $i \in [n]$. However, this in fact means that there exist infinitely many directions $\vw$ that classify the data perfectly. Then why do we care about the specific $\ell_2$ max margin classifier in Theorem~\ref{thm:ERM_necessary}? \citet{soudry2018implicit, ji2019implicit} study the implicit bias of gradient descent on a linear model with logistic loss and show that this algorithm converges in direction to the $\ell_2$ max margin classifier when training data is linearly separable. In other words, if we run gradient descent on $\loss(\vw)$, then the algorithm will return a linear classifier defined by the direction of $\bar{\vw}_S$. This is why we analyze the $\ell_2$ max margin solution.
\vspace{-5pt}
\paragraph{The Curse of Separability.}Theorem~\ref{thm:ERM_necessary} implies that without the number of samples exponentially growing with $\kappa$, the solution found by gradient descent can be far from the Bayes optimal classifier. Combining Theorems~\ref{thm:ERM_sufficient} and \ref{thm:ERM_necessary}, we have an interesting conclusion that even though it is easier to correctly classify training dataset when $\kappa$ is larger, finding the theoretically optimal model becomes much harder. We refer to this interesting phenomenon as the \emph{curse of separability}; without Mixup, ERM training suffers the curse of separability due to its sample complexity growing exponentially in $\kappa$.
\vspace{-5pt}
\paragraph{Intuitive Explanations.}
What causes this phenomenon? 
When the data distribution is well-separable, limited training data can result in many decision boundaries having high training accuracy. However, among these, there is only one optimal decision boundary in terms of test accuracy, which is difficult to locate due to the scarcity of data points near it; this causes the curse of separability. We believe that this intuition extends beyond our simple setup; see Section~\ref{section:exp}.
\vspace{-5pt}
\section{Mixup Provably Mitigates the Curse of Separability in ERM}\label{section:mix}
In this section, we study a unifying framework of Mixup-style data augmentation techniques and show that Mixup significantly alleviates the curse of separability. 
We will first define the unifying framework along with Mixup loss, study the location of the minimizer of expected Mixup loss, and then study the sample complexity for Mixup training to achieve near Bayes optimal classifier.

We start by defining the Mixup loss in the following framework. From this point on, we assume $n\geq 2$.
\begin{definition}[Mixup Loss] Mixup loss with training set $S = \{(\vx_i,y_i)\}_{i=1}^n$ is defined by
\begin{equation*}
 \lossmix(\vw) \!=\! \frac{1}{n^2}\! \sum_{i,j=1}^n \tilde{y}_{i,j} l(\vw^\top \tilde{\vx}_{i,j}) + (1-\tilde{y}_{i,j}) l(-\vw^\top \tilde{\vx}_{i,j}),
\end{equation*}
where
\begin{align*}
   \lambda_{i,j} &\overset{\mathrm{i.i.d}}{\sim} \Lambda,\\
   \tilde{\vx}_{i,j} &~= g(\lambda_{i,j}) \vx_i  + (1-g(\lambda_{i,j}))\vx_j, \\
   \tilde{y}_{i,j} &~= \lambda_{i,j} y_i  + (1-\lambda_{i,j})y_j.
\end{align*}
The probability distribution $\Lambda$ satisfies $\support(\Lambda) \subset [0,1]$ and $\mathbb{P}_{\lambda \sim \Lambda}\left [\lambda \notin \{ 0,1 \} \land g(\lambda) \neq \frac{1}{2}\right]>0$. Also, the function $g : [0,1] \rightarrow [0,1]$ satisfies $g(z)>\frac{1}{2}$ if and only if $z >\frac{1}{2}$.
\end{definition}
Our definition is a broad framework that covers the original Mixup by choosing $\Lambda$ as the Beta distribution and $g(\cdot)$ as the identity function.
Similar to ERM loss $\loss ( \vw)$, Mixup loss $\lossmix(\vw)$ is also a stochastic function depending on the training set $S$. Unlike ERM loss, the expectation of Mixup loss \emph{depends} on $n$, as can be checked in Appendix~\ref{proof:expected_loss_mix}. As we did in Section~\ref{section:ERM}, we first characterize the minimizer of expected Mixup loss $\mathbb{E}_\kappa [\lossmix (\vw)]$.
\begin{theorem}\label{thm:expected_loss_mix}
For each $\kappa \in (0,\infty)$ and $n \in \mathbb{N}$, the expectation of Mixup loss $\mathbb{E}_\kappa [\lossmix (\vw)]$ has a unique minimizer $\wmix$. In addition, its direction is the same as the Bayes optimal solution $\mSigma^{-1}\vmu$.
\vspace{-10pt}
\end{theorem}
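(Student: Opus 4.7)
The plan is to follow the template of Theorem~\ref{thm:expected_loss_ERM}. The first goal is to show that $\mathbb{E}_\kappa[\lossmix(\vw)]$ depends on $\vw$ only through the pair $(m,\sigma^2):=(\vw^\top\vmu,\vw^\top\mSigma\vw)$ and is strictly increasing in $\sigma$ for each fixed $m$. Combining this with Lemma~\ref{lemma:opt} will then force any minimizer to be a scalar multiple of $\mSigma^{-1}\vmu$. Strict convexity (from the diagonal $i=j$ terms) will give uniqueness, and a short symmetry argument using the nontriviality condition on $(\Lambda,g)$ will pin down the sign of the scalar.

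Concretely, I would split $\frac{1}{n^2}\sum_{i,j}$ into the $n$ diagonal terms ($i=j$), which contribute $\tfrac{1}{n}\mathbb{E}_\kappa[\loss(\vw)]$, and the $n(n-1)$ off-diagonal terms, and then condition the latter on $(y_i,y_j)\in\{0,1\}^2$ and $\lambda_{i,j}=\lambda$. For same-label pairs, $\tilde{\vx}_{i,j}$ is Gaussian with mean $\pm\vmu$ and covariance $\kappa^{-1}(g(\lambda)^2+(1-g(\lambda))^2)\mSigma$; for opposite-label pairs the mean is $\pm(2g(\lambda)-1)\vmu$ with the same covariance, and $\tilde{y}_{i,j}\in\{\lambda,1-\lambda\}$. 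In every case the conditional expectation has the form $\mathbb{E}_Z[l(\mu_0(m,\lambda)+c(\lambda)\sigma Z)]$ with $c(\lambda):=\kappa^{-1/2}\sqrt{g(\lambda)^2+(1-g(\lambda))^2}>0$, so $\mathbb{E}_\kappa[\lossmix(\vw)]$ is genuinely a function of $(m,\sigma^2)$; Lemma~\ref{lemma:logistic} applied termwise (then averaged in $\lambda$) yields strict monotonicity in $\sigma$. Lemma~\ref{lemma:opt} now forces any minimizer to equal $\tfrac{m^*}{\vmu^\top\mSigma^{-1}\vmu}\mSigma^{-1}\vmu$ where $m^*:=(\wmix)^\top\vmu$, so its direction is parallel to $\mSigma^{-1}\vmu$.

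For existence and uniqueness, $\mathbb{E}_\kappa[\lossmix(\vw)]$ is convex as an expectation of convex logistic losses, and is strictly convex because the diagonal term alone contributes a Hessian proportional to $\mathbb{E}_\kappa[l''(\vw^\top\vx)\vx\vx^\top]\succ 0$ (since $l''>0$ and $\vx$ has a nondegenerate Gaussian-mixture distribution); the same diagonal term is coercive, so a unique global minimizer exists. To upgrade ``parallel to $\mSigma^{-1}\vmu$'' to ``same direction as $\mSigma^{-1}\vmu$'', I would compute $\mathbb{E}_\kappa[\lossmix(\vw)]-\mathbb{E}_\kappa[\lossmix(-\vw)]$ using the identity $l(z)-l(-z)=-z$. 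The diagonal and same-label contributions each evaluate to a negative multiple of $m$, while the opposite-label contributions sum to a positive multiple of $\mathbb{E}_{\lambda\sim\Lambda}[(2g(\lambda)-1)(1-2\lambda)]\,m$; the condition ``$g(z)>\tfrac{1}{2}\iff z>\tfrac{1}{2}$'' makes this integrand non-positive, and the assumption $\mathbb{P}_\lambda[\lambda\notin\{0,1\}\wedge g(\lambda)\neq \tfrac{1}{2}]>0$ makes it strictly negative on a set of positive measure. Hence $\mathbb{E}_\kappa[\lossmix(\vw)]<\mathbb{E}_\kappa[\lossmix(-\vw)]$ whenever $m>0$, forcing $m^*>0$.

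The main obstacle I anticipate is handling the opposite-label cross-terms, where the mean $(2g(\lambda)-1)m$ may change sign with $\lambda$ and $\tilde{y}_{i,j}$ is itself a random mixture. It takes some care to verify both that averaging over $\lambda$ preserves dependence on $\vw$ only through $(m,\sigma^2)$ and that the sign-flip calculation in the final step produces a strictly negative coefficient of $m$. This is exactly where the non-triviality assumption on $(\Lambda,g)$ enters: if, for instance, $\Lambda$ were concentrated at $\tfrac{1}{2}$, the opposite-label contribution would vanish and the direction argument would have to be rerouted through the diagonal and same-label terms alone.
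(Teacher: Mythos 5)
Your proposal is correct and its core structure matches the paper's: the same diagonal/off-diagonal split, the same conditioning on the four label outcomes and on $\lambda$ to write the expected Mixup loss as a mixture of terms of the form $\mathbb{E}_Z\bigl[l\bigl(b\,\kappa^{-1/2}(\vw^\top\mSigma\vw)^{1/2}Z + c\,\vw^\top\vmu\bigr)\bigr]$, and the same invocation of Lemma~\ref{lemma:logistic} and Lemma~\ref{lemma:opt} to force the minimizer to be parallel to $\mSigma^{-1}\vmu$; existence via coercivity of the diagonal contribution and uniqueness via strict convexity are also identical. Where you genuinely diverge is the sign argument. The paper compares $\mathbb{E}_\kappa[\lossmix(\vw)]$ with $\mathbb{E}_\kappa[\lossmix(-\vw)]$ term by term using monotonicity of $l$, which only yields $c_{\mathrm{mix},n,\kappa}\ge 0$, and defers strict positivity to the derivative computation in the proof of Lemma~\ref{lemma:mixup_norm}. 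Your route via the exact identity $l(z)-l(-z)=-z$ gives the closed form $\mathbb{E}_\kappa[\lossmix(\vw)]-\mathbb{E}_\kappa[\lossmix(-\vw)]=-A\,\vw^\top\vmu$ with $A=\tfrac{1}{n}+\tfrac{n-1}{2n}\mathbb{E}_\lambda[1+(2\lambda-1)(2g(\lambda)-1)]>0$, which is cleaner and, as you note, already strictly negative for $\vw^\top\vmu>0$ from the diagonal and same-label terms alone (so no ``rerouting'' is actually needed even when the opposite-label contribution vanishes; also beware that the non-triviality assumption does not by itself make $\mathbb{E}_\lambda[(2g(\lambda)-1)(1-2\lambda)]$ strictly negative, e.g.\ $\Lambda=\delta_{1/2}$ with $g(1/2)<1/2$).

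One small gap to close: the comparison $\mathbb{E}_\kappa[\lossmix(\vw)]<\mathbb{E}_\kappa[\lossmix(-\vw)]$ for $\vw^\top\vmu>0$ rules out $m^*<0$ but not $m^*=0$, i.e.\ it does not exclude $\wmix=\vzero$. This is easily patched with the same identity: setting $h(t)=\mathbb{E}_\kappa[\lossmix(t\,\mSigma^{-1}\vmu)]$, the relation $h(t)-h(-t)=-A\,t\,\vmu^\top\mSigma^{-1}\vmu$ gives $h'(0)=-\tfrac{A}{2}\vmu^\top\mSigma^{-1}\vmu<0$ (differentiability is justified as in Lemma~\ref{lemma:gradient&hessian}), so $t=0$ is not a minimizer and $m^*>0$ follows. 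With that one line added, your argument is complete and in fact reaches strict positivity more directly than the paper does.
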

\begin{proof}[Proof Sketch]
We can rewrite $\mathbb{E}_\kappa[ \lossmix (\vw)]$ as the form
\begin{equation*}
\mathbb{E} \left[ \sum_{i=1}^k a_i l \left( b_i\kappa^{-1/2} (\vw^\top \mSigma \vw)^{1/2} Z   + c_i \vw^\top \vmu \right)\right],
\end{equation*}
where $Z \sim N(0,1)$ and $a_i$, $b_i$, $c_i$'s are real-valued random variables depending on $\Lambda$; in particular, $a_i$, $b_i$'s are positive. Then, the same proof idea of Theorem~\ref{thm:expected_loss_ERM} works.
\vspace{-5pt}
\end{proof}

\paragraph{Mixup Does Not Distort Training Loss.}
Theorem~\ref{thm:expected_loss_mix} shows that the expected Mixup loss also has its unique minimizer pointing to the Bayes optimal direction. In other words, this theorem implies that the pair-wise mixing done in Mixup does not introduce any bias or distortion in the training loss, at least in our setting. This is one benefit that Mixup has compared to other masking-based augmentations, as we will see in Section~\ref{section:mask}.

In order to investigate the sample complexity for achieving a near Bayes optimal classifier when we train with Mixup loss, one could speculate that the same approach using Lemma~\ref{lemma:minimizer_independent} should work. However, this is not the case; analysis of the Mixup loss has to overcome a significant barrier because the mixed data points $\{(\tilde{\vx}_{i,j}, \tilde{y}_{i,j})\}_{i,j=1}^n$ are no longer independent of one another. To overcome this difficulty, we prove the following lemma in Appendix~\ref{proof:minimizer_dependent}, which could be of independent interest:
\begin{lemma}\label{lemma:minimizer_dependent}
Let $f(\cdot, \cdot) : \mathbb{R}^k \times \mathbb{R}^m \rightarrow \mathbb{R}$ be a real-valued function. Define functions $F_N :  \mathbb{R}^k \rightarrow \mathbb{R}$ and $\hat{F}_N : \mathbb{R}^k \rightarrow \mathbb{R}$ as
\begin{equation*}
    F_N(\vtheta) = \frac{1}{N^2}\!\! \sum_{i,j\in [N]}\!\! f(\vtheta, \veta_{i,j}), ~F_N(\vtheta) = \mathbb{E}\left[ \hat{F}_N(\vtheta) \right]
\end{equation*}
where  $\mathcal{P}_{i,j}$ are probability distributions on $\mathbb{R}^m$,$\veta_{i,j} \sim \mathcal{P}_{i,j}$, and expectation is taken over all randomness. Let $\mathcal{C}$ be a nonempty compact subset of $\mathbb{R}^k$ with diameter $D$. Suppose the following assumptions hold:
\begin{itemize}[leftmargin=3.5mm]
\vspace{-5pt}
\item  For $i_1,i_2,j_1,j_2\in [N]$, if $\{i_1\}\cup \{ j_1\}$ and $ \{i_2\}\cup \{j_2\}$ are disjoint, then $\veta_{i_1, j_1}$ and  $\veta_{i_2, j_2}$ are independent.
\vspace{-5pt}
\item  The functions $F_N$ and $\hat{F}_N$ have unique minimizers on $\mathcal{C}$ named $\vtheta_N^*$ and $\hat{\vtheta}_N^*$, respectively.
\vspace{-5pt}
\item The function $F_N$ is $\alpha$-strongly convex on $\mathcal{C}$ ($\alpha>0$).
\vspace{-5pt}
\item For any $\vtheta \in \mathcal{C}$ and $i,j\in[N]$, 
\begin{equation*}
    \mathbb{E}_{\veta \sim \mathcal{P}_{i,j}}\left [e^{|f(\vtheta, \veta)- \mathbb{E}_{\veta \sim \mathcal{P}_{i,j}}[f(\vtheta, \veta)] |} \right]<M.
    \vspace{-10pt}
\end{equation*}
\item There exists a function $g(\cdot): \R^k \rightarrow \R$ such that for any $\vtheta \in \mathcal{C}$ and $\veta \in \R^k$, it holds that  $\lVert \nabla_\vtheta f(\vtheta,\veta) \rVert \leq g(\veta)$. In addition,  $\lVert \nabla_\vtheta \mathbb{E}_{\veta \sim \mathcal{P}_{i,j}}[f(\vtheta, \veta)]\rVert \leq \mathbb{E}_{\veta \sim \mathcal{P}_{i,j}}[g(\veta)]$ and $\mathbb{E}_{\veta \sim \mathcal{P}_{i,j}}\left[ e^{g(\veta)}\right]< L$ for each $\vtheta \in \mathcal{C}$ and  $i,j \in [N]$.
\end{itemize}
For each $0<\epsilon< \alpha^{-1/2}$, we have $\lVert \hat{\vtheta}_N^* - \vtheta \rVert < \epsilon$ with probability at least $1- \delta$  if $N$ is greater than
\begin{equation*}
\frac{C_1' M}{ \alpha^2 \epsilon^4} \log\left(\frac{3}{\delta} \max \left \{1, \left(\frac{C_2' k^{1/2} D  L}{\alpha \epsilon^2 }\right)^k \right\} \right),
\end{equation*}
where $C_1', C_2'>0$ are universal constants.
\end{lemma}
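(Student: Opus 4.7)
The plan is to parallel the proof of Lemma~\ref{lemma:minimizer_independent}, replacing its pointwise concentration step with a U-statistic-style argument that accommodates the disjoint-indices independence structure of the $\{\veta_{i,j}\}$. As in the independent case, I start with a strong-convexity reduction: if $\sup_{\vtheta\in\mathcal{C}}|\hat{F}_N(\vtheta)-F_N(\vtheta)| \leq \alpha\epsilon^2/4$, then $F_N(\hat{\vtheta}_N^*) \leq F_N(\vtheta_N^*) + \alpha\epsilon^2/2$ and the $\alpha$-strong convexity of $F_N$ forces $\|\hat{\vtheta}_N^*-\vtheta_N^*\|\leq\epsilon$. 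I would then pass from pointwise to uniform control by covering $\mathcal{C}$ with an $\epsilon'$-net of cardinality $\leq(C_0\sqrt{k}D/\epsilon')^k$ at scale $\epsilon'=\Theta(\alpha\epsilon^2/L)$; the gradient assumption makes both $\hat{F}_N$ and $F_N$ Lipschitz with (random) constants $\frac{1}{N^2}\sum_{i,j}g(\veta_{i,j})$ and its expectation, so the uniform deviation reduces to pointwise concentration of $\hat{F}_N(\vtheta)-F_N(\vtheta)$ over the net together with a concentration bound on $\frac{1}{N^2}\sum_{i,j}g(\veta_{i,j})$.

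The essential new ingredient is a sub-exponential tail bound for a dependent sum $S_N(\vtheta) := \frac{1}{N(N-1)}\sum_{i\neq j}f(\vtheta,\veta_{i,j})$; the diagonal contribution $\frac{1}{N^2}\sum_i f(\vtheta,\veta_{i,i})$ is an $O(1/N)$ correction handled by ordinary independent Bernstein concentration. For $S_N$ I would use Hoeffding's permutation decomposition, whose correctness is a direct counting exercise:
\begin{equation*}
S_N(\vtheta) = \frac{1}{N!}\sum_{\pi\in\mathfrak{S}_N}\frac{1}{\lfloor N/2\rfloor}\sum_{k=1}^{\lfloor N/2\rfloor}f\bigl(\vtheta,\veta_{\pi(2k-1),\pi(2k)}\bigr).
\end{equation*}
Applying Jensen's inequality to $e^{t\cdot}$ then yields $\mathbb{E} e^{t(S_N(\vtheta)-\mathbb{E} S_N(\vtheta))} \leq \max_\pi \mathbb{E} e^{tT_N^\pi(\vtheta)}$, where $T_N^\pi(\vtheta) := \frac{1}{\lfloor N/2\rfloor}\sum_k\bigl(f(\vtheta,\veta_{\pi(2k-1),\pi(2k)})-\mathbb{E} f(\vtheta,\veta_{\pi(2k-1),\pi(2k)})\bigr)$ is an average of $\lfloor N/2\rfloor$ \emph{independent} centered random variables by the hypothesized disjoint-indices independence. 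The sub-exponential moment assumption $\mathbb{E}[e^{|f-\mathbb{E} f|}]<M$ converts into a Bernstein-type MGF bound for each term, yielding $\mathbb{P}(|S_N(\vtheta)-\mathbb{E} S_N(\vtheta)|>t) \leq 2\exp(-cNt^2/\log M)$ in the sub-Gaussian regime, with a sub-exponential complement at larger $t$. The same reduction applied to $g(\veta_{i,j})$ in place of $f$ controls the Lipschitz term.

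The main obstacle will be carrying out the U-statistic reduction cleanly in a setting where $f(\vtheta,\veta_{i,j})$ is not symmetric in $(i,j)$ and $\veta_{i,j}$ itself is not a function of independent per-index coordinates; the permutation identity above is insensitive to symmetry, and the lemma's disjointness hypothesis is exactly what is needed to conclude independence of the $\lfloor N/2\rfloor$ summands in each $T_N^\pi$, so this obstacle is manageable provided one is careful in the counting argument. Once the MGF estimates for $f$ and $g$ are in hand, the remaining pieces — union bound over the net, Lipschitz extension to $\mathcal{C}$, and the strong-convexity reduction — track the independent case essentially verbatim. The stated constants $C_1', C_2'$ differ from $C_1, C_2$ of Lemma~\ref{lemma:minimizer_independent} only through modest absolute factors that account for the halving of the effective sample size from $N$ to $\lfloor N/2\rfloor$ and for separately handling the $O(1/N)$ diagonal correction.
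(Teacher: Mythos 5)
Your proposal is correct, and it reaches the central concentration estimate by a genuinely different route than the paper. The paper isolates the dependence issue in Lemma~\ref{lemma:dependent_concentration}: it explicitly partitions the off-diagonal index pairs of $[N]\times[N]$ into $m=\Theta(N)$ batches of roughly $N/2$ pairs with mutually disjoint index sets (Lemma~\ref{lemma:partition}), writes $e^{t\sum_{i,j}(X_{i,j}-\mathbb{E}[X_{i,j}])}$ as a product of $m+1$ batch factors, and decouples these dependent factors with the generalized H\"older inequality $\mathbb{E}[\prod_i Y_i]\le(\prod_i\mathbb{E}[Y_i^{m+1}])^{1/(m+1)}$ (Lemma~\ref{lemma:cauchy}); within each batch the MGF factorizes by independence, and the $1/(m+1)$ exponent is what cuts the $N^2$ terms down to an effective sample size of $\Theta(N)$. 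You instead use Hoeffding's permutation decomposition of the off-diagonal average into blocks of $\lfloor N/2\rfloor$ pairwise-disjoint index pairs, push $e^{t(\cdot)}$ inside the average over $\pi\in\mathfrak{S}_N$ by Jensen, and land directly on the MGF of a sum of $\lfloor N/2\rfloor$ independent terms. Your counting check of the identity is right (each ordered pair $(i,j)$ with $i\neq j$ occurs $\lfloor N/2\rfloor\,(N-2)!$ times, normalizing to $1/(N(N-1))$), and, as you note, the identity needs neither symmetry of $f$ in $(i,j)$ nor any product structure of $\veta_{i,j}$ --- only the disjointness hypothesis, which is exactly what makes each block independent. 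The two devices are interchangeable here and give the same $\exp(-\Theta(Nt^2/M))$ tail; yours is the classical U-statistic tool and avoids constructing the partition by hand, while the paper's batch-plus-H\"older argument is self-contained and absorbs the diagonal as one extra batch rather than as a separate $O(1/N)$ correction. Everything else --- the $\Theta(\alpha\epsilon^2/L)$-net, the Lipschitz control via concentration of $\frac{1}{N^2}\sum_{i,j}g(\veta_{i,j})$, and the strong-convexity reduction --- tracks the paper essentially verbatim. Two cosmetic remarks: the paper's crude bound $x^2\le 4e^{|x|/2}$ yields a variance proxy of order $M$ rather than the $\log M$ you write, but either suffices for the stated sample complexity; and, like the paper's own proof, you implicitly read the disjointness hypothesis as giving mutual (not merely pairwise) independence within a block, which is what actually holds in the application.
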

\begin{proof}[Proof Sketch]
When we follow the proof of Lemma~\ref{lemma:minimizer_independent}, the challenging part is that we cannot use the fact that the expected value of a product of independent random variables is equal to a product of expectations of individual random variables. We overcome this by partitioning the $N^2$ random variables into batches such that random variables belonging to the same batch are independent (Lemma~\ref{lemma:partition}) and then applying a generalized Cauchy-Schwartz inequality (Lemma~\ref{lemma:cauchy}) to bound an expectation of a product of dependent random variables (each corresponding to a batch) with a product of expectations of the random variables. A formal proof can be found in Appendix~\ref{proof:minimizer_dependent}. 
\end{proof}

Similar to the proof of Theorem~\ref{thm:ERM_sufficient}, considering $\{ \veta_{i,j} \}_{i,j=1}^n$ as the ``mixed'' dataset $\{ (\tilde{\vx}_{i,j}. \tilde{y}_{i,j})\}_{i.j=1}^n$, $\vtheta$ as weight vector $\vw$, and $f(\vtheta, \veta_{i,j})$ as $(\vw, (\tilde{\vx}_{i,j}, \tilde{y}_{i,j})) \mapsto \tilde{y_i}l(\vw^\top \tilde{\vx}_{i,j} + (1-\tilde{y_i})l(-\vw^\top \tilde{\vx}_{i,j})$ induces the following theorem.
\begin{theorem}\label{thm:mixup_convergence}
Let $\epsilon, \delta \in (0,1)$. Suppose the training set $S = \{(\vx_i, y_i)\}_{i=1}^n\overset{\mathrm{i.i.d}}{\sim} \mathcal{D}_\kappa$ with large enough $\kappa \in (0,\infty)$ and
\begin{equation*}
    n = \frac{\Omega(\kappa^2)}{\epsilon^4}\left (1 + \log \frac{1}{\epsilon} + \log \frac{1}{\delta} \right).
\end{equation*}
Then, with probability at least $1-\delta$, the unique minimizer $\hat{\vw}_{\mathrm{mix}, S}^*$ of $\lossmix(\vw)$ exists and $\cosim(\hat{\vw}_{\mathrm{mix}, S}^*, \mSigma^{-1}\vmu) \geq 1-\epsilon$.
\end{theorem}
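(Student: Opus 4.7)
The plan is to invoke Lemma~\ref{lemma:minimizer_dependent} with the mixed samples $\veta_{i,j} := (\tilde{\vx}_{i,j}, \tilde{y}_{i,j})$ and the per-pair Mixup loss $f(\vw, \veta_{i,j}) = \tilde{y}_{i,j}\,l(\vw^\top \tilde{\vx}_{i,j}) + (1-\tilde{y}_{i,j})\,l(-\vw^\top \tilde{\vx}_{i,j})$, so that $\hat{F}_n(\vw) = \lossmix(\vw)$ and $F_n(\vw) = \mathbb{E}_\kappa[\lossmix(\vw)]$. The disjointness-implies-independence hypothesis is essentially free: $(\tilde{\vx}_{i,j}, \tilde{y}_{i,j})$ is a deterministic function of $(\vx_i, y_i)$, $(\vx_j, y_j)$, and the independent mixing coefficient $\lambda_{i,j}$, so whenever $\{i_1, j_1\}\cap \{i_2, j_2\} = \emptyset$ the two mixed pairs share no source of randomness.

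To set the compact domain $\mathcal{C}$, I would first establish the Mixup analogue of Lemma~\ref{lemma:ERM_norm}, bounding $\lVert \wmix \rVert$ polynomially in $\kappa$ by examining the stationarity condition of $\mathbb{E}_\kappa[\lossmix(\vw)]$ derived in the proof sketch of Theorem~\ref{thm:expected_loss_mix}; then take $\mathcal{C}$ to be a Euclidean ball around the origin of radius $D = \mathrm{poly}(\kappa)$ that strictly contains $\wmix$. A standard coercivity argument — $\lossmix$ grows with $\lVert \vw \rVert$ outside $\mathcal{C}$, uniformly enough thanks to Gaussian tail bounds on $\tilde{\vx}_{i,j}$ — confines $\hat{\vw}_{\mathrm{mix}, S}^*$ to $\mathcal{C}$ on the high-probability event, so restricting to $\mathcal{C}$ is without loss.

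Next I would verify the four remaining assumptions of Lemma~\ref{lemma:minimizer_dependent} and track their $\kappa$-dependence. Uniqueness of $\wmix$ is Theorem~\ref{thm:expected_loss_mix}; uniqueness of $\hat{\vw}_{\mathrm{mix}, S}^*$ follows from convexity of $\lossmix$ and the strong convexity of its expectation on $\mathcal{C}$. The exponential moments of $f$ and of $\lVert \nabla_\vw f\rVert$ reduce to moments of $l$ and $l'$ evaluated at $\vw^\top \tilde{\vx}_{i,j}$, which are sub-Gaussian in $\tilde{\vx}_{i,j}$ with variance $\mathcal{O}(\kappa^{-1})$; Lipschitzness of $l$ then yields $M, L = \mathrm{poly}(\kappa)$. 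The key quantitative step is strong convexity of $\mathbb{E}_\kappa[\lossmix(\vw)]$ on $\mathcal{C}$: the Hessian is an average over $(i,j)$ of $l''(\vw^\top \tilde{\vx}_{i,j})\,\tilde{\vx}_{i,j}\tilde{\vx}_{i,j}^\top$ (plus a symmetric term), and I would show $\alpha^{-1} = \mathrm{poly}(\kappa)$ by combining a positive lower bound on $\mathbb{E}[l''(\vw^\top \tilde{\vx}_{i,j})]$ with a lower bound on the smallest eigenvalue of $\mathbb{E}[\tilde{\vx}_{i,j}\tilde{\vx}_{i,j}^\top]$. Plugging $D, M, L, \alpha$ into Lemma~\ref{lemma:minimizer_dependent} produces $\lVert \hat{\vw}_{\mathrm{mix}, S}^* - \wmix \rVert \le \epsilon'$ for $n = \Omega(\kappa^2)/\epsilon'^4 \cdot (1 + \log(1/\epsilon') + \log(1/\delta))$ after absorbing polynomial factors into $\Omega$. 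Finally, since $\wmix$ is collinear with $\mSigma^{-1}\vmu$ by Theorem~\ref{thm:expected_loss_mix}, a standard angular bound gives $1 - \cosim(\hat{\vw}_{\mathrm{mix}, S}^*, \mSigma^{-1}\vmu) = \mathcal{O}(\epsilon'^2/\lVert \wmix \rVert^2)$, and choosing $\epsilon'$ appropriately (absorbing yet another polynomial in $\kappa$) yields the stated $\cosim \ge 1 - \epsilon$.

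The main obstacle I anticipate is the strong convexity step: one has to show that, on a ball whose radius grows with $\kappa$, the expected Mixup Hessian remains lower-bounded with $\alpha^{-1}$ only polynomial in $\kappa$. This is precisely where Mixup's benefit over ERM materializes — in ERM the relevant $l''$ factor decays exponentially at well-separated datapoints, whereas the assumption $\mathbb{P}_{\lambda \sim \Lambda}[\lambda \notin \{0,1\}\,\wedge\,g(\lambda)\neq \tfrac{1}{2}] > 0$ guarantees a constant fraction of mixed pairs whose fractional labels $\tilde{y}_{i,j} \in (0,1)$ force the loss arguments to stay moderate in expectation, keeping $l''$ bounded below. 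Carefully leveraging this interior mixing mass, rather than grinding through the raw Gaussian integrals, is what converts the exponential complexity of ERM into the polynomial complexity stated here.
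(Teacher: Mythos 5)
Your overall architecture matches the paper's: apply Lemma~\ref{lemma:minimizer_dependent} to the mixed pairs (the disjointness-implies-independence check is indeed free), restrict to a compact ball containing $\wmix$, verify the moment and strong-convexity hypotheses, and convert $\ell_2$ closeness into cosine similarity. The gap is in the one quantitative step on which the quadratic rate rests: you propose to bound $\lVert \wmix \rVert$ only \emph{polynomially} in $\kappa$ and take the radius $D = \mathrm{poly}(\kappa)$. The paper's Lemma~\ref{lemma:mixup_norm} proves the much stronger fact $\lVert \wmix \rVert = \Theta(1)$, uniformly in $n$ and $\kappa$, and this is not cosmetic. With $D = \mathrm{poly}(\kappa)$ of any positive degree your claimed bounds $M, L = \mathrm{poly}(\kappa)$ fail: the exponential-moment hypothesis must hold for all $\lVert \vw \rVert \leq D$, and since $\lVert \tilde{\vx}_{i,j} \rVert$ concentrates near $\lVert \vmu \rVert = \Theta(1)$ (its mean is $\pm\vmu$ or $\pm(2g(\lambda)-1)\vmu$; only its \emph{variance} shrinks with $\kappa$), one has $|f(\vw,\veta)| \leq l(-D\lVert\tilde{\vx}_{i,j}\rVert) \approx D\lVert\vmu\rVert$, hence $M = \exp(\Theta(D)) = \exp(\mathrm{poly}(\kappa))$; similarly the strong-convexity constant obtained from $l''(z)\geq \frac14 e^{-z^2/2}$ carries a factor $\exp(-D^2\cdot\Theta(1))$. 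This is exactly the mechanism that makes ERM exponential (there $D = \Theta(\kappa)$); with merely polynomial $D$ you would reproduce an $\exp(\mathrm{poly}(\kappa))$ sample complexity, not $\Theta(\kappa^2)$. The stated rate requires $M, L, D = \Theta(1)$ together with $\alpha = \Theta(\kappa^{-1})$ (the $\kappa^{-1}$ coming from the prefactor $\kappa^{d/2}(\kappa\lVert\mSigma^{-1}\rVert + D^2)^{-(d+2)/2}$ once the exponential factor is $\Theta(1)$), giving $n \gtrsim M/(\alpha^2\epsilon^4) = \kappa^2/\epsilon^4$. The essential missing idea is therefore the proof that $\lVert\wmix\rVert$ does not grow with $\kappa$ at all; the paper gets it by lower-bounding $\mathbb{E}_\kappa[\lossmix(c\mSigma^{-1}\vmu)]$ linearly in $c$ via $l(z)+l(-z) \geq |z|$ applied to the cross-label mixed pairs with $\tilde{y}_{i,j}\in(0,1)$ --- this is where the assumption $\mathbb{P}[\lambda\notin\{0,1\}\wedge g(\lambda)\neq\frac12]>0$ is actually used, rather than in the Hessian bound as your last paragraph suggests.

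Two smaller issues. Your Hessian step ``combine a lower bound on $\mathbb{E}[l''(\vw^\top\tilde{\vx}_{i,j})]$ with a lower bound on $\lambda_{\min}(\mathbb{E}[\tilde{\vx}_{i,j}\tilde{\vx}_{i,j}^\top])$'' does not go through as stated, since $\mathbb{E}[l''(\vw^\top X)(\vv^\top X)^2]$ does not factor and the two quantities are adversely correlated precisely where $l''$ is small; the paper instead bounds $l''(z)\geq\frac14 e^{-z^2/2}$ pointwise and evaluates the resulting Gaussian integral (Lemma~\ref{lemma:ineq3}). Also, at the end you should note that once $\lVert\hat{\vw}_{\mathrm{mix},S}^* - \wmix\rVert \leq \lVert\wmix\rVert\epsilon$ the empirical minimizer lies in the interior of $\mathcal{C}$, so the constrained minimizer is the global one; this requires a lower bound $\lVert\wmix\rVert \geq r_\kappa = \Theta(1)$ independent of $n$, which is the other half of Lemma~\ref{lemma:mixup_norm}.
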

Theorem~\ref{thm:mixup_convergence} indicates that the sample complexity for finding a near Bayes optimal classifier with Mixup training grows only quadratically in $\kappa$. Compared to the necessity of exponential growth demonstrated in Theorem~\ref{thm:ERM_necessary}, Theorem~\ref{thm:mixup_convergence} shows that there is a provable exponential gap between ERM training and Mixup training.
\paragraph{Intuition on the Smaller Sample Complexity of Mixup.}We would like to provide some intuition on our result before we introduce technical aspects. Unlike ERM training, Mixup training uses mixed training points and these can be located near the optimal decision boundary when we mix two data points having distinct labels. This closeness of mixed points to the Bayes optimal decision boundary makes it easier to correctly locate the boundary.
\vspace*{-5pt}
\paragraph{Different Scaling of $\w$ and $\wmix$.}
On the technical front, the difference in sample complexity from Theorem~\ref{thm:ERM_sufficient} stems from the difference between the norm of expected loss minimizers $\w$ and $\wmix$, which determine several meaningful terms when we apply Lemma~\ref{lemma:minimizer_dependent}.
The following lemma characterizes the $\ell_2$ norm of $\wmix$, defined in Theorem~\ref{thm:expected_loss_mix}:
\begin{lemma}\label{lemma:mixup_norm}
The unique minimizer $\wmix$ of expected Mixup loss $\mathbb{E}_\kappa [\lossmix(\vw)]$ satisfies
\begin{equation*}
    \lVert \wmix \rVert  = \Theta (1).\footnote{We stress that the upper/lower bounds are independent of $n$.}
\end{equation*}
\end{lemma}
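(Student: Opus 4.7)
The plan is to reduce the $d$-dimensional minimization to a one-dimensional problem using Theorem~\ref{thm:expected_loss_mix}, which guarantees $\wmix = t^*\mSigma^{-1}\vmu$ for some scalar $t^* > 0$. Since $\lVert\mSigma^{-1}\vmu\rVert$ is a positive constant independent of $\kappa$, it suffices to show $t^* = \Theta(1)$. Define
\begin{equation*}
    \phi(t) := \mathbb{E}_\kappa[\lossmix(t\mSigma^{-1}\vmu)],
\end{equation*}
which is strictly convex (the expectation of the strictly convex logistic loss restricted to a line, with a unique minimizer guaranteed by Theorem~\ref{thm:expected_loss_mix}). Decompose $\phi$ by conditioning on the label pair $(y_i,y_j)$: writing $\alpha := \vmu^\top \mSigma^{-1}\vmu$ and $\sigma_\lambda^2 := g(\lambda)^2 + (1-g(\lambda))^2$, same-label pairs contribute terms of the form $\mathbb{E}_{Z,\lambda}[l(t\alpha + \kappa^{-1/2}\sigma_\lambda t\sqrt{\alpha}Z)]$ with $Z \sim N(0,1)$, while cross-label pairs contribute a soft cross-entropy term $\mathbb{E}_{Z,\lambda}[\lambda\, l((2g(\lambda){-}1)t\alpha + \kappa^{-1/2}\sigma_\lambda t\sqrt{\alpha}Z) + (1{-}\lambda)l(-(2g(\lambda){-}1)t\alpha - \kappa^{-1/2}\sigma_\lambda t\sqrt{\alpha}Z)]$.

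For the lower bound on $t^*$, I would compute $\phi'(0)$ explicitly. Using $l'(0) = -1/2$ and the symmetry $\mathbb{E}[Z] = 0$, all noise terms vanish and one obtains an expression of the form
\begin{equation*}
    \phi'(0) = -\tfrac{\alpha}{2n} - \tfrac{(n-1)\alpha}{4n}\bigl(1 + \mathbb{E}_{\lambda\sim\Lambda}[(2\lambda-1)(2g(\lambda)-1)]\bigr).
\end{equation*}
By the assumption that $g(\lambda) > 1/2$ iff $\lambda > 1/2$, the factor $(2\lambda-1)(2g(\lambda)-1)$ is non-negative, so $\phi'(0)$ is bounded above by a negative constant independent of $\kappa$ for $n \geq 2$. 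A direct second-derivative bound (using $l''\leq 1/4$ and that the linear coefficient of $t$ inside each logistic is at most of order $\alpha$ or $\sqrt{\alpha}\,|Z|$ with bounded Gaussian moments) yields a uniform Lipschitz constant $L$ for $\phi'$ independent of $\kappa$. Thus $\phi'(t) \leq \phi'(0) + Lt$, which gives $t^* \geq |\phi'(0)|/L = \Omega(1)$.

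For the upper bound, the key fact is that the soft cross-entropy $\lambda l(u) + (1-\lambda) l(-u)$ with $\lambda \in (0,1)$ has a finite minimizer $u^* = \log(\lambda/(1-\lambda))$ and grows linearly as $|u|\to\infty$. By the assumption $\mathbb{P}_{\lambda\sim\Lambda}[\lambda \notin\{0,1\}\wedge g(\lambda)\neq 1/2]>0$, there is a set $A \subset(0,1)$ of positive $\Lambda$-measure on which the soft cross-entropy's derivative in $u$ is uniformly positive for $u$ larger than a fixed threshold (and analogously for $u$ below a negative threshold). Choose a constant $C_0$ large enough that at $t = C_0$, the contribution from $A$ to the cross-label derivative dominates the bounded (in magnitude by $\alpha$) contribution from same-label terms. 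The only $\kappa$-dependence enters through the noise term $\kappa^{-1/2}\sigma_\lambda t\sqrt{\alpha}Z$, which can be absorbed for large enough $\kappa$ since $|l'|\leq 1$ and $\kappa^{-1/2} \to 0$. Consequently $\phi'(C_0) > 0$, and strict convexity yields $t^* \leq C_0 = O(1)$.

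The main obstacle is the upper bound: one must argue carefully that at a specific constant $t = C_0$, the positive derivative contribution from the cross-label soft-CE term exceeds any negative contribution from the same-label terms, uniformly in $\kappa$. This requires a case split on the sign of $2g(\lambda)-1$ together with a tight estimate of the Gaussian noise's effect on $l'$ evaluated at arguments of constant order.
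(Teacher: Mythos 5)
Your reduction to the scalar function $\phi(t)=\mathbb{E}_\kappa[\lossmix(t\mSigma^{-1}\vmu)]$ and your lower bound are sound and essentially reproduce the paper's argument: the paper likewise restricts to the ray $c\mSigma^{-1}\vmu$ and shows $-\frac{\partial}{\partial c}\mathbb{E}_\kappa[\lossmix(c\mSigma^{-1}\vmu)]>0$ for all $c$ below an explicit constant independent of $n$ and (for large $\kappa$) of $\kappa$, using the quadratic bound $\tfrac{z}{1+e^z}\ge\tfrac{z}{2}-\tfrac{z^2}{4}$ — which is exactly your "$\phi'(t)\le\phi'(0)+Lt$" curvature argument in disguise. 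Your formula for $\phi'(0)$ and the observation that $(2\lambda-1)(2g(\lambda)-1)\ge 0$ both check out.

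The upper bound is where you diverge from the paper, and as written there is a genuine gap. You claim that at $t=C_0$ the cross-label derivative "dominates the bounded (in magnitude by $\alpha$) contribution from same-label terms." But the cross-label derivative is itself at most $\alpha\,\mathbb{E}_\lambda[|2g(\lambda)-1|]\le\alpha$ in magnitude, and it enters $\phi'$ with weight $\tfrac{n-1}{2n}$ while the same-label terms carry total weight $\tfrac{n-1}{2n}+\tfrac1n$; so knowing only $|D_{\mathrm{same}}(C_0)|\le\alpha$ leaves $\phi'(C_0)$ possibly as negative as $-\alpha/n$, and the argument does not close. To repair it you must show the same-label derivative actually \emph{decays} in $C_0$, e.g.\ $\mathbb{E}\bigl[l'\bigl(C_0(\alpha+\epsilon Z)\bigr)(\alpha+\epsilon Z)\bigr]\ge -\tfrac{1}{eC_0}-o_\kappa(1)$ via $xe^{-tx}\le\tfrac1{et}$, while the cross-label derivative stays bounded below by a positive constant on a positive-measure set of $\lambda$ once $C_0$ exceeds a fixed threshold — precisely the "tight estimate" you defer. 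The paper sidesteps this derivative comparison entirely by comparing \emph{function values}: since $l(z)+l(-z)>|z|$ and the same-label terms are nonnegative,
\begin{equation*}
\mathbb{E}_\kappa[\lossmix(c\mSigma^{-1}\vmu)]\ \ge\ \frac{n-1}{n}\,\mathbb{E}_{\lambda\sim\Lambda}\bigl[\min\{\lambda,1-\lambda\}\,|2g(\lambda)-1|\bigr]\,(\vmu^\top\mSigma^{-1}\vmu)\,c,
\end{equation*}
which exceeds $\mathbb{E}_\kappa[\lossmix(\vzero)]=\log 2$ once $c$ is larger than an explicit constant independent of $n$ and $\kappa$, so such a $c$ cannot be the minimizer. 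I recommend replacing your sign-of-$\phi'(C_0)$ step with this one-line value comparison; the rest of your plan then goes through.
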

\vspace*{-10pt}
Comparison with Lemma~\ref{lemma:ERM_norm} reveals that the minimizer $\wmix$ of expected Mixup loss is much closer to zero compared to that of the expected ERM loss. 
For ERM loss, large scaling of weight leads to smaller loss for correctly classified data points. Also, for larger $\kappa$, larger portion of population data will be correctly classified by $\mSigma^{-1} \vmu$. Hence, $\lVert \w \rVert$ increases as $\kappa$ increases. However, in case of Mixup, mixed labels prevent $\wmix$ from growing with $\kappa$. To illustrate why, consider the case $\tilde y_{i,j} = 0.5$, which leads to a Mixup loss
\begin{equation*}
0.5 l(\vw^\top \tilde \vx_{i,j}) + 0.5 l(-\vw^\top \tilde \vx_{i,j}).
\end{equation*}
Notice here that the loss becomes large whenever $\vw^\top \tilde \vx_{i,j}$ is large in magnitude, no matter the sign is. For this reason, $\wmix$ should not increase with $\kappa$ and this leads to the smaller sample complexity of Mixup.

In this section, we showed that Mixup training does not distort the training loss (Theorem~\ref{thm:expected_loss_mix}) and also that Mixup provides a great remedy to the curse of separability phenomenon (Theorem~\ref{thm:mixup_convergence}), because the sample complexity only grows in $\kappa^2$ while ERM suffers at least exponential growth in $\kappa$. Thus, Mixup provably mitigates the curse of separability and helps us find a model with the best generalization performance.
\vspace*{-5pt}
\section{Masking-based Mixup Can Distort Training}\label{section:mask}
Recent Mixup variants for image data \citep{yun2019cutmix, kim2020puzzle, kim2021co, liu2022automix} use masking on input data. In this section, we investigate how masking-based augmentation techniques work in our data distribution setting. We consider the class of masking-based Mixup variants formulated as follows.
\begin{definition}(Masking-based Mixup Loss) Masking-based Mixup loss with training set $S = \{ (\vx_i, y_i)\}_{i=1}^n$ is defined as
\vspace{-10pt}
\begin{align*}
   \lossmask(\vw):= \frac{1}{n^2} &\sum_{i,j=1}^n \tilde{y}_{i,j}^\mathrm{mask} l(\vw^\top \tilde{\vx}_{i,j}^\mathrm{mask})\\
   &\qquad+ (1-\tilde{y}_{i,j}^\mathrm{mask}) l(-\vw^\top \tilde{\vx}_{i,j}^\mathrm{mask}) ,
   \vspace{-10pt}
\end{align*}
where
\vspace{-10pt}
\begin{align*}
   (\mM_{i,j},\lambda_{i,j}) &\overset{\mathrm{i.i.d}}{\sim} \mathcal{M},\\
   \tilde{\vx}_{i,j}^\mathrm{mask} &~= \mM_{i,j} \odot  \vx_i  + (\vone -\mM_{i,j}) \odot \vx_j, \\
   \tilde{y}_{i,j}^\mathrm{mask} &~= \lambda_{i,j} y_i  + (1-\lambda_{i,j})y_j.
\end{align*}
Here, $\support (\mathcal{M}) \subset \{0,1\}^d \times[0,1]$. 
\end{definition}
\vspace{-10pt}
In our definition of masking-based mixup loss, we formulate the masking operation on data points by element-wise multiplication with vectors having entries from only $0$ and $1$. This formulation includes CutMix \citep{yun2019cutmix} which is simplest type of masking-based Mixup. State-of-the-art Mixup variants having more complex masking strategies \citep{kim2020puzzle, kim2021co, liu2022automix} are out of the scope of this paper. We also introduce the following assumption on masking.
\begin{assumption} \label{Assumption:mask}
The set $\support(\vmu \odot (2\mM-\vone))$ spans $\mathbb{R}^d$ and $\mathbb{P}[\lambda \vone_{\mM = \mM_0} \not \in \{0,1\}]>0$ for each $\mM_0 \in \support(\mM)$ where $(\mM, \lambda) \sim \mathcal{M}$.
\end{assumption}
\vspace{-5pt}
Before we move on to our main results, we demonstrate why our formulation and assumption hold for CutMix. CutMix samples mixing ratio $\lambda_{i,j}$ from beta distribution and masking vector $\mM_{i,j}$ is uniformly sampled from vectors in which the number of $1$'s is proportional to $\lambda_{i,j}$. 
Since the support of beta distribution is $[0,1]$, support of $\mM_{i,j}$ contains the standard basis of $\mathbb{R}^d$. Hence if all the components of $\vmu$ are nonzero, Assumption~\ref{Assumption:mask} holds.

Recall that we defined $\mathcal{D}_\infty$ as a limit behavior of $\mathcal{D}_\kappa$ as $\kappa \rightarrow \infty$ and it is independent of $\mSigma$. Hence, $\mathbb{E}_\infty [\lossmask(\vw)]$ is independent of $\mSigma$. The following theorem investigates the minimizer of the expected masking-based Mixup loss $\mathbb{E}_\kappa [\lossmask (\vw)]$, focusing on large enough $\kappa$.

\begin{theorem}\label{thm:mask_minimizer}
Suppose Assumption~\ref{Assumption:mask} holds and the training set $S = \{(\vx_i, y_i)\}_{i=1}^n\overset{\mathrm{i.i.d}}{\sim} \mathcal{D}_\kappa$ with $\kappa \in (0, \infty]$. Then, the expected loss $\mathbb{E}_\kappa [\lossmask(\vw)]$ has a unique minimizer $\wmask$. In addition, $\lim_{\kappa \rightarrow \infty} \wmask[\kappa] = \wmask[\infty]$.
\end{theorem}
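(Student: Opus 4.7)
The plan is to first establish existence and uniqueness of $\wmask$ for each fixed $\kappa \in (0, \infty]$ via strict convexity and coercivity of $\mathbb{E}_\kappa[\lossmask(\vw)]$, and then to deduce the convergence $\wmask[\kappa] \to \wmask[\infty]$ from uniform boundedness of the minimizers combined with locally uniform convergence of the expected losses to $\mathbb{E}_\infty[\lossmask(\vw)]$.

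For \emph{strict convexity}, since $l''(z) = e^{-z}/(1+e^{-z})^2 > 0$ for every $z$, the scalar function $z \mapsto \tilde{y}\, l(z) + (1-\tilde{y})\, l(-z)$ has a strictly positive second derivative for every $\tilde{y} \in [0,1]$. Hence $\nabla_\vw^2 \mathbb{E}_\kappa[\lossmask(\vw)] = \mathbb{E}_\kappa\bigl[\phi\bigl(\vw;\tilde{y}^\mathrm{mask}, \tilde{\vx}^\mathrm{mask}\bigr)\, \tilde{\vx}^\mathrm{mask} (\tilde{\vx}^\mathrm{mask})^\top\bigr]$ for a strictly positive scalar $\phi$, and for any unit vector $\vu$ the quadratic form $\vu^\top \nabla_\vw^2 \mathbb{E}_\kappa[\lossmask(\vw)] \vu$ vanishes only if $\vu^\top \tilde{\vx}^\mathrm{mask} = 0$ almost surely. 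When $\kappa < \infty$, the full-rank Gaussian noise with covariance $\kappa^{-1}\mSigma$ on each coordinate block rules this out; when $\kappa = \infty$, conditioning on the positive-probability event $\{y_i = 1, y_j = 0\}$ yields $\tilde{\vx}^\mathrm{mask} = \vmu \odot (2\mM - \vone)$, whose support spans $\mathbb{R}^d$ by Assumption~\ref{Assumption:mask}.

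For \emph{coercivity}, the condition $\mathbb{P}[\lambda \vone_{\mM = \mM_0} \notin \{0, 1\}] > 0$ in Assumption~\ref{Assumption:mask} together with $\mathbb{P}[y_i \neq y_j] = 1/2$ produces a positive-probability event on which $\tilde{y}^\mathrm{mask} \in (0, 1)$. On that event, $\tilde{y}^\mathrm{mask} l(\vw^\top \tilde{\vx}^\mathrm{mask}) + (1-\tilde{y}^\mathrm{mask}) l(-\vw^\top \tilde{\vx}^\mathrm{mask})$ grows at least linearly in $\lvert \vw^\top \tilde{\vx}^\mathrm{mask} \rvert$, and combined with the spanning property this forces $\mathbb{E}_\kappa[\lossmask(\vw)] \to \infty$ as $\lVert \vw \rVert \to \infty$. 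Continuity, strict convexity, and coercivity together yield a unique minimizer $\wmask$ for every $\kappa \in (0, \infty]$.

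For the \emph{limit} $\wmask[\kappa] \to \wmask[\infty]$, I would first show that $\{\wmask[\kappa]\}_{\kappa \geq \kappa_0}$ lies in a bounded subset of $\mathbb{R}^d$ for some $\kappa_0 < \infty$, using a coercivity estimate whose linear-growth coefficient is uniform in $\kappa$ for large $\kappa$; the dominant contribution comes from the event $\tilde{y}^\mathrm{mask} \in (0,1)$, in which the mean of $\tilde{\vx}^\mathrm{mask}$ is $\kappa$-independent while the Gaussian perturbation shrinks. Dominated convergence then yields $\mathbb{E}_\kappa[\lossmask(\vw)] \to \mathbb{E}_\infty[\lossmask(\vw)]$ pointwise in $\vw$; since these are convex functions converging pointwise to a convex limit, the convergence is automatically uniform on compact sets. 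A standard argument -- any subsequential limit of $\wmask[\kappa]$ minimizes $\mathbb{E}_\infty[\lossmask]$ and hence equals $\wmask[\infty]$ by uniqueness -- concludes the proof. The main obstacle is precisely the uniform boundedness of $\{\wmask[\kappa]\}$: pointwise convergence of the loss and uniqueness at $\kappa = \infty$ are standard, but ruling out drift of $\wmask[\kappa]$ as $\kappa$ grows demands a quantitative coercivity estimate whose growth slope does not degenerate, which in turn requires careful use of Assumption~\ref{Assumption:mask} to pin down the linear-growth coefficient uniformly in $\kappa$.
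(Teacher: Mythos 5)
Your proposal is correct, and the existence/uniqueness half is essentially the paper's argument: strict convexity from $l''>0$ plus full-rank Gaussian noise for finite $\kappa$ and the spanning part of Assumption~\ref{Assumption:mask} at $\kappa=\infty$, together with a coercivity bound of the form $\mathbb{E}_\kappa[\lossmask(\vw)]\geq \tfrac12\sum_k \min\{a_{i_k},b_{i_k}\}\lvert \vw^\top\vmu^{(i_k)}\rvert$ coming from $l(z)+l(-z)\geq\lvert z\rvert$ on the mixed-label event. You are also right that the crux is the $\kappa$-uniformity of this coercivity slope; in fact it comes for free, since the bound only uses $\mathbb{E}[\lvert\vw^\top\rvx\rvert]\geq\lvert\vw^\top\mathbb{E}[\rvx]\rvert$ and the means $\vmu^{(k)}=\vmu\odot(2\mM^{(k)}-\vone)$ do not depend on $\kappa$, so a single radius $R$ contains $\wmask$ for all $\kappa\in(0,\infty]$. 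Where you genuinely diverge is the limit $\wmask\to\wmask[\infty]$: you use a soft argument (pointwise convergence of the expected losses by dominated convergence, upgraded to uniform convergence on compacts by convexity, then a subsequential-limit/uniqueness argument), whereas the paper first establishes $\alpha$-strong convexity of $\mathbb{E}_\infty[\lossmask(\vw)]$ on the ball of radius $R$ and combines it with the explicit bound $0\leq\mathbb{E}_\kappa[\lossmask(\vw)]-\mathbb{E}_\infty[\lossmask(\vw)]\leq C\kappa^{-1/2}$ (via $0\leq\mathbb{E}_{X\sim N(m,\sigma^2)}[l(X)]-l(m)\leq\sigma$) to get $\frac{\alpha}{2}\lVert\wmask-\wmask[\infty]\rVert^2\leq 2C\kappa^{-1/2}$. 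Both routes prove the theorem; yours is more elementary and avoids verifying strong convexity of the limit loss, but it is non-quantitative, while the paper's explicit rate (and the strong-convexity constant $\alpha$) is reused later in the sample-complexity proof of Theorem~\ref{thm:mask_convergence} to lower-bound $\lVert\wmask\rVert$ uniformly for large $n$ and $\kappa$, so the quantitative version is not merely a stylistic choice in the context of the paper.
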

\vspace*{-10pt}
\begin{proof}[Proof Sketch]
For the uniqueness of the minimizer, we use almost the same strategy as the uniqueness parts of Theorem~\ref{thm:expected_loss_ERM} and Theorem~\ref{thm:expected_loss_mix}. The only part that requires a different strategy is the uniqueness for $\kappa = \infty$; in this case, we exploit Assumption~\ref{Assumption:mask}. Also, from Assumption~\ref{Assumption:mask}, we can show that $\mathbb{E}_\infty \left[ \lossmask (\vw) \right]$ is $\alpha$-strongly convex with some $\alpha>0$.
Using strong convexity constant $\alpha$, we establish upper bound on $\lVert \wmask - \wmask[\infty] \rVert$ represented by $\mathbb{E}_\kappa \left[ \lossmask (\vw) \right]  - \mathbb{E}_\infty \left[ \lossmask (\vw) \right]$ for several values of $\vw$ contained in a bounded set.
We finish up by showing $\mathbb{E}_\kappa \left[ \lossmask (\vw) \right]  - \mathbb{E}_\infty \left[ \lossmask (\vw) \right] \rightarrow 0$ uniformly on the bounded set as $\kappa \rightarrow \infty$.
\vspace{-5pt}
\end{proof} 
\paragraph{Masking-based Mixup Can Distort Training Loss.}
Unlike ERM loss and Mixup loss, characterizing the exact direction of $\wmask$ is challenging since $\mathbb{E}_\kappa \left[ \lossmask (\vw) \right]$ has a more complicated form because of masking. However,
our Theorem~\ref{thm:mask_minimizer} implies that $\wmask$ leads to a solution only depending on $\vmu$ and deviates from the Bayes optimal solution for sufficiently large $\kappa$. Even though Theorem~\ref{thm:mask_minimizer} guarantees deviation of $\wmask$ from the Bayes optimal direction only for large $\kappa$, our experimental results in Section~\ref{exp:gaussian} suggest that our result holds even for moderately sized $\kappa$.

One might be wondering whether the same thing can be said for the minimizer of the expected Mixup loss $\mathbb{E}_\infty[\lossmix (\vw)]$; we illustrate why the proof idea of Theorem~\ref{thm:mask_minimizer} does not work for Mixup. While $\mathbb{E}_\infty[\lossmask (\vw)]$ is strongly convex and has a unique minimizer, minimizers of $\mathbb{E}_\infty[\lossmix (\vw)]$ are not unique since if $\vu \in \mathbb{R}^d$ is a minimizer of  $\mathbb{E}_\infty[\lossmix (\vw)]$, $\vu+\vv$ is also a minimizer of $\mathbb{E}_\infty[\lossmix (\vw)]$ for any $\vv \in \mathbb{R}^d$ orthogonal to $\vmu$. 
Therefore, $\wmix$ maintains its direction even though $\mathbb{E}_\infty[\lossmix (\vw)]$ is independent of $\mSigma$; since there are many minimizers in $\mathbb{E}_\infty[\lossmix (\vw)]$, $\wmix$ converges to the Bayes optimal minimizer (among the many) as $\kappa \to \infty$.

While masking-based Mixup training does not necessarily lead to Bayes optimal classifiers, the sample complexity proof still works. In fact, we can find the solution near the minimizer $\wmask$ of the expected loss $\mathbb{E}_{\kappa}[\lossmask(\vw)]$ with fewer samples.
\begin{theorem}\label{thm:mask_convergence}
Let $\epsilon, \delta \in (0,1)$. Suppose Assumption~\ref{Assumption:mask} holds and the training set $S = \{(\vx_i, y_i)\}_{i=1}^n\overset{\mathrm{i.i.d}}{\sim} \mathcal{D}_\kappa$ with large enough $\kappa \in (0,\infty)$.
If 
\begin{equation*}
    n = \frac{\Omega(1)}{\epsilon^4} \left (1+ \log \frac{1}{\epsilon} + \log \frac{1}{\delta} \right ),
    \vspace{-5pt}
\end{equation*}
then with probability at least $1-\delta$, the unique minimizer $\hat{\vw}_{\mathrm{mask}, S}^*$ of $\lossmask(\vw)$ exists and $\cosim(\hat{\vw}_{\mathrm{mask}, S}^*, \wmask) \geq 1-\epsilon$. 
\end{theorem}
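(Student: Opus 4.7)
My plan is to follow the proof scheme of Theorem~\ref{thm:mixup_convergence}, applying Lemma~\ref{lemma:minimizer_dependent} to the masked pairs $\veta_{i,j} = (\vx_i, \vx_j, y_i, y_j, \mM_{i,j}, \lambda_{i,j})$ with per-pair loss $f(\vw, \veta_{i,j}) = \tilde{y}_{i,j}^{\mathrm{mask}} l(\vw^\top \tilde{\vx}_{i,j}^{\mathrm{mask}}) + (1-\tilde{y}_{i,j}^{\mathrm{mask}}) l(-\vw^\top \tilde{\vx}_{i,j}^{\mathrm{mask}})$. The disjoint-index independence hypothesis of the lemma is immediate because the $(\vx_i, y_i)$ are i.i.d.\ in $i$ and the $(\mM_{i,j}, \lambda_{i,j})$ are drawn independently across pairs. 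I would take $\mathcal{C}$ to be a closed $\ell_2$-ball around $\wmask[\infty]$ of radius $R = \Theta(1)$, chosen large enough that Theorem~\ref{thm:mask_minimizer} places $\wmask$ strictly inside $\mathcal{C}$ for every sufficiently large $\kappa$; an optimum over $\mathcal{C}$ then agrees with the unconstrained minimizer $\hat{\vw}_{\mathrm{mask},S}^*$ on the event that the latter lies inside $\mathcal{C}$.

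The reason the sample complexity collapses to $\Omega(1)/\epsilon^4$, rather than the $\Omega(\kappa^2)/\epsilon^4$ obtained for Mixup, is that every constant fed into Lemma~\ref{lemma:minimizer_dependent}, namely $\alpha$, $D$, $M$, and $L$, can be chosen free of $\kappa$. The essential step is the strong-convexity constant $\alpha$. At $\kappa = \infty$ the masked inputs reduce to $\vmu \odot (2\mM - \vone)$ (up to signs determined by $y_i, y_j$), and Assumption~\ref{Assumption:mask} ensures these vectors span $\mathbb{R}^d$ while the nondegeneracy clause $\mathbb{P}[\lambda \vone_{\mM = \mM_0} \notin \{0,1\}]>0$ keeps the logistic curvature $l''(\cdot)$ active along each spanning direction, so $\nabla^2 \mathbb{E}_\infty[\lossmask(\vw)] \succeq \alpha_0 \mathbf{I}$ uniformly on $\mathcal{C}$ for some $\alpha_0 = \Theta(1)$. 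I would then show $\nabla^2 \mathbb{E}_\kappa[\lossmask(\vw)] \to \nabla^2 \mathbb{E}_\infty[\lossmask(\vw)]$ uniformly on $\mathcal{C}$ as $\kappa \to \infty$, via dominated convergence against bounded derivatives of the logistic loss, so that for all $\kappa$ large enough $\mathbb{E}_\kappa[\lossmask]$ is $(\alpha_0/2)$-strongly convex on $\mathcal{C}$. The exponential-moment bounds on $f$ and $\nabla_\vw f$ reduce to sub-Gaussian tail controls on $\vx_i \mid y_i$ whose variance $\kappa^{-1}\mSigma$ only shrinks in $\kappa$, so $M, L = \Theta(1)$ on the bounded set $\mathcal{C}$.

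Invoking Lemma~\ref{lemma:minimizer_dependent} with these $\Theta(1)$ constants yields $\lVert \hat{\vw}_{\mathrm{mask},S}^* - \wmask \rVert < \epsilon'$ with probability at least $1-\delta$ once $n = \Omega(1)\,{\epsilon'}^{-4}(1 + \log{\epsilon'}^{-1} + \log\delta^{-1})$; a short perturbation argument, using $\lVert \wmask \rVert = \Theta(1)$ (which is implied by Theorem~\ref{thm:mask_minimizer}), then converts this $\ell_2$ bound into $\cosim(\hat{\vw}_{\mathrm{mask},S}^*, \wmask) \ge 1 - \epsilon$ by taking $\epsilon' = \Theta(\epsilon)$, matching the claimed rate. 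The main obstacle will be the quantitative transfer of strong convexity from $\kappa = \infty$ to large but finite $\kappa$: although Assumption~\ref{Assumption:mask} qualitatively rules out the directional collapse that plagues Mixup (whose limiting mixed-data support $\{(2g(\lambda)-1)\vmu : \lambda \in \support(\Lambda)\}$ spans only a one-dimensional subspace, forcing $\alpha = \Theta(\kappa^{-1})$), turning the spanning property of $\support(\vmu \odot (2\mM - \vone))$ together with the $\lambda$-nondegeneracy into a uniform lower bound $\nabla^2 \mathbb{E}_\infty[\lossmask] \succeq \alpha_0 \mathbf{I}$ on $\mathcal{C}$, and controlling the Hessian perturbation $\nabla^2 \mathbb{E}_\kappa[\lossmask] - \nabla^2 \mathbb{E}_\infty[\lossmask]$ tightly enough to absorb it into $\alpha_0$, is where the real technical work sits.
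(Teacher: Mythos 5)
Your overall strategy matches the paper's: apply Lemma~\ref{lemma:minimizer_dependent} to the $n^2$ masked pairs on a compact set of $\Theta(1)$ diameter, verify that $M$, $L$, $D$, and the strong convexity constant $\alpha$ are all $\Theta(1)$ in $\kappa$, and convert $\ell_2$ closeness into cosine similarity. Your route to the strong convexity constant differs slightly (you pass through $\kappa=\infty$ and argue uniform Hessian convergence, whereas the paper lower-bounds $\vv^\top\nabla^2_\vw\mathbb{E}_\kappa[\lossmask(\vw)]\vv$ directly at finite $\kappa$ via the Gaussian integral estimate of Lemma~\ref{lemma:ineq3}, reducing the question to $\sum_k(\vv^\top\vmu^{(k)})^2>0$); both are viable.

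The genuine gap is in your final step. You assert that $\lVert\wmask\rVert=\Theta(1)$ ``is implied by Theorem~\ref{thm:mask_minimizer}.'' It is not: that theorem gives $\wmask\to\wmask[\infty]$ as $\kappa\to\infty$, which yields only an \emph{upper} bound. To turn $\lVert\hat{\vw}^*_{\mathrm{mask},S}-\wmask\rVert<\epsilon'$ into $\cosim(\hat{\vw}^*_{\mathrm{mask},S},\wmask)\geq 1-\epsilon$ you need $\epsilon'\leq\lVert\wmask\rVert\,\epsilon$, hence a \emph{lower} bound on $\lVert\wmask\rVert$ bounded away from zero uniformly in $n$ (recall that $\wmask$ depends on $n$ as well as $\kappa$, since the expected masked loss does). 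Nothing in Theorem~\ref{thm:mask_minimizer} rules out $\wmask[\infty]=\vzero$ or degeneration as $n$ grows. The paper devotes an entire step (Step~4 of Appendix~\ref{proof:mask_convergence}) to exactly this point: it introduces the double limit $\mathcal{L}_\infty^\mathrm{mask}(\vw)=\lim_{n\to\infty}\mathbb{E}_\infty[\lossmask(\vw)]$, proves its unique minimizer is \emph{nonzero} by an explicit first-order perturbation along a coordinate direction (using $a_k>0$, which comes from the $\lambda$-nondegeneracy clause of Assumption~\ref{Assumption:mask}, together with $\lim_{z\to0}(l(z)+l(-z)-2l(0))/z=0$), and then uses strong convexity and quantitative comparisons of the three objectives to conclude $\lVert\wmask\rVert\geq\lVert\vw^*_{\mathrm{mask}}\rVert/2$ for all sufficiently large $n$ and $\kappa$. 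Without an argument of this kind your reduction from $\ell_2$ convergence to directional convergence, and hence the claimed $\Omega(1)/\epsilon^4$ rate, does not close.
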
 
Theorem~\ref{thm:mask_convergence} indicates masking-based Mixup also mitigates the curse of separability (even better than Mixup).
\paragraph{Why Even Smaller Sample Complexity?} We consider a simple case $\vmu = [1,1]^\top$, which is sufficient to convey our intuition. When $\kappa$ is large, most of the data points are likely to be concentrated around $\vmu$ and $-\vmu$. Since Mixup uses linearly interpolated data points, all the mixed training data will be close to a line that passes through the origin and has direction $\vmu$. It means that all raw data points or mixed points are almost orthogonal to $[1,-1]^\top$. Consequently, both ERM and Mixup training loss will be less sensitive to perturbations orthogonal to $\vmu$: i.e., $\loss(\vw ) \approx \loss(\vw + t[1,-1]^\top)$ and $\lossmix(\vw ) \approx \lossmix(\vw + t[1,-1]^\top)$ for any small $t$, which makes it difficult to locate the exact minimizer of the objective loss.
In contrast, masking-based Mixup uses cut-and-pasted data points such as $\vz_1\approx [1,-1]^\top$, constructed by pasting the first coordinate of positive data and the second coordinate of negative data. Masking-based Mixup also uses $\vz_2 \approx [1,1]^\top$, obtained from mixing two positive data points. We observe that ${\vz_1, \vz_2 }$ span the whole space $\mathbb{R}^2$. Therefore, for any perturbation $\vv \in \mathbb{R}^2$, $\vv^\top \vz_1 \not \approx 0$ and/or $\vv^\top \vz_2 \not \approx 0$ should hold, which implies that $\lossmask(\vw ) \not \approx \lossmask(\vw + \vv)$ for any perturbation $\vv$. In other words, the masking-based Mixup loss is sensitive to perturbations in any direction. This makes it easier to locate the exact minimizer of the objective loss, even when $\kappa$ is large.

In this section, we showed that masking-based Mixup mitigates the curse of separability even better than Mixup, but unfortunately, making-based Mixup can find a classifier that is far from being Bayes optimal due to Theorem~\ref{thm:mask_minimizer}. One may think that these results are contradictory to the empirical success of masking-based Mixup such as CutMix~\citep{yun2019cutmix} on image data. However, the regularization effect of Mixup variants is highly dependent on the data, as also noted by \citet{parkunified}. Therefore, our conclusion in Section 5 does not necessarily contradict the success of masking-based Mixup on practical image data. We speculate that the distortion effect of masking-based Mixup on complex image data may be small or even beneficial (e.g., by increasing the chance of co-occurrence of some useful features). In this case, the small sample complexity of masking-based Mixup would be helpful. However, a rigorous theoretical analysis is beyond the scope of our current work and is an essential direction for future research.
\section{Experiments} \label{section:exp}
In this section, we present several experimental results to support our findings. 
\subsection{Experiments on Our Setting $\mathcal{D}_\kappa$}\label{exp:gaussian}
First, we provide empirical results on our setting. We properly choose $\vmu$ and $\mSigma$ with $\lVert \vmu \rVert^2 = \lVert \mSigma \rVert$ so that $\vmu$ and the Bayes optimal direction $\mSigma^{-1}\vmu$ have different directions, i.e., $\vmu$ is not an eigenevector of $\mSigma$. 
We provide exact values of our choice of $\vmu$ and $\mSigma$ in Appendix~\ref{setting:gaussian}.  
We compare three training methods ERM, Mixup, and Mask Mixup.
Mask Mixup we considered is a kind of masking-based Mixup with $\mathcal{M}$ defined by the following:
we say $(\mM, \lambda) \sim \mathcal{M}$ when $\lambda$ is drawn from beta distribution $\mathrm{Beta} (\alpha,\alpha)$ and each component of $\mM$ follows $\mathrm{Bernoulli}(\lambda)$.

\begin{figure}[t]
    \centering
\subfigure[$\kappa = 0.5$]{\includegraphics[width= 0.238\textwidth]{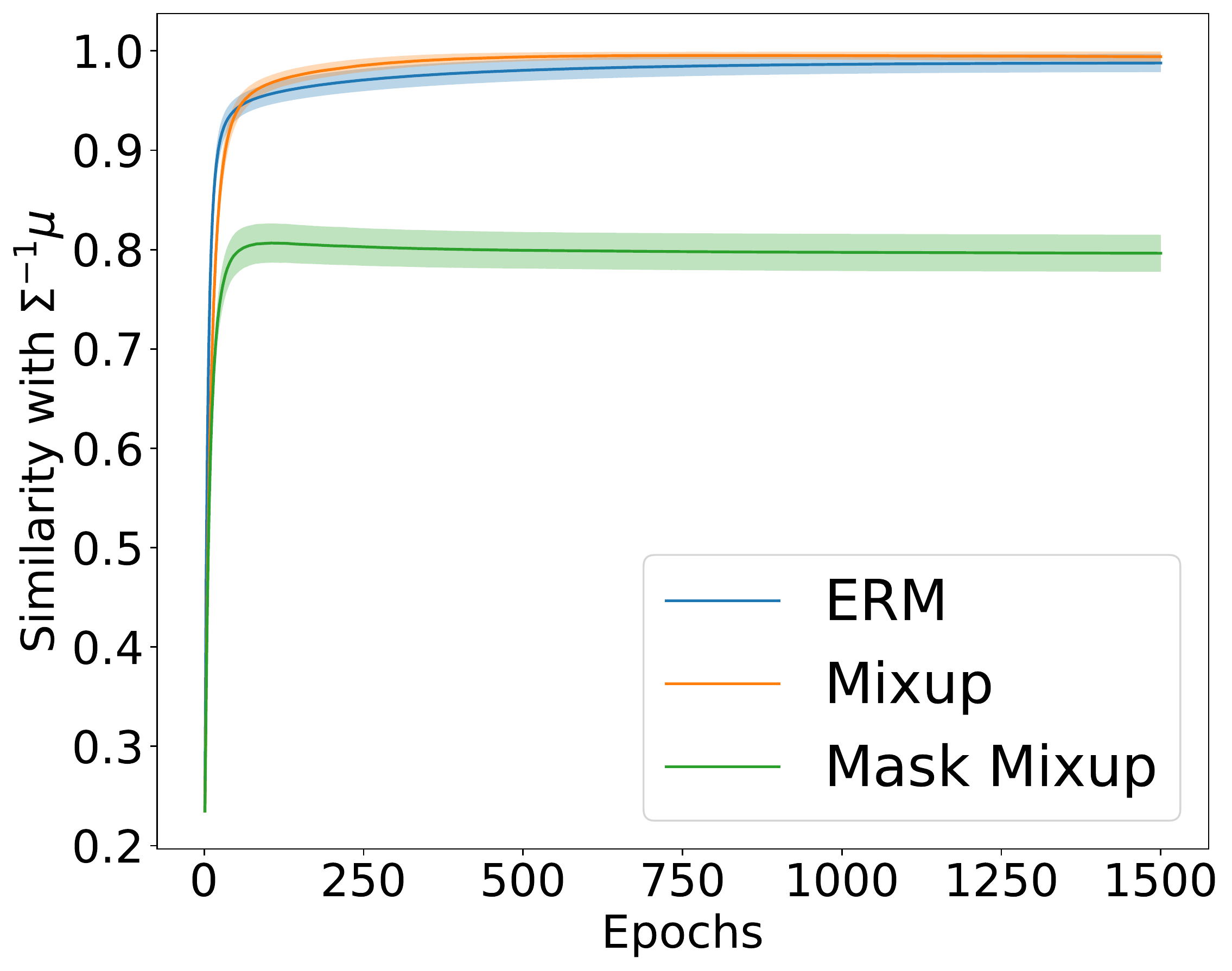}}
\subfigure[$\kappa = 2.0$]{\includegraphics[width= 0.238\textwidth]{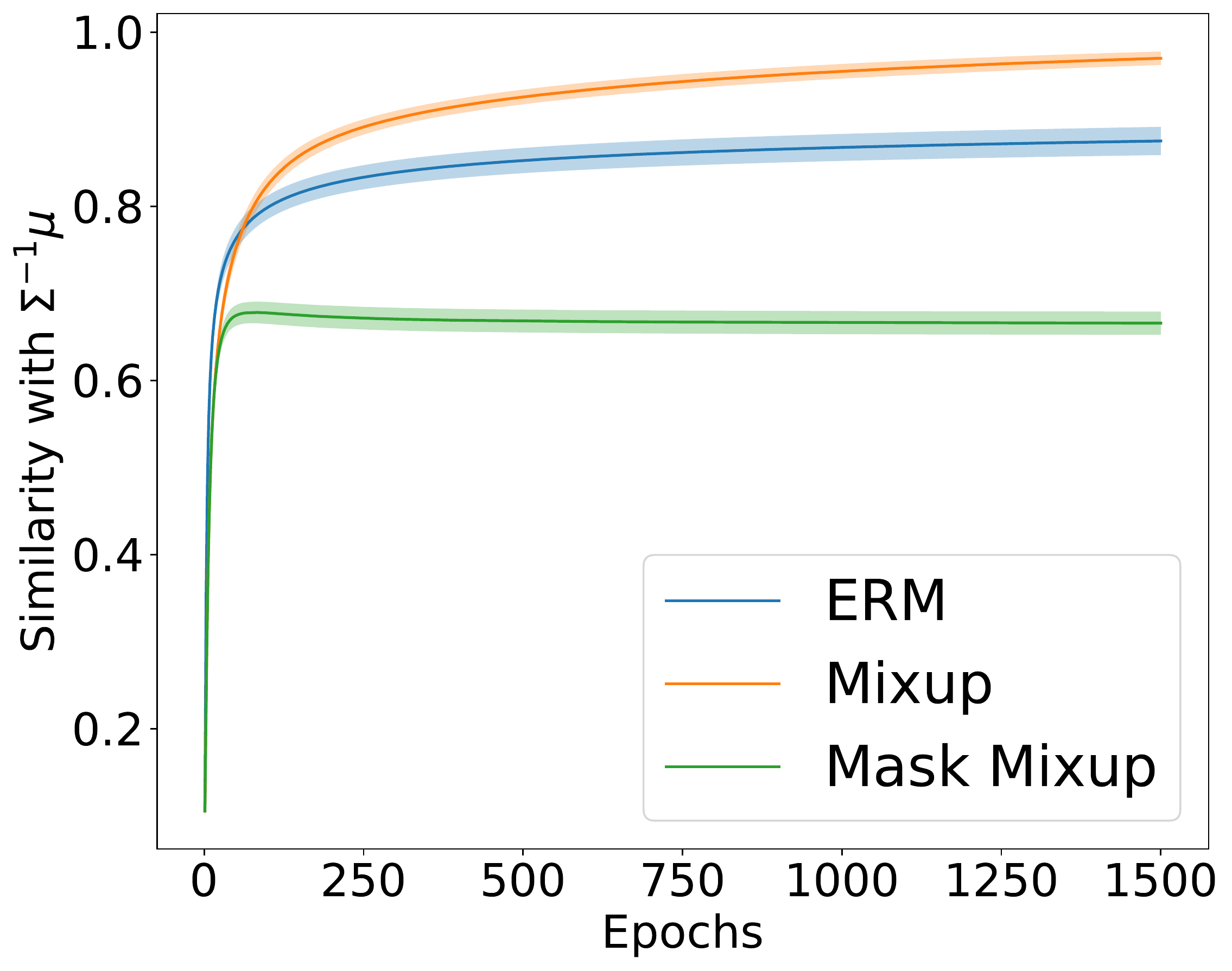}}
\vspace{-15pt}
\caption{Cosine similarity between learned weight and the Bayes optimal direction $\mSigma^{-1} \vmu$. ERM successfully finds the Bayes optimal classifier when $\kappa$ is small ($\kappa=0.5$) and fails when $\kappa$ is larger ($\kappa = 2.0$) because of the curse of separability. Meanwhile, Mixup succeeds in both cases since Mixup mitigates the curse of separability. Mask Mixup fails in both cases due to distortion.}
\label{figure:gaussian}
\vspace{-20pt}
\end{figure}

We compare two different values of $\kappa$; $\kappa = 0.5$ and $\kappa = 2.0$. We train for 1500 epochs using randomly sampled 500 training samples from each $\mathcal{D}_\kappa$ and full gradient descent with learning rate $1$ and we choose $\alpha = 1$ for the hyperparameter of Mixup and Mask Mixup. We run 500 times with fixed initial weight but different samples of training sets and we plot cosine similarity between the trained weight and the Bayes optimal direction during training in Figure~\ref{figure:gaussian}. 

For the case $\kappa = 0.5$, ERM and Mixup lead to the Bayes optimal classifier. However, for the case $\kappa = 2.0$, ERM finds a solution deviating from the Bayes optimal solution, while Mixup still finds almost accurate solutions. This result is predicted by our theoretical findings; ERM suffers the curse of separability and Mixup mitigates it. Also, we can check the minimizers of the Mask Mixup loss deviate significantly from the Bayes optimal direction for both cases $\kappa = 0.5$ and $\kappa = 2.0$, even though our theoretical result (Theorem~\ref{thm:mask_minimizer}) focus on large $\kappa$. We provide additional results on more various values of $\kappa$, the number of samples $n$, and the dimension of data $d$ in Appendix~\ref{result:gaussian}.
\vspace{-5pt}
\subsection{2D Classification on Synthetic Data} \label{exp:2d}
\vspace{-5pt}
\begin{figure*}[h]
    \vspace{-10pt}
    \centering
    \subfigure[Large noise, less separable setting]{\includegraphics[width=0.45\textwidth]{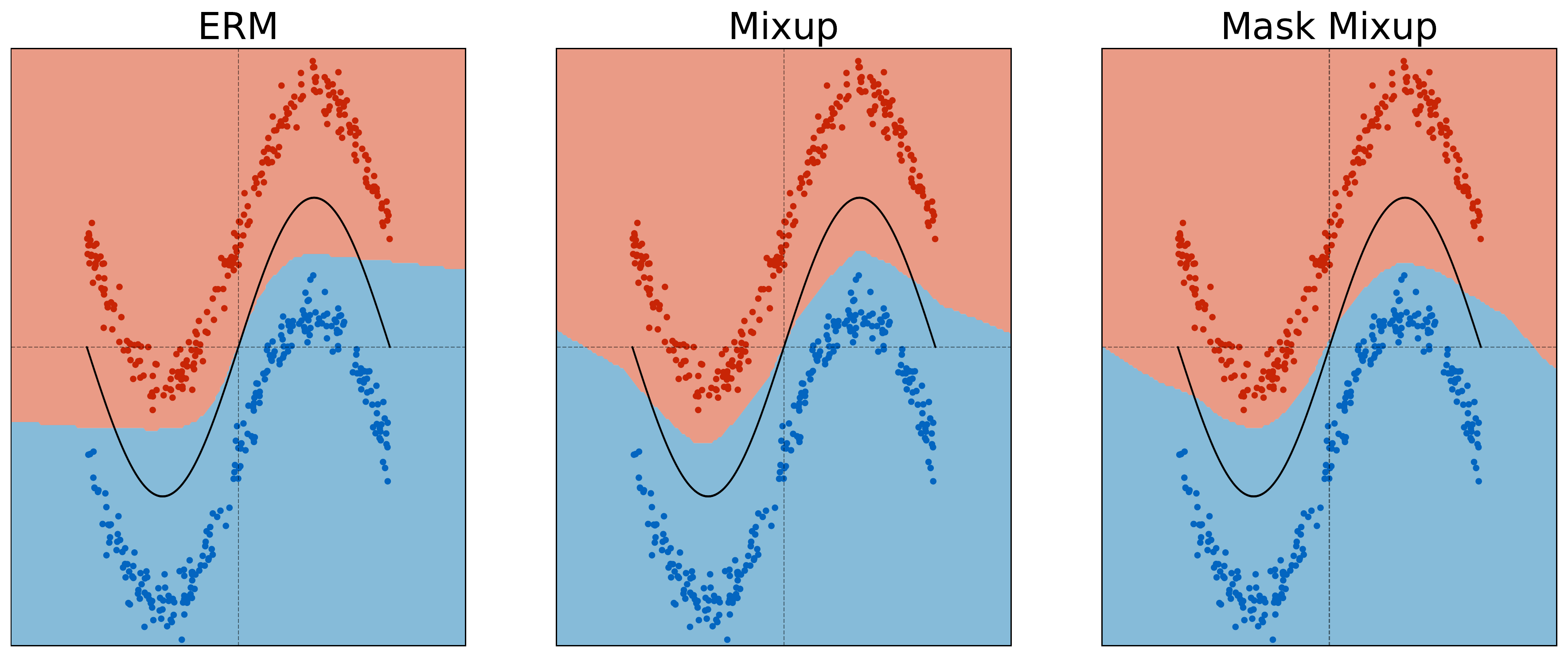}\label{figure:2d(a)}}~
    \subfigure[Small noise, well separable setting]{\includegraphics[width=0.45\textwidth]{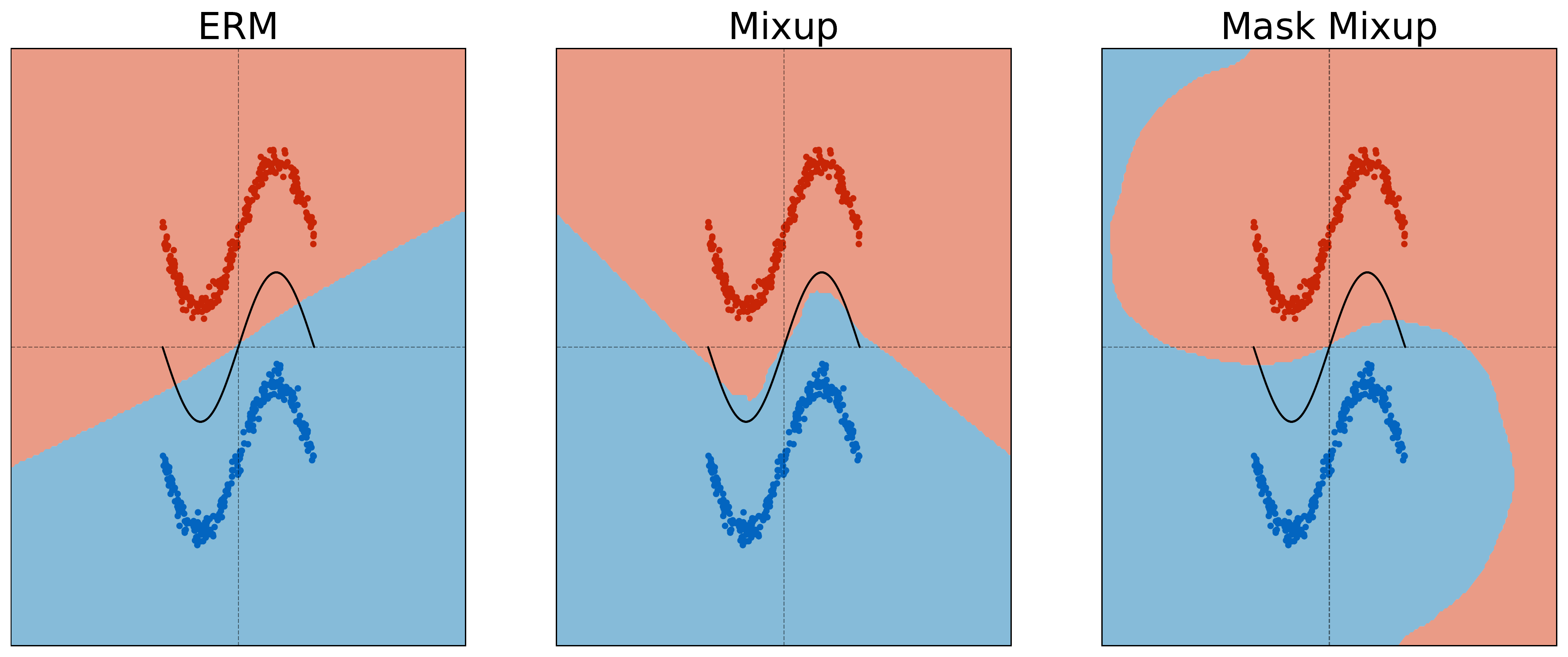} \label{figure:2d(b)}}
    \vspace{-10pt}
    \caption{Boundary decision of trained models with ERM loss, Mixup loss and Mask Mixup loss}
    \vspace{-10pt}
    \label{figure:2d}
    \vspace{-5pt}
\end{figure*}
We also provide empirical results supporting that the intuitions gained from our analysis extend beyond our settings. We consider training a two-layer ReLU network with 500 hidden nodes on 2D synthetic data with binary labels having sine-shaped noise\footnote{The noise consists of uniformly sampled $x$-coordinate and $y$-coordinate having sine value with additional Gaussian noise.} from its mean for each class. As a result of the noise, the optimal decision boundary is also sine-shaped.
We consider two settings with different magnitudes of noise while keeping the means the same. 
Using three methods ERM, Mixup, and Mask Mixup (which we introduced in the previous subsection), we train for 1500 epochs using 500 samples of data points and Adam \citep{kingma2014adam} with full batch, learning rate 0.001 and using default hyperparameters of $\beta_1 = 0.9, \beta_2 = 0.999$. We also use $\alpha=1$ for the hyperparameter of Mixup and Mask Mixup. 

Figure~\ref{figure:2d(a)} plots the decision boundaries (red vs.\ blue) of trained models in the setting with larger noise, which corresponds to a less separable setting. We also draw the Bayes optimal boundaries with black solid lines.  All ERM, Mixup, and Mask Mixup find decision boundaries that reflect the sine-shaped optimal decision boundary. Figure~\ref{figure:2d(b)} shows the results with smaller noise, i.e., a more separable setting. The decision boundary of ERM degenerates to a linear boundary, ignoring the sine-shaped noise.
However, even though Mixup slightly distorts training,\footnote{
One may check that the optimal decision boundary becomes similar to a sine-shaped curve with a slightly smaller amplitude.} Mixup finds a nonlinear boundary that captures sine shape even when data is highly separated. 
This result is consistent with our findings that \emph{Mixup mitigates the curse of separability}, even outside our simple settings. 
Also, the decision boundary of models trained by using Mask Mixup is nonlinear, which may come from smaller sample complexity, but it  seems to suffer more distortion compared to Mixup.
\vspace{-5pt}
\subsection{Classification on CIFAR-10}
\vspace{-5pt}
We also conduct experiments on the real-world data CIFAR-10~\citep{krizhevsky2009learning}. 
To compare three methods ERM, Mixup, and CutMix, we train VGG19~\citep{simonyan2014very} and ResNet18~\citep{he2016deep} for 300 epochs on the training set with batch size 256 using SGD with weigh decay $10^{-4}$ and we choose $\alpha = 1$ for the hyperparameter of Mixup and CutMix. Also, we use a learning rate $0.1$ at the beginning and divide it by 10 after 100 and 150 epochs. Unlike linear models and 2D classification tasks, the decision boundaries of deep neural networks trained with complex data are intractable. 
Hence, following the method considered in \citet{nar2019cross,pezeshki2021gradient}, we use the norm of input perturbation\footnote{We apply the projected gradient descent attack implemented by \citet{rauber2017foolbox} to compute the perturbation on input. } required to cross the decision boundary to investigate the complex decision boundary. 

Figure~\ref{fig:CIFAR10} indicates that Mixup tends to find decision boundaries farther from overall data points than decision boundaries obtained by ERM. This is consistent with our intuition on the curse of separability and how Mixup mitigates it. In addition, the plots of CutMix are placed between the plots of ERM and Mixup. As also observed in Figure~\ref{figure:2d(b)}, we believe that the combination of distortion and smaller sample complexity results in such a trend.

\begin{figure}[t]
    \vspace{-10pt}
    \centering
    \subfigure[VGG19]{\includegraphics[width= 0.23\textwidth]{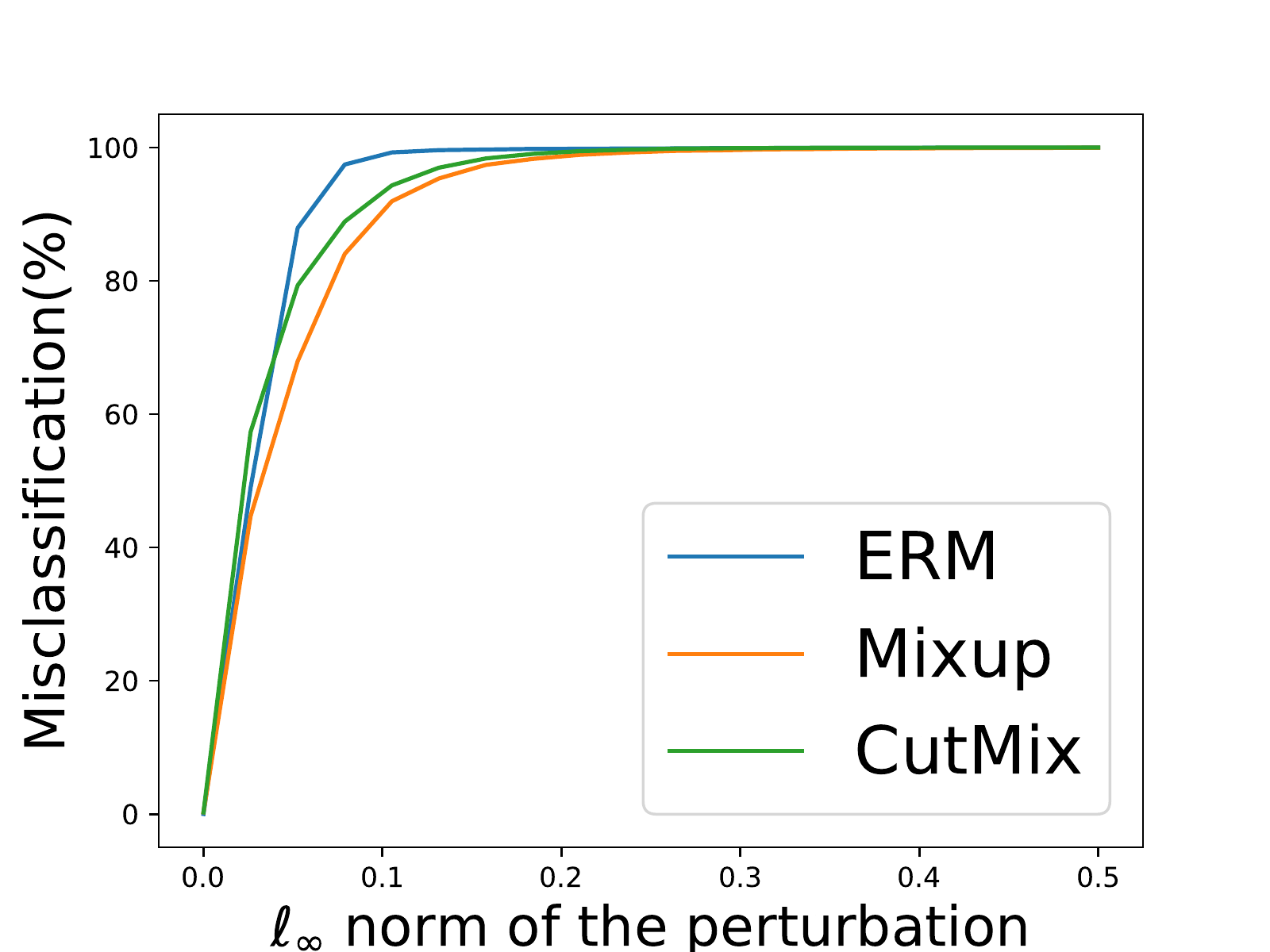}}
    \subfigure[ResNet18]{\includegraphics[width= 0.23\textwidth]{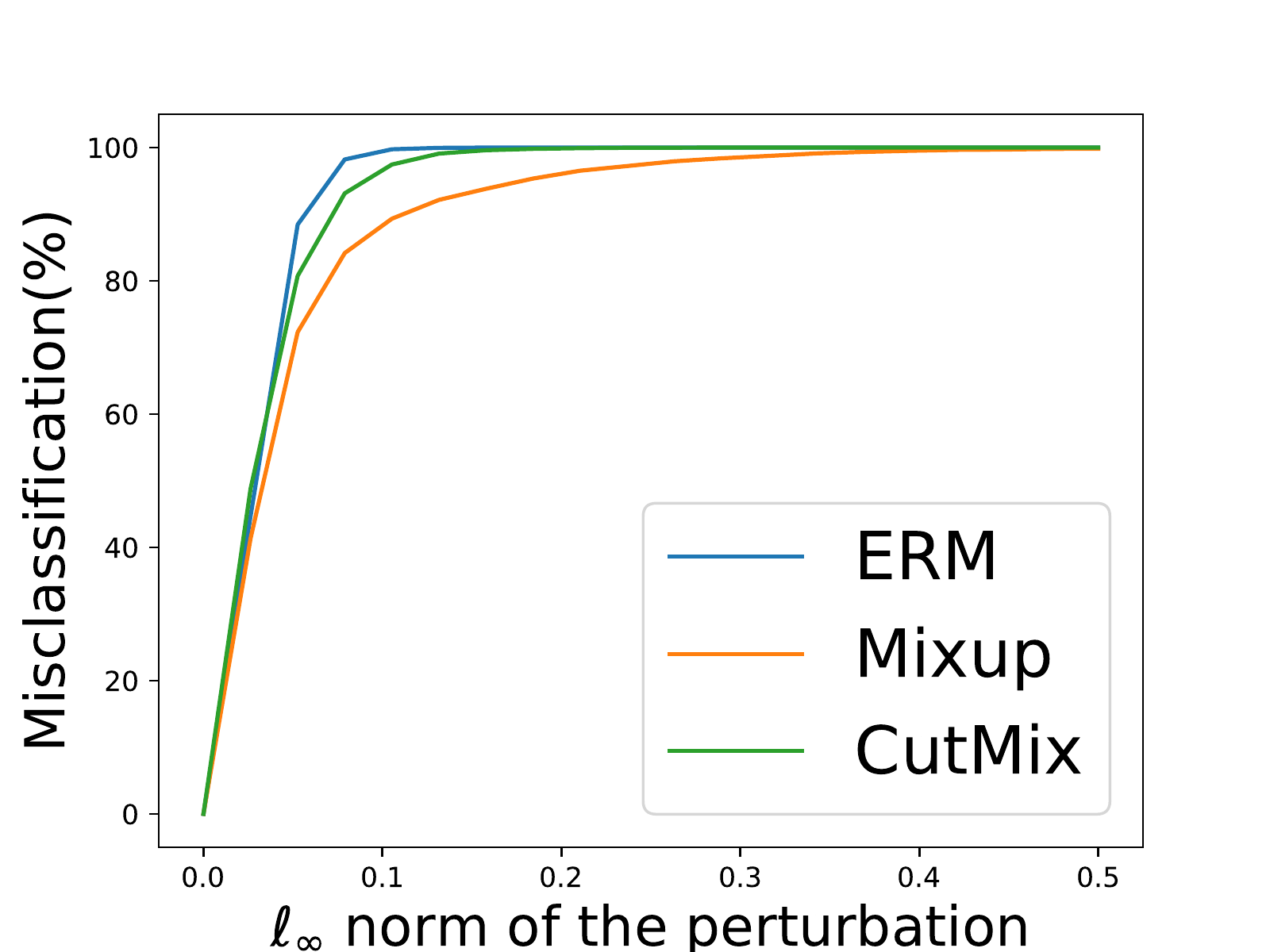}}
    \vspace{-10pt}
    \caption{Estimation of decision boundaries of (a) VGG19 and (b) ResNet18 trained with CIFAR-10}
    \label{fig:CIFAR10}
    \vspace{-20pt}
\end{figure}
\section{Conclusion}
We analyzed how Mixup-style training influences the sample complexity for getting optimal decision boundaries in logistic regression on data drawn from two symmetric Gaussian distributions with separability constant $\kappa$. Interestingly, we proved that vanilla training suffers the curse of separability. More precisely, ERM requires an exponentially increasing number of data with $\kappa$ for finding a near Bayes optimal classifier. We proved that Mixup mitigates the curse of separability and the sample complexity for finding optimal classifier grows only quadratically with $\kappa$. We further investigated masking-based Mixup methods and showed that they can cause training loss distortion and find a suboptimal decision boundary while having a small sample complexity. One interesting future direction is analyzing when and how state-of-the-art masking-based Mixup works, by considering more sophisticated data distributions capturing image data and complicated models such as neural networks.
\vspace{-5pt}
\section*{Acknowledgements}
\vspace{-5pt}
This paper was supported by Institute of Information \& communications Technology Planning \& Evaluation (IITP) grant (No.2019-0-00075, Artificial Intelligence Graduate School Program (KAIST)) funded by the Korea government (MSIT), two National Research Foundation of Korea (NRF) grants (No. NRF-2019R1A5A1028324, RS-2023-00211352) funded by the Korea government (MSIT), and a grant funded by Samsung Electronics Co., Ltd.

\bibliography{Reference}
\bibliographystyle{icml2023}

\newpage
\onecolumn

\appendix

\section{Proofs for Section \ref{section:ERM}}
\subsection{Proof of Theorem~\ref{thm:expected_loss_ERM}}\label{proof:expected_loss_ERM}
We prove the existence and uniqueness of a minimizer of $\mathbb{E}_\kappa [\loss (\vw)]$ first. Next, we characterize a direction of a unique minimizer of $\mathbb{E}_\kappa [\loss (\vw)]$.
\paragraph{Step 1: Existence and Uniqueness of a Minimizer of $\mathbb{E}_\kappa[\loss(\vw)]$}\quad

From convexity of $l(\cdot)$, for any $z \in \mathbb{R}$, $l(z) \geq -\frac{1}{2}z + \log 2 \geq -\frac{1}{2}z$ since $z \mapsto -\frac{1}{2} z + \log 2$ is tangent line of the graph of $l(z)$ at $z=0$. Thus, we have 
\begin{equation*}
    \mathbb{E}_\kappa[\loss(\vw)] = \mathbb{E}_{\rvx\sim N(\vmu,\kappa^{-1}\mSigma)}[l(\vw^\top \rvx)] \geq \mathbb{E}_{\rvx\sim N(\vmu,\kappa^{-1}\mSigma)}[l(\vw^\top \rvx) \cdot \vone_{\vw^\top \rvx < 0} ]\geq \mathbb{E}_{\rvx \sim N(\vmu, \kappa^{-1}\mSigma)}\left[-\frac{1}{2}\vw^\top \rvx \cdot \vone_{\vw^\top \rvx < 0}\right].
\end{equation*}
\begin{claim}
\label{claim:1}
For any $\kappa \in (0, \infty)$, a mapping $\vw \mapsto \mathbb{E}_{\rvx \sim N(\vmu, \kappa^{-1}\mSigma)} [\vw^\top \rvx \cdot \vone_{\vw^\top \rvx<0}]$ is continuous.
\end{claim}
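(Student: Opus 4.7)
The plan is to prove continuity at an arbitrary point $\vw_0 \in \mathbb{R}^d$ by a direct application of the dominated convergence theorem along any sequence $\vw_n \to \vw_0$. The first ingredient is pointwise (in fact everywhere) convergence of the integrand: the scalar map $h:z \mapsto z \cdot \vone_{z<0}$ equals $\min(z,0)$, which is continuous on all of $\mathbb{R}$ (the only candidate issue at $z=0$ is resolved since $\lim_{z\to 0^-} z = 0$ and $h(z) = 0$ for $z \geq 0$). Hence, for every realization of $\rvx$, the continuous linear map $\vw \mapsto \vw^\top \rvx$ composed with $h$ yields $\vw_n^\top \rvx \cdot \vone_{\vw_n^\top \rvx < 0} \to \vw_0^\top \rvx \cdot \vone_{\vw_0^\top \rvx < 0}$.

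The second ingredient is an integrable dominating function. By Cauchy--Schwarz,
\begin{equation*}
\lvert \vw_n^\top \rvx \cdot \vone_{\vw_n^\top \rvx < 0} \rvert \leq \lvert \vw_n^\top \rvx \rvert \leq \lVert \vw_n \rVert \cdot \lVert \rvx \rVert,
\end{equation*}
and by restricting to a bounded neighborhood (say $\lVert \vw_n - \vw_0 \rVert \leq 1$) this is bounded above by $(\lVert \vw_0 \rVert + 1)\lVert \rvx \rVert$. Since $\rvx \sim N(\vmu, \kappa^{-1}\mSigma)$, Jensen's inequality gives $\mathbb{E}[\lVert \rvx \rVert] \leq \sqrt{\lVert \vmu \rVert^2 + \kappa^{-1} \Tr(\mSigma)} < \infty$, so the dominating function is integrable.

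With these two ingredients in hand, the dominated convergence theorem yields $\mathbb{E}[\vw_n^\top \rvx \cdot \vone_{\vw_n^\top \rvx < 0}] \to \mathbb{E}[\vw_0^\top \rvx \cdot \vone_{\vw_0^\top \rvx < 0}]$, which is continuity at $\vw_0$; since $\vw_0$ was arbitrary, continuity on $\mathbb{R}^d$ follows. There is no real obstacle here; the only subtle point worth emphasizing is that the potential discontinuity of the indicator $\vone_{z<0}$ at $z=0$ is neutralized by the prefactor $z$, so pointwise convergence of the integrand holds for every $\rvx$ (not merely almost every), and one does not need to separately argue that the event $\{\vw_0^\top \rvx = 0\}$ has measure zero.
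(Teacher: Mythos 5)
Your proof is correct, but it takes a genuinely different route from the paper's. You invoke the dominated convergence theorem: you observe that $z \mapsto z\cdot\vone_{z<0} = \min(z,0)$ is continuous on all of $\mathbb{R}$ (so the discontinuity of the bare indicator at $z=0$ is harmless), obtain pointwise convergence of the integrand for \emph{every} realization of $\rvx$, and dominate by $(\lVert \vw_0\rVert+1)\lVert\rvx\rVert$, which is integrable for a Gaussian. The paper instead gives a direct quantitative $\epsilon$--$\delta$ estimate: it writes the difference of integrands as $\vDelta^\top\rvx\cdot\vone_{(\vw+\vDelta)^\top\rvx<0} + \vw^\top\rvx\,(\vone_{(\vw+\vDelta)^\top\rvx<0}-\vone_{\vw^\top\rvx<0})$ and shows by sign casework that both terms are bounded in absolute value by $|\vDelta^\top\rvx|$, yielding $\bigl|\mathbb{E}[(\vw+\vDelta)^\top\rvx\cdot\vone_{(\vw+\vDelta)^\top\rvx<0}] - \mathbb{E}[\vw^\top\rvx\cdot\vone_{\vw^\top\rvx<0}]\bigr| \leq 2\left(\kappa^{-1}\lVert\mSigma\rVert + \lVert\vmu\rVert^2\right)^{1/2}\lVert\vDelta\rVert$, i.e., a uniform (Lipschitz-type) modulus of continuity valid for all $\vw$ simultaneously. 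Your argument is shorter and avoids the casework, and it is entirely sufficient for how the claim is used downstream (continuity plus compactness of the unit sphere to extract a positive minimum $m_\kappa$); the paper's argument buys an explicit, $\vw$-independent quantitative bound that your DCT argument does not produce, though nothing in the paper actually requires it.
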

\begin{proof}[Proof of Claim A.1]
For any $\epsilon >0$, let $\delta_\epsilon:= \left(\kappa^{-1} \lVert \mSigma \rVert + \lVert \vmu \rVert^2 \right)^{-1/2} \cdot \epsilon /2 >0$. Then, for any $\vDelta \in \mathbb{R}^d$ with $\lVert \vDelta \rVert \leq \delta_\epsilon$ and $\vw \in \mathbb{R}^d$, we are going to show that 
\begin{equation*}
\left | \mathbb{E}_{\rvx \sim N(\vmu, \kappa^{-1} \mSigma)} [(\vw+\vDelta)^\top \rvx \cdot \vone_{(\vw+\vDelta)^\top \rvx<0}]-\mathbb{E}_{\rvx \sim N(\vmu, \kappa^{-1} \mSigma)} [\vw^\top \rvx \cdot \vone_{\vw^\top \rvx<0}] \right | \leq \epsilon,
\end{equation*}
to conclude that the mapping $\vw \mapsto \mathbb{E}_{\rvx \sim N(\vmu, \kappa^{-1} \mSigma)} [\vw^\top \rvx \cdot \vone_{\vw^\top \rvx<0}]$ is continuous.

To this end, we start by
\begin{align*}
    &\quad \left | \mathbb{E}_{\rvx \sim N(\vmu, \kappa^{-1} \mSigma)} [(\vw+\vDelta)^\top \rvx \cdot \vone_{(\vw+\vDelta)^\top \rvx<0}]-\mathbb{E}_{\rvx \sim N(\vmu, \kappa^{-1}\mSigma)} [\vw^\top \rvx \cdot \vone_{\vw^\top \rvx<0}]\right|\\
    &\leq \mathbb{E}_{\rvx \sim N(\vmu, \kappa^{-1} \mSigma)}\left[ \left| (\vw+\vDelta)^\top \rvx \cdot \vone_{(\vw+\vDelta)^\top \rvx<0}- \vw^\top \rvx \cdot \vone_{\vw^\top \rvx<0}\right| \right]\\
    &=  \mathbb{E}_{\rvx \sim N(\vmu, \kappa^{-1} \mSigma)}\left[ \left| \vDelta ^\top \rvx \cdot \vone_{(\vw+\vDelta)^\top \rvx <0} + \vw^\top \rvx \cdot (\vone_{(\vw+\vDelta)^\top \rvx <0} - \vone_{\vw^\top \rvx <0})\right| \right]\\
    &\leq \mathbb{E}_{\rvx \sim N(\vmu, \kappa^{-1} \mSigma)}\left[ \left| \vDelta ^\top \rvx \cdot \vone_{(\vw+\vDelta)^\top \rvx <0} \right| \right]+\mathbb{E}_{\rvx \sim N(\vmu, \kappa^{-1} \mSigma)}\left[ \left| \vw^\top \rvx \cdot (\vone_{(\vw+\vDelta)^\top \rvx <0} - \vone_{\vw^\top \rvx <0})\right| \right].
\end{align*}
It is clear that $\mathbb{E}_{\rvx \sim N(\vmu, \kappa^{-1} \mSigma)}\left[ \left| \vDelta ^\top \rvx \cdot \vone_{(\vw+\vDelta)^\top \rvx <0} \right| \right] \leq \mathbb{E}_{\rvx \sim N(\vmu, \kappa^{-1} \mSigma)}\left[ \left| \vDelta ^\top \rvx\right| \right]$. Also, for each $\vx \in \mathbb{R}^d$,
\begin{align*}
    \vw^\top \vx \cdot (\vone_{(\vw+\vDelta)^\top \vx <0} - \vone_{\vw^\top \vx <0}) = 
    \begin{cases}
    \vw^\top \vx \leq -\vDelta^\top \vx = |\vDelta ^\top \vx| &\text{if} (\vw+\vDelta)^\top \vx <0, \vw^\top \vx \geq 0,\\
    -\vw^\top \vx \leq \vDelta^\top \vx = |\vDelta ^\top \vx| &\text{if} (\vw+\vDelta)^\top \vx \geq0, \vw^\top \vx< 0,\\
    0 \leq |\vDelta ^\top \vx| &\text{otherwise}.
    \end{cases}
\end{align*}
Therefore, $\mathbb{E}_{\rvx \sim N(\vmu, \kappa^{-1} \mSigma)}\left[ \left| \vw^\top \rvx \cdot (\vone_{(\vw+\vDelta)^\top \rvx <0} - \vone_{\vw^\top \rvx <0})\right| \right] \leq \mathbb{E}_{\rvx \sim N(\vmu, \kappa^{-1} \mSigma)}\left[ \left| \vDelta ^\top \rvx\right| \right]$. Also, by Jensen's inequality,
\begin{align*}
    \mathbb{E}_{\rvx \sim N(\vmu, \kappa^{-1} \mSigma)}\left[ \left| \vDelta ^\top \rvx \right| \right] &\leq \mathbb{E}_{\rvx \sim N(\vmu, \kappa^{-1} \mSigma)}\left[ (\vDelta ^\top \rvx)^2 \right]^{1/2}\\
    &= \left( \kappa^{-1} \vDelta ^\top \mSigma \vDelta +(\vDelta^\top \vmu)^2 \right)^{1/2}\\
    &\leq \left(\kappa^{-1} \lVert \mSigma \rVert \lVert \vDelta \rVert^2 + \lVert \vmu\rVert^2\lVert \vDelta \rVert^2 \right)^{1/2}\\
    &\leq \left(\kappa^{-1} \lVert \mSigma \rVert + \lVert \vmu\rVert^2\right)^{1/2} \cdot \delta_\epsilon\\
    &= \epsilon/2.
\end{align*}
Hence, we have $\left | \mathbb{E}_{\rvx \sim N(\vmu, \kappa^{-1} \mSigma)} [(\vw+\vDelta)^\top \rvx \cdot \vone_{(\vw+\vDelta)^\top \rvx<0}]-\mathbb{E}_{\rvx \sim N(\vmu, \kappa^{-1} \mSigma)} [\vw^\top \rvx \cdot \vone_{\vw^\top \rvx<0}] \right| \leq \epsilon$, as desired. \hfill $\square$

From Claim~\ref{claim:1} and compactness of the unit sphere $\{ \vw\in \mathbb{R}^d : \lVert \vw \rVert = 1 \} \subset \mathbb{R}^d$, it follows that for any given $\kappa \in (0,\infty)$, a mapping $\vw \mapsto \mathbb{E}_{\rvx \sim N(\vmu, \kappa^{-1} \mSigma)} [\vw^\top \rvx \cdot \mathbf{1}_{\vw^\top \rvx<0}]$ has the maximum value (over the unit sphere) $-m_\kappa$ with $m_\kappa >0$. For any $\vw$ satisfying $\lVert \vw \rVert > 2 m_\kappa^{-1}$, we have
\begin{align}\label{eqn:ball}
    \mathbb{E}_\kappa[\loss (\vw)] &\geq \mathbb{E}_{\rvx \sim N(\vmu, \kappa^{-1} \mSigma)}\left[- \frac{1}{2} \vw^\top \rvx \cdot \vone_{\vw^\top \rvx<0} \right] \nonumber \\
    &= \lVert \vw \rVert \cdot \mathbb{E}_{\rvx \sim N(\vmu, \kappa^{-1}\mSigma)}\left[- \frac{1}{2} \cdot \frac{\vw^\top \rvx}{\lVert \vw \rVert}\cdot \vone_{\frac{\vw^\top \rvx}{\lVert \vw \rVert}<0} \right]\\
    &\geq \frac{m_\kappa \lVert \vw \rVert}{2} 
    > 1 > \log 2 \nonumber\\
    &= \mathbb{E}_\kappa[\loss(\vzero)] \nonumber.
\end{align}
Therefore, a minimizer of $\mathbb{E}_\kappa [ \loss (\vw) ]$ has to be necessarily contained in a compact set $\{ \vw \in \mathbb{R}^d :  \lVert \vw \rVert \leq 2  m_\kappa^{-1} \}$. Since $\mathbb{E}_\kappa [ \loss (\vw) ]$ is a continuous function of $\vw$, there must exist a minimizer. The existence part is hence proved.

To show uniqueness, we will prove strict convexity of $\mathbb{E}_\kappa[\loss (\vw)]$. From strict convexity of $l(\cdot)$, for any $t \in [0,1]$, $\vw_1, \vw_2 \in \mathbb{R}^d$ with $\vw_1 \neq \vw_2$ and $y \in \{0,1\}$, we have
\begin{align*}
    &\quad t[yl(\vw_1^\top \vx) + (1-y) l(-\vw_1^\top \vx)] + (1-t) [yl(\vw_2^\top \vx) + (1-y) l(-\vw_2^\top \vx)]\\
    &> y l((t\vw_1+(1-t)\vw_2)^\top \vx) + (1-y) l(-(t\vw_1+(1-t)\vw_2)^\top \vx),
\end{align*}
and any $\vx \in \mathbb{R}^d$ except for a Lebesgue measure zero set (i.e., the set of points $\vx \in \R^d$ satisfying  $\vw_1^\top \vx = \vw_2^\top \vx$).

By taking expectation, we have for any $t \in [0,1]$, and $\vw_1, \vw_2 \in \mathbb{R}^d$,
\begin{equation*}
    t \mathbb{E}_\kappa[\loss(\vw_1)] + (1-t) \mathbb{E}_\kappa[\loss(\vw_2)] >  \mathbb{E}_\kappa[\loss(t\vw_1+(1-t)\vw_2)].
\end{equation*}
Therefore, $\mathbb{E}_\kappa[\loss(\vw)]$ is strictly convex.
Since a strictly convex function has at most one minimizer, we conclude that $\mathbb{E}_\kappa[\loss(\vw)]$ has a unique minimizer $\w$ for any given $\kappa \in (0,\infty)$. 
\end{proof}

\paragraph{Step 2: Direction of a Unique Minimizer of $\mathbb{E}_\kappa[\loss(\vw)]$} \quad

We rewrite $\mathbb{E}_\kappa[\loss (\vw) ] $ as
\begin{align}\label{eqn:ERM_rewrite}
    \mathbb{E}_\kappa[\loss (\vw) ] &= \mathbb{E}_{\rvx \sim N(\vmu, \kappa^{-1} \mSigma)}\left[l\left( \vw^\top \rvx \right)\right]\nonumber \\
    &= \mathbb{E}_{X \sim N(\vw^\top \vmu, \kappa^{-1} \vw^\top \mSigma \vw )}\left[l(X)\right] \nonumber\\
    &= \mathbb{E}_{Z\sim N(0, 1 )}\left[l\left( \kappa^{-1/2} \left(\vw ^\top \mSigma \vw \right)^{1/2} Z + \vw^\top \vmu \right)\right].
\end{align}
We recall two lemmas we described in our proof sketch.
\logistic*
\opt*
Let $C_\kappa := \w^\top \vmu$. By \Eqref{eqn:ERM_rewrite} and Lemma~\ref{lemma:logistic}, $\w$ is a solution for the problem $\min_{\vw^\top \vmu = C_\kappa}\frac{1}{2} \vw^\top \mSigma \vw$.
 Hence, Lemma~\ref{lemma:opt} implies that there exists $c_\kappa \in \mathbb{R}$ such that $\w:= c_\kappa \mSigma^{-1}\vmu$. The only remaining part is showing $c_\kappa>0$. If $c_\kappa <0$, we have 
\begin{align*}
    \mathbb{E}_\kappa[\loss(\w)] &= \mathbb{E}_{Z \sim N(0,1)} \left [l\left (\kappa^{-1/2}(\w^\top \mSigma \w)^{1/2} Z + c_\kappa \vmu^\top \mSigma^{-1} \vmu \right ) \right ]\\
    &> \mathbb{E}_{Z \sim N(0,1)} \left [l \left (\kappa^{-1/2}(-\w)^\top  \mSigma (-\w))^{1/2} Z - c_\kappa \vmu^\top \mSigma^{-1} \vmu) \right ) \right ]\\
    &= \mathbb{E}\left [\loss \left ( -\w  \right ) \right ],
\end{align*}
where the inequality holds because $l(\cdot)$ is strictly decreasing and $c_\kappa \vmu^\top \mSigma^{-1} \vmu < 0$.
It is contradictory to $\w$ being a unique minimizer of $\mathbb{E}_\kappa[\loss(\vw)]$, so we conclude $c_\kappa \geq 0$.  Showing $c_\kappa$ is strictly positive will be handled in the proof of Lemma~\ref{lemma:ERM_norm} which can be found in Appendix~\ref{proof:lemma:ERM_norm}. \hfill $\square$

\subsection{Proof of Lemma~\ref{lemma:logistic}}
Define a function $f:(0,\infty) \rightarrow \mathbb{R}$ as $f(\sigma) := \mathbb{E}_{Z \sim N(0,1)}[l(m+\sigma Z)]$. It suffices to show that $f$ is strictly increasing. For each $\sigma \in (0,\infty)$ and $z \in \mathbb{R}$, $\left | \frac{\partial}{\partial \sigma} l(m+\sigma z) \right| = \left| l'(m+\sigma z) z \right|  = \left| \frac{z}{1+e^{m+\sigma z}} \right|\leq |z|$ and $\mathbb{E}_{Z \sim N(0,1)}[|Z|]< \infty$. Thus, by Lemma~\ref{lemma:leibniz}, 
\begin{align*}
    \frac{d}{d\sigma}f(\sigma) &= \mathbb{E}_{Z \sim N(0,1)}\left[\frac{\partial}{\partial \sigma} l(m+\sigma Z)\right] = \mathbb{E}_{Z \sim N(0,1)}[l'(m+\sigma Z) Z]\\
    &= - \frac{1}{\sqrt{2 \pi}} \int_{-\infty}^\infty \frac{z}{1+e^{m+\sigma z}}e^{-z^2/2}dz\\
     &= -\frac{1}{\sqrt{2 \pi}} \left( \int_0^\infty \frac{z}{1+e^{m+\sigma z}}e^{-z^2/2}dz + \int_{-\infty}^0 \frac{z}{1+e^{m-\sigma z}}e^{-z^2/2}dz \right)\\
    &= -\frac{1}{\sqrt{2 \pi}} \left( \int_0^\infty \frac{z}{1+e^{m+\sigma z}}e^{-z^2/2}dz - \int_0^\infty \frac{z}{1+e^{m-\sigma z}}e^{-z^2/2}dz \right)\\
    &= -\frac{1}{\sqrt{2 \pi}}  \int_0^\infty \left( \frac{z}{1+e^{m+\sigma z}}-\frac{z}{1+e^{m-\sigma z}} \right)e^{-z^2/2}dz\\
    &>0.
\end{align*}
The last inequality holds since $\frac{z}{1+e^{m+\sigma z}} < \frac{z}{1+e^{m-\sigma z}}$ for each $z>0$. Therefore, $f$ is strictly increasing as we desired. \hfill $\square$

\subsection{Proof of Lemma~\ref{lemma:opt}}
Consider a function $f : \mathbb{R}^d \rightarrow \mathbb{R}$ defined as $f(\vw) = \frac{1}{2} \vw^\top \mSigma \vw$. Since $\nabla_\vw^2 f(\vw) = \mSigma$ is positive definite, $f$ is strictly convex. The strict convexity continues to hold even when we restrict the domain to $\{\vw \mid \vw^\top \vmu = C\}$, so $\min_{\vw \in \mathbb{R}^d, \vw^\top \vmu = C} f(\vw)$ has at most one minimizer. Let $\bar{\vw} = \frac{C}{ \vmu^\top \mSigma^{-1} \vmu} \mSigma^{-1} \vmu$. Then, for any $\vw\in \mathbb{R}^d$ such that $\vw^\top \vmu = C$, we have 
\begin{equation*}
    f(\vw) - f(\bar{\vw}) \geq \nabla f(\bar{\vw})^\top(\vw - \bar{\vw})= \left(\mSigma \bar{\vw}\right)^\top (\vw - \bar{\vw}) = \frac{C}{\vmu^\top \mSigma^{-1} \vmu}\vmu^\top \left (\vw - \frac{C}{\vmu^\top \mSigma^{-1} \vmu}\mSigma^{-1} \vmu \right) = 0.
\end{equation*}
Therefore, $\bar{\vw} = \frac{C}{\vmu^\top \mSigma^{-1} \vmu} \mSigma^{-1} \vmu$, a rescaling of $\mSigma^{-1} \vmu$, is the unique minimizer.\hfill $\square$
\subsection{Proof of Theorem~\ref{thm:ERM_sufficient}} \label{proof:ERM_sufficient}
Since we consider sufficiently large $\kappa$, we may assume $n \geq d$
and let $R_\kappa:=2 \lVert \w \rVert$. By Lemma~\ref{lemma:ERM_norm}, we know that $R_\kappa = \Theta(\kappa)$. Next, define a compact set $\mathcal{C}_\kappa := \{ \vw \in \mathbb{R}^d : \lVert \vw \rVert \leq R_\kappa\}$, which trivially contains $\w$. For any $\vw \in \mathbb{R}^d$ and nonzero $\vv \in \mathbb{R}^d$, we have
\begin{equation*}
    \vv^\top \nabla^2_\vw \loss (\vw) \vv = \frac{1}{n} \sum_{i=1}^n (y_i l''(\vw^\top \vx_i) (\vv^\top \vx_i)^2 + (1-y_i) l''(-\vw^\top \vx_i) (\vv^\top \vx_i)^2 ) >0,
\end{equation*}
almost surely, since $\{\vx_i\}_{i \in [n] }$ spans $\mathbb{R}^d$ almost surely.  Therefore, $\loss(\vw)$ is strictly convex on $\mathbb{R}^d$ almost surely and we conclude that  $\loss (\vw)$ has a unique minimizer $\hat{\vw}_{S}^*$ on $\mathcal{C}_\kappa$ almost surely. Note that, if $\hat{\vw}_S^*$ belongs to interior of $\mathcal{C}$, it is a unique minimizer of $\loss (\vw)$ over the entire $\mathbb{R}^d$. We prove high probability convergence of $\hat{\vw}_{S}^*$ to $\w$ using Lemma~\ref{lemma:minimizer_independent} and convert $\ell_2$ convergence into directional convergence. 
For simplicity, we define
\begin{equation*}
f_i(\vw) := y_il(\vw^\top \vx_i) + (1-y_i) l(\vw^\top \vx_i),
\end{equation*}
for each $i \in [n]$.
We start with the following claim which is useful for estimating quantities described in assumptions of Lemma~\ref{lemma:minimizer_independent} for our setting.
\begin{claim}\label{claim:ERM}
For any $t>0$, we have
\begin{equation*}
    \mathbb{E}_\kappa \left[ e^{t\lVert \vx _i \rVert}\right] \leq  \left( 2^{d/2} + e^{4\kappa^{-1}t^2 \lVert \mSigma \rVert}\right)e^{t\lVert \vmu \rVert},
\end{equation*}
for all $i \in [n] $.
\end{claim}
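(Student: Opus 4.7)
The plan is to decouple $\|\vx_i\|$ into a deterministic mean part and a mean-zero Gaussian part, and then to control the moment generating function of the Euclidean norm of an isotropic Gaussian. Conditionally on $y_i$, we can write $\vx_i = (2y_i - 1)\vmu + \kappa^{-1/2} \mSigma^{1/2} \vg_i$ with $\vg_i \sim N(\vzero, \mI_d)$. By the triangle inequality, together with $\|\mSigma^{1/2}\|_{\mathrm{op}} = \|\mSigma\|^{1/2}$, we have $\|\vx_i\| \leq \|\vmu\| + \kappa^{-1/2}\|\mSigma\|^{1/2}\|\vg_i\|$, so
\begin{equation*}
    \mathbb{E}_\kappa\!\left[e^{t\|\vx_i\|}\right] \leq e^{t\|\vmu\|}\, \mathbb{E}\!\left[e^{s\|\vg_i\|}\right], \qquad s := t\kappa^{-1/2}\|\mSigma\|^{1/2}.
\end{equation*}
Note that since $\vg_i \sim N(\vzero, \mI_d)$ regardless of the value of $y_i$, we can remove the conditioning.

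The main obstacle is producing the \emph{additive} bound $2^{d/2} + e^{4s^2}$, since a direct AM-GM application of the form $s\|\vg\| \leq \|\vg\|^2/4 + s^2$ only gives a multiplicative bound like $2^{d/2} e^{s^2}$. To get the additive form I would use the pointwise inequality
\begin{equation*}
    e^{s\|\vg\|} \leq e^{\|\vg\|^2/4} + e^{4s^2},
\end{equation*}
which follows from a simple case split: if $\|\vg\| \geq 4s$, then $s\|\vg\| \leq \|\vg\|^2/4$ and the first term dominates; otherwise $\|\vg\| < 4s$ and $s\|\vg\| < 4s^2$, so the second term dominates.

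To finish, I would take expectations on both sides. Using independence of the coordinates of $\vg_i$ and the standard Gaussian chi-squared moment generating function $\mathbb{E}[e^{r G^2}] = (1-2r)^{-1/2}$ for $G \sim N(0,1)$ and $r < 1/2$ with $r = 1/4$,
\begin{equation*}
    \mathbb{E}\!\left[e^{\|\vg_i\|^2/4}\right] = \prod_{j=1}^d \mathbb{E}\!\left[e^{g_{i,j}^2/4}\right] = (1 - 1/2)^{-d/2} = 2^{d/2}.
\end{equation*}
Therefore $\mathbb{E}[e^{s\|\vg_i\|}] \leq 2^{d/2} + e^{4s^2}$, and substituting $4s^2 = 4\kappa^{-1}t^2\|\mSigma\|$ back yields the claim. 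The case-split trick above is the only non-routine step; everything else is a straightforward application of Gaussian identities.
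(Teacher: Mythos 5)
Your proof is correct and takes essentially the same route as the paper: the paper likewise applies the triangle inequality to extract $e^{t\lVert\vmu\rVert}$ and then bounds $\mathbb{E}_{\rvz\sim N(\vzero,\mM)}[e^{\lVert\rvz\rVert}]\le 2^{d/2}+e^{4\lVert\mM\rVert}$ via the identical case split on whether $\lVert\rvz\rVert$ exceeds $4\lVert\mM\rVert$ (Lemma~\ref{lemma:ineq4}, applied with $\mM=\kappa^{-1}t^2\mSigma$). The only cosmetic difference is that you first reduce to an isotropic Gaussian using $\lVert\mSigma^{1/2}\vg\rVert\le\lVert\mSigma\rVert^{1/2}\lVert\vg\rVert$ and invoke the scalar chi-squared moment generating function, whereas the paper evaluates the anisotropic Gaussian integral directly with a determinant bound; both yield the same constants.
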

\begin{proof}[Proof of Claim~\ref{claim:ERM}]
By applying triangular inequality and Lemma~\ref{lemma:ineq4}, we have
\begin{align*}
\mathbb{E}_\kappa \left[ e^{t \left \lVert \vx _i\right \rVert} \right] &= \mathbb{E}_{\rvx \sim N(\vmu, \kappa^{-1} \mSigma) } \left[ e^{t \left \lVert \rvx\right \rVert} \right]\\
&\leq \mathbb{E}_{\rvx \sim N(\vmu, \kappa^{-1} \mSigma) } \left[ e^{t \left \lVert \rvx - \vmu \right \rVert} \right] e^{t\lVert \vmu \rVert}\\
&= \mathbb{E}_{\rvz \sim N(\vzero, \kappa^{-1}t^2 \mSigma )} \left[e^{\lVert \rvz \rVert} \right] e^{t\lVert \vmu \rVert} \\
&\leq \left( 2^{d/2} + e^{4\kappa^{-1}t^2 \lVert \mSigma \rVert}\right)e^{t\lVert \vmu \rVert},
\end{align*}
for each $i \in [n]$.
\end{proof}

\paragraph{Step 1: Estimate Upper Bound of $\mathbb{E}_\kappa \left[ e^{\left|f_i(\vw) - \mathbb{E}_\kappa[f_i(\vw)]\right|}\right]$ on $\mathcal{C}_\kappa$}\quad

For any $\vw \in \mathcal{C}_\kappa$ and $i \in [n]$,
\begin{equation*}
    |f_i(\vw)| = |y_il(\vw^\top \vx_i) + (1-y_i)l(-\vw^\top \vx_i)| \leq l(-|\vw^\top \vx_i|) \leq l(-R_\kappa \lVert \vx_i \rVert).
\end{equation*}
Hence, we have $\mathbb{E}_\kappa \left[ e^{|f_i(\vw)|}\right] \leq  \mathbb{E}_\kappa \left[ 1+ e^{R_\kappa\lVert \vx_i \rVert} \right]$. By applying Claim~\ref{claim:ERM} for $t= R_\kappa$, there exists $M_\kappa'$ such that  $\mathbb{E}_\kappa \left[ e^{|f_i(\vw)|}\right] \leq  M_\kappa'$ and $M_\kappa' = \exp(\Theta(\kappa))$ since $R_\kappa = \Theta(\kappa)$ by Lemma~\ref{lemma:ERM_norm}.
By triangular inequality and Jensen's inequality, we have
\begin{align*}
\mathbb{E}_\kappa \left[ e^{\left|f_i(\vw) - \mathbb{E}_\kappa \left[f_i(\vw)\right]\right|}\right] 
&\leq \mathbb{E}_\kappa \left[ e^{\left|f_i(\vw)\right| + \left|\mathbb{E}_\kappa \left [f_i(\vw)\right]\right|}\right]\\
&\leq \mathbb{E}_\kappa \left[ e^{\left|f_i(\vw)\right| }\right]^2\\
&\leq {M_\kappa'}^2.
\end{align*}
Defining $M_\kappa := {M_\kappa'}^2$, it follows that $M_\kappa = \exp(\Theta(\kappa))$ and $\mathbb{E}_\kappa \left[ e^{\left|f_i(\vw) - \mathbb{E}_\kappa[f_i(\vw)]\right| }\right] \leq M_\kappa$ for any $\vw \in \mathcal{C}_\kappa$.

\paragraph{Step 2: Estimate Upper Bound of $\lVert \nabla_\vw f_i(\vw) \rVert$ and $\lVert \nabla_\vw \mathbb{E}_\kappa[ f_i(\vw) ]\rVert$} \quad

For each $\vw \in \mathcal{C}_\kappa$ and $i \in [n]$, 
\begin{equation*}
\lVert \nabla_\vw f_i(\vw) \rVert = \left \lVert \nabla_\vw (y_il(\vw^\top \vx_i) + (1-y_i) l(-\vw^\top \vx_i)) \right \rVert 
= \left \lVert y_il'(\vw^\top \vx_i)\vx_i - (1-y_i) l'(-\vw^\top \vx_i)\vx_i \right \rVert 
\leq  \lVert \vx_i \rVert.
\end{equation*}
The last inequality holds since $0<l'(z) <1$ for any $z\in \R$. In addition, by Lemma~\ref{lemma:gradient&hessian},
\begin{align*}
\left\lVert \nabla_\vw \mathbb{E}_\kappa [f_i(\vw)]\right \rVert &=  \left \lVert \nabla_\vw \mathbb{E}_\kappa [(y_il(\vw^\top \vx_i) + (1-y_i) l(-\vw^\top \vx_i)) ]\right \rVert\\
&= \left \lVert  \mathbb{E}_\kappa [\nabla_\vw(y_il(\vw^\top \vx_i) + (1-y_i) l(-\vw^\top \vx_i)) ]\right \rVert\\
&= \lVert \mathbb{E}_\kappa [(y_il'(\vw^\top \vx_i) + (1-y_i) l'(-\vw^\top \vx_i))\vx_i ]\rVert\\
&\leq \mathbb{E}_\kappa [\lVert \vx_i \rVert].
\end{align*}
Also, applying Claim~\ref{claim:ERM} with $t = 1$, there exists $L_\kappa$ such that $\mathbb{E}_\kappa \left[ e^{\lVert \vx_i \rVert}\right]< L_\kappa$ and $L_\kappa = \Theta(1)$.

\paragraph{Step 3: Estimate Strong Convexity Constant of $\mathbb{E}_\kappa[\loss (\vw)]$ on $\mathcal{C}_\kappa$} \quad

By Lemma~\ref{lemma:gradient&hessian} and Lemma~\ref{lemma:ineq1}, for any $\vw \in \mathcal{C}_\kappa$ and unit vector $\vv \in \mathbb{R}^{d}$, we have
\begin{equation*}
    \vv^\top \nabla^2_\vw \mathbb{E}_\kappa[\loss(\vw)] \vv = \mathbb{E}_{\rvx \sim N(\vmu, \kappa^{-1} \mSigma)}[l''(\vw^\top \rvx) (\vv^\top \rvx)^2]
    \geq \frac{1}{4} \mathbb{E}_{\rvx \sim N(\vmu, \kappa^{-1}\mSigma)}[e^{-(\vw^\top \rvx)^2/2}(\vv^\top \rvx)^2].
\end{equation*}
By Lemma~\ref{lemma:ineq3}, 
\begin{align*}
    &\quad \mathbb{E}_{\rvx \sim N(\vmu, \kappa^{-1}\mSigma)}[e^{-(\vw^\top \rvx)^2/2}(\vv^\top \rvx)^2]\\
    &\geq \kappa^{d/2}\lVert \mSigma \rVert ^{-d/2} \left (\kappa \lVert\mSigma^{-1}\rVert + \lVert \vw\rVert^2 \right )^{-(d+2)/2} \exp \left (-\lVert \vw\rVert^2 \lVert \mSigma \rVert \lVert \mSigma^{-1} \rVert \lVert \vmu\rVert^2 \right )\\
    &\geq \kappa^{d/2}\lVert \mSigma \rVert ^{-d/2} \left (\kappa \lVert \mSigma^{-1} \rVert + R_\kappa^2 \right )^{-(d+2)/2} \exp \left (-R_\kappa^2 \lVert \mSigma \rVert \lVert \mSigma^{-1} \rVert \lVert \vmu\rVert^2 \right ),
\end{align*}
and substituting $R_\kappa = \Theta (\kappa)$ to the RHS of the inequality above gives
\begin{equation*}
\kappa^{d/2}\lVert \mSigma \rVert ^{-d/2} \left (\kappa \lVert \mSigma^{-1} \rVert + R_\kappa^2 \right )^{-(d+2)/2} \exp \left (-R_\kappa^2 \lVert \mSigma \rVert \lVert \mSigma^{-1} \rVert \lVert \vmu\rVert^2 \right ) 
= \frac{1}{\exp(\Theta(\kappa^2))}.
\end{equation*}
Since this hold for any unit vector $\vv \in \R^d$, $\mathbb{E}_\kappa [\loss(\vw)]$ is $\alpha_\kappa$-strongly convex with $\alpha_\kappa = \frac{1}{\exp(\Theta(\kappa^2))}$ on $\mathcal{C}_\kappa$.

\paragraph{Step 4: Sample Complexity for Directional Convergence}\quad

Since $\lVert \w \rVert = \Theta(\kappa)$ and $\alpha_\kappa = \frac{1}{\exp ( \Theta(\kappa^2))}$, we assume $\kappa$ is large enough so that $\alpha_\kappa \lVert \w \rVert^2 <1$, which is quite easy to satisfy given the rate of decay in $\alpha_\kappa$. Assume the unique existence of $\hat{\vw}_S^*$ which occurs almost surely. By Lemma~\ref{lemma:minimizer_independent}, for each $0<\epsilon<1$, if 
\begin{equation*}
n \geq \frac{C_1M_\kappa}{\alpha_\kappa ^2 \lVert \w \rVert^4 \epsilon^4} \log \left(\frac{3}{\delta} \max \left\{1, \left(\frac{2 C_2d^{1/2}R_\kappa L_\kappa}{\alpha_\kappa \lVert \w\rVert^2 \epsilon^2}\right)^d \right\}   \right) = \frac{\exp(\Theta(\kappa^2)) }{\epsilon^4} \left ( 1+ \log\frac{1}{\epsilon} + \log\frac{1}{\delta} \right ),
\end{equation*}
then we have $\lVert \w- \hat{\vw}_S^* \rVert \leq  \lVert \w\rVert \epsilon$ with probability at least $1-\delta$. Also, if $\lVert \w- \hat{\vw}_S^* \rVert \leq  \lVert \w\rVert \epsilon$, then $\hat{\vw}_S^*$ belongs to interior of $\mathcal{C}_\kappa$. Hence, $\hat{\vw}_S^*$ is a minimizer of $\loss(\vw)$ over the entire $\mathbb{R}^d$. Also, we have
\begin{align*}
    \cosim \left ( \hat{\vw}_S^*, \mSigma^{-1}\vmu \right) 
    &=  \cosim \left( \hat{\vw}_S^*,\w \right)
    = \bigg (1- \sin^2  \Big(\angle\big(\hat{\vw}_S^*, \w\big) \Big ) \bigg )^{1/2}\\
    &\geq 1- \sin \Big (\angle\big(\hat{\vw}_S^*, \w\big) \Big ) = 1-\frac{\lVert \w - \hat{\vw}_S^* \rVert}{\lVert \w \rVert}\\
    &\geq 1- \epsilon.
\end{align*}
Hence, we conclude that if $n = \frac{\exp(\Omega(\kappa^2)) }{\epsilon^4}\left(1+ \log\frac{1}{\epsilon} + \log\frac{1}{\delta}\right)$, then
with probability at least $1-\delta$, the ERM loss $\loss(\vw)$ has a unique minimizer $\hat{\vw}_S^*$ and $\cosim ( \hat{\vw}_S^*, \mSigma^{-1}\vmu)\geq 1- \epsilon$. \hfill $\square$

\subsection{Proof of Lemma~\ref{lemma:minimizer_independent}}
By Lemma~\ref{lemma:independent_concentration}, there exists a universal constant $C_1>0$ such that for any fixed $\vtheta \in \mathcal{C}$ independent of the draws of $\veta_i$'s,
\begin{align}
\label{eq:lemmaminimizer_independent-2}
    &\quad \mathbb{P}\left[ \left| \hat{F}_N(\vtheta) - F(\vtheta)\right| > \frac{\alpha \epsilon^2}{8} \right]\nonumber \\
    &= \mathbb{P}\left[ \frac{1}{N} \sum_{i=1}^N f
(\vtheta, \veta_i)- \mathbb{E}_{\veta \sim \mathcal{P}}[f(\vtheta,\veta)] > \frac{\alpha \epsilon^2}{8} \right] +  \mathbb{P}\left[ \frac{1}{N} \sum_{i=1}^N f
(\vtheta, \veta_i)- \mathbb{E}_{\veta \sim \mathcal{P}}[f(\vtheta,\veta)] < -\frac{\alpha \epsilon^2}{8} \right]\nonumber \\
    &\leq 2\exp \left(-\frac{\alpha^2 \epsilon^4}{C_1 M}N\right).
\end{align}

Notice that from the given condition and Jensen's inequality, 
\begin{equation*}
    \mathbb{E}_{\veta \sim \mathcal{P}}[g(\veta)] \leq \log \mathbb{E}_{\veta \sim \mathcal{P}}\left[e^{g(\veta)}\right] < \log L <L,
\end{equation*}
and from triangular inequality, we have
\begin{align}
\label{eq:lemmaminimizer_independent-1}
 \mathbb{P}\left[ \sup_{\vtheta \in \mathcal{C}}\left\lVert \nabla_\vtheta \left( \hat{F}_N(\vtheta) - F(\vtheta) \right)\right\rVert > 3 L \right] 
 &\leq \mathbb{P}\left[ \frac{1}{N} \sum_{i=1}^N \sup_{\vtheta \in \mathcal{C}}\left\lVert \nabla_\vtheta \left( f(\vtheta, \veta_i) - \mathbb{E}[f(\vtheta, \veta_i)] \right)\right\rVert > 3 L\right]\nonumber \\
 &\leq \mathbb{P} \left[ \frac{1}{N} \sum_{i=1}^N \left( g(\veta_i) + \mathbb{E}[g(\veta_i)] \right)> 3 L \right] \nonumber \\
 &\leq \mathbb{P} \left[ \frac{1}{N} \sum_{i=1}^N \left( g(\veta_i) - \mathbb{E}[g(\veta_i)] \right)> L \right].
\end{align}
Since we have $\mathbb{E}_{\veta \sim \mathcal{P}}\left [e^{ |g(\veta) - \mathbb{E}_{\veta\sim \mathcal{
P}}[g(\veta)]|} \right] \leq \mathbb{E}_{\veta \sim \mathcal{P}}\left [e^{ g(\veta)} \right] \cdot e^{\mathbb{E}_{\veta\sim \mathcal{
P}}[g(\veta)] }<L^2$ by our assumption, we can apply Lemma~\ref{lemma:independent_concentration} to the RHS of \Eqref{eq:lemmaminimizer_independent-1}.

Therefore, we have
\begin{equation}
\label{eq:lemmaminimizer_independent-3}
 \mathbb{P}\left[ \sup_{\vtheta \in \mathcal{C}}\left\lVert \nabla_\vtheta \left( \hat{F}_N(\vtheta) - F(\vtheta) \right)\right\rVert > 3 L \right] 
 \leq \exp \left(-\frac{1}{C_1'}N\right),
\end{equation}
where $C_1'>0$ is a universal constant. Without loss of generality, we can choose $C_1 = C_1'$ that works for both \Eqref{eq:lemmaminimizer_independent-2} and \Eqref{eq:lemmaminimizer_independent-3}.

We choose $\bar{\vtheta}_1, \dots, \bar{\vtheta}_m \in \mathcal{C}$ with $m \leq \max \left\{ 1, \left(\frac{C_2 k^{1/2} D  L}{\alpha \epsilon^2 }\right)^k \right\}$ where $C_2$ is a universal constant and satisfies following: 
For any $\vtheta \in \mathcal{C}$, there exists $i_\vtheta \in [m]$ such that $\left \lVert \vtheta - \bar{\vtheta}_{i_\vtheta} \right \rVert<  \frac{\alpha \epsilon^2}{24  L}$. In other words, $\bar{\vtheta}_1, \dots, \bar{\vtheta}_m$ form an $\frac{\alpha \epsilon^2}{24L}$-cover of $\mathcal {C}$.

Suppose $| \hat{F}_N(\bar{\vtheta}_k) - F(\bar{\vtheta}_k) | < \frac{\alpha \epsilon^2}{8}$  for each $k \in [m]$ and $\sup_{\theta \in \mathcal{C}}\left\lVert \nabla_\vtheta \left( \hat{F}_N(\vtheta) - F(\vtheta) \right)\right\rVert < 3 L$ which implies $\hat{F}_N(\vtheta)-F(\vtheta)$ is $ 3 L$-Lipschitz. Then, for any $\vtheta \in \mathcal{C}$, we have
\begin{align*}
    &\quad \left | \hat{F}_N(\vtheta) - F(\vtheta) \right|\\
    &\leq \left | \hat{F}_N(\bar{\vtheta}_{i_\vtheta}) - F(\bar{\vtheta}_{i_\vtheta}) \right| +  \left |\left( \hat{F}_N(\vtheta) - F(\vtheta) \right) -  \left  (\hat{F}_N(\bar{\vtheta}_{i_\vtheta}) - F(\bar{\vtheta}_{i_\vtheta}) \right) \right|\\
    &\leq \frac{\alpha \epsilon^2}{8} + 3 L \lVert \vtheta - \bar{\vtheta}_{i_\vtheta} \rVert < \frac{\alpha \epsilon^2}{8} + \frac{\alpha \epsilon^2}{8} \\&= \frac{\alpha \epsilon^2}{4}.
\end{align*}
By applying union bound, we conclude
\begin{align*}
    &\quad \mathbb{P}\left[ \sup_{\vtheta \in \mathcal{C}}|\hat{F}_N(\vtheta) - F(\vtheta)| >  \frac{\alpha \epsilon^2}{4} \right] \\&\leq 2 \max \left \{1, \left(\frac{C_2k^{1/2} D  L}{\alpha \epsilon^2 }\right)^k \right\} \exp\left(- \frac{\alpha^2 \epsilon^4}{C_1 M}N \right) + \exp\left(-\frac{1}{C_1}N \right)\\ &\leq 3 \max \left \{1, \left(\frac{C_2 k^{1/2} D L}{\alpha \epsilon^2 }\right)^k \right\} \exp \left( - \min \left\{ \frac{\alpha^2 \epsilon^4}{C_1 M}, \frac{1}{C_1}\right \}N \right)\\
    & =  3\max \left \{1, \left(\frac{C_2 k^{1/2} D  L}{\alpha \epsilon^2 }\right)^k \right\} \exp \left( - \frac{\alpha^2 \epsilon^4}{C_1 M}N \right),
\end{align*}
where the last equality is due to $\alpha \epsilon^2 \leq 1$ and $M\geq 1$ which are implied by given conditions.

Suppose $ \sup_{\vtheta \in \mathcal{C}}|\hat{F}_N(\vtheta) - F(\vtheta)| <  \frac{\alpha \epsilon^2}{4}$ and $\left \lVert \hat{\vtheta}_N^* - \vtheta^* \right \rVert > \epsilon$. Then, from the strong convexity of $F(\vtheta)$, we have
\begin{align*}
    &\quad \hat{F}_N(\hat{\vtheta}_N^*) - \hat{F}_N(\vtheta^*)\\
    &= \left( \hat{F}_N(\hat{\vtheta}_N^*) - F(\hat{\vtheta}_N^*) \right) + \left( F(\vtheta^*) - \hat{F}_N(\vtheta^*)\right) + \left(  F(\hat{\vtheta}_N^*) - F(\vtheta^*) \right)\\
    &\geq -\frac{\alpha \epsilon^2 }{2} + \frac{\alpha}{2} \left \lVert \hat{\vtheta}_N^* - \vtheta^* \right \rVert^2 \\
    &>0.
\end{align*}
This is a contradiction to the fact that $\hat{\vtheta}_N$ is a minimizer of $\hat{F}_N(\vtheta)$. Hence, we have
\begin{align*}
    \mathbb{P} \left[ \left \lVert \hat{\vtheta}_N^* - \vtheta^* \right \rVert >\epsilon \right] &\leq 3 \max \left \{1, \left(\frac{C_2 k^{1/2} D  L}{\alpha \epsilon^2 }\right)^k \right\} \exp \left( - \frac{\alpha^2 \epsilon^4}{C_1 M}N \right),
\end{align*}
and equivalently, if 
\begin{equation*}
    N \geq \frac{C_1 M}{ \alpha^2 \epsilon^4} \log\left(\frac{3}{\delta} \max \left \{1, \left(\frac{C_2 k^{1/2} D  L}{\alpha \epsilon^2 }\right)^k \right\} \right),
\end{equation*}
then $\left \lVert \hat{\vtheta}_N^* - \vtheta^* \right \rVert <\epsilon$ with probability at least $1-\delta$.
\hfill $\square$

\subsection{Proof of Lemma~\ref{lemma:ERM_norm}}\label{proof:lemma:ERM_norm}
By Theorem~\ref{thm:expected_loss_ERM}, there exists $c_\kappa^*\geq 0$ (strict positivity of $c_\kappa^*$ will be proved here) such that $\w= c_\kappa^* \mSigma^{-1}\vmu$. For any $c\geq 0$, we have
\begin{equation*}
\mathbb{E}_\kappa[\loss(c\mSigma^{-1}\vmu)] = \mathbb{E}_{X \sim N \left(\vmu^\top \mSigma^{-1} \vmu, \kappa^{-1} \vmu^\top \mSigma^{-1} \vmu \right)}[l(c X)] = \mathbb{E}_{X \sim N \left(s, \kappa^{-1} s\right)}[l(c X)],
\end{equation*}
where we define $s:= \vmu^\top \mSigma^{-1} \vmu$ for simplicity. By Lemma~\ref{lemma:gradient&hessian}, we have
\begin{align*}
    -\frac{\partial}{\partial c} \mathbb{E}_\kappa[\loss(c\mSigma^{-1}\vmu)] 
    &= - \mathbb{E}_{X \sim N \left(s, \kappa^{-1} s \right)} \left[l'(cX)X \right]\\
    &= \mathbb{E}_{X \sim N \left(s, \kappa^{-1} s \right)} \left[\frac{X}{1+e^{cX}} \right]\\
    &= \mathbb{E}_{X \sim N \left(s, \kappa^{-1} s \right)} \left[\frac{X}{1+e^{cX}}\cdot \vone_{X\geq0} \right]+\mathbb{E}_{X \sim N \left(s, \kappa^{-1}s \right)} \left[\frac{X}{1+e^{cX}} \cdot \vone_{X<0}\right].
\end{align*}
For any $c\geq0$, $1+e^{ct} \geq 2e^{ct/2}$ for all $t \in \mathbb{R}$ by the AM-GM inequality and $1+e^{ct} \leq 2$ if $t<0$. Then, we have
\begin{align*}
    &\quad -\frac{\partial}{\partial c} \mathbb{E}[\loss(c\mSigma^{-1}\vmu)]\\
    &\leq \frac{1}{2} \left (\mathbb{E}_{X \sim N \left(s, \kappa^{-1} s \right)}[X e^{-cX/2} \cdot \vone_{X\geq 0}]+ \mathbb{E}_{X \sim N \left(s, \kappa^{-1} s\right)}[X \cdot \vone_{X<0}]\right)\\
    &= \frac{1}{2} \int_0^\infty \frac{1}{\sqrt{2\pi}(\kappa^{-1} s)^{1/2}}x \exp\left(-\frac{\left(x-s \right)^2}{2\kappa^{-1} s}-\frac{cx}{2}\right)dx + \frac{1}{2} \int_{-\infty}^0  \frac{1}{\sqrt{2\pi}(\kappa^{-1} s)^{1/2}}x \exp \left(-\frac{\left(x-s\right)^2}{2\kappa^{-1} s}\right)dx\\
    &=\frac{1}{2} \int_0^\infty \frac{1}{\sqrt{2\pi}(\kappa^{-1} s)^{1/2}}x \exp\left(-\frac{\left(x-s \right)^2}{2\kappa^{-1} s}-\frac{cx}{2}\right)dx - \frac{1}{2} \int^\infty_0  \frac{1}{\sqrt{2\pi}(\kappa^{-1} s)^{1/2}}x \exp \left(-\frac{\left(x+s\right)^2}{2\kappa^{-1} s}\right)dx\\
    &= \frac{1}{2} \int_0^\infty \frac{1}{\sqrt{2\pi}(\kappa^{-1} s)^{1/2}}x (e^{-cx/2}-e^{-2\kappa x})\exp\left(-\frac{\left(x-s\right)^2}{2\kappa^{-1}s}\right) dx.
\end{align*}
If $c > 4\kappa$, then $e^{-cx/2}-e^{-2\kappa x}<0$ for each $x>0$ and $-\frac{\partial}{\partial c} \mathbb{E}_\kappa \left[ \loss(c\Sigma^{-1}\mu) \right]<0$. Thus, $c_\kappa^* \leq 4\kappa$.

For $c \geq 0$, $1+e^{ct} \leq 2e^{ct}$ if $t\geq 0$ and $1+e^{ct} \geq 2 e^{ct/2}$ for all $t\in \mathbb{R}$ by AM-GM inequality. Therefore, we have
\begin{align*}
    &\quad -\frac{\partial}{\partial c} \mathbb{E}_\kappa[\loss(c\mSigma^{-1}\vmu)] \\
    &\geq \frac{1}{2} \left (\mathbb{E}_{X \sim N \left(s, \kappa^{-1} s\right)}[X e^{-cX} \cdot \vone_{X\geq 0}]+ \mathbb{E}_{X \sim N \left(s, \kappa^{-1} s\right)}[Xe^{-cX/2} \cdot \vone_{X<0}] \right)\\
    &= \frac{1}{2} \int_0^\infty \frac{1}{\sqrt{2\pi}\kappa^{-1} s}x \exp\left(-\frac{\left(x-s\right)^2}{2\kappa^{-1} s}-cx\right)dx + \frac{1}{2} \int_{-\infty}^0  \frac{1}{\sqrt{2\pi}\kappa^{-1}s}x \exp\left(-\frac{\left(x-s\right)^2}{2\kappa^{-1} s}-\frac{cx}{2}\right)dx\\
    &= \frac{1}{2} \int_0^\infty \frac{1}{\sqrt{2\pi}\kappa^{-1} s}x \exp\left(-\frac{\left(x-s\right)^2}{2\kappa^{-1} s}-cx\right)dx - \frac{1}{2} \int_0^\infty  \frac{1}{\sqrt{2\pi}\kappa^{-1}s}x \exp\left(-\frac{\left(x+s\right)^2}{2\kappa^{-1} s}+\frac{cx}{2}\right)dx\\
    &= \frac{1}{2} \int_0^\infty \frac{1}{\sqrt{2\pi}\kappa^{-1} s}x (e^{-cx}-e^{-(2\kappa-c/2)x})\exp\left(-\frac{\left(x-s\right)^2}{2\kappa^{-1} s}\right) dx.
\end{align*}
If $c < \frac{4}{3}\kappa$, $e^{-cx}-e^{-(2\kappa-c/2) x}<0$ for each $x>0$ and $\frac{\partial}{\partial c} \mathbb{E}_\kappa[\loss(c\mSigma^{-1}\vmu)]<0$. Hence, $c_\kappa^* \geq \frac{4}{3}\kappa$ and we conclude $\lVert \w\rVert = \Theta(\kappa)$.
\hfill $\square$
\subsection{Proof of Theorem~\ref{thm:ERM_necessary}}\label{proof:ERM_necessary}
We first show that for sufficiently large $\kappa \in (0, \infty)$ and if $n$ is not sufficiently large, 
$S$ can be linearly separable because data points usually concentrated around $\vmu$ and $-\vmu$. Next, we characterize cosine similarity between $\ell_2$ max-margin vector and the Bayes optimal solution. In our analysis, we use the following well-known lemma (See \citet{hanson1971bound,lugosi2019sub}).

\begin{lemma}\label{lemma:gaussian norm}
For positive definite matrix $\mM$, let $\rvx \sim N(\vzero, \mM)$. For any $\delta \in (0,1)$ 
\begin{equation*}
\lVert \rvx \rVert \leq  \Tr(\mM)^{1/2} + (2 \lVert \mM \rVert \log(1/\delta))^{1/2},
\end{equation*}
with probability at least $1-\delta$.
\end{lemma}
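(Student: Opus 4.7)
The plan is to derive this as a standard consequence of Gaussian concentration for Lipschitz functions applied to the Euclidean norm of $\rvx$. First I would reduce to the isotropic case: since $\mM$ is positive definite, write $\rvx = \mM^{1/2} \rvz$ where $\rvz \sim N(\vzero, \mI_d)$, so that $\lVert \rvx \rVert = \lVert \mM^{1/2} \rvz \rVert$. Consider the map $\phi : \R^d \to \R$ given by $\phi(\vz) = \lVert \mM^{1/2} \vz \rVert$. By the triangle inequality, for any $\vz_1, \vz_2 \in \R^d$,
\begin{equation*}
|\phi(\vz_1) - \phi(\vz_2)| \leq \lVert \mM^{1/2}(\vz_1 - \vz_2)\rVert \leq \lVert \mM^{1/2}\rVert \cdot \lVert \vz_1 - \vz_2 \rVert = \lVert \mM \rVert^{1/2} \cdot \lVert \vz_1 - \vz_2\rVert,
\end{equation*}
so $\phi$ is $\lVert \mM\rVert^{1/2}$-Lipschitz.

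Next I would invoke the Borell--Tsirelson--Ibragimov--Sudakov (Gaussian) concentration inequality: for any $L$-Lipschitz $\phi : \R^d \to \R$ and $\rvz \sim N(\vzero, \mI_d)$,
\begin{equation*}
\mathbb{P}\!\left[\phi(\rvz) - \mathbb{E}[\phi(\rvz)] > t \right] \leq \exp\!\left(-\frac{t^2}{2 L^2}\right)
\end{equation*}
for all $t>0$. Applying this with $L = \lVert \mM\rVert^{1/2}$ and setting the right-hand side equal to $\delta$ gives $t = (2\lVert \mM\rVert \log(1/\delta))^{1/2}$, so with probability at least $1-\delta$,
\begin{equation*}
\lVert \rvx \rVert \leq \mathbb{E}\!\left[\lVert \rvx \rVert\right] + (2\lVert \mM \rVert \log(1/\delta))^{1/2}.
\end{equation*}

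Finally I would control the mean by Jensen's inequality: $\mathbb{E}[\lVert \rvx\rVert] \leq (\mathbb{E}[\lVert \rvx \rVert^2])^{1/2} = \Tr(\mM)^{1/2}$, where the last equality uses $\mathbb{E}[\lVert \rvx \rVert^2] = \mathbb{E}[\rvx^\top \rvx] = \Tr(\mathbb{E}[\rvx \rvx^\top]) = \Tr(\mM)$. Combining the two bounds yields the claim. There is no real obstacle here since the Gaussian concentration inequality for Lipschitz functions is a classical tool; the only care needed is in checking the Lipschitz constant of $\vz \mapsto \lVert \mM^{1/2} \vz\rVert$, which cleanly equals $\lVert \mM\rVert^{1/2}$ due to the operator-norm identity $\lVert \mM^{1/2}\rVert = \lVert \mM\rVert^{1/2}$ for positive semidefinite $\mM$.
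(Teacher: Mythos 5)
Your proof is correct, and it is essentially the standard derivation of this fact. The paper itself does not prove Lemma~\ref{lemma:gaussian norm}; it states it as "well-known" and cites \citet{hanson1971bound,lugosi2019sub} for a reference. Your route---reduce to $\rvx = \mM^{1/2}\rvz$ with $\rvz$ standard normal, observe that $\vz \mapsto \lVert \mM^{1/2}\vz\rVert$ is $\lVert \mM\rVert^{1/2}$-Lipschitz, invoke the Borell--TIS Gaussian concentration inequality to get a deviation bound around the mean, and then control $\mathbb{E}\lVert\rvx\rVert \le \Tr(\mM)^{1/2}$ by Jensen---is exactly the argument one finds in the lecture notes of \citet{lugosi2019sub} (their treatment of norms of Gaussian vectors). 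All steps check out, including the operator-norm identity $\lVert\mM^{1/2}\rVert = \lVert\mM\rVert^{1/2}$ for positive semidefinite $\mM$. No gaps.
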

First, we introduce some technical quantities. Let $\delta = 0.99$ and $t = \left (\Tr( \mSigma)^{1/2} + (2 \lVert \mSigma \rVert \log(n/\delta))^{1/2} \right )/\lVert \vmu \rVert$ with
\begin{equation}\label{eqn:ERM_nec}
    n \leq \delta \exp\left( \frac{1}{2} \lVert \mSigma \rVert^{-1} \left( \kappa^{1/2}\lVert \vmu \rVert \cdot \frac{1- \cosim(\vmu, \mSigma^{-1} \vmu)}{2} - \Tr(\mSigma)^{1/2}\right)^2 \right) = \exp(\Theta(\kappa)),
\end{equation}
and we assume $\kappa$ is large enough so that $\kappa^{1/2}\lVert \vmu \rVert \cdot \frac{1- \cosim(\vmu, \mSigma^{-1} \vmu)}{2} - \Tr(\mSigma)^{1/2} >0$.
Then, by substituting $n$ in the definition of $t$ by RHS of \Eqref{eqn:ERM_nec} we have
\begin{align*}
\lVert \vmu \rVert t 
&= \Tr( \mSigma)^{1/2} + (2 \lVert \mSigma \rVert \log(n/\delta))^{1/2}\\
&\leq \Tr( \mSigma)^{1/2} + \left[ 2 \lVert \mSigma \rVert \log\left (\exp\left( \frac{1}{2} \lVert \mSigma \rVert^{-1} \left( \kappa^{1/2}\lVert \vmu \rVert \cdot \frac{1- \cosim(\vmu, \mSigma^{-1} \vmu)}{2} - \Tr(\mSigma)^{1/2}\right)^2 \right) \right) \right]^{1/2}\\
&= \Tr( \mSigma)^{1/2} + \left[ 2 \lVert \mSigma \rVert \left\{ \frac{1}{2} \lVert \mSigma \rVert^{-1} \left( \kappa^{1/2}\lVert \vmu \rVert \cdot \frac{1- \cosim(\vmu, \mSigma^{-1} \vmu)}{2} - \Tr(\mSigma)^{1/2}\right)^2 \right\}  \right]^{1/2}\\
&= \Tr( \mSigma)^{1/2} + \left[ \left( \kappa^{1/2}\lVert \vmu \rVert \cdot \frac{1- \cosim(\vmu, \mSigma^{-1} \vmu)}{2} - \Tr(\mSigma)^{1/2}\right)^2   \right]^{1/2}\\
&= \kappa^{1/2}\lVert \vmu \rVert \cdot\frac{1- \cosim(\vmu, \mSigma^{-1} \vmu)}{2}.
\end{align*}
Thus, 
\begin{equation}
\kappa^{-1/2} t \leq \frac{1-\cosim(\vmu, \mSigma^{-1}\vmu)}{2} < \sqrt{1-\cosim(\vmu, \mSigma^{-1}\vmu)^2}.
\label{eqn:ERM_Necessary-3}
\end{equation}
Next, we investigate how much positive and negative data points are concentrated near their means $\vmu$ and $-\vmu$. By applying Lemma~\ref{lemma:gaussian norm} with $\mM = \kappa^{-1}\mSigma$, for each $i\in [n]$, $\lVert (2y_i-1) \vx_i -  \vmu \rVert \leq \kappa^{-1/2} t \lVert \vmu \rVert$ with probability at least $1-\delta/n$; to see why, recall the definition of $t$. Hence, by union bound, we have $\lVert (2y_i-1) \vx_i -  \vmu \rVert  \leq \kappa^{-1/2}t \lVert \vmu \rVert$ for all $i\in [n]$, with probability at least $1-\delta$. We now condition that this event occurred and we prove that our conclusion holds. First, $S$ is strictly linearly separable by $\vmu$ since 
\begin{align*}
     (2y_i-1) \vmu^\top \vx_i &= \vmu^\top \big((2y_i-1)\vx_i - \vmu \big) + \lVert \vmu \rVert^2\\
     &\geq  \left(- \lVert(2y_i-1)\vx_i - \vmu \rVert +\lVert \vmu \rVert \right)\lVert \vmu \rVert \\
     &\geq (1-\kappa^{-1/2}t) \lVert \vmu\rVert^2 \\
     &\geq  \frac{1+\cosim(\vmu, \mSigma^{-1}\vmu)}{2} \cdot \lVert \vmu\rVert^2 \\
     &>0.
\end{align*}
Hence, there exists $\ell_2$ max-margin vector 
\begin{equation*}
    \bar{\vw}_S = \argmin_{\vw\in \mathbb{R}^d} \lVert \vw \rVert^2 \quad \text{subject to} \quad (2y_i-1) \vw^\top \vx_i \geq 1 \quad \forall i\in [n].
\end{equation*}
From the KKT condition of problem above, we have $\bar{\vw}_S = \sum_{i=1}^n \alpha_i (2y_i-1)\vx_i$ where $\alpha_i \geq 0$ for all $i \in [n]$. By triangular inequality, we have
\begin{equation*}
    \cosim(\bar{\vw}_S, \vmu) = \frac{\vmu^\top \bar{\vw}_S} {\lVert \vmu \rVert \lVert \bar{\vw}_S \rVert}
    = \frac{\vmu^\top (\sum_{i=1}^n \alpha_i (2y_i-1)\vx_i)} {\lVert \vmu \rVert \lVert \sum_{i=1}^n \alpha_i (2y_i-1)\vx_i \rVert}
    \geq \frac{\sum_{i=1}^n \alpha_i \vmu^\top (2y_i-1)\vx_i} { \sum_{i=1}^n \alpha_i \lVert \vmu\rVert \lVert (2y_i-1)\vx_i \rVert}.
\end{equation*}
Also, for all $i \in [n]$,
\begin{align*}
    \frac{\vmu^\top (2y_i-1)\vx_i}{\lVert \vmu\rVert \lVert (2y_i-1)\vx_i \rVert} &= \cosim(\vmu, (2y_i-1)\vx_i )
    = \left (1-\sin^2 \angle(\vmu, (2y_i-1)\vx_i) \right )^{1/2}\\
    &\geq \left ( 1-\left ( \frac{\lVert (2y_i-1)\vx_i - \vmu \rVert}{\lVert \vmu \rVert} \right)^2 \right)^{1/2}
    \geq \sqrt{1-\kappa^{-1}t^2}\\
    & \geq \cosim(\vmu, \mSigma^{-1}\vmu),
\end{align*}
where the last inequality used \Eqref{eqn:ERM_Necessary-3}.
Hence, we have $\cosim(\bar{\vw}_S, \vmu) \geq \sqrt{1-\kappa^{-1} t^2} \geq \cosim(\vmu, \mSigma^{-1}\vmu)$. By triangular inequality for angle, 
\begin{equation*}
    \angle(\bar{\vw}_S, \mSigma^{-1}\vmu) \geq \angle(\vmu, \mSigma^{-1}\vmu) - \angle(\bar{\vw}_S, \vmu)\geq \cos^{-1}(\cosim(\vmu, \mSigma^{-1}\vmu)) - \cos^{-1}\left(\sqrt{1-\kappa^{-1}t^2}\right)\geq 0,
\end{equation*}
and we have
\begin{align}\label{eqn:ERM_Necessary-1}
    &\quad \cosim (\bar{\vw}_S, \mSigma^{-1}\vmu) \nonumber \\
    & \leq \cos \left( \cos^{-1}(\cosim(\vmu, \mSigma^{-1}\vmu)) - \cos^{-1}\left(\sqrt{1-\kappa^{-1}t^2}\right) \right)\nonumber \\
    &= \cos \left(\cos^{-1}(\cosim(\vmu, \mSigma^{-1}\vmu))\right)\cdot  \cos \left(\cos^{-1}\left(\sqrt{1-\kappa^{-1}t^2}\right)\right)\nonumber \\
    &\quad + \sin \left(\cos^{-1}(\cosim(\vmu, \mSigma^{-1}\vmu))\right) \cdot \sin \left(\cos^{-1}\left(\sqrt{1-\kappa^{-1}t^2}\right)\right) \nonumber \\
    &= \cosim(\vmu, \mSigma^{-1}\vmu) \cdot \sqrt{1-\kappa^{-1}t^2} + \sqrt{1-\cosim(\vmu, \mSigma^{-1}\vmu)^2} \cdot \kappa^{-1/2}t\nonumber \\
    &= \cosim(\vmu, \mSigma^{-1}\vmu)\cdot \left (\sqrt{1-\kappa^{-1}t^2}-1\right) + \sqrt{1-\cosim(\vmu, \mSigma^{-1}\vmu)^2} \cdot \kappa^{-1/2} t + \cosim(\vmu, \mSigma^{-1}\vmu).
\end{align}
It is clear that $s \mapsto s\left (\sqrt{1-\kappa^{-1}t^2}-1\right) + \sqrt{1-s^2}\cdot \kappa^{-1/2}t$ is a decreasing function on $[0,1]$ for each fixed $t\in[0,1]$. Therefore, by changing $s = \cosim(\vmu, \mSigma^{-1}\vmu)$ to $s = 0$, we have
\begin{align}\label{eqn:ERM_Necessary-2}
&\quad \cosim(\vmu, \mSigma^{-1}\vmu) \cdot \left(\sqrt{1-\kappa^{-1}t^2}-1\right) + \sqrt{1-\cosim(\vmu, \mSigma^{-1}\vmu)^2}\cdot \kappa^{-1/2} t + \cosim(\vmu, \mSigma^{-1}\vmu)\nonumber \\
&\leq \kappa^{-1/2}t + \cosim(\vmu, \mSigma^{-1}\vmu)\nonumber \\
&\leq \frac{1+\cosim(\vmu, \mSigma^{-1}\vmu)}{2},
\end{align}
where the last inequality used \Eqref{eqn:ERM_Necessary-3}. 
Combining \Eqref{eqn:ERM_Necessary-1} and \Eqref{eqn:ERM_Necessary-2}, we have our conclusion. \hfill $\square$

\section{Proofs for Section \ref{section:mix}}
\subsection{Proof of Theorem~\ref{thm:expected_loss_mix}}\label{proof:expected_loss_mix}
We first prove the existence and uniqueness of a minimizer of $\mathbb{E}_\kappa[\lossmix(\vw)]$ and characterize its direction in the next part.
\paragraph{Step 1: Existence and Uniqueness of a Minimizer of $\mathbb{E}_\kappa[\lossmix(\vw)]$} \quad

Since $(\tilde{\vx}_{i,i}, \tilde{y}_{i,i}) = (\vx_i, y_i)$ for each $i \in [n]$ and $l(\cdot)$ is non-negative, for each $\vw \in \mathbb{R}^d$, we have
\begin{equation*}
    \mathbb{E}_\kappa[\lossmix(\vw)]  \geq \mathbb{E}_\kappa \left[ \frac{1}{n^2} \sum_{i=1}^n y_i l(\vw^\top \vx_i) + (1-y_i) l(-\vw^\top \vx_i)\right] = \frac{1}{n} \mathbb{E}_\kappa[\loss(\vw)].
\end{equation*}
As we discussed in \Eqref{eqn:ball}, if $\lVert \vw \rVert \geq 2 n m_\kappa^{-1}$, we have
\begin{align*}
\mathbb{E}_\kappa[\lossmix (\vw)] &\geq \frac{1}{n} \mathbb{E}_\kappa[\loss (\vw)] \geq \frac{1}{n} \mathbb{E}_{\rvx \sim N(\vmu, \kappa^{-1} \mSigma)} \left[- \frac{1}{2} \vw^\top \rvx \cdot \vone_{\vw^\top \rvx <0} \right]\\
&= \frac{1}{n} \lVert \vw \rVert \mathbb{E}_{\rvx \sim N(\vmu, \kappa^{-1}\mSigma)}\left[- \frac{1}{2} \cdot \frac{\vw^\top \rvx}{\lVert \vw \rVert} \cdot \vone_{\frac{\vw^\top \rvx}{\lVert \vw \rVert}<0} \right]\\
&\geq \frac{m_\kappa}{2n} \lVert \vw \rVert > 1 \geq \log 2 \\
&= \mathbb{E}_\kappa [\lossmix (\vzero)].
\end{align*}
Therefore, a minimizer of $\mathbb{E}_\kappa [ \lossmix (\vw) ]$ necessarily contained in a compact set $\{ \vw \in \mathbb{R}^d :  \lVert \vw \rVert \leq 2 n m_\kappa^{-1} \}$. Since $\mathbb{E}_\kappa [ \lossmix (\vw) ]$ is a continuous function of $\vw$, there must exist a minimizer. The existence part is hence proved.

To show uniqueness, we prove strict convexity of $\mathbb{E}_\kappa[\loss (\vw)]$. From strict convexity of $l(\cdot)$, for any $t \in [0,1]$, $\vw_1, \vw_2 \in \mathbb{R}^d$ with $\vw_1 \neq \vw_2$, and $y \in [0,1]$, we have
\begin{align*}
    &\quad t[yl(\vw_1^\top \vx) + (1-y) l(-\vw_1^\top \vx)] + (1-t) [yl(\vw_2^\top \vx) + (1-y) l(-\vw_2^\top \vx)]\\
    &> y l((t\vw_1+(1-t)\vw_2)^\top \vx) + (1-y) l(-(t\vw_1+(1-t)\vw_2)^\top \vx),
\end{align*}
and any $\vx \in \mathbb{R}^d$ except for a Lebesgue measure zero set (i.e., the set of points $\vx \in \R^d$ satisfying  $\vw_1^\top \vx = \vw_2^\top \vx$).

By taking expectations, we have 
\begin{equation*}
    t \mathbb{E}_\kappa[\lossmix(\vw_1)] + (1-t) \mathbb{E}_\kappa[\lossmix(\vw_2)] >  \mathbb{E}_\kappa[\lossmix(t\vw_1+(1-t)\vw_2)].
\end{equation*}
We conclude $\mathbb{E}_\kappa[\lossmix(\vw)]$ is strictly convex.
Since a strictly convex function has at most one minimizer, we conclude that $\mathbb{E}_\kappa[\lossmix(\vw)]$ has a unique minimizer for any given $\kappa \in (0,\infty)$. 

\paragraph{Step 2: Direction of the Unique Minimizer of $\mathbb{E}_\kappa [\lossmix(\vw)]$} \mbox{}

We express expected losses $\mathbb{E}_\kappa[\lossmix(\vw)]$ as the form
\begin{equation}\label{eqn:mix_form}
 \mathbb{E}_{Z \sim N(0,1)}\left[ \sum_{i=1}^k a_i l\left( b_i \kappa^{-1/2} \left(\vw^\top \mSigma \vw \right)^{1/2} Z + c_i \vw^\top \vmu \right) \right],
\end{equation}
where $a_i,b_i,c_i$'s are real valued random variables depending on $\Lambda$ and $a_i, b_i$'s are positive.
Note that $(\tilde{\vx}_{i,i}, \tilde{y}_{i,i}) = (\vx_i,y_i)$ for each $i\in [n]$ and for each $i,j \in [n]$ with $i \neq j$,
\begin{align}\label{eqn:mix_1}
    \tilde{y}_{i,j} \mid \lambda_{i,j} = 
    \begin{cases}
    1 &\text{with probability } \frac{1}{4},\\
    \lambda_{i,j} &\text{with probability } \frac{1}{4},\\
    1-\lambda_{i,j} &\text{with probability }\frac{1}{4},\\
    0 &\text{with probability } \frac{1}{4}.
    \end{cases}
\end{align}
and the conditional probability distribution of the random variable $\tilde{\vx}_{i,j}$ given $\tilde{y}_{i,j}$ and $\lambda_{i,j}$ can be formulated as the following four cases, depending on the outcome of $\tilde{y}_{i,j}$:
\begin{align}\label{eqn:mix_2}
    \begin{cases}
    \tilde{\vx}_{i,j} \mid \tilde{y}_{i,j}=1, \lambda_{i,j} &\sim  N \left ( \vmu, (g(\lambda_{i,j})^2 +(1-g(\lambda_{i,j}))^2) \kappa^{-1}\mSigma \right ),\\
    \tilde{\vx}_{i,j} \mid \tilde{y}_{i,j}=\lambda_{i,j}, \lambda_{i,j} &\sim  N \left ( (2g(\lambda_{i,j})-1)\vmu, (g(\lambda_{i,j})^2 + (1-g(\lambda_{i,j}))^2) \kappa^{-1} \mSigma \right ),\\
    \tilde{\vx}_{i,j} \mid \tilde{y}_{i,j}=1-\lambda_{i,j}, \lambda_{i,j} &\sim  N \left ( -(2g(\lambda_{i,j})-1)\vmu, (g(\lambda_{i,j})^2 + (1-g(\lambda_{i,j}))^2) \kappa^{-1} \mSigma \right ),\\
    \tilde{\vx}_{i,j} \mid \tilde{y}_{i,j}=0, \lambda_{i,j} &\sim  N \left ( -\vmu, (g(\lambda_{i,j})^2 +(1-g(\lambda_{i,j}))^2) \kappa^{-1} \mSigma \right ).
    \end{cases}
\end{align}
For simplicity, we denote $\mSigma_\lambda = (g(\lambda)^2 + (1-g(\lambda))^2) \mSigma$ for each $\lambda \in [0,1]$. Then, we have 
\begin{align}\label{eqn:mix}
    &\quad \mathbb{E}_\kappa[\lossmix(\vw)] \nonumber \\
    &= \frac{1}{n^2} \sum_{i,j=1}^n\mathbb{E}_\kappa[\tilde{y}_{i,j}l(\vw^\top \tilde{\vx}_{i,j}) + (1-\tilde{y}_{i,j}) l(-\vw^\top \tilde{\vx}_{i,j})]\nonumber \\
    &= \frac{1}{n^2} \left ( \sum_{\overset{i,j=1}{i \neq j}}^n\mathbb{E}_\kappa \left [\tilde{y}_{i,j} l(\vw^\top \tilde{\vx}_{i,j}) + (1-\tilde{y}_{i,j}) l(-\vw^\top \tilde{\vx}_{i,j}) \right ] + \sum_{i=1}^n \mathbb{E}_\kappa \left [y_i l(\vw^\top \vx_i) + (1-y_i) l(-\vw^\top \vx_i)\right ]\right)\nonumber\\
    &= \frac{n-1}{2n} \mathbb{E}_{Z \sim N(0,1), \lambda \sim \Lambda } \left [l \left ( (\kappa^{-1} \vw^\top \mSigma_\lambda \vw)^{1/2} Z + \vw^\top \vmu \right ) \right ]\nonumber \\
    &\quad+ \frac{n-1}{2n} \mathbb{E}_{Z \sim N(0,1), \lambda \sim \Lambda } \left [\lambda l \left ( (\kappa^{-1} \vw^\top \mSigma_\lambda \vw)^{1/2} Z + (2g(\lambda)-1) \vw^\top \vmu \right ) \right ]\nonumber \\
    &\quad+ \frac{n-1}{2n} \mathbb{E}_{Z \sim N(0,1), \lambda \sim \Lambda } \left [(1-\lambda) l \left ( (\kappa^{-1} \vw^\top \mSigma_\lambda \vw)^{1/2} Z -(2g(\lambda)-1) \vw^\top \vmu \right) \right ]\nonumber \\
    &\quad+ \frac{1}{n} \mathbb{E}_{Z \sim N(0,1) }\left[ l \left ( (\kappa^{-1} \vw^\top \mSigma \vw)^{1/2} Z + \vw^\top \vmu \right )\right].
\end{align}
 This is the form in \Eqref{eqn:mix_form}. Let $C_\kappa = \wmix^\top \vmu$, then by Lemma~\ref{lemma:logistic}, $\wmix$ have to be a solution of the problem $\min_{\vw^\top \vmu = C_\kappa} \frac{1}{2}\vw^\top \mSigma \vw$, and by Lemma~\ref{lemma:opt}, there exists $c_{\mathrm{mix}, n, \kappa}$ such that  $\wmix:= c_{\mathrm{mix}, n, \kappa} \mSigma^{-1} \vmu$ be the unique minimizer of $\mathbb{E}_\kappa[\lossmix(\vw)]$. 
 
 The only remaining part is showing $c_{\mathrm{mix},n,\kappa}>0$. For simplicity, we will omit $\kappa$ and $n$ in $\wmix$. If $c_{\mathrm{mix}, n, \kappa}<0$, 
\begin{align*}
    &\quad \mathbb{E}_\kappa \left[\lossmix\left(\vw_{\mathrm{mix}}^*\right)\right]\\
    &= \frac{n-1}{2n} \underbrace{\mathbb{E}_{\overset{Z \sim N(0,1)}{\lambda \sim \Lambda} } \left[l\left( \left (\kappa^{-1} {\vw_{\mathrm{mix}}^*}^\top \mSigma_\lambda \vw_{\mathrm{mix}}^* \right)^{1/2}  Z + c_{\mathrm{mix},n,\kappa} \vmu^\top \mSigma^{-1} \mu \right)\right] }_\text{(a)}\\
    &\quad+ \frac{n-1}{2n} \underbrace{\mathbb{E}_{\overset{Z \sim N(0,1)}{\lambda \sim \Lambda} }\left[\lambda l\left( \left(\kappa^{-1} {\vw_{\mathrm{mix}}^*}^\top \mSigma_\lambda \vw_{\mathrm{mix}}^*\right)^{1/2} Z + (2g(\lambda)-1) c_{\mathrm{mix},n,\kappa} \vmu^\top \mSigma^{-1} \vmu\right)\right]}_\text{(b)} \\
    &\quad+ \frac{n-1}{2n} \underbrace{\mathbb{E}_{\overset{Z \sim N(0,1)}{\lambda \sim \Lambda} }\left[(1-\lambda) l\left( \left(\kappa^{-1} {\vw_{\mathrm{mix}}^*}^\top \mSigma_\lambda \vw_{\mathrm{mix}}^*\right)^{1/2} Z -(2g(\lambda)-1) c_{\mathrm{mix}, n, \kappa} \vmu^\top \mSigma^{-1} \vmu\right)\right]}_\text{(c)} \\
    &\quad+ \frac{1}{n} \underbrace{\mathbb{E}_{Z \sim N(0,1) }\left[ l\left( \left(\kappa^{-1} {\vw_{\mathrm{mix}}^*}^\top \mSigma \vw_{\mathrm{mix}}^*\right)^{1/2} Z + c_{\mathrm{mix}, n, \kappa} \vmu^\top \mSigma^{-1} \vmu \right) \right]}_\text{(d)}\\
    &> \frac{n-1}{2n} \underbrace{\mathbb{E}_{\overset{Z \sim N(0,1)}{\lambda \sim \Lambda} } \left[l\left( \left (\kappa^{-1} {\vw_{\mathrm{mix}}^*}^\top \mSigma_\lambda \vw_{\mathrm{mix}}^* \right)^{1/2}  Z - c_{\mathrm{mix},n,\kappa} \vmu^\top \mSigma^{-1} \vmu \right)\right]}_\text{(a)'} \\
    &\quad+ \frac{n-1}{2n} \underbrace{\mathbb{E}_{\overset{Z \sim N(0,1)}{\lambda \sim \Lambda} }\left[\lambda l\left( \left(\kappa^{-1} {\vw_{\mathrm{mix}}^*}^\top \mSigma_\lambda \vw_{\mathrm{mix}}^*\right)^{1/2} Z - (2g(\lambda)-1) c_{\mathrm{mix},n,\kappa} \vmu^\top \mSigma^{-1} \vmu\right)\right]}_\text{(b)'} \\
    &\quad+ \frac{n-1}{2n} \underbrace{\mathbb{E}_{\overset{Z \sim N(0,1)}{\lambda \sim \Lambda} }\left[(1-\lambda) l\left(\left(\kappa^{-1} {\vw_{\mathrm{mix}}^*}^\top \mSigma_\lambda \vw_{\mathrm{mix}}^* \right)^{1/2} Z +(2g(\lambda)-1) c_{\mathrm{mix}, n, \kappa} \vmu^\top \mSigma^{-1} \vmu\right)\right]}_\text{(c)'} \\
    &\quad+ \frac{1}{n} \underbrace{\mathbb{E}_{Z \sim N(0,1) }\left[ l\left( \left(\kappa^{-1} {\vw_{\mathrm{mix}}^*}^\top \mSigma \vw_{\mathrm{mix}}^*\right)^{1/2} Z - c_{\mathrm{mix}, n, \kappa} \vmu^\top \mSigma^{-1} \vmu \right) \right]}_\text{(d)'}\\
    &= \mathbb{E}_\kappa[\lossmix(-\vw_{\mathrm{mix}}^*)].
\end{align*}
The inequality holds by comparing (a), (b), (c), and (d) with (a)', (b)', (c)', and (d)' respectively. This contradicts the assumption that $\vw_{\mathrm{mix}}^*$ is a unique minimizer of $\mathbb{E}_\kappa[\lossmix(\vw)]$, letting us to conclude $c_{\mathrm{mix},n, \kappa} \geq 0$. Showing $c_{\mathrm{mix},n, \kappa}$ is strictly positive will be handled in the proof of Lemma~\ref{lemma:mixup_norm} which can be found in Appendix~\ref{proof:lemma:mixup_norm}. \hfill $\square$
\subsection{Proof of Theorem~\ref{thm:mixup_convergence}}
Since we consider sufficiently large $\kappa$, we may assume $n \geq d$ and $n\geq 2$.
By Lemma~\ref{lemma:mixup_norm}, we can choose $R_\kappa = \Theta(1)$ such that $\lVert \wmix \rVert < R_\kappa$ for \emph{any} $n\in \mathbb{N}$. Let us define a compact set $\mathcal{C}_\kappa:= \{\vw \in \mathbb{R}^d \mid \lVert \vw \rVert \leq R_\kappa\}$. For any vector $\vw\in \mathbb{R}^d$ and nonzero vector $\vv \in \mathbb{R}^d$, we have
\begin{align*}
    \vv^\top \nabla^2_\vw \lossmix (\vw) \vv &= \frac{1}{n^2} \sum_{i,j=1}^n (\tilde{y}_{i,j} l''(\vw^\top \tilde{\vx}_{i,j})(\vv^\top \tilde{\vx}_{i,j})^2 +  (1-\tilde{y}_{i,j})l''(-\vw^\top \tilde{\vx}_{i,j})(\vv^\top \tilde{\vx}_{i,j})^2)\\
    &\geq \frac{1}{n^2} \sum_{i=1}^n (y_i l''(\vw^\top \vx_i)(\vv^\top \vx_i)^2 +  (1-y_i)l''(-\vw^\top \vx_i)(\vv^\top \vx_i)^2)\\
    &>0,
\end{align*}
almost surely since $\{ \vx_i\}_{i=1}^n $ spans $\mathbb{R}^d$ almost surely. Therefore, $\lossmix(\vw)$ is strictly convex almost surely and we conclude that  $\lossmix(\vw)$ has a unique minimizer $\hat{\vw}_{\mathrm{mix}, S}^*$ on $\mathcal{C}$ almost surely. Also, if $\hat{\vw}_{\mathrm{mix},S}^*$ belongs to the interior of $\mathcal{C}$, then it is a unique minimizer of $\lossmix(\vw)$ on $\mathbb{R}^d$. We prove high probability convergence of $\hat{\vw}_{\mathrm{mix}, S}^*$ to $\wmix$ using Lemma~\ref{lemma:minimizer_dependent} and convert $\ell_2$ convergence into directional convergence.
For simplicity, we define
\begin{equation*}
f_{i,j}(\vw) := \tilde{y}_{i,j}l(\vw^\top \tilde{\vx}_{i,j}) + (1-\tilde{y}_{i,j}) l(\vw^\top \tilde{\vx}_{i,j}),
\end{equation*}
for each $i,j\in[n]$. We start with the following claim which is useful for estimating quantities described in assumptions of Lemma~\ref{lemma:minimizer_dependent} for our setting.
\begin{claim}\label{claim:mix}
For any $t>0$, we have
\begin{equation*}
\mathbb{E}_\kappa \left[ e^{t \lVert \tilde{\vx}_{i,j} \rVert}\right] \leq \left( 2^{d/2} + e^{4\kappa^{-1}t^2 \lVert \mSigma \rVert}\right)e^{t\lVert \vmu \rVert},
\end{equation*}
for all $i,j \in [n]$.
\end{claim}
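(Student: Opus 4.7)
My plan is to mirror the structure of Claim~A.2 (the ERM version), with the extra step of conditioning on the mixing variable and the two labels so that $\tilde{\vx}_{i,j}$ becomes Gaussian. First I would dispose of the diagonal case $i=j$: there $\tilde{\vx}_{i,i}=\vx_i$, and the bound is exactly Claim~A.2. So for the rest of the argument I assume $i\neq j$.

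For $i\neq j$, condition on $(\lambda_{i,j},y_i,y_j)$ and write
\begin{equation*}
\tilde{\vx}_{i,j}=\vm_{i,j}+\vz_{i,j},\qquad \vm_{i,j}:=\bigl(g(\lambda_{i,j})(2y_i-1)+(1-g(\lambda_{i,j}))(2y_j-1)\bigr)\vmu,
\end{equation*}
where, conditional on $(\lambda_{i,j},y_i,y_j)$, the residual $\vz_{i,j}$ is centered Gaussian with covariance $\sigma_\lambda^2\kappa^{-1}\mSigma$ for $\sigma_\lambda^2:=g(\lambda_{i,j})^2+(1-g(\lambda_{i,j}))^2$. Two elementary observations drive everything: since $g(\lambda_{i,j})\in[0,1]$ and $(2y_i-1),(2y_j-1)\in\{-1,+1\}$, the coefficient in front of $\vmu$ defining $\vm_{i,j}$ lies in $[-1,1]$, so $\lVert\vm_{i,j}\rVert\leq\lVert\vmu\rVert$; and since $g(\lambda_{i,j})\in[0,1]$ we also have $\sigma_\lambda^2\leq 1$, hence the conditional covariance of $\vz_{i,j}$ is bounded in spectral norm by $\kappa^{-1}\lVert\mSigma\rVert$.

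The next step is to combine these two observations with the triangle inequality, exactly as in Claim~A.2, giving $e^{t\lVert\tilde{\vx}_{i,j}\rVert}\leq e^{t\lVert\vz_{i,j}\rVert}e^{t\lVert\vm_{i,j}\rVert}\leq e^{t\lVert\vz_{i,j}\rVert}e^{t\lVert\vmu\rVert}$. Then, taking the conditional expectation and rescaling $\vz_{i,j}$ by $t$ so that the covariance becomes $\sigma_\lambda^2 t^2\kappa^{-1}\mSigma$, I would apply Lemma~\ref{lemma:ineq4} to the resulting centered Gaussian, obtaining
\begin{equation*}
\mathbb{E}_\kappa\!\left[e^{t\lVert\vz_{i,j}\rVert}\,\big|\,\lambda_{i,j},y_i,y_j\right]\leq 2^{d/2}+e^{4\sigma_\lambda^2 t^2\kappa^{-1}\lVert\mSigma\rVert}\leq 2^{d/2}+e^{4t^2\kappa^{-1}\lVert\mSigma\rVert},
\end{equation*}
where the last inequality uses $\sigma_\lambda^2\leq 1$. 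Finally, since this bound is deterministic (no longer depends on $\lambda_{i,j},y_i,y_j$), the tower property eliminates the conditioning and yields the claimed estimate after multiplying by $e^{t\lVert\vmu\rVert}$.

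I do not expect any genuine obstacle; the only subtle point, and the one I would be careful to justify, is that $\sigma_\lambda^2\leq 1$ uniformly in $\lambda$ (so the scaling factor in the covariance is harmless) and that $\lVert\vm_{i,j}\rVert\leq\lVert\vmu\rVert$ uniformly in the labels (so the drift never exceeds the $\lVert\vmu\rVert$ that already appears in the ERM version). Everything else is a straightforward adaptation of Claim~A.2.
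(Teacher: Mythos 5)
Your proposal is correct and follows essentially the same route as the paper: handle $i=j$ via the ERM claim, then for $i\neq j$ condition so that $\tilde{\vx}_{i,j}$ is Gaussian with mean a multiple of $\vmu$ of coefficient in $[-1,1]$ and covariance $(g(\lambda)^2+(1-g(\lambda))^2)\kappa^{-1}\mSigma$, and conclude by the triangle inequality and Lemma~\ref{lemma:ineq4} using $\lVert\vm_{i,j}\rVert\leq\lVert\vmu\rVert$ and $g(\lambda)^2+(1-g(\lambda))^2\leq 1$. The paper merely phrases the conditioning through the four cases of $\tilde{y}_{i,j}$ rather than directly on $(\lambda_{i,j},y_i,y_j)$, which is an equivalent bookkeeping choice.
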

\begin{proof}[Proof of Claim~\ref{claim:mix}]
We first consider the case $i = j$. By applying triangular inequality and Lemma~\ref{lemma:ineq4}, we have
\begin{align*}
\mathbb{E}_\kappa \left[ e^{t \left \lVert \tilde{\vx}_{i,i}\right \rVert} \right] 
&= \mathbb{E}_{\rvx \sim N(\vmu, \kappa^{-1} \mSigma) } \left[ e^{t \left \lVert \rvx\right \rVert} \right] \\
&\leq \mathbb{E}_{\rvx \sim N(\vmu, \kappa^{-1} \mSigma) } \left[ e^{t \left \lVert \rvx - \vmu \right \rVert} \right] e^{t\lVert \vmu \rVert}\\
&= \mathbb{E}_{\rvz \sim N(\vzero, \kappa^{-1}t^2 \mSigma )} \left[e^{\lVert \rvz \rVert} \right] e^{t\lVert \vmu \rVert} \\
&\leq \left( 2^{d/2} + e^{4\kappa^{-1}t^2 \lVert \mSigma \rVert}\right)e^{t\lVert \vmu \rVert},
\end{align*}
for each $i \in [n]$.

Next, we handle the case $i \neq j$. For simplicity, we denote $\mSigma_\lambda = (g(\lambda)^2 + (1-g(\lambda))^2) \mSigma$ for each $\lambda \in [0,1]$. \Eqref{eqn:mix_1} and \Eqref{eqn:mix_2} imply that for each $i,j\in[n]$ with $i \neq j$,
\begin{align*}
&\quad \mathbb{E}_\kappa\left[ e^{t\lVert \tilde{\vx}_{i,j} \rVert}\right]\\
&= \mathbb{E}_{\lambda \sim \Lambda} \left[ \frac{1}{2} \mathbb{E}_{\rvx \sim N(\vmu, \kappa^{-1} \mSigma_\lambda)}\left[ e^{t\lVert \rvx\rVert}\right] +\frac{1}{2} \mathbb{E}_{\rvx \sim N((2g(\lambda)-1)\vmu, \kappa^{-1} \mSigma_\lambda)}\left[ e^{t\lVert \rvx\rVert}\right]\right]\\
&\leq \frac{1}{2} \mathbb{E}_{\lambda \sim \Lambda}\left[\mathbb{E}_{\rvx \sim N(\vmu, \kappa^{-1} \mSigma_\lambda)}\left[ e^{t\lVert \rvx -\vmu \rVert}\right]e^{t\lVert \vmu \rVert} + \mathbb{E}_{\rvx \sim N((2g(\lambda)-1)\vmu, \kappa^{-1} \mSigma_\lambda)}\left[ e^{t\lVert \rvx - (2g(\lambda)-1)\vmu\rVert}\right] e^{t\lVert (2g(\lambda)-1)\vmu \rVert}\right]\\
&\leq \mathbb{E}_{\lambda \sim \Lambda}\left[ \mathbb{E}_{\rvz \sim N(\vzero, \kappa^{-1} \mSigma_\lambda) }\left[ e^{t\lVert \rvz \rVert}\right]\right] e^{t\lVert \vmu\rVert}.
\end{align*}
By Lemma~\ref{lemma:ineq4} for each $\lambda \in [0,1]$ 
\begin{equation*}
\mathbb{E}_{\rvz \sim N(\vzero, \kappa^{-1} \mSigma_\lambda) }\left[ e^{t\lVert \rvz \rVert}\right] \leq 2^{d/2} + e^{4\kappa^{-1}t^2 \lVert \mSigma_\lambda\rVert} \leq 2^{d/2} + e^{4\kappa
^{-1}t^2 \lVert \mSigma\rVert}.
\end{equation*}
and we have our conclusion.
\end{proof}

\paragraph{Step 1: Estimate Upper Bound of $\mathbb{E}_\kappa \left[ e^{\left|f_{i,j}(\vw) - \mathbb{E}_\kappa[f_{i,j}(\vw)]\right|}\right]$ on $\mathcal{C}_\kappa$ }\quad

For any $\vw \in \mathcal{C}$ and $i,j \in [n]$, we have
\begin{equation*}
    |f_{i,j}(\vw)| = |\tilde{y}_{i,j}l(\vw^\top \tilde{\vx}_{i,j}) + (1-\tilde{y}_{i,j})l(-\vw^\top \tilde{\vx}_{i,j})| \leq l(-|\vw^\top \tilde{\vx}_{i,j}|) \leq l(-R_\kappa \lVert \tilde{\vx}_{i,j} \rVert) = \log \left( 1+e^{R_\kappa \lVert \tilde{\vx}_{i,j}\rVert} \right) .
\end{equation*}
By Lemma~\ref{lemma:mixup_norm}, $R_\kappa = \Theta(1)$ and by applying Claim~\ref{claim:mix} for $t = R_\kappa$,  we obtain $M_\kappa'$ such that $\mathbb{E}_\kappa\left[ e^{|f_{i,j}(\vw)|}\right] < M_\kappa'$ and $M_\kappa' = \Theta(1)$.
Also, by triangular inequality and Jensen's inequality, we have
\begin{align*}
\mathbb{E}_\kappa \left[ e^{\left|f_{i,j}(\vw) - \mathbb{E}_\kappa[f_{i,j}(\vw)]\right|}\right]
&\leq \mathbb{E}_\kappa \left[ e^{\left|f_{i,j}(\vw)\right| + \left|\mathbb{E}_\kappa[f_{i,j}(\vw)]\right|}\right]\\
&\leq \mathbb{E}_\kappa \left[ e^{\left|f_{i,j}(\vw)\right| }\right]^2\\
&\leq {M_\kappa'}^2.
\end{align*}
Letting $M_\kappa  = {M_\kappa'^2}$, we have
\begin{equation*}
\mathbb{E}_\kappa \left[ e^{\left|f_{i,j}(\vw) - \mathbb{E}_\kappa[f_{i,j}(\vw)]\right|}\right] \leq M_\kappa,
\end{equation*}
for any $\vw \in \mathcal{C}_\kappa$ and $M_\kappa = \Theta(1)$.

\paragraph{Step 2: Estimate Upper Bound of $\lVert \nabla_{\vw} f_{i,j}(\vw) \rVert$ and $\lVert \nabla_\vw \mathbb{E}_\kappa[f_{i,j}(\vw)]\rVert$ } \quad

For each $\vw \in \mathcal{C}_\kappa$ and $i,j \in [n]$, 
\begin{align*}
\lVert \nabla_\vw f_{i,j}(\vw) \rVert &= \left \lVert \nabla_\vw (\tilde{y}_{i,j}l(\vw^\top \tilde{\vx}_{i,j}) + (1-\tilde{y}_{i,j}) l(-\vw^\top \tilde{\vx}_{i,j})) \right \rVert\\
&= \left \lVert \tilde{y}_{i,j}l'(\vw^\top \tilde{\vx}_{i,j})\tilde{\vx}_{i,j} - (1-\tilde{y}_{i,j}) l'(-\vw^\top \tilde{\vx}_{i,j})\tilde{\vx}_{i,j} \right \rVert\\
&\leq  \lVert \tilde{\vx}_{i,j} \rVert.
\end{align*}
In addition, by Lemma~\ref{lemma:gradient&hessian},
\begin{align*}
\left\lVert \nabla_\vw \mathbb{E}_\kappa [f_{i,j}(\vw)]\right \rVert &=  \left \lVert \nabla_\vw \mathbb{E}_\kappa[ (\tilde{y}_{i,j}l(\vw^\top \tilde{\vx}_{i,j}) + (1-\tilde{y}_{i,j}) l(-\vw^\top \tilde{\vx}_{i,j}))]\right \rVert\\
&= \left   \lVert\mathbb{E}_\kappa[ \nabla_\vw(\tilde{y}_{i,j}l(\vw^\top \tilde{\vx}_{i,j}) + (1-\tilde{y}_{i,j}) l(-\vw^\top \tilde{\vx}_{i,j}))]\right \rVert\\
&= \left \lVert \mathbb{E}_\kappa[ \tilde{y}_{i,j}l'(\vw^\top \tilde{\vx}_{i,j})\tilde{\vx}_{i,j} - (1-\tilde{y}_{i,j}) l'(-\vw^\top \tilde{\vx}_{i,j})\tilde{\vx}_{i,j} \right] \rVert \\
&\leq \mathbb{E}_\kappa [\lVert \tilde{\vx}_{i,j}\rVert].
\end{align*}
Also, applying Claim~\ref{claim:mix} with $t = 1$, there exists $L_\kappa$ such that $\mathbb{E}_\kappa \left[ e^{\lVert \tilde{\vx}_{i,j} \rVert}\right]< L_\kappa$ and $L_\kappa = \Theta(1)$.

\paragraph{Step 3: Estimate Strong convexity Constant of $\mathbb{E}_\kappa[\lossmix (\vw)]$ on $\mathcal{C}_\kappa$} \quad

For each $\lambda \in [0,1]$, denote $\mSigma_\lambda : = (g(\lambda)^2 + (1-g(\lambda))^2) \mSigma$, for simplicity. By Lemma~\ref{lemma:gradient&hessian}, Lemma \ref{lemma:ineq1} and \Eqref{eqn:mix}, we have 
\begin{align*}
    \vv^\top \nabla^2_\vw \mathbb{E}_\kappa[\lossmix (\vw)] \vv &\geq \frac{n-1}{2n} \mathbb{E}_{\lambda \sim \Lambda}[\mathbb{E}_{\rvx \sim N(\vmu, \kappa^{-1} \mSigma_\lambda)}[l''(\vw^\top \rvx)(\vv^\top \rvx)^2]]\\
    &\geq \frac{1}{16} \mathbb{E}_{\lambda \sim \Lambda}[\mathbb{E}_{\rvx \sim N(\vmu, \kappa^{-1} \mSigma_\lambda)}[e^{-(\vw^\top \rvx)^2/2}(\vv^\top \rvx)^2]],
\end{align*}
for any vector $\vw\in\mathcal{C}$ and unit vector $\vv \in \mathbb{R}^d$. By Lemma~\ref{lemma:ineq3} and Lemma~\ref{lemma:mixup_norm}, for each $\lambda \in [0,1]$, 
\begin{align*}
&\quad \mathbb{E}_{\rvx \sim N(\vmu, \kappa^{-1} \mSigma_\lambda)}[e^{-(\vw^\top \rvx)^2/2}(\vv^\top \rvx)^2]\\
&\geq \kappa^{d/2} \lVert \mSigma_\lambda \rVert^{-d/2} (\kappa\lVert \mSigma_\lambda^{-1} \rVert + R_\kappa^2)^{-(d+2)/2} \exp(-R_\kappa^2 \lVert \mSigma_\lambda \rVert \lVert \mSigma_\lambda^{-1} \rVert \lVert \vmu\rVert^2)\\
&= \Theta (\kappa^{-1}).
\end{align*}
Hence, $\mathbb{E}_\kappa[\lossmix(\vw)]$ is $\alpha_\kappa $-strongly convex with $\alpha_\kappa = \Theta(\kappa^{-1})$.

\paragraph{Step 4: Sample Complexity for Directional Convergence} \quad

By Lemma~\ref{lemma:mixup_norm}, we can choose $r_\kappa = \Theta(1)$ such that 
 $r_\kappa \leq \lVert \wmix \rVert$ for any $n$. Since $r_\kappa = \Theta(1)$, we assume $\kappa$ is large enough so that 
$r_\kappa \epsilon < r_\kappa <\alpha_\kappa^{-1/2}$.
Assume the unique existence of $\hat{\vw}_{\mathrm{mix},S}^*$ which occurs almost surely. 
By Lemma~\ref{lemma:minimizer_dependent}, if
\begin{equation*}
    n \geq \frac{C_1'M_\kappa}{\alpha_\kappa^2 r_\kappa^4 \epsilon^4} \log \left( \frac{3}{\delta} \max \left\{ 1, \left( \frac{2C_2'd^{1/2}R_\kappa L_\kappa}{\alpha_\kappa r_\kappa ^2 \epsilon^2} \right)^d  \right\} \right) = \frac{\Theta(\kappa^2)}{\epsilon^4} \left (1+ \log \frac{1}{\epsilon} + \log \frac{1}{\delta} \right ),
\end{equation*}
then we have $\lVert\wmix - \hat{\vw}_{\mathrm{mix},S}^* \rVert \leq  r_\kappa \epsilon \leq \lVert\wmix \rVert \epsilon$ with probability at least $1-\delta$.
Furthermore, if $\lVert \wmix - \hat{\vw}_{\mathrm{mix},S}^* \rVert \leq  \lVert \wmix \rVert \epsilon$, then $\hat{\vw}_{\mathrm{mix},S}^*$ belongs to interior of $\mathcal{C}_\kappa$. Hence, $\hat{\vw}_{\mathrm{mix},S}^*$ is a minimizer of $\lossmix(\vw)$ over the entire $\mathbb{R}^d$. Also, we have
\begin{align*}
    \cosim ( \hat{\vw}_{\mathrm{mix},S}^*, \mSigma^{-1}\vmu) &=  \cosim ( \hat{\vw}_{\mathrm{mix},S}^*,\wmix)\\
    &= \bigg (1- \sin^2\Big (\angle \big (\hat{\vw}_{\mathrm{mix},S}^*, \wmix \big ) \Big ) \bigg)^{1/2}\\
    &\geq 1- \sin\Big (\angle\big (\hat{\vw}_{\mathrm{mix},S}^*, \wmix \big) \Big)\\
    &\geq 1- \epsilon.
\end{align*}
Hence, we conclude that if $n = \frac{\Omega(\kappa^2)}{\epsilon^4} \left (1+ \log \frac{1}{\epsilon} + \log \frac{1}{\delta} \right)$, then
with probability at least $1-\delta$, the Mixup loss $\lossmix(w)$ has a unique minimizer $\hat{\vw}_{\mathrm{mix},S}^*$ and $\cosim ( \hat{\vw}_{\mathrm{mix},S}^*, \mSigma^{-1}\vmu)\geq 1- \epsilon$. \hfill $\square$

\subsection{Proof of Lemma~\ref{lemma:mixup_norm}}\label{proof:lemma:mixup_norm}
By Theorem~\ref{thm:expected_loss_mix}, let $\wmix = c_{\mathrm{mix}, n, \kappa}^* \mSigma^{-1} \vmu$ where $c_{\mathrm{mix}, n, \kappa }^* \geq 0 $. For any $c \in \R$, from \Eqref{eqn:mix}, we have
\begin{align*}
    \mathbb{E}_\kappa[\lossmix(c \mSigma^{-1} \vmu)]
    &= \frac{n-1}{2n} \mathbb{E}_{\lambda \sim \Lambda}[\mathbb{E}_{X \sim N(s, \kappa^{-1} s_\lambda) } [l(c X)]] \\
    &\quad+ \frac{n-1}{2n} \mathbb{E}_{\lambda \sim \Lambda}[\lambda \cdot\mathbb{E}_{X \sim N((2g(\lambda)-1)s, \kappa^{-1} s_\lambda) }[ l(cX)]] \\
    &\quad+ \frac{n-1}{2n} \mathbb{E}_{\lambda \sim \Lambda}[(1-\lambda) \cdot\mathbb{E}_{\rvx \sim N(-(2g(\lambda)-1)s, \kappa^{-1} s_\lambda) }[ l(cX)]] \\
    &\quad+ \frac{1}{n} \mathbb{E}_{X \sim N(s,\kappa^{-1}s) }\left[ l(c X)\right].
\end{align*}
where we denote $s_\lambda = (g(\lambda)^2 + (1-g(\lambda))^2) \vmu^\top \mSigma \vmu$ for each $\lambda \in [0,1]$ and $s = \vmu^\top \mSigma \vmu$, for simplicity.

By Lemma~\ref{lemma:gradient&hessian}, we have
\begin{align*}
    &\quad - \frac{\partial}{\partial c}\mathbb{E}_\kappa[\lossmix(c \mSigma^{-1} \vmu)]\\
    &= -\frac{n-1}{2n} \mathbb{E}_{\lambda \sim \Lambda}[\mathbb{E}_{X \sim N(s, \kappa^{-1} s_\lambda) } [l'(c X)X]] \\
    &\quad- \frac{n-1}{2n} \mathbb{E}_{\lambda \sim \Lambda}[\lambda \cdot \mathbb{E}_{X \sim N((2g(\lambda)-1)s, \kappa^{-1} s_\lambda) }[ l'(cX)X]] \\
    &\quad- \frac{n-1}{2n} \mathbb{E}_{\lambda \sim \Lambda}[(1-\lambda)\cdot\mathbb{E}_{X \sim N(-(2g(\lambda)-1)s, \kappa^{-1} s_\lambda) }[ l'(cX)X]] \\
    &\quad- \frac{1}{n} \mathbb{E}_{X \sim N(s, \kappa^{-1}s) }\left[  l'(cX)X\right]\\
    &= \frac{n-1}{2n} \mathbb{E}_{\lambda \sim \Lambda}\left[\mathbb{E}_{X \sim N(s, \kappa^{-1} s_\lambda) } \left[\frac{X}{1+e^{cX}}\right]\right] \\
    &\quad+ \frac{n-1}{2n} \mathbb{E}_{\lambda \sim \Lambda}\left[\lambda \cdot \mathbb{E}_{X \sim N((2g(\lambda)-1)s, \kappa^{-1} s_\lambda) }\left[\frac{X}{1+e^{cX}}\right]\right] \\
    &\quad+ \frac{n-1}{2n} \mathbb{E}_{\lambda \sim \Lambda}\left[(1-\lambda) \cdot\mathbb{E}_{X \sim N(-(2g(\lambda)-1)s, \kappa^{-1} s_\lambda) }\left[\frac{X}{1+e^{cX} }\right]\right] \\
    &\quad+ \frac{1}{n} \mathbb{E}_{X \sim N(s,\kappa^{-1}s) }\left[\frac{X}{1+e^{cX} }\right],
\end{align*}
and if $c\geq 0$,  by applying Lemma~\ref{lemma:ineq2} with $z= cX$, we have
\begin{align*}
    &\quad -\frac{\partial}{\partial c } \mathbb{E}[\lossmix(c \mSigma^{-1}\vmu)] \\
    &\geq \frac{n-1}{2n} \mathbb{E}_{\lambda \sim \Lambda}\left[\mathbb{E}_{X \sim N(s, \kappa^{-1} s_\lambda) } \left[\frac{1}{2}X - \frac{1}{4} c X^2 \right]\right] \\
    &\quad+ \frac{n-1}{2n} \mathbb{E}_{\lambda \sim \Lambda}\left[\lambda \cdot \mathbb{E}_{X \sim N((2g(\lambda)-1)s, \kappa^{-1} s_\lambda) }\left[\frac{1}{2}X - \frac{1}{4} c X^2\right]\right] \\
    &\quad+ \frac{n-1}{2n} \mathbb{E}_{\lambda \sim \Lambda}\left[(1-\lambda) \cdot\mathbb{E}_{X \sim N(-(2g(\lambda)-1)s, \kappa^{-1} s_\lambda) }\left[\frac{1}{2}X - \frac{1}{4} c X^2\right]\right] \\
    &\quad+ \frac{1}{n} \mathbb{E}_{X \sim N(s,\kappa^{-1}s) }\left[\frac{1}{2}X - \frac{1}{4} c X^2\right]\\
    &= \frac{s}{4n} \cdot \Big((n-1)\mathbb{E}_{\lambda \sim \Lambda}[1+(2\lambda-1)(2g(\lambda)-1)]+2 \Big) \\
    &\quad - \frac{c}{8n} \cdot \left( \kappa^{-1} \left( 2 (n-1) \mathbb{E}_{\lambda \sim \Lambda}[ s_\lambda]+ 2s \right) +s^2 \left((n-1)\mathbb{E}_{\lambda \sim \Lambda}[(2g(\lambda)-1)^2 +1] +2 \right)  \right) \\
    & \geq \frac{s}{8} \cdot \Bigg( \mathbb{E}_{\lambda \sim \Lambda}[(2\lambda-1)(2 g(\lambda)-1)] - c  \Big(\kappa^{-1} \left(2\mathbb{E}_{\lambda \sim \Lambda}\left[g(\lambda)^2 + (1-g(\lambda))^2 \right]+1\right) + s\left( \mathbb{E}_{\lambda \sim \Lambda}[(2g(\lambda)-1)^2+2]\right)\Big)\Bigg).
\end{align*}

Thus, if $0 \leq c <\frac{\mathbb{E}_{\lambda \sim \Lambda}\left[ (2\lambda-1)(2g(\lambda)-1)\right]}{ \kappa^{-1} \left(2\mathbb{E}_{\lambda \sim \Lambda}\left[g(\lambda)^2 + (1-g(\lambda))^2 \right]+1\right) + s\left( \mathbb{E}_{\lambda \sim \Lambda}[(2g(\lambda)-1)^2+2]\right)}$, then $\mathbb{E}[\lossmix(c \Sigma^{-1}\mu)]$ is decreasing as a function of $c$ and we conclude $c_{\mathrm{mix},n,\kappa }\geq \frac{\mathbb{E}_{\lambda \sim \Lambda}\left[ (2\lambda-1)(2g(\lambda)-1)\right]}{ \kappa^{-1} \left(2\mathbb{E}_{\lambda \sim \Lambda}\left[g(\lambda)^2 + (1-g(\lambda))^2 \right]+1\right) + s\left( \mathbb{E}_{\lambda \sim \Lambda}[(2g(\lambda)-1)^2+2]\right)}$ which implies $\lVert \wmix \rVert = \Omega(1)$, and one can note that this lower bound is independent of $n$.

In order to get the upper bound, we use the following inequality: For each $z \in \mathbb{R}$, we have
\begin{equation*}
    l(z) + l(-z) = \log(1+e^{-z}) + \log(1+e^z) = \log(2+e^z + e^{-z}) > |z|.
\end{equation*}
For each $c \geq 0$, we have
\begin{align*}
    \mathbb{E}_\kappa[\lossmix(c \mSigma^{-1}\vmu)] &= \frac{1}{n^2}  \sum_{i,j=1}^n \mathbb{E}_\kappa\left[\tilde{y}_{i,j} l( c \vmu^\top \mSigma^{-1} \tilde{\vx}_{i,j}) + (1-\tilde{y}_{i,j})l(- c \vmu^\top \mSigma^{-1} \tilde{\vx}_{i,j})\right]\\
    &> \frac{1}{n^2} \sum_{\overset{i,j=1}{i \neq j}}^n \mathbb{E}_\kappa\left[ \tilde{y}_{i,j} l( c \vmu^\top \mSigma^{-1} \tilde{\vx}_{i,j}) + (1-\tilde{y}_{i,j})l(- c \vmu^\top \mSigma^{-1} \tilde{\vx}_{i,j})\right]\\
    &\geq \frac{1}{n^2} \sum_{\overset{i,j=1}{i \neq j}}^n  \mathbb{E}_\kappa\left[\min \{\tilde{y}_{i,j},1-\tilde{y}_{i,j}\} ( l( c \vmu^\top \mSigma^{-1} \tilde{\vx}_{i,j}) + l(- c \vmu^\top \mSigma^{-1} \tilde{\vx}_{i,j}))\right]\\
    &\geq \frac{1}{n^2}  \sum_{\overset{i,j=1}{i \neq j}}^n \mathbb{E}_\kappa\left[ \min \{ \tilde{y}_{i,j},1-\tilde{y}_{i,j}\} | c \vmu^\top \mSigma^{-1} \tilde{\vx}_{i,j}|\right]\\
    &= \frac{1}{n^2}  \sum_{\overset{i,j=1}{i \neq j}}^n \mathbb{E}_\kappa\left[ |\min \{ \tilde{y}_{i,j},1-\tilde{y}_{i,j}\} c \vmu^\top \mSigma^{-1} \tilde{\vx}_{i,j}|\right]\\
    &= \frac{1}{n^2}  \sum_{\overset{i,j=1}{i \neq j}}^n \mathbb{E}_\kappa\left[ |\min \{ \tilde{y}_{i,j},1-\tilde{y}_{i,j}\}(2g(\tilde{y}_{i,j})-1) s|\right] c\\
    &= \frac{n-1}{n}\mathbb{E}_{\lambda \sim \Lambda}[\min \{\lambda, 1-\lambda \}|2g(\lambda)-1|]s c\\
    &\geq \frac{1}{4}\mathbb{E}_{\lambda \sim \Lambda}[\min \{\lambda, 1-\lambda \}|2g(\lambda)-1|] s c.
\end{align*}
From our assumption $\mathbb{P}_{\lambda \sim \Lambda}\left [\lambda \notin \{ 0,1 \} \land g(\lambda) \neq \frac{1}{2}\right]>0$, we have $\mathbb{E}_{\lambda \sim \Lambda}[\min(\lambda, 1-\lambda)]|2g(\lambda)-1| > 0$. Thus, if $c \geq \frac{4\log 2}{\mathbb{E}_{\lambda \sim \Lambda}[\min(\lambda, 1-\lambda)|2g(\lambda)-1|]s}$, then $\mathbb{E}_\lambda[\lossmix (c \Sigma^{-1}\mu)] > \log2 = \mathbb{E}_\lambda[\lossmix (0)]$, so $c\mSigma^{-1}\vmu$ cannot be a minimizer. Hence, $c_{\mathrm{mix},n,\kappa}^* <\frac{4\log 2}{\mathbb{E}_{\lambda \sim \Lambda}[\min(\lambda, 1-\lambda)|2g(\lambda)-1|]s} 
 = \mathcal O(1)$, which is also independent of $n$. Combining the lower and upper bounds, we have our conclusion that $c_{\mathrm{mix},n,\kappa}^* = \Theta(1)$.
\hfill $\square$
\subsection{Proof of Lemma~\ref{lemma:minimizer_dependent}}\label{proof:minimizer_dependent}
The challenging part is that we cannot directly apply Lemma~\ref{lemma:independent_concentration} to show pointwise convergence of $\hat{F}_N(\vtheta)$ to $F_N(\vtheta)$ on $\mathcal{C}$ since $\veta_{i,j}$'s are not independent and not identically distributed. We can overcome this problem by using Lemma~\ref{lemma:dependent_concentration} which is the alternative version of Lemma~\ref{lemma:independent_concentration}. We prove Lemma~\ref{lemma:dependent_concentration} in Appendix~\ref{proof:dependent_concentration}.

By Lemma~\ref{lemma:dependent_concentration}, there exists a universal constant $C_1'>0$ such that  for any $\vtheta \in \mathcal{C}$, we have
\begin{equation*}
\mathbb{P}\left[ \left| \hat{F}_N(\vtheta) - F_N(\vtheta)\right|> \frac{\alpha \epsilon^2 }{8}\right] \leq 2 \exp \left( - \frac{ \alpha^2 \epsilon^4}{C_1' M}N\right).
\end{equation*}
Notice that from the given condition and Jensen's inequality, for each $i,j \in [N]$, 
\begin{equation*}
    \mathbb{E}_{\veta \sim \mathcal{P}_{i,j}}[g(\veta)] \leq \log \mathbb{E}_{\veta \sim \mathcal{P}_{i,j}}\left[e^{g(\veta)}\right] < \log L <L,
\end{equation*}
and from triangular inequality, we have
\begin{align}
\label{eq:lemmaminimizer_dependent-1}
 &\quad \mathbb{P}\left[ \sup_{\vtheta \in \mathcal{C}}\left\lVert \nabla_\vtheta \left( \hat{F}_N(\vtheta) - F_N(\vtheta) \right)\right\rVert > 3 L \right] \nonumber \\
 &\leq \mathbb{P}\left[ \frac{1}{N^2} \sum_{i,j=1}^N \sup_{\vtheta \in \mathcal{C}}\left\lVert \nabla_\vtheta \left( f(\vtheta, \veta_{i,j}) - \mathbb{E}[f(\vtheta, \veta_{i,j})] \right)\right\rVert > 3 L\right]\nonumber \\
 &\leq \mathbb{P} \left[ \frac{1}{N^2} \sum_{i,j=1}^N \left( g(\veta_{i,j}) + \mathbb{E}[g(\veta_{i,j})] \right)> 3 L \right] \nonumber \\
 &\leq \mathbb{P} \left[ \frac{1}{N^2} \sum_{i,j=1}^N \left( g(\veta_{i,j}) - \mathbb{E}[g(\veta_{i,j})] \right)> L \right].
\end{align}
Since we have $\mathbb{E}_{\veta \sim \mathcal{P}_{i,j}}\left [e^{ |g(\veta) - \mathbb{E}_{\veta\sim \mathcal{
P}_{i,j}}[g(\veta)]|} \right] \leq \mathbb{E}_{\veta \sim \mathcal{P}_{i,j}}\left [e^{ g(\veta)} \right] \cdot e^{\mathbb{E}_{\veta\sim \mathcal{
P}_{i,j}}[g(\veta)] }<L^2$ by our assumption, we can apply Lemma~\ref{lemma:dependent_concentration} to the RHS of \Eqref{eq:lemmaminimizer_dependent-1}. Therefore, we have
\begin{equation}
\label{eq:lemmaminimizer_dependent-3}
 \mathbb{P}\left[ \sup_{\vtheta \in \mathcal{C}}\left\lVert \nabla_\vtheta \left( \hat{F}_N(\vtheta) - F_N(\vtheta) \right)\right\rVert > 3 L \right] 
 \leq \exp \left(-\frac{1}{C_1''}N\right),
\end{equation}
where $C_1''>0$ is a universal constant. Without loss of generality, we can choose $C_1'' = C_1'$ that works for both \Eqref{eq:lemmaminimizer_dependent-1} and \Eqref{eq:lemmaminimizer_dependent-3}.

We choose $\bar{\vtheta}_1, \dots, \bar{\vtheta}_m \in \mathcal{C}$ with $m \leq \max \left\{1, \left(\frac{C_2'k^{1/2} D L}{\alpha \epsilon^2 }\right)^k \right\}$ where $C_2'>0$ is a universal constant and satisfies following: 
For any $\vtheta \in \mathcal{C}$, there exists $i_\vtheta \in [m]$ such that $\left \lVert \vtheta - \bar{\vtheta}_{i_\vtheta} \right \rVert< \frac{\alpha \epsilon^2}{24L}$. In other words, $\bar{\vtheta}_1, \dots, \bar{\vtheta}_m$ form an $\frac{\alpha \epsilon^2}{24L}$-cover of $\mathcal {C}$.

Suppose $| \hat{F}_N(\vtheta) - F_N(\vtheta) | < \frac{\alpha \epsilon^2}{8}$  for each $k \in [m]$ and $\sup_{\theta \in \mathcal{C}}\left\lVert \nabla_\vtheta \left( \hat{F}_N(\vtheta) - F_N(\vtheta) \right)\right\rVert < 3L$ which implies $\hat{F}_N(\vtheta)-F_N(\vtheta)$ is $3L$-Lipschitz. Then, for any $\vtheta \in \mathcal{C}$, we have
\begin{align*}
   &\quad \left | \hat{F}_N(\vtheta) - F_N(\vtheta) \right| \\
   &\leq \left | \hat{F}_N(\bar{\vtheta}_{i_\vtheta}) - F_N(\bar{\vtheta}_{i_\vtheta}) \right| +  \left |\left( \hat{F}_N(\vtheta) - F_N(\vtheta) \right) -  \left  (\hat{F}_N(\bar{\vtheta}_{i_\vtheta}) - F_N(\bar{\vtheta}_{i_\vtheta}) \right) \right|\\
    &\leq \frac{\alpha \epsilon^2}{8} + 3L \lVert \vtheta - \bar{\vtheta}_{i_\vtheta} \rVert < \frac{\alpha \epsilon^2}{8} + \frac{\alpha \epsilon^2}{8} = \frac{\alpha \epsilon^2}{4}.
\end{align*}
By applying union bound, we conclude
\begin{align*}
    &\quad \mathbb{P}\left[ \sup_{\vtheta \in \mathcal{C}}|\hat{F}_N(\vtheta) - F_N(\vtheta)| >  \frac{\alpha \epsilon^2}{4} \right] \\
    &\leq 2\max \left \{1, \left(\frac{C_2'k^{1/2} DL}{\alpha \epsilon^2 }\right)^k \right \} \exp\left(- \frac{ \alpha^2 \epsilon^4}{C_1' M}N \right) + \exp\left(-\frac{1}{C_1'}N \right)\\ &\leq 3 \max \left \{1, \left(\frac{C_2'k^{1/2} DL}{\alpha \epsilon^2 }\right)^k \right \} \exp \left( - \min \left\{ \frac{\alpha^2 \epsilon^4}{C_1' M}, \frac{1}{C_1'}\right \} N \right)\\
    &=3 \max \left \{1, \left(\frac{C_2'k^{1/2} D L}{\alpha \epsilon^2 }\right)^k \right \} \exp \left( -  \frac{\alpha^2 \epsilon^4}{C_1' M} N \right),
\end{align*}
where the last equality equality is due to $ \alpha \epsilon^2 \leq 1$ and $M\geq 1$ which are implied by given conditions.

Suppose $ \sup_{\vtheta \in \mathcal{C}}|\hat{F}_N(\vtheta) - F_N(\vtheta)| <  \frac{\alpha \epsilon^2}{4}$ and $\left \lVert \hat{\vtheta}_N^* - \vtheta_N^* \right \rVert > \epsilon$. Then, from the strong convexity of $F_N(\vtheta)$, we have
\begin{align*}
    &\quad \hat{F}_N(\hat{\vtheta}_N^*) - \hat{F}_N(\vtheta_N^*)\\
    &= \left( \hat{F}_N(\hat{\vtheta}_N^*) - F_N(\hat{\vtheta}_N^*) \right) + \left( F_N(\vtheta_N^*) - \hat{F}_N(\vtheta_N^*)\right) + \left(  F_N(\hat{\vtheta}_N^*) - F_N(\vtheta_N^*) )\right)\\
    &\geq -\frac{\alpha \epsilon^2 }{2} + \frac{\alpha}{2} \left \lVert \hat{\vtheta}_N^* - \vtheta_N^* \right \rVert^2 >0.
\end{align*}
This is a contradiction to the fact that $\hat{\vtheta}_N$ is a minimizer of $\hat{F}_N(\vtheta)$. Hence, we have
\begin{equation*}
    \mathbb{P} \left[ \left \lVert \hat{\vtheta}_N^* - \vtheta_N^* \right \rVert >\epsilon \right] \leq 3 \max \left \{1, \left(\frac{C_2'k^{1/2} L D}{\alpha \epsilon^2 }\right)^k \right\} \exp \left( - \frac{\alpha^2 \epsilon^4}{C_1' M}N \right),
\end{equation*}
and equivalently, if 
\begin{equation*}
    N \geq \frac{C_1' M}{ \alpha^2 \epsilon^4} \log\left(\frac{3}{\delta} \max \left\{1, \left(\frac{C_2'k^{1/2} L D}{\alpha \epsilon^2 }\right)^k \right\} \right),
\end{equation*}
then $\left \lVert \hat{\vtheta}_N^* - \vtheta_N^* \right \rVert <\epsilon$ with probability at least $1-\delta$.
\hfill $\square$

\section{Proofs for Section \ref{section:mask}}
Before moving on to proof of the main results of Section~\ref{section:mask}, We would like to represent $\mathbb{E}_\kappa [ \lossmask ( \vw )]$ in a simpler form. Note that $(\tilde{\vx}_{i,i}^\mathrm{mask},\tilde{y}_{i,i}^\mathrm{mask}) = (\vx_i, y_i)$ for each $i \in [n]$. For $i,j \in [n]$ with $i \neq j$, conditioning on $(\mM_{i,j}, \lambda_{i,j}) \sim \mathcal{M}$, we have
\begin{align*}
    \begin{cases}
    \tilde{y}_{i,j} = 1,   &\tilde{x}_{i,j}\sim N(\vmu, \mSigma \odot \left(\mM_{i,j}\mM_{i,j}^\top + (\vone-\mM_{i,j})(\vone-\mM_{i,j})^\top )\right) \hfill\text{with probability }\frac{1}{4}\\
    \tilde{y}_{i,j} = \lambda_{i,j}, &\tilde{x}_{i,j} \sim N((2\mM_{i,j}-1)\odot\vmu, \mSigma \odot \left(\mM_{i,j}\mM_{i,j}^\top + (\vone-\mM_{i,j})(\vone-\mM_{i,j})^\top )\right)\hfill\text{with probability } \frac{1}{4}\\
    \tilde{y}_{i,j} = 1-\lambda_{i,j},  &\tilde{x}_{i,j}\sim N(-(2\mM_{i,j}-1)\vmu, \mSigma \odot \left(\mM_{i,j}\mM_{i,j}^\top + (\vone-\mM_{i,j})(\vone-\mM_{i,j})^\top )\right)\hfill\text{with probability }\frac{1}{4}\\
    \tilde{y}_{i,j} = 0,  &\tilde{x}_{i,j}\sim N(-\vmu, \mSigma \odot \left(\mM_{i,j}\mM_{i,j}^\top + (\vone-\mM_{i,j})(\vone-\mM_{i,j})^\top )\right) \hfill\text{with probability } \frac{1}{4}
    \end{cases}.
\end{align*}
Let $ \support(\mM) = \left \{ \mM^{(1)}, \dots, \mM^{(p)} \right \} $ where $(\mM, \lambda ) \sim \mathcal{M} $. For each $k \in [p]$, define 
\begin{equation*}
\vmu^{(k)} = \vmu \odot (2\mM^{(k)}-1), \mSigma^{(k)} = \mSigma \odot \left(\mM^{(k)}{\mM^{(k)}}^\top + (\vone-\mM^{(k)})(\vone-\mM^{(k)})^\top\right),
\end{equation*}
and
\begin{equation*}
a_k = \frac{1}{2}\mathbb{E}_{(\mM,\lambda) \sim \mathcal{M}}\left[\lambda\vone_{\mM = \mM^{(k)}}\right], b_k = \frac{1}{2}\mathbb{E}_{(\mM,\lambda) \sim \mathcal{M}}\left[(1-\lambda)\vone_{\mM = \mM^{(k)}}\right],c_k = \frac{1}{2} \mathbb{P}_{(\mM, \lambda) \sim \mathcal{M}}\left[\mM = \mM^{(k)} \right].
\end{equation*}
By Assumption~\ref{Assumption:mask}, $a_1, \dots, a_p, b_1, \dots, b_p, c_1, \dots, c_p>0$. One can check that $a_k + b_k = c_k$ for each $k \in [p]$ and $\sum_{k=1}^p (a_k + b_k + c_k) = 1$.  Then, for each $\kappa\in (0,\infty)$, we can rewrite $\mathbb{E}_\kappa [\lossmask (\vw)]$ as the following form:
\begin{align}\label{eqn:mask}
    &\quad \mathbb{E}_\kappa [\lossmask(\vw)]\nonumber \\
    &= \frac{1}{n^2} \left( \sum_{i=1}^n \mathbb{E}_\kappa \left[y_il(\vw^\top \vx_i) + (1-y_i) l(-\vw^\top \vx_i)\right]  + \sum_{\overset{i,j \in [n]}{i \neq j}} \mathbb{E}_\kappa \left[\tilde{y}_{i,j}^\mathrm{mask} l(\vw^\top \tilde{\vx}_{i,j}^\mathrm{mask}) + (1-\tilde{y}_{i,j}^\mathrm{mask}) l(-\vw^\top \tilde{\vx}_{i,j}^\mathrm{mask})\right]\right)\nonumber \\ 
    &=\frac{1}{n}\mathbb{E}_{\rvx \sim N(\vmu, \kappa^{-1} \mSigma)} \left[l(\vw^\top \rvx)\right] \nonumber \\
    &\quad + \frac{n-1}{n} \left( \sum_{k=1}^p \mathbb{E}_{\rvx^{(k)} \sim N(\vmu^{(k)}, \kappa^{-1}\mSigma^{(k)})}\left [a_k l\left (\vw^\top \rvx^{(k)}\right) +  b_k l \left(-\vw^\top \rvx^{(k)} \right) \right] + c_k \mathbb{E}_{\rvz^{(k)} \sim N(\vmu, \kappa^{-1} \mSigma^{(k)})}\left[l\left( \vw^\top \rvz^{(k)}\right)\right]\right),
\end{align}
and
\begin{equation}\label{eqn:mask_infty}
    \mathbb{E}_\infty[\lossmask (\vw)] = \frac{1}{n} l(\vw^\top \vmu) + \frac{n-1}{n}\sum_{k=1}^p \left(a_k l\left (\vw^\top \vmu^{(k)}\right) + b_k l\left (- \vw^\top \vmu^{(k)}\right) + c_k l\left (\vw^\top \vmu\right) \right).
\end{equation}
In addition, notice that for each $k \in [p]$, we have
\begin{align}\label{eqn:mask_dist}
\begin{cases}
\tilde{\vx}_{i,j}^\mathrm{mask} \sim N \left (\vmu, \kappa^{-1} \mSigma^{(k)} \right) &\text{with probability $\frac{c_k}{2}$}\\
\tilde{\vx}_{i,j}^\mathrm{mask} \sim N \left (-\vmu, \kappa^{-1} \mSigma^{(k)} \right) &\text{with probability $\frac{c_k}{2}$}\\
\tilde{\vx}_{i,j}^\mathrm{mask} \sim N \left (\vmu^{(k)}, \kappa^{-1} \mSigma^{(k)} \right) &\text{with probability $\frac{c_k}{2}$}\\
\tilde{\vx}_{i,j}^\mathrm{mask} \sim N \left (-\vmu^{(k)}, \kappa^{-1} \mSigma^{(k)} \right) &\text{with probability $\frac{c_k}{2}$}
\end{cases}.
\end{align}

\subsection{Proof of Theorem~\ref{thm:mask_minimizer}}\label{proof:mask}
We first prove the uniqueness of a minimizer $\wmask$ of $\mathbb{E}_\kappa [\lossmask(\vw)]$ for all $\kappa \in (0, \infty]$ and show that they are bounded. Next, we prove that $\wmask$ converges to $\wmask[\infty]$ as $\kappa \rightarrow \infty$.
\paragraph{Step 1: $\mathbb{E}_\kappa [\lossmask(\vw)]$ has a Unique Minimizer $\wmask$ and $\{ \wmask : \kappa \in (0, \infty]\} $ is Bounded}\quad

We prove the strict convexity of $\mathbb{E}_\kappa [\lossmask (\vw)]$ for each $\kappa \in (0, \infty]$. Consider the case $\kappa \neq \infty$ first. For $t \in [0,1]$, and $\vw_1, \vw_2 \in \mathbb{R}^d$ with $\vw_1 \neq \vw_2$,
\begin{equation*}
    t l\left(\vw_1^\top \vx \right) + (1-t) l\left(\vw_2^\top \vx\right) >  l\left((t\vw_1 + (1-t)\vw_2)^\top \vx\right),
\end{equation*}
and
\begin{equation*}
    t l\left(-\vw_1^\top \vx\right) + (1-t) l\left(-\vw_2^\top \vx\right) >  l\left(-(t\vw_1 + (1-t)\vw_2)^\top \vx\right),
\end{equation*}
for any $\vx\in \mathbb{R}^d$ except a Lebesgue measure zero set (hyperplane orthogonal to $\vw_1 - \vw_2$).

By \Eqref{eqn:mask} and taking expectation, we have for any $t \in [0,1]$, and $\vw_1, \vw_2 \in \mathbb{R}^d$,
\begin{equation*}
    t \mathbb{E}_\kappa[\lossmask(\vw_1)] + (1-t) \mathbb{E}_\kappa[\lossmask(\vw_2)] >  \mathbb{E}_\kappa[\lossmask(t\vw_1+(1-t)\vw_2)].
\end{equation*}
For the case $\kappa = \infty$, by Assumption~\ref{Assumption:mask}, there exist $i_1, \dots, i_d \in [m]$ such that $\left \{ \vmu^{(i_1)}, \dots, \vmu^{(i_d)} \right \}$ spans $\mathbb{R}^d$ and for any $t\in [0,1]$ and $\vw_1,\vw_2 \in \mathbb{R}^d$ with $\vw_1 \neq \vw_2$, at least one of $k \in [d]$ satisfies 
\begin{equation*}
    t l\left(\vw_1^\top \vmu^{(i_k)}\right) + (1-t) l\left(\vw_2^\top \vmu^{(i_k)}\right) >  l\left((t\vw_1 + (1-t)\vw_2)^\top \vmu^{(i_k)}\right),
\end{equation*}
and
\begin{equation*}
    t l\left(- \vw_1^\top \vmu^{(i_k)}\right) + (1-t) l\left(-\vw_2^\top \vmu^{(i_k)}\right) >  l\left(-(t\vw_1 + (1-t)\vw_2)^\top \vmu^{(i_k)}\right).
\end{equation*}
From \Eqref{eqn:mask_infty}, we can conclude the strict convexity of $\mathbb{E}_\kappa [\lossmask (\vw)]$. 

In order to complete this step, we prove the existence of a ball containing all minimizers of $\mathbb{E}_\kappa[\lossmask(\vw)]$. For the case $\kappa \neq \infty$, from \Eqref{eqn:mask}, we have
\begin{equation*}
    \mathbb{E}_\kappa[\lossmask(\vw)] \geq \frac{1}{2}\sum_{k=1}^d \min\{a_{i_k}, b_{i_k}\} \mathbb{E}_{\rvx^{(k)} \sim N(\vmu^{(i_k)},\kappa^{-1} \mSigma^{(i_k)})} \left[l\left(\vw^\top \rvx^{(k)}\right) + l\left(-\vw^\top \rvx^{(k)}\right)\right].
\end{equation*}
Since $l(z) + l(-z) \geq |z|$ for each $z \in \mathbb{R}$,
\begin{align*}
    \mathbb{E}_\kappa[\lossmask(\vw)] &\geq \frac{1}{2}\sum_{k=1}^d \min\{a_{i_k},b_{i_k}\} \mathbb{E}_{\rvx^{(k)} \sim N(\vmu^{(i_k)}, \kappa^{-1} \mSigma^{(i_k)})}\left[\left|\vw^\top \rvx^{(k)}\right|\right] \\
    &\geq \frac{1}{2}\sum_{k=1}^d \min\{a_{i_k}, b_{i_k}\} \left|\mathbb{E}_{\rvx^{(k)} \sim N(\vmu^{(i_k)}, \kappa^{-1} \mSigma^{(i_k)})} \left[\vw^\top \rvx^{(k)}\right]\right|\\
    &= \frac{1}{2}\sum_{k=1}^d \min\{a_{i_k}, b_{i_k}\} \left|\vw^\top \vmu^{(i_k)}\right|.
\end{align*}
Also, for the case $\kappa = \infty$, from \Eqref{eqn:mask_infty} and using similar argument, we have
\begin{align*}
    \mathbb{E}_\infty[\lossmask(\vw)] 
    &\geq \frac{1}{2} \sum_{k=1}^d \min\{a_{i_k}, b_{i_k}\} \left(l\left(\vw^\top \vmu^{(k)}\right) + l\left(-\vw^\top \vmu^{(k)}\right)\right)\\
    &\geq \frac{1}{2}\sum_{k=1}^d \min\{a_{i_k},b_{i_k}\} \left|\vw^\top \vmu^{(i_k)}\right|.
\end{align*}
Since $\left\{\mu^{(i_1)}, \dots, \mu^{(i_d)}\right\}$ spans $\mathbb{R}^d$, for any unit vector $\vu \in \mathbb{R}^d$, $\sum_{k=1}^d \min\{a_{i_k},  b_{i_k}\} \left|\vu^\top \vmu^{(i_k)}\right| >0$ and $\vu \mapsto \sum_{k=1}^d \min\{a_{i_k},  b_{i_k}\} \left|\vu^\top \vmu^{(i_k)}\right|$ has the minimum value $m>0$ since $\{ \vu \in \mathbb{R}^d : \lVert \vu \rVert =1 \}$ is compact and the mapping is continuous. If $\lVert \vw \rVert > \frac{2\log 2}{m}$, then we have
\begin{align*}
    \mathbb{E}_\kappa[\lossmask(\vw)]  &\geq \frac{1}{2} \sum_{k=1}^d \min\{a_{i_k},  b_{i_k}\} \left|\vw^\top \vmu^{(i_k)}\right|\\ 
    &= \lVert \vw \rVert \sum_{k=1}^d \min\{a_{i_k},  b_{i_k}\} \left|\left(\frac{\vw}{\lVert \vw \rVert}\right)^\top \vmu^{(i_k)}\right|\\ 
    &\geq \frac{1}{2}\lVert \vw \rVert m \geq \log 2\\
    &= \mathbb{E}_\kappa\left[\lossmask(\vzero)\right].
\end{align*}
Hence, for any $\kappa \in (0,\infty]$, the minimizer of $\mathbb{E}_\kappa[\lossmask(\vw)]$ is contained in the ball centered at origin with radius $R:=\frac{2 \log 2}{m}$. Together with the strict convexity of $\mathbb{E}_{\kappa}[\lossmask (\vw)]$, we have our conclusion. \hfill $\square$

\paragraph{Step 2: $\wmask$  Converges to $\wmask[\infty]$ as $\kappa \rightarrow \infty$} \quad

For each $\vw \in \mathbb{R}^d$ with $\lVert \vw \rVert \leq R$ and unit vector $\vv \in \mathbb{R}^d$,  \Eqref{eqn:mask_infty} implies
\begin{align*}
 &\quad \vv^\top \nabla_\vw^2 \mathbb{E}_\infty[\lossmask (\vw) ]\vv \\
 &= \frac{1}{n} l''(\vw^\top \vmu) (\vv^\top \vmu)^2 + \frac{n-1}{n} \sum_{k=1}^p \left[ \left(a_k l''\left(\vw^\top \vmu^{(k)}\right) + b_k l''\left(-\vw^\top \vmu^{(k)}\right)\right) \left(\vv^\top \vmu^{(k)}\right)^2 + c_k l''\left(\vw^\top \vmu \right)(\vv^\top \vmu)^2 \right]\\
 &\geq \frac{1}{2} \sum_{k=1}^p (a_k + b_k) l''\left (R\lVert \vmu^{(k)}\rVert \right) \left(\vv^\top \vmu^{(k)} \right)^2.
\end{align*}
By Assumption~\ref{Assumption:mask}, at least one of $k\in[p]$ satisfies $\vv^\top \vmu^{(k)} \neq 0$ and thus, $\frac{1}{2} \sum_{k=1}^p (a_k + b_k) l''\left ( R\lVert \vmu^{(k)}\rVert \right) \left(\vv^\top \vmu^{(k)} \right)^2 >0$. Since $\vv \mapsto \frac{1}{2} \sum_{k=1}^p (a_k + b_k) l''\left ( R\lVert \vmu^{(k)}\rVert \right) \left(\vv^\top \vmu^{(k)} \right)^2$ is continuous and $\{ \vv \in \mathbb{R}^d:\lVert \vv\rVert=1 \}$ is compact, it has minimum $\alpha>0$ on this set. Hence, $\mathbb{E}_\infty \left[ \lossmask (\vw)\right]$ is $\alpha$-strongly convex on $\left \{\vw\in \mathbb{R}^d : \lVert \vw \rVert \leq R \right \}$ and since $\wmask \in \left \{\vw\in \mathbb{R}^d : \lVert \vw \rVert \leq R \right \}$ for each $\kappa \in (0,\infty)$, we have
\begin{align*}
&\quad \frac{\alpha}{2} \lVert \wmask - \wmask[\infty]\rVert^2\\
&\leq \mathbb{E}_\infty[\lossmask (\wmask)] - \mathbb{E}_\infty[\lossmask (\wmask[\infty])]\\
&\leq \mathbb{E}_\infty[\lossmask (\wmask)] - \mathbb{E}_\kappa[\lossmask (\wmask)]\\
&\quad + \mathbb{E}_\kappa[\lossmask (\wmask)] - \mathbb{E}_\kappa[\lossmask (\wmask[\infty])]\\
&\quad +\mathbb{E}_\kappa[\lossmask (\wmask[\infty])] - \mathbb{E}_\infty[\lossmask (\wmask[\infty])]\\
&\leq \mathbb{E}_\infty[\lossmask (\wmask)] - \mathbb{E}_\kappa[\lossmask (\wmask)]+\mathbb{E}_\kappa[\lossmask (\wmask[\infty])] - \mathbb{E}_\infty[\lossmask (\wmask[\infty])].
\end{align*}
The last inequality holds since $\wmask$ is a minimizer of $\mathbb{E}_\kappa [\lossmask (\vw)]$.
For any $\vw \in \mathbb{R}^d$ with $\lVert \vw \rVert \leq R$, by Lemma~\ref{lemma:ineq5}, we have
\begin{align*}
&\quad\mathbb{E}_\kappa [\lossmask (\vw)] - \mathbb{E}_\infty [\lossmask(\vw)]\\
&= \frac{1}{n} \mathbb{E}_{X \sim N(\vw^\top \vmu, \kappa^{-1} \vw^\top \mSigma \vw)}[l(X) - l(\vw^\top \vmu)] \\
&\quad + \frac{n-1}{n} \Bigg( \sum_{k=1}^p \mathbb{E}_{X^{(k)} \sim N(\vw^\top \vmu^{(k)}, \kappa^{-1} \vw^\top \mSigma^{(k)} \vw)}\left[a_k \left(l\left(X^{(k)}\right)-l\left(\vw^\top \vmu^{(k)}\right) \right) +b_k \left(l\left(-X^{(k)}\right) - l\left(- \vw^\top \vmu^{(k)}\right)\right) \right]\\
& \qquad \qquad \quad+ c_k \mathbb{E}_{X^{(k)}\sim N(\vw^\top \vmu, \kappa^{-1} \vw^\top \mSigma^{(k)}\vw )}\left[l\left(X^{(k)}\right) - l(\vw^\top \vmu)\right] \Bigg)\\
&\leq \kappa^{-1/2} \left( \frac{1}{n} (\vw^\top \mSigma \vw)^{1/2} + \frac{n-1}{n} \sum_{k=1}^p (a_k+ b_k +c_k) \left(\vw^\top \mSigma^{(k)} \vw\right)^{1/2} \right) \\
&\leq  \kappa^{-1/2} \lVert \vw \rVert \left( \frac{\lVert \mSigma \rVert^{1/2}}{n} + \frac{n-1}{n}  \sum_{k=1}^p (a_k + b_k + c_k) \left \lVert \mSigma ^{(k)}\right \rVert^{1/2}\right)\\
&\leq R \kappa^{-1/2} \left( \frac{\lVert \mSigma \rVert^{1/2}}{n} +  \frac{n-1}{n}  \sum_{k=1}^p (a_k + b_k + c_k) \left \lVert \mSigma ^{(k)}\right \rVert^{1/2}\right),\\
&\leq R \kappa^{-1/2} \left( \lVert \mSigma \rVert^{1/2} + \sum_{k=1}^p (a_k + b_k + c_k) \left \lVert \mSigma ^{(k)}\right \rVert^{1/2}\right).
\end{align*}
Therefore, 
\begin{equation}\label{eqn:mask_uniform_converge}
    \frac{\alpha}{2 }\lVert \wmask - \wmask[\infty]\rVert^2 \leq 2 R \kappa^{-1/2} \left( \lVert \mSigma \rVert + \sum_{k=1}^p (a_k + b_k + c_k) \left \lVert \mSigma ^{(k)}\right \rVert^{1/2}\right),
\end{equation}
and we conclude $\lim_{\kappa \rightarrow \infty} \wmask = \wmask[\infty]$. \hfill $\square$
\subsection{Proof of Theorem~\ref{thm:mask_convergence}}\label{proof:mask_convergence}
Since our sample complexity bound contains $\Omega (1)$ which can hide $d$, we may assume $n \geq d$ and $n \geq 2$. 
Let $R$ be the upper bound on $\{ \lVert \wmask \rVert : \kappa \in (0,\infty] \}$ which we defined in Appendix~\ref{proof:mask}. Next, define a compact set $\mathcal{C}:= \{\vw \in \mathbb{R}^d \mid \lVert \vw \rVert \leq 2R\}$, which trivially contains $\wmask$ for all $\kappa \in (0, \infty]$. For any $\vw\in \mathbb{R}^d$ and nonzero $\vv \in \mathbb{R}^d$, we have
\begin{align*}
    \vv \nabla^2_\vw \lossmask (\vw) \vv &= \frac{1}{n^2} \sum_{i,j=1}^n (\tilde{y}_{i,j}^\mathrm{mask} l''\left(w^\top \tilde{\vx}_{i,j}^\mathrm{mask}\right)\left(\vv^\top \tilde{\vx}_{i,j}^\mathrm{mask}\right)^2 +  \left(1-\tilde{y}_{i,j}^\mathrm{mask}\right)l''\left(-\vw^\top \tilde{\vx}_{i,j}^\mathrm{mask}\right)\left(\vv^\top \tilde{\vx}_{i,j}^\mathrm{mask}\right)^2)\\
    &\geq\frac{1}{n^2} \sum_{i=1}^n (y_i l''(\vw^\top \vx_i)(\vv^\top \vx_i)^2 +  (1-y_i)l''(-\vw^\top \vx_i)(\vv^\top \vx_i)^2)>0,
\end{align*}
almost surely, since $\{\vx_i\}_{i \in [n]}$ spans $\mathbb{R}^d$ almost surely. Therefore, $\lossmask (\vw)$ is strictly convex almost surely and we conclude that $\lossmask(\vw)$ has a unique minimizer $\hat{\vw}_{\mathrm{mask}, S}^*$ on $\mathcal{C}$ almost surely. Also, if $\hat{\vw}_{\mathrm{mask},S}^*$ belong to interior of $\mathcal{C}$, it is minimizer of $\lossmask(\vw)$ on $\mathbb{R}^d$. We will prove high probability convergence of $\hat{\vw}_{\mathrm{mask}, S}^*$ to $\wmask$ using Lemma~\ref{lemma:minimizer_dependent} and convert $\ell_2$ convergence into directional convergence.  For simplicity, we define
\begin{equation*}
f_{i,j}(\vw) : = \tilde{y}_{i,j}^\mathrm{mask}l(\vw^\top \tilde{\vx}_{i,j}^\mathrm{mask}) + (1-\tilde{y}_{i,j}^\mathrm{mask}) l(\vw^\top \tilde{\vx}_{i,j}^\mathrm{mask}),
\end{equation*}
for each $i,j \in [n]$.
We start with the following claim which is useful for estimating quantities described in assumptions of Lemma~\ref{lemma:minimizer_dependent} for our setting.
\begin{claim}\label{claim:mask}
For any  $t>0$, we have
\begin{equation*}
\mathbb{E}_\kappa \left[ e^{t \left \lVert \tilde{\vx}_{i,j}^\mathrm{mask}\right \rVert} \right] \leq \max \left \{  2^{d/2} + e^{4\kappa^{-1}t^2 \lVert \mSigma \rVert} ,  \sum_{k=1}^p 2c_k \left(2^{d/2} + e^{4\kappa^{-1} t^2 \left \lVert \mSigma^{(k)} \right \rVert} \right)  \right \}e^{t\lVert \vmu \rVert} ,
\end{equation*}
for all $i,j \in [n]$.
\end{claim}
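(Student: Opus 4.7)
The plan is to mirror the two-case structure used in Claim~\ref{claim:mix}: treat the ``diagonal'' indices $i=j$ separately from the ``off-diagonal'' indices $i \neq j$, and then take the max of the two resulting bounds. The diagonal case is essentially free from the earlier work: since $(\tilde{\vx}_{i,i}^{\mathrm{mask}}, \tilde{y}_{i,i}^{\mathrm{mask}}) = (\vx_i, y_i)$ with $\vx_i \sim N(\vmu, \kappa^{-1}\mSigma)$, one applies the triangle inequality $\lVert \vx_i\rVert \leq \lVert \vx_i - \vmu\rVert + \lVert \vmu\rVert$ and then invokes Lemma~\ref{lemma:ineq4} on the recentered Gaussian $\rvz = t(\vx_i - \vmu) \sim N(\vzero, t^2\kappa^{-1}\mSigma)$. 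This reproduces the bound $(2^{d/2} + e^{4\kappa^{-1} t^2 \lVert \mSigma\rVert}) e^{t\lVert \vmu\rVert}$ already appearing as the first term inside the max.

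For the off-diagonal case $i \neq j$, the key input is the explicit conditional law of $\tilde{\vx}_{i,j}^{\mathrm{mask}}$ recorded in \Eqref{eqn:mask_dist}: conditional on $\mM_{i,j} = \mM^{(k)}$, the vector $\tilde{\vx}_{i,j}^{\mathrm{mask}}$ is, with equal probability $c_k/2$ each, one of four Gaussians with mean in $\{\pm \vmu, \pm \vmu^{(k)}\}$ and covariance $\kappa^{-1}\mSigma^{(k)}$. I would decompose the expectation along these $4p$ cases using the tower property, and for each case apply the same triangle-inequality-plus-Lemma~\ref{lemma:ineq4} argument as in the diagonal case to bound $\mathbb{E}_{\rvz \sim N(\vzero, \kappa^{-1}\mSigma^{(k)})}[e^{t\lVert \rvz\rVert}] \leq 2^{d/2} + e^{4\kappa^{-1}t^2 \lVert \mSigma^{(k)}\rVert}$. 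Summing the four terms at weight $c_k/2$ each and summing over $k$ gives
\begin{equation*}
\mathbb{E}_\kappa\!\left[e^{t\lVert \tilde{\vx}_{i,j}^{\mathrm{mask}}\rVert}\right] \leq \sum_{k=1}^p 2 c_k \left(2^{d/2} + e^{4\kappa^{-1}t^2 \lVert \mSigma^{(k)}\rVert}\right) e^{t\lVert \vmu\rVert},
\end{equation*}
which is the second term inside the max.

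The one subtle observation, and the only place where the structure of the masking operation enters, is that the shifts used in the triangle inequality all have the same norm: $\lVert \vmu^{(k)}\rVert = \lVert \vmu \odot (2\mM^{(k)} - \vone)\rVert = \lVert \vmu\rVert$ because $2\mM^{(k)} - \vone \in \{-1,+1\}^d$ only flips signs of coordinates. This uniformity is what lets all four Gaussians per $k$ contribute the same $e^{t\lVert\vmu\rVert}$ factor, and hence lets the sum collapse cleanly. I do not anticipate a serious obstacle here; the main care required is bookkeeping the four sub-cases per mask, factoring out $e^{t\lVert \vmu\rVert}$, and verifying that the substitution $\rvz := t(\tilde{\vx}_{i,j}^{\mathrm{mask}} - \vmu_0)$ in Lemma~\ref{lemma:ineq4} uses the correct spectral norm $\lVert t^2\kappa^{-1}\mSigma^{(k)}\rVert = t^2 \kappa^{-1} \lVert \mSigma^{(k)}\rVert$. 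Taking the maximum of the two bounds over the two cases yields the stated inequality uniformly in $i,j \in [n]$.
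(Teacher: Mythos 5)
Your plan is correct and follows essentially the same route as the paper's proof: split into the diagonal case $i=j$ (handled exactly as in Claim~\ref{claim:mix}) and the off-diagonal case via the mixture decomposition in \Eqref{eqn:mask_dist}, apply the triangle inequality and Lemma~\ref{lemma:ineq4} to each Gaussian component, and use the key fact $\lVert \vmu^{(k)}\rVert = \lVert \vmu\rVert$ to factor out $e^{t\lVert\vmu\rVert}$ uniformly before taking the max. No gaps.
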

\begin{proof}[Proof of Claim~\ref{claim:mask}]
We first consider the case $i = j$. By invoking triangular inequality and Lemma~\ref{lemma:ineq4}, we have
\begin{align*}
\mathbb{E}_\kappa \left[ e^{t \left \lVert \tilde{\vx}_{i,i}^\mathrm{mask}\right \rVert} \right] 
&= \mathbb{E}_\kappa \left[ e^{t \left \lVert \vx\right \rVert} \right] = \mathbb{E}_{\rvx \sim N(\vmu, \kappa^{-1} \mSigma) } \left[ e^{t \left \lVert \rvx\right \rVert} \right]\\
&= \mathbb{E}_{\rvx \sim N(\vmu, \kappa^{-1} \mSigma) } \left[ e^{t \left \lVert \rvx\right \rVert} \right] \leq \mathbb{E}_{\rvx \sim N(\vmu, \kappa^{-1} \mSigma) } \left[ e^{t \left \lVert \rvx - \vmu \right \rVert} \right] e^{t\lVert \vmu \rVert}\\
&= \mathbb{E}_{\rvz \sim N(\vzero, \kappa^{-1}t^2 \mSigma )} \left[e^{\lVert \rvz \rVert} \right] e^{t\lVert \vmu \rVert} \\
&= \left( 2^{d/2} + e^{4\kappa^{-1}t^2 \lVert \mSigma \rVert}\right)e^{t\lVert \vmu \rVert}.
\end{align*}
for each $i \in [n]$. Next, we handle the case $i \neq j$. From \Eqref{eqn:mask_dist}, for any $i,j\in[n]$ with $i \neq j$, we have
\begin{align*}
&\quad \mathbb{E}_\kappa \left[ e^{t \left \lVert \tilde{\vx}_{i,j}^\mathrm{mask}\right \rVert} \right]\\
&= \sum_{k=1}^p c_k \left( \mathbb{E}_{\rvx \sim N \left (\vmu, \kappa^{-1}\mSigma^{(k)} \right)}\left [e^{t\lVert \rvx \rVert}\right] + \mathbb{E}_{\rvx \sim N\left( \vmu^{(k)} , \kappa^{-1} \mSigma^{(k)}\right)}\left [e^{t\lVert \rvx \rVert}\right]\right)\\
&\leq \sum_{k=1}^p c_k \left( \mathbb{E}_{\rvx \sim N \left (\vmu, \kappa^{-1}\mSigma^{(k)} \right)}\left [e^{t\lVert \rvx -\vmu \rVert}\right]e^{t\lVert \vmu \rVert} + \mathbb{E}_{\rvx \sim N\left( \vmu^{(k)} , \kappa^{-1} \mSigma^{(k)}\right)}\left [e^{t\lVert \rvx -\vmu^{(k)}\rVert}\right]e^{t\lVert \vmu^{(k)}\rVert} \right)\\ 
&= \sum_{k=1}^p 2c_k \mathbb{E}_{\rvz \sim N\left (\vzero, \kappa^{-1}t^2 \mSigma^{(k)}\right)} \left [e^{\lVert \rvz \rVert} \right]e^{t \lVert \vmu \rVert}.
\end{align*}
By applying Lemma~\ref{lemma:ineq4},
\begin{equation*}
\mathbb{E}_{\rvz \sim N \left (\vzero, \kappa^{-1}t^2 \mSigma^{(k)}\right) }\left [e^{\lVert \rvz \rVert} \right] \leq 2^{d/2} + e^{4\kappa^{-1}t^2 \left \lVert \mSigma^{(k)} \right\rVert },
\end{equation*}
and we have our conclusion.
\end{proof}

\paragraph{Step 1: Estimate Upper Bound of $\mathbb{E}_\kappa \left[ e^{\left|f_{i,j}(\vw) - \mathbb{E}_\kappa[f_{i,j}(\vw)]\right|}\right]$ on $\mathcal{C}$ }\quad

For any $\vw \in \mathcal{C}$ and $i,j \in [n]$, we have
\begin{align*}
    |f_{i,j}(\vw)| &= |\tilde{y}_{i,j}^\mathrm{mask}l(\vw^\top \tilde{\vx}_{i,j}^\mathrm{mask}) + (1-\tilde{y}_{i,j}^\mathrm{mask})l(-\vw^\top \tilde{\vx}_{i,j}^\mathrm{mask})|\\
    &\leq l(-|\vw^\top \tilde{\vx}_{i,j}^\mathrm{mask}|)\\
    &\leq l(-2R \lVert \tilde{\vx}_{i,j}^\mathrm{mask} \rVert)\\
    &= \log \left( 1+e^{2R\left\lVert \tilde{x}_{i,j}^\mathrm{mask} \right\rVert} \right).
\end{align*}
By applying Claim~\ref{claim:mask} for $t = 2R$, there exists $M_\kappa'$ such that $M_\kappa' = \Theta(1)$ and
\begin{equation*}
    \mathbb{E}_\kappa \left[ e^{|f_{i,j}(\vw)|}\right] \leq \mathbb{E}_\kappa \left[ 1+ e^{2R \lVert \tilde{\vx}_{i,j}^\mathrm{mask}\rVert} \right]< M_\kappa'.
\end{equation*}
From triangular inequality and Jensen's inequality, we have
\begin{align*}
\mathbb{E}_\kappa \left[ e^{\left|f_{i,j}(\vw) - \mathbb{E}_\kappa[f_{i,j}(\vw)]\right|}\right] 
&\leq \mathbb{E}_\kappa \left[ e^{\left|f_{i,j}(\vw)\right| + \left|\mathbb{E}_\kappa[f_{i,j}(\vw)]\right|}\right]\\
&\leq \mathbb{E}_\kappa \left[ e^{\left|f_{i,j}(\vw)\right| }\right]^2\\
&\leq {M_\kappa'}^2.
\end{align*}
Letting $M_\kappa = {M'_\kappa}^2$, it follows that $M_\kappa = \Theta(1)$ and $\mathbb{E}_\kappa \left[ e^{\left|f_{i,j}(\vw) - \mathbb{E}_\kappa[f_{i,j}(\vw)]\right|}\right] < M_\kappa$ for all $\vw \in \mathcal{C}$.

\paragraph{Step 2: Estimate Upper Bound of $\lVert \nabla_\vw f_{i,j}(\vw)\rVert$ and $\lVert \nabla_\vw \mathbb{E}_\kappa[f_{i,j}(\vw)]\rVert$} \quad

For each $\vw \in \mathcal{C}$ and $i,j \in [n]$, 
\begin{align*}
\lVert \nabla_\vw f_{i,j}(\vw) \rVert 
&= \left \lVert \nabla_\vw (\tilde{y}_{i,j}^\mathrm{mask}l(\vw^\top \tilde{\vx}_{i,j}^\mathrm{mask}) + (1-\tilde{y}_{i,j}^\mathrm{mask}) l(-\vw^\top \tilde{\vx}_{i,j}^\mathrm{mask})) \right \rVert \\
&= \left \lVert \tilde{y}_{i,j}^\mathrm{mask}l'(\vw^\top \tilde{\vx}_{i,j}^\mathrm{mask})\tilde{\vx}_{i,j}^\mathrm{mask} - (1-\tilde{y}_{i,j}^\mathrm{mask}) l'(-\vw^\top \tilde{\vx}_{i,j}^\mathrm{mask})\tilde{\vx}_{i,j}^\mathrm{mask} \right \rVert\\
&\leq  \left \lVert \tilde{\vx}_{i,j}^\mathrm{mask} \right \rVert.
\end{align*}
In addition, by Lemma~\ref{lemma:gradient&hessian},
\begin{align*}
\lVert \nabla_\vw \mathbb{E}[f_{i,j}(\vw)] \rVert 
&= \left \lVert \nabla_\vw \mathbb{E}\left[(\tilde{y}_{i,j}^\mathrm{mask}l(\vw^\top \tilde{\vx}_{i,j}^\mathrm{mask}) + (1-\tilde{y}_{i,j}^\mathrm{mask}) l(-\vw^\top \tilde{\vx}_{i,j}^\mathrm{mask}))\right] \right \rVert \\
&= \left \lVert \mathbb{E}\left[\nabla_\vw(\tilde{y}_{i,j}^\mathrm{mask}l(\vw^\top \tilde{\vx}_{i,j}^\mathrm{mask}) + (1-\tilde{y}_{i,j}^\mathrm{mask}) l(-\vw^\top \tilde{\vx}_{i,j}^\mathrm{mask})) \right] \right \rVert \\
&= \left \lVert \mathbb{E}\left[ \tilde{y}_{i,j}^\mathrm{mask}l'(\vw^\top \tilde{\vx}_{i,j}^\mathrm{mask})\tilde{\vx}_{i,j}^\mathrm{mask} - (1-\tilde{y}_{i,j}^\mathrm{mask}) l'(-\vw^\top \tilde{\vx}_{i,j}^\mathrm{mask})\tilde{\vx}_{i,j}^\mathrm{mask} \right]\right \rVert\\
&\leq  \mathbb{E}\left[ \left  \lVert \tilde{\vx}_{i,j}^\mathrm{mask} \right \rVert \right].
\end{align*}
Also, by applying Claim~\ref{claim:mask} with $t=1$, there exists $L_\kappa$ such that $\mathbb{E}_\kappa \left[e^{ \left  \lVert \tilde{\vx}_{i,j}^\mathrm{mask} \right \rVert} \right]<L_\kappa$ and $L_\kappa = \Theta(1)$.

\paragraph{Step 3: Estimate Strong Convexity Constant of $\mathbb{E}_\kappa[\lossmask (\vw)]$ on $\mathcal{C}$} \mbox{}

By Lemma~\ref{lemma:gradient&hessian}, Lemma~\ref{lemma:ineq1} and \Eqref{eqn:mask},  we have 
\begin{align*}
    \vv \nabla^2_\vw \mathbb{E}_\kappa\left[\lossmask (\vw)\right] \vv &\geq \frac{n-1}{n} \sum_{k=1}^p a_k \mathbb{E}_{\rvx^{(k)} \sim N\left (\vmu^{(k)}, \kappa^{-1} \mSigma^{(k)}\right)}\left[l''\left(\vw^\top \rvx^{(k)}\right) \left(\vv^\top \rvx^{(k)}\right)^2 \right]\\
    &\geq \frac{1}{8} \min_{k\in [p]}\{a_k\} \sum_{k=1}^p \mathbb{E}_{\rvx^{(k)} \sim N\left (\vmu^{(k)}, \kappa^{-1}\mSigma^{(k)}\right)}\left[e^{-\left(\vw^\top \rvx^{(k)}\right)^2/2} \left(\vv^\top \rvx^{(k)}\right)^2 \right],
\end{align*}
for any $\vw\in \mathcal{C}$ and unit vector $\vv \in \mathbb{R}^d$.

For each $k \in [p]$,
\begin{align*}
    &\quad \vv^\top \mSigma^{(k)} \vv\\
    &= \left(\mM^{(k)} \odot \vv\right)^\top \mSigma \left(\mM^{(k)} \odot \vv\right) + \left(\left(\vone-\mM^{(k)}\right)\odot \vv\right)^\top \mSigma \left(\left(\vone-\mM^{(k)}\right) \odot \vv\right) \\
    &\geq \left\lVert \mSigma ^{-1} \right\rVert^{-1} \left \lVert \mM^{(k)} \odot \vv  \right\rVert ^2 + \left\lVert \mSigma ^{-1} \right\rVert^{-1} \left \lVert \left(\vone-\mM^{(k)} \right) \odot \vv \right \rVert ^2\\
    &= \left \lVert \mSigma^{-1} \right\rVert^{-1} \lVert \vv \rVert^2.
\end{align*}
Hence, $\mSigma^{(k)}$ is positive definite for all $k \in [p]$ and by Lemma~\ref{lemma:ineq3},
\begin{align*}
    &\quad \mathbb{E}_{\rvx^{(k)} \sim N\left (\vmu^{(k)}, \kappa^{-1}\mSigma^{(k)}\right)}\left[e^{-\left(\vw^\top \rvx^{(k)}\right)^2/2} \left(\vv^\top \rvx^{(k)}\right)^2 \right]\\
    &\geq \left(\frac{1}{2} \left(\vv^\top \vmu^{(k)}\right)^2 - \kappa^{-2} \left \lVert \mSigma^{(k)} \right \rVert^2 \lVert \vw \rVert^4 \left \lVert \vmu^{(k)} \right \rVert^2 \right)\\
    &\quad \cdot \left(\kappa^{-1} \left \lVert \mSigma^{(k)} \right \rVert\left(\kappa \left\lVert \left(\mSigma^{(k)}\right)^{-1} \right\rVert+ \lVert \vw \rVert^2 \right)\right)^{-d/2} \exp \left( -\lVert \vw \rVert^2 \left\lVert \left(\mSigma^{(k)}\right)^{-1}\right\rVert \left\lVert \mSigma^{(k)}\right\rVert \left \lVert \vmu^{(k)} \right \rVert^2 \right)\\
    &\geq \left(\frac{1}{2} \left(\vv^\top \vmu^{(k)}\right)^2 - 16\kappa^{-2} \left \lVert \mSigma^{(k)} \right \rVert^2 R^4 \left \lVert \vmu^{(k)} \right \rVert^2 \right)\\
    &\quad \cdot \left(\kappa^{-1} \left \lVert \mSigma^{(k)} \right \rVert\left(\kappa \left\lVert \left(\mSigma^{(k)}\right)^{-1} \right\rVert+ 4R^2 \right)\right)^{-d/2} \exp \left( -4R^2 \left\lVert \left(\mSigma^{(k)}\right)^{-1}\right\rVert \left\lVert \mSigma^{(k)}\right\rVert \left \lVert \vmu^{(k)} \right \rVert^2 \right)\\
    &= \left(\vv^\top \vmu^{(k)}\right)^2 \Theta(1).
\end{align*}
By Assumption~\ref{Assumption:mask}, $\sum_{k=1}^p \left(\vv^\top \vmu^{(k)}\right)^2 >0$ for each unit vector $\vv \in \mathbb{R}^d$ and since $\vv \mapsto \sum_{k=1}^p \left(\vv^\top \vmu^{(k)}\right)^2$ is continuous, we conclude that $\mathbb{E}_\kappa [\lossmask (\vw)]$ is $\alpha_k$-strongly convex on $\mathcal{C}$ where $\alpha_\kappa = \Theta(1)$. In addition, we can choose $\alpha_\kappa$ small enough so that for sufficiently large $\kappa$, $\lVert \wmask \rVert \leq R < \alpha_\kappa^{-1/2}$ since $\lVert \wmask \rVert = \Theta(1)$. This choice of $\alpha_\kappa$ makes it possible to apply Lemma~\ref{lemma:minimizer_dependent}.

\paragraph{Step 4: Lower Bounds on $\lVert \wmask \rVert$ for Sufficiently Large $n$ and $\kappa$} \quad

We need lower bounds on $\lVert \wmask \rVert$ that are independent of $n$ when we apply Lemma~\ref{lemma:minimizer_dependent} in our final step. However, finding such lower bounds is challenging since we do not know the exact direction of $\wmask$ unlike $\w$ and $\wmix$. In addition, the fact that $\lossmask(\vw)$ is dependent on $n$ also makes it hard. 
Instead, we will focus on sufficiently large $n$ and $\kappa$ and look for lower bounds independent of $n$, which is sufficient for our analysis.

We introduce a function $\mathcal{L}_\infty^\mathrm{mask} :  \R^d \to \R$ which corresponds to the limit case of $\mathbb{E}_\infty [\lossmask (\vw)]$ as $n \rightarrow \infty$ and defined as follows (i.e., the limit when both $n$ and $\kappa$ approach $\infty$):
\begin{equation}\label{eqn:mask_infty_limit}
    \mathcal{L}_\infty^\mathrm{mask}(\vw) := \lim_{n \rightarrow \infty} \mathbb{E}_{\infty}[\lossmask(\vw)]= \sum_{k=1}^p \left(a_k l\left (\vw^\top \vmu^{(k)}\right) + b_k l\left (- \vw^\top \vmu^{(k)}\right) + c_k l\left (\vw^\top \vmu\right) \right).
\end{equation}
Analyzing a minimizer of $\mathcal{L}_\infty^\mathrm{mask}(\vw)$ is helpful because it is independent of $n$ and approximates  $\lossmask(\vw)$ for large enough $n$ and $\kappa$.

Recall that we choose $i_1, \dots, i_d \in [m]$ such that $\left \{ \vmu^{(i_1)}, \dots, \vmu^{(i_d)} \right \}$ spans $\mathbb{R}^d$ in Appendix~\ref{proof:mask}. For any $t\in [0,1]$ and $\vw_1,\vw_2 \in \mathbb{R}^d$ with $\vw_1 \neq \vw_2$, at least one of $k \in [d]$ satisfies 
\begin{equation*}
    t l\left(\vw_1^\top \vmu^{(i_k)}\right) + (1-t) l\left(\vw_2^\top \vmu^{(i_k)}\right) >  l\left((t\vw_1 + (1-t)\vw_2)^\top \vmu^{(i_k)}\right),
\end{equation*}
and
\begin{equation*}
    t l\left(- \vw_1^\top \vmu^{(i_k)}\right) + (1-t) l\left(-\vw_2^\top \vmu^{(i_k)}\right) >  l\left(-(t\vw_1 + (1-t)\vw_2)^\top \vmu^{(i_k)}\right).
\end{equation*}
We can conclude the strict convexity of $\mathcal{L}_\infty^\mathrm{mask}(\vw)$. Also, since $l(z) + l(-z) \geq |z|$ for each $z \in \R$, we have
\begin{align*}
    \mathcal{L}_\infty^\mathrm{mask}(\vw)
    &\geq \frac{1}{2} \sum_{k=1}^d \min\{a_{i_k}, b_{i_k}\} \left(l\left(\vw^\top \vmu^{(k)}\right) + l\left(-\vw^\top \vmu^{(k)}\right)\right)\\
    &\geq \frac{1}{2}\sum_{k=1}^d \min\{a_{i_k},b_{i_k}\} \left|\vw^\top \vmu^{(i_k)}\right|.
\end{align*}
In Appendix~\ref{proof:mask}, we have shown that for any unit vector $\vu \in \mathbb{R}^d$, $\sum_{k=1}^d \min\{a_{i_k},  b_{i_k}\} \left|\vu^\top \vmu^{(i_k)}\right| >0$ and $\vu \mapsto \sum_{k=1}^d \min\{a_{i_k},  b_{i_k}\} \left|\vu^\top \vmu^{(i_k)}\right|$ has the minimum value $m>0$. If $\lVert \vw \rVert > R$, where we previously defined $R:=\frac{2 \log 2}{m}$, then we have
\begin{align*}
    \mathcal{L}_\infty^\mathrm{mask}(\vw)  &\geq \frac{1}{2} \sum_{k=1}^d \min\{a_{i_k},  b_{i_k}\} \left|\vw^\top \vmu^{(i_k)}\right|\\ 
    &= \lVert \vw \rVert \sum_{k=1}^d \min\{a_{i_k},  b_{i_k}\} \left|\left(\frac{\vw}{\lVert \vw \rVert}\right)^\top \vmu^{(i_k)}\right|\\ 
    &\geq \frac{1}{2}\lVert \vw \rVert m \geq \log 2\\
    &= \mathcal{L}_\infty^\mathrm{mask}(\vzero).
\end{align*}
Hence, a minimizer of $\mathcal{L}_\infty^\mathrm{mask}(\vw)$ contained in the ball centered origin with radius $R$. Together with the strict convexity, we can conclude that $\mathcal{L}_\infty^\mathrm{mask}(\vw)$ has the unique minimizer $\vw_{\mathrm{mask}}^*$ and it satisfies $\lVert \vw_{\mathrm{mask}}^* \rVert \leq R$.

In addition, we would like to prove that $\vw_{\mathrm{mask}}^*$ is nonzero. This will make our lower bounds on $\lVert \wmask \rVert$ positive. Since $\vmu$ is nonzero, without loss of generality, we assume 1st coordinate of $\vmu$, namely $\mu_1$, is nonzero. We consider a weight $r\ve_1$, where $\ve_1$ is 1st standard basis and $r>0$ which will be chosen later. We have
\begin{align*}
    &\quad \mathcal{L}_\infty^\mathrm{mask}(r\ve_1) - \mathcal{L}_\infty^\mathrm{mask}(\vzero)\\
    &= \sum_{k=1}^p \left[ a_k \left( l\left( r \ve_1^\top \vmu^{(k)} \right) - l(0) \right) + b_k \left(  l\left( -r \ve_1^\top \vmu^{(k)} \right) - l(0)\right) + c_k \left(  l\left( r \ve_1^\top \vmu \right) - l(0)\right)\right]\\
    &= \sum_{k \in \mathcal{I} }\Big[ a_k \big( l\left( r \mu_1 \right) - l(0) \big) + b_k \big(  l\left( -r \mu_1 \right) - l(0)\big) + c_k \big(  l\left( r \mu_1 \right) - l(0)\big)\Big],
\end{align*}
where $\mathcal{I} \subset [p]$ is the index set satisfying 1st coordinate of $\mM^{(k)}$ is 1 for each $k \in \mathcal{I}$. From our definition of $a_k,b_k,c_k$'s, $a_k + b_k = c_k$, thus we have
\begin{align*}
    &\quad \sum_{k \in \mathcal{I} }\Big[ a_k \big( l\left( r \mu_1 \right) - l(0) \big) + b_k \big(  l\left( -r \mu_1 \right) - l(0)\big) + c_k \big(  l\left( r \mu_1 \right) - l(0)\big)\Big]\\
    &= \sum_{k \in \mathcal{I} }\Big[ a_k \big( l\left( r \mu_1 \right) -l\left( -r \mu_1 \right) \big)  + c_k \big(  l\left( r \mu_1 \right) - l(0)\big)\Big]\\
    &= \sum_{k \in \mathcal{I} }\Big[- a_k r\mu_1  + c_k \big(  l\left( r \mu_1 \right)+l\left(- r \mu_1 \right) - 2l(0)\big)\Big]\\
    &= \sum_{k \in \mathcal{I} } c_kr\mu_1\left (  \frac{l\left( r \mu_1 \right)+l\left(- r \mu_1 \right) - 2l(0)}{r\mu_1} - \frac{a_k}{c_k}\right).
\end{align*}
Since $\lim_{z \rightarrow 0} \frac{l(z)+l(-z) - 2l(0)}{z} = 0$, we can choose $r>0$ small enough so that $\frac{l\left( r \mu_1 \right)+l\left(- r \mu_1 \right) - 2l(0)}{r\mu_1} < \frac{a_k}{c_k}$ for all $k \in \mathcal{I}$. Then, we obtain $\mathcal{L}_\infty^\mathrm{mask}(r\ve_1) - \mathcal{L}_\infty^\mathrm{mask}(\vzero)<0$ and thus $\vw_{\mathrm{mask}}^*$ is nonzero.

Next, we would like to characterize the strong convexity of  $\mathcal{L}_\infty^\mathrm{mask}(\vw)$.
For each $\vw \in \mathbb{R}^d$ with $\lVert \vw \rVert \leq R$ and unit vector $\vv \in \mathbb{R}^d$,  \Eqref{eqn:mask_infty_limit} implies
\begin{align*}
 &\quad \vv^\top \nabla_\vw^2 \mathcal{L}_\infty^\mathrm{mask}(\vw)\vv \\
 &=  \sum_{k=1}^p \left[ \left(a_k l''\left(\vw^\top \vmu^{(k)}\right) + b_k l''\left(-\vw^\top \vmu^{(k)}\right)\right) \left(\vv^\top \vmu^{(k)}\right)^2 + c_k l''\left(\vw^\top \vmu \right)(\vv^\top \vmu)^2 \right]\\
 &\geq \frac{1}{2} \sum_{k=1}^p (a_k + b_k) l''\left (R\lVert \vmu^{(k)}\rVert \right) \left(\vv^\top \vmu^{(k)} \right)^2.
\end{align*}
Recall that we have shown that $\vv \mapsto \frac{1}{2} \sum_{k=1}^p (a_k + b_k) l''\left ( R\lVert \vmu^{(k)}\rVert \right) \left(\vv^\top \vmu^{(k)} \right)^2$ has minimum $\alpha>0$ on the unit sphere $\{ \vv \in \mathbb{R}^d:\lVert \vv\rVert=1 \}$ in Appendix~\ref{proof:mask}. Hence, $\mathcal{L}_\infty^\mathrm{mask}(\vw)$ is $\alpha$-strongly convex on $\left \{\vw\in \mathbb{R}^d : \lVert \vw \rVert \leq R \right \}$. Since $ \wmask[\infty]$ and $\vw_{\mathrm{mask}}^*$ are contained in $ \left \{\vw\in \mathbb{R}^d : \lVert \vw \rVert \leq R \right \}$, we have
\begin{align*}
    &\quad \frac{\alpha}{2} \lVert \wmask[\infty] - \vw_{\mathrm{mask}}^* \rVert^2\\
    &\leq \mathcal{L}_\infty^{\mathrm{mask}}(\wmask[\infty]) - \mathcal{L}_\infty^{\mathrm{mask}}(\vw_{\mathrm{mask}}^*)\\
    &= \Big( \mathcal{L}_\infty^{\mathrm{mask}}\big(\wmask[\infty]\big)-\mathbb{E}_\infty \left [\lossmask \big (\wmask[\infty] \big) \right ]\Big) \\
    &\quad + \Big(\mathbb{E}_\infty \left [\lossmask \big (\wmask[\infty] \big) \right ]-\mathbb{E}_\infty \left [\lossmask (\vw_{\mathrm{mask}}^*  ) \right ] \Big)\\
    &\quad +\Big( \mathbb{E}_\infty \left [\lossmask\big(\vw_{\mathrm{mask}}^*\big) \right ] - \mathcal{L}_\infty^{\mathrm{mask}}\big(\vw_{\mathrm{mask}}^*\big) \Big)\\
    &\leq \Big( \mathcal{L}_\infty^{\mathrm{mask}}\big(\wmask[\infty]\big)-\mathbb{E}_\infty[\lossmask\big(\wmask[\infty]\big)]\Big)
    +\Big( \mathbb{E}_\infty[\lossmask\big(\vw_{\mathrm{mask}}^*\big)] - \mathcal{L}_\infty^{\mathrm{mask}}\big(\vw_{\mathrm{mask}}^*\big) \Big).
\end{align*}
For each $\vw \in \R^d$ with $\lVert \vw \rVert \leq R$, 
\begin{align*}
    &\quad \left|\mathcal{L}_\infty^{\mathrm{mask}}(\vw)-\mathbb{E}_\infty[\lossmask(\vw)] \right| \\
    &\leq \frac{1}{n} \left|l(\vw^\top \vmu) \right| + \frac{1}{n} \left | \sum_{k=1}^p \left(a_k l\left( \vw^\top \vmu^{(k)}\right) + b_k l \left( - \vw^\top \vmu^{(k)}  \right) + c_k l\left( \vw^\top \vmu \right)  \right) \right|\\
    &\leq \frac{1}{n}\left( l(-R \lvert \vmu \rVert) + \sum_{k=1}^p \left(a_k l\left(-R\left \lVert \vmu^{(k)} \right \rVert \right) + b_k l\left(-R\left \lVert \vmu^{(k)}\right \rVert\right) + c_k l\left( -R \lVert \vmu \rVert\right)  \right)\right)\\
    &= \frac{l(-R\lVert \vmu \rVert)}{n} \left(1 + \sum_{k=1}^p (a_k + b_k + c_k) \right)\\
    &= \frac{2l(-R\lVert \vmu \rVert) }{n}.
\end{align*}
The last two inequalities are due to $\left \lVert \vmu^{(k)}\right \rVert = \lVert \vmu \rVert$ and definition of $a_k,b_k,c_k$'s. Hence, we have
\begin{equation}\label{eqn:norm_converge}
    \frac{\alpha}{2}\lVert \wmask[\infty] - \vw_{\mathrm{mask}}^* \rVert^2 \leq \frac{4l(-R\lVert \vmu \rVert) }{n}.
\end{equation}
From triangular inequality, \Eqref{eqn:mask_uniform_converge}, and \Eqref{eqn:norm_converge}, we have
\begin{align*}
    \lVert \vw_{\mathrm{mask}}^*\rVert &\leq \lVert \wmask \rVert + \lVert \wmask[\infty] - \wmask\rVert + \lVert \vw_{\mathrm{mask}}^*-\wmask[\infty]\rVert\\
    &\leq  \lVert \wmask \rVert + \left(\frac{8l(-R\lVert \vmu \rVert)}{\alpha n}\right)^{1/2} + \left[\frac{4R}{\alpha \kappa^{1/2}} \left( \lVert \mSigma \rVert^{1/2} + \sum_{k=1}^p (a_k+b_k+c_k) \left \lVert \mSigma^{(k)} \right\rVert^{1/2} \right)\right]^{1/2}.
\end{align*}
Thus, if 
\begin{equation} \label{eqn:large_kappa}
n \geq \frac{128l(-R\lVert \vmu \rVert)}{\alpha\lVert  \vw_{\mathrm{mask}}^* \rVert^2} ,\kappa \geq \left(\frac{64R \left( \lVert \mSigma \rVert^{1/2} + \sum_{k=1}^p (a_k+b_k+c_k) \left \lVert \mSigma^{(k)} \right\rVert^{1/2} \right)}{\alpha \lVert  \vw_{\mathrm{mask}}^* \rVert^2}\right)^2,
\end{equation}
then we have $\lVert \wmask \rVert \geq \frac{\lVert \vw_{\mathrm{mask}}^* \rVert}{2}$.
Notice that the lower bounds in \Eqref{eqn:large_kappa} are numerical constants independent of $\kappa$.

\paragraph{Step 5: Sample Complexity for Directional Convergence} \quad

Assume $n$ and $\kappa$ is large enough so that satisfies \Eqref{eqn:large_kappa} and $\frac{\lVert \vw_{\mathrm{mask}}^*\rVert }{2} \epsilon \leq \lVert \wmask \rVert \epsilon < \lVert \wmask \rVert < \alpha_\kappa^{-1/2}$ We also assume the unique existence of $\hat{\vw}_{\mathrm{mask},S}^*$ which occurs almost surely. By Lemma~\ref{lemma:minimizer_dependent}, if
\begin{equation*}
    n \geq  \frac{16 C_1'M_\kappa}{\alpha_\kappa^2 \lVert \vw_{\mathrm{mask}}^* \rVert^4 \epsilon^4} \log \left( \frac{3}{\delta} \max \left\{ 1, \left( \frac{8C_2'd^{1/2}RL_\kappa}{\alpha_\kappa \lVert \vw_{\mathrm{mask}} ^* \rVert \epsilon^2} \right)^d  \right\} \right) = \frac{\Theta(1)}{\epsilon^4} \left (1+ \log \frac{1}{\epsilon} + \log \frac{1}{\delta} \right ),
\end{equation*}
then we have $\lVert\wmask - \hat{\vw}_{\mathrm{mask},S}^* \rVert \leq  \frac{\lVert \vw_{\mathrm{mask}}^*\rVert }{2}  \epsilon \leq \lVert \wmask \rVert \epsilon$ with probability at least $1-\delta$.
Furthermore, if $\lVert \wmask - \hat{\vw}_{\mathrm{mask},S}^* \rVert \leq  \lVert \wmask \rVert \epsilon$, then $\hat{\vw}_{\mathrm{mask},S}^*$ belongs to interior of $\mathcal{C}$ therefore, $\hat{\vw}_{\mathrm{mask},S}^*$ is a minimizer of $\lossmask(\vw)$ over the entire $\mathbb{R}^d$. Also, we have
\begin{align*}
    \cosim ( \hat{\vw}_{\mathrm{mask},S}^*,\wmask)&=\bigg (1- \sin^2\Big(\angle\big(\hat{\vw}_{\mathrm{mask},S}^*, \wmask\big)\Big)\bigg)^{1/2}\\
    &\geq 1- \sin\Big(\angle\big(\hat{\vw}_{\mathrm{mask},S}^*, \wmask\big)\Big)\\
    &\geq 1- \epsilon.
\end{align*}
Hence, we conclude that if $n = \frac{\Omega(1)}{\epsilon^4} \left (1+ \log \frac{1}{\epsilon} + \log \frac{1}{\delta} \right )$, then with probability at least $1-\delta$,  the masking based Mixup loss $\lossmask(\vw)$ has a unique minimizer $\hat{\vw}_{\mathrm{mask},S}^*$ and $\cosim ( \hat{\vw}_{\mathrm{mask},S}^*, \wmask)\geq 1- \epsilon$. \hfill $\square$

\section{Technical Lemmas}\label{proof:lemmas}
In this section, we introduce several technical lemmas that previously appeared. Before moving on, we define some additional notation which will be used in this section. 
\paragraph{Notation.}
For each (random) vector $\vu \in \mathbb{R}^k$, we use $(\vu)_i$ to refer to the $i$th component of $\vu$ and for each matrix $\mM \in \mathbb{R}^{k \times k}$, we use $(\mM)_i$ to represent $i$th diagonal entry of $\mM$ for $i \in [k]$. In addition, we use $\lVert \vu \rVert_\mM$ for $\left(\vu^\top\mM \vu\right)^{1/2}$.
\subsection{The Bayes Optimal Classifier for $\mathcal{D}_\kappa$}\label{proof:Bayes_optimal}
To get the Bayes optimal classifier for $\mathcal{D}_\kappa$ for $\kappa \in (0,\infty)$, we have to solve the following problem:
\begin{equation}\label{eqn:bayes_optimal}
    \min_{\vw \in \mathbb{R}^d}  \mathbb{P}_{(\vx,y) \sim \mathcal{D}_\kappa }\left[(2y-1)\vw^\top \vx >0 \right].  
\end{equation}
From our definition of $\mathcal{D}_\kappa$ we have
\begin{align*}
\mathbb{P}_{(\vx,y) \sim \mathcal{D}_\kappa }\left[(2y-1)\vw^\top \vx >0 \right] &= \mathbb{P}_{\rvx \sim N(\vmu, \kappa^{-1} \mSigma) }\left[\vw^\top \rvx>0 \right]\\
&= \mathbb{P}_{Z \sim N(0,1)} \left[\kappa^{-1/2}\lVert \vw \rVert
_\mSigma Z + \vw^\top \vmu >0 \right]\\
&= \mathbb{P} \left[ Z >  - \frac{\kappa^{1/2} \vw^\top \vmu }{\lVert \vw \rVert_\mSigma}\right].
\end{align*}
Without loss of generality, consider the solution of Problem (\ref{eqn:bayes_optimal}) with $\vw^\top \vmu = 1$. We can change Problem (\ref{eqn:bayes_optimal}) to the problem $\min_{\vw^\top \vmu = 1} \vw^\top \mSigma \vw$ and by Lemma~\ref{lemma:opt}, we have our conclusion. \hfill $\square$

\subsection{Interchanging Differentiation and Expectation}\label{appendix:leibniz}
We will introduce technical results related to interchanging differentiation and expectation. The following Lemma \ref{lemma:leibniz} is a slight variant of Leibniz's rule.
\begin{lemma}\label{lemma:leibniz}
Let $f : U \times \mathbb{R}^k \rightarrow \mathbb{R}$ be a function where $U$ is an open subset of $\mathbb{R}$. Suppose a probability distribution $\mathcal{P}$ on $\mathbb{R}^k$  satisfies the following conditions:
\begin{enumerate}
\item $\mathbb{E}_{\veta \sim \mathcal{P}}[f(\theta, \veta)] < \infty$ for all $\theta \in \mathbb{R}$.
\item For any $\theta \in U$, $\frac{\partial}{\partial \theta}f(\theta,\veta)$ exists for every $\veta \in \mathbb{R}$.
\item There is $g: \mathbb{R}^k \rightarrow \mathbb{R}$ such that $|\frac{\partial}{\partial \theta}f(\theta,\veta)| \leq g(\veta)$ for each $\theta \in U$ and $\veta \in \mathbb{R}^k$. In addition, $ \mathbb{E}_{\veta \sim \mathcal{P}}[g(\veta)] < \infty$.
\end{enumerate}
Then, $\frac{d}{d\theta} \mathbb{E}_{\veta \sim \mathcal{P}}[f(\theta, \veta)] = \mathbb{E}_{\veta \sim \mathcal{P}}\left[ \frac{\partial}{\partial \theta}f(\theta,\veta)\right]$ for all $\theta \in U$.
\end{lemma}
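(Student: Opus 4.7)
The plan is the standard route for Leibniz's rule: write the $\theta$-derivative of the expectation as the limit of difference quotients, dominate those quotients by $g$ using the mean value theorem, and then swap the limit and the expectation via the dominated convergence theorem.

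Fix $\theta \in U$ and, since $U$ is open, pick $r > 0$ with $[\theta - r, \theta + r] \subset U$. For nonzero $h$ with $|h| \leq r$, set $\phi_h(\veta) := (f(\theta + h, \veta) - f(\theta, \veta))/h$. Assumption~(2) says $\partial_\theta f(\cdot, \veta)$ exists on all of $U$, so $f(\cdot, \veta)$ is differentiable and hence continuous on the closed segment connecting $\theta$ to $\theta + h$. The mean value theorem therefore provides $\theta_h(\veta)$ strictly between $\theta$ and $\theta + h$ with $\phi_h(\veta) = \partial_\theta f(\theta_h(\veta), \veta)$; by assumption~(3) this yields the uniform domination $|\phi_h(\veta)| \leq g(\veta)$, where $g$ is integrable against $\mathcal{P}$.

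For every $\veta$, pointwise differentiability at $\theta$ yields $\phi_h(\veta) \to \partial_\theta f(\theta, \veta)$ as $h \to 0$. Invoking the dominated convergence theorem along any sequence $h_n \to 0$ with $0 < |h_n| \leq r$, and using assumption~(1) to split the expectation of $\phi_{h_n}$ into a difference quotient of two finite expectations, we obtain
\[
\lim_{n \to \infty} \frac{\mathbb{E}_{\veta \sim \mathcal{P}}[f(\theta + h_n, \veta)] - \mathbb{E}_{\veta \sim \mathcal{P}}[f(\theta, \veta)]}{h_n} \;=\; \mathbb{E}_{\veta \sim \mathcal{P}}\!\left[\partial_\theta f(\theta, \veta)\right].
\]
Since the right-hand side is independent of the choice of sequence, the two-sided limit as $h \to 0$ of the difference quotient of expectations exists and equals it, which is precisely the identity $\tfrac{d}{d\theta}\mathbb{E}_{\veta \sim \mathcal{P}}[f(\theta, \veta)] = \mathbb{E}_{\veta \sim \mathcal{P}}[\partial_\theta f(\theta, \veta)]$; letting $\theta$ range over $U$ finishes the claim. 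There is no genuine obstacle here: the only subtlety is ensuring MVT applies on the whole segment, which is why we first shrink to $[\theta - r, \theta + r] \subset U$ so that assumption~(2) supplies differentiability (hence continuity) at every interior point.
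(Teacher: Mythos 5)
Your proof is correct, and its skeleton (difference quotients, then dominated convergence along an arbitrary sequence $h_n \to 0$) is the same as the paper's. The one genuine difference is how the dominating function is produced. You apply the mean value theorem on a segment $[\theta - r, \theta + r] \subset U$ to write each quotient as $\partial_\theta f(\theta_h(\veta), \veta)$ and conclude $|\phi_h(\veta)| \le g(\veta)$ \emph{uniformly} in $h$, which is exactly the hypothesis the dominated convergence theorem needs. The paper instead argues that since $f_{\theta,m}(\veta) \to \partial_\theta f(\theta,\veta)$, one has $|f_{\theta,m}(\veta)| \le g(\veta) + 1$ ``for large enough $m$''---but that threshold depends on $\veta$, so the stated bound is not a single dominating function valid for all indices, and the paper would need your MVT step (or an explicit truncation argument) to make the DCT application airtight. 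In short, your route is the textbook-standard one and is the more rigorous of the two; nothing is missing.
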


\begin{proof}[Proof of Lemma~\ref{lemma:leibniz}]
Fix any $\theta \in U$ and let $\{h_m\}_{m \in \mathbb{N}}$ be any sequence of nonzero real numbers such that $h_m \rightarrow 0$ as $m \to \infty$ and $\theta + h_m \in U$.
Define $f_{\theta,m} : \mathbb{R} \rightarrow \mathbb{R}$ as $f_{\theta,m}(\veta) = \frac{1}{h_m}\left(f\left( \theta + h_m,\veta \right) -f\left( \theta,\veta \right) \right)$. Then, $f_{\theta,m}(\veta) \rightarrow \frac{\partial}{\partial \theta} f(\theta,\veta)$ as $m \to \infty$. 
Therefore, for large enough $m, \left | \frac{\partial}{\partial \theta} f(\theta,\veta) - f_{\theta,m}(\veta)\right|<1$ holds, which implies $|f_{\theta,m}(\veta)| \leq g(\veta)+1$ and $\mathbb{E}_{\veta \sim \mathcal{P}}[g(\veta) +1] = \mathbb{E}_{\veta \sim \mathcal{P}}[g(\veta)] +1 < \infty$. 
Then, by dominated convergence theorem,
\begin{align*}
\lim_{m \rightarrow \infty} \frac{\mathbb{E}_{\veta \sim \mathcal{P}}[f(\theta+h_m,\veta)] - \mathbb{E}_{\veta \sim \mathcal{P}} [f(\theta,\veta)]}{h_m} 
&= \lim_{m \rightarrow \infty} \frac{\mathbb{E}_{\veta \sim \mathcal{P}} [f(\theta+h_m,\veta) - f(\theta,\veta)]}{h_m}\\
&= \lim_{m \rightarrow \infty} \mathbb{E}_{\veta \sim \mathcal{P}} [f_{\theta,m}(\veta)]\\
&= \mathbb{E}_{\veta \sim \mathcal{P}} \left[\frac{\partial}{\partial \theta} f(\theta,\veta)\right].
\end{align*}
Since our choice of $\{h_m\}_{m \in \mathbb{N}}$ is arbitrary, $\frac{d}{d\theta} \mathbb{E}_{\veta \sim \mathcal{P}}[f(\theta, \veta)] = \mathbb{E}_{\veta \sim \mathcal{P}} \left[\frac{\partial}{\partial \theta} f(\theta,\veta)\right]$.
\end{proof} 

By applying Lemma~\ref{lemma:leibniz}, we can obtain the following lemma which makes us possible to investigate stationary points and strong convexity constants of expected losses in the proof of our main theorems.

\begin{lemma}\label{lemma:gradient&hessian}
For any vector $\vu \in \mathbb{R}^k$, positive definite matrix $\mM \in \mathbb{R}^{k \times k}$, functions $f :[0,1] \rightarrow [-1,1], g: [0,1] \rightarrow [0,1]$ and probability distribution $\mathcal{P}$ with support contained in $[0,1]$, we have
\begin{equation*}
    \nabla_\vw \mathbb{E}_{\eta \sim \mathcal{P}} \left [\mathbb{E}_{\rvx \sim N(f(\eta)\vu, g(\eta)\mM)}[l(\vw^\top \rvx)] \right ]
    = \mathbb{E}_{\eta \sim \mathcal{P}} \left [\mathbb{E}_{\rvx \sim N(f(\eta)\vu, g(\eta)\mM)}[l'(\vw^\top \rvx)\rvx \right ],
\end{equation*}
and
\begin{equation*}
    \nabla_\vw^2 \mathbb{E}_{\eta \sim \mathcal{P}} \left [\mathbb{E}_{\rvx \sim N(f(\eta)\vu, g(\eta)\mM)}[l(\vw^\top \rvx)] \right ]
    = \mathbb{E}_{\eta \sim \mathcal{P}} \left [\mathbb{E}_{\rvx \sim N(f(\eta)\vu, g(\eta)\mM)}[l''(\vw^\top \rvx) \rvx \rvx^\top] \right ].
\end{equation*}
\end{lemma}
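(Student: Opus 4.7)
The plan is to apply Lemma~\ref{lemma:leibniz} coordinate-by-coordinate in $\vw$, treating $\eta$ and $\rvx$ jointly via the product measure $\mathcal{Q}$ on $[0,1]\times\R^k$ defined by $\eta\sim\mathcal{P}$ and $\rvx\mid\eta\sim N(f(\eta)\vu, g(\eta)\mM)$. Under this coupling the nested expectation collapses to a single expectation $\mathbb{E}_{(\eta,\rvx)\sim\mathcal{Q}}[l(\vw^\top\rvx)]$, which is of the form that Lemma~\ref{lemma:leibniz} is designed to handle (with $\veta$ in that lemma replaced by the pair $(\eta,\rvx)$).

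For the gradient, I fix $i\in[k]$ and differentiate in $w_i$. Since $l'(z)=-\tfrac{1}{1+e^z}$ satisfies $|l'(z)|\le 1$, the chain rule gives the pointwise bound
\begin{equation*}
\left|\tfrac{\partial}{\partial w_i} l(\vw^\top\vx)\right|=|l'(\vw^\top\vx)\,x_i|\le|x_i|,
\end{equation*}
uniformly in $\vw$. To verify the dominating-function hypothesis of Lemma~\ref{lemma:leibniz} I need $\mathbb{E}_{\mathcal{Q}}[|x_i|]<\infty$; conditioning on $\eta$ and applying Jensen,
\begin{equation*}
\mathbb{E}_{\mathcal{Q}}[|x_i|\mid\eta]\le\bigl(g(\eta)(\mM)_{ii}+f(\eta)^2(\vu)_i^2\bigr)^{1/2}\le \bigl((\mM)_{ii}+(\vu)_i^2\bigr)^{1/2},
\end{equation*}
using $g(\eta)\in[0,1]$ and $|f(\eta)|\le 1$; this uniform bound is trivially $\mathcal{P}$-integrable. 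Lemma~\ref{lemma:leibniz} then yields $\tfrac{\partial}{\partial w_i}\mathbb{E}_{\mathcal{Q}}[l(\vw^\top\rvx)]=\mathbb{E}_{\mathcal{Q}}[l'(\vw^\top\rvx)\,x_i]$, and stacking the $k$ coordinates gives the first identity.

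For the Hessian, I apply the same argument to the already-interchanged gradient. Differentiating $l'(\vw^\top\vx)\,x_j$ in the $i$-th direction produces $l''(\vw^\top\vx)\,x_i x_j$, and the sharp bound $|l''(z)|=\tfrac{e^z}{(1+e^z)^2}\le\tfrac{1}{4}$ combined with AM--GM gives
\begin{equation*}
|l''(\vw^\top\vx)\,x_i x_j|\le\tfrac{1}{8}(x_i^2+x_j^2),
\end{equation*}
again independent of $\vw$. Integrability of $x_i^2$ under $\mathcal{Q}$ is immediate since $\mathbb{E}_{\mathcal{Q}}[x_i^2\mid\eta]\le(\mM)_{ii}+(\vu)_i^2$, so a second invocation of Lemma~\ref{lemma:leibniz} yields $\tfrac{\partial^2}{\partial w_i\partial w_j}\mathbb{E}_{\mathcal{Q}}[l(\vw^\top\rvx)]=\mathbb{E}_{\mathcal{Q}}[l''(\vw^\top\rvx)\,x_i x_j]$, which in matrix form is the claimed Hessian identity.

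There is no real obstacle: every step is a direct verification of the three hypotheses of Lemma~\ref{lemma:leibniz}. The only mild subtlety is that the dominating function in Lemma~\ref{lemma:leibniz} (unfortunately also denoted $g$, not to be confused with the variance scaling $g$ in the present statement) must be chosen independently of $\vw$ on all of $\R^k$, not merely on a neighborhood; the global bounds $|l'|\le 1$ and $|l''|\le 1/4$ make this automatic, so no localization in $\vw$ is needed and the argument goes through cleanly.
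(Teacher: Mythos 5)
Your proof is correct and reaches the same identities via a slightly different decomposition. The paper applies Lemma~\ref{lemma:leibniz} \emph{twice}: once to pass the derivative through the inner Gaussian expectation $\mathbb{E}_{\rvx}$ for each fixed $\eta$ (using the pointwise bound $|l'(\vw^\top \vx)(\vx)_i|\le|(\vx)_i|$), and then a second time to pass it through the outer expectation $\mathbb{E}_\eta$ (using that $\mathbb{E}_{\rvx}[|(\rvx)_i|]\le((\mM)_i+(\vu)_i^2)^{1/2}$ is an $\eta$-independent, hence $\mathcal{P}$-integrable, dominating function). You instead flatten the two expectations into one over the joint law $\mathcal{Q}$ of $(\eta,\rvx)$ and invoke Lemma~\ref{lemma:leibniz} a single time, verifying $\mathcal{Q}$-integrability of the dominating function by the same conditional Jensen bound. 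This buys a marginally cleaner argument: you never need to first establish that the inner expectation is differentiable in $\vw$, since you differentiate the pointwise integrand directly. The paper's two-step route is equivalent but makes the structure of the $\eta$-expectation over an $\eta$-parametrized Gaussian family explicit, which it then reuses verbatim for the Hessian. Your Hessian step is also fine; the bound $|l''(\vw^\top \vx) x_i x_j|\le \tfrac{1}{8}(x_i^2+x_j^2)$ via AM--GM matches the paper's handling of the cross term.
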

\begin{proof}[Proof of Lemma~\ref{lemma:gradient&hessian}]
    For each $i \in [k]$ and $\vx \in \mathbb{R}^k$, we have
    \begin{equation*}
        \left| \frac{\partial}{\partial (\vw)_i} l(\vw^\top \vx) \right| = |l'(\vw^\top \vx) (\vx)_i| = \left | \frac{(\vx)_i}{1+e^{\vw^\top \vx}} \right| \leq |(\vx)_i|.
    \end{equation*}
    Since $\mathbb{E}_{\rvx \sim N(f(\eta)\vu, g(\eta)\mM)}[|(\rvx)_i|]<\infty$ for each $\eta \in [0,1]$, by Lemma~\ref{lemma:leibniz},
    \begin{equation*}
        \frac{\partial}{\partial (\vw)_i} \mathbb{E}_{\rvx \sim N(f(\eta)\vu, g(\eta)\mM)}[l(\vw^\top \rvx)] 
        = \mathbb{E}_{\rvx \sim N(f(\eta)\vu, g(\eta)\mM)}[l'(\vw^\top \rvx)(\rvx)_i].
    \end{equation*}
    Also, 
    \begin{align*}
        \mathbb{E}_{\rvx \sim N(f(\eta)\vu, g(\eta)\mM)}[|(\rvx)_i|] &\leq \mathbb{E}_{\rvx \sim N(f(\eta)\vu, g(\eta)\mM)}[(\rvx)_i^2]^{1/2} \\
        &= \left(g(\eta) (\mM)_i + f(\eta)^2 (\vu)_i^2 \right)^{1/2}\\
        &\leq \left((\mM)_i + (\vu)_i^2 \right)^{1/2}.
    \end{align*}
    Hence, $\mathbb{E}_{\eta \sim \mathcal{P}}\left[\mathbb{E}_{\rvx \sim N(f(\eta)\vu, g(\eta)\mM)}[|(\rvx)_i|] \right]< \infty$ because the inner expectation is uniformly bounded for all $\eta \in [0,1]$. Applying Lemma~\ref{lemma:leibniz} again, we have
    \begin{equation*}
        \frac{\partial}{\partial (\vw)_i}\mathbb{E}_{\eta \sim \mathcal{P}}\left [\mathbb{E}_{\rvx \sim N(f(\eta)\vu, g(\eta) \mM)}[l(\vw^\top \rvx)]\right ] 
        = \mathbb{E}_{\eta \sim \mathcal{P}}\left [\mathbb{E}_{\rvx \sim N(f(\eta)\vmu, g(\eta) \kappa^{-1} \mSigma)}[l'(\vw^\top \rvx)(\rvx)_i] \right ],
    \end{equation*}
    and we conclude
    \begin{equation*}
        \nabla_\vw \mathbb{E}_{\eta \sim \mathcal{P}} \left [\mathbb{E}_{\rvx \sim N(f(\eta)\vu, g(\eta)\kappa^{-1}\mM)}[l(\vw^\top \rvx)] \right ]
        = \mathbb{E}_{\eta \sim \mathcal{P}} \left [\mathbb{E}_{\rvx \sim N(f(\eta)\vu, g(\eta)\kappa^{-1}\mM)}[l'(\vw^\top \rvx)\rvx] \right ].
    \end{equation*}
    For each $i,j \in [k]$ and $\vx \in \mathbb{R}^k$,
    \begin{equation*}
        \left | \frac{\partial^2}{\partial (\vw)_j \partial (\vw)_i}l(\vw^\top \vx) \right|  =  \left |l''(\vw^\top \vx) (\vx)_i (\vx)_j \right|\leq \frac{|(\vx)_i(\vx)_j|}{4}.
    \end{equation*}
    Since $\mathbb{E}_{\rvx \sim N(f(\eta)\vu, g(\eta)\mM)}[|(\rvx)_i(\rvx)_j|]<\infty$ for each $\eta \in [0,1]$, by Lemma~\ref{lemma:leibniz}, we have
    \begin{align*}
         \frac{\partial^2}{\partial (\vw)_j \partial (\vw)_i}\mathbb{E}_{\rvx \sim N(f(\eta)\vu, g(\eta)\mM)}[l(\vw^\top \rvx)] &= \frac{\partial}{\partial (\vw)_j}\mathbb{E}_{\rvx \sim N(f(\eta)\vu, g(\eta)\mM)}\left [\frac{\partial}{\partial (\vw)_i}l(\vw^\top \rvx)\right] \\
         &= \mathbb{E}_{\rvx \sim N(f(\eta)\vu, g(\eta)\mM)}\left [\frac{\partial^2}{\partial (\vw)_j \partial (\vw)_i}l(\vw^\top \rvx) \right ]\\
         &= \mathbb{E}_{\rvx \sim N(f(\eta)\vu, g(\eta)\mM)}[l''(\vw^\top \rvx)(\rvx)_i (\rvx)_j].
    \end{align*}
    Also,  $\left| \frac{\partial^2}{\partial (\vw)_j \partial (\vw)_i}\mathbb{E}_{\rvx \sim N(f(\eta)\vu, g(\eta)\mM)}[l(\vw^\top \rvx)] \right| \leq  \frac{1}{4}\mathbb{E}_{\rvx \sim N(f(\eta)\vu, g(\eta)\mM)}[|(\rvx)_i(\rvx)_j|]$ and
    \begin{align*}
        \frac{1}{4}\mathbb{E}_{\eta \sim \mathcal{P}} \left [\mathbb{E}_{\rvx \sim N(f(\eta)\vu, g(\eta)\mM)}[|(\rvx)_i(\rvx)_j|] \right ]
        &\leq \frac{1}{8} \mathbb{E}_{\eta \sim \mathcal{P}} \left [\mathbb{E}_{\rvx \sim N(f(\eta)\vu, g(\eta)\mM)}[(\rvx)_i^2 + (\rvx)_j^2] \right ]\\
        &= \frac{1}{8} \mathbb{E}_{\eta \sim \mathcal{P}} \left [\mathbb{E}_{\rvx \sim N(f(\eta)\vu, g(\eta)\mM)}[g(\eta) ((\mM)_i + (\mM)_j) + f(\eta)^2 ((\vu)_i^2 + (\vu)_j^2)] \right ]\\
        &\leq \frac{1}{8} ( \Tr (\mM) + \lVert \vu \rVert^2) <\infty.
    \end{align*}
By applying Lemma~\ref{lemma:leibniz} again, for each $i,j \in [k]$, 
\begin{equation*}
    \frac{\partial^2}{\partial (\vw)_j \partial (\vw)_i}\mathbb{E}_{\eta \sim \mathcal{P}}\left [\mathbb{E}_{\rvx \sim N(f(\eta)\vu, g(\eta)\mM)}[l(\vw^\top \rvx)] \right ] = \mathbb{E}_{\eta \sim \mathcal{P}}\left[\mathbb{E}_{\rvx \sim N(f(\eta)\vu, g(\eta)\mM)}\left[\frac{\partial^2}{\partial (\vw)_j \partial (\vw)_i}l(\vw^\top \rvx)\right]\right],
\end{equation*}
and we conclude 
\begin{equation*}
    \nabla_\vw^2 \mathbb{E}_{\eta \sim \mathcal{P}} \left [\mathbb{E}_{\rvx \sim N(f(\eta)\vu, g(\eta)\mM)}[l(\vw^\top \rvx)] \right ]
    = \mathbb{E}_{\eta \sim \mathcal{P}} \left [\mathbb{E}_{\rvx \sim N(f(\eta)\vu, g(\eta)\mM)}[l''(\vw^\top \rvx)\rvx \rvx^\top ] \right ].
\end{equation*}
\end{proof}

\subsection{Inequalities}
Let us introduce some inequalities which are used in the proof of main theorems.

The following lemma provides us computable lower bound on the strong convexity constant.
\begin{lemma} \label{lemma:ineq1}
For any $z \in \mathbb{R}$,
\begin{equation*}
    l''(z) = \frac{e^z}{(e^z+1)^2} \geq \frac{1}{4} e^{-z^2/2}.
\end{equation*}
\end{lemma}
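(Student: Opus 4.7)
The plan is to first rewrite $l''(z)$ in a symmetric form: since $l'(z) = -1/(1+e^z)$, we get
\begin{equation*}
l''(z) = \frac{e^z}{(1+e^z)^2} = \frac{1}{(e^{-z/2}+e^{z/2})^2} = \frac{1}{4\cosh^2(z/2)}.
\end{equation*}
Thus the claimed inequality is equivalent to $\cosh^2(z/2) \leq e^{z^2/2}$, i.e.\ $\cosh(z/2) \leq e^{z^2/4}$.

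The second step is to establish the auxiliary inequality $\cosh(x) \leq e^{x^2/2}$ for all $x \in \mathbb{R}$, which is a classical Gaussian-domination bound on $\cosh$. I would prove it by term-by-term comparison of Maclaurin series, using
\begin{equation*}
\cosh(x) = \sum_{n=0}^\infty \frac{x^{2n}}{(2n)!}, \qquad e^{x^2/2} = \sum_{n=0}^\infty \frac{x^{2n}}{2^n n!},
\end{equation*}
and the identity $(2n)! = 2^n n! \cdot (2n-1)!!$, which gives $2^n n! \leq (2n)!$ so that the $n$-th coefficient of $\cosh$ is dominated by that of $e^{x^2/2}$.

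Finally, I would apply this inequality with $x = z/2$ to obtain $\cosh(z/2) \leq e^{z^2/8} \leq e^{z^2/4}$, squaring gives $\cosh^2(z/2) \leq e^{z^2/2}$, and substituting back into the reformulation of $l''(z)$ yields the claim. No step should present any real obstacle; the only mild subtlety is noticing the convenient symmetric rewriting in terms of $\cosh$, which reduces the whole problem to a one-line Taylor comparison.
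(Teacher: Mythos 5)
Your proposal is correct. The rewriting $l''(z) = \tfrac{1}{4\cosh^2(z/2)}$ is right, the reduction to $\cosh(z/2)\le e^{z^2/4}$ is right, and the series comparison $\cosh(x)\le e^{x^2/2}$ via $(2n)! = 2^n n!\,(2n-1)!!\ge 2^n n!$ is a standard and valid sub-Gaussian bound. The route is genuinely different from the paper's: there, one sets $f(z)=\tfrac12 z^2 - 2\log(1+e^z)$, checks $f''(z)=1-2l''(z)>0$ so that $f$ is convex, and bounds $f$ below by its tangent line $-z-2\log 2$ at $z=0$; exponentiating that tangent-line inequality gives exactly $l''(z)\ge\tfrac14 e^{-z^2/2}$. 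Your argument trades the convexity/tangent-line step for a Taylor-coefficient comparison after a symmetric $\cosh$ rewriting. A minor bonus of your route: since $\cosh(z/2)\le e^{z^2/8}$, you actually obtain the strictly stronger bound $l''(z)\ge \tfrac14 e^{-z^2/4}$, whereas the paper's tangent-line argument is tight in the sense that it produces exactly the stated constant $e^{-z^2/2}$ (the two sides agree to second order at $z=0$ only after the $-\log$ transformation). Either bound suffices for the downstream use in the strong-convexity estimates.
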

\begin{proof}[Proof of Lemma~\ref{lemma:ineq1}]
Define a function $f: \mathbb{R} \rightarrow \mathbb{R}$ as $f(z) = \frac{1}{2} z^2 - 2\log(1+e^z)$ for each $z \in \mathbb{R}$. Then, we have
\begin{equation*}
f'(z) = z - \frac{2 e^z}{1+e^z}, f''(z) = 1- \frac{2e^z}{(e^z+1)^2} > 0,
\end{equation*}
for each $z \in \mathbb{R}$. Hence, $f$ is a convex function and $z \mapsto -z - 2 \log 2$ is a tangent line of the graph of $f(z)$ at $z=0$. Therefore, for each $z \in \mathbb{R}$, we have
\begin{equation*}
     f(z) = \frac{1}{2}z^2 - 2 \log (1+e^z) \geq -z - 2\log 2,
\end{equation*}
which implies 
\begin{equation*}
    \frac{1}{2}z^2 \geq -z +2\log(1+e^z)-2\log 2 = -\log \left(  \frac{4e^z}{(e^z+1)^2}\right) .
\end{equation*}
Thus, we conclude $l''(z) = \frac{e^z}{(e^z+1)^2} \geq \frac{1}{4} e^{-z^2/2}$ for each $z \in \mathbb{R}$.
\end{proof}

When we get lower bounds on $\lVert \w \rVert$ and $\lVert \wmask \rVert$, the following lemma is useful.
\begin{lemma} \label{lemma:ineq2}
For any $z \in \mathbb{R}$,
\begin{equation*}
    \frac{z}{1+e^z} \geq \frac{1}{2} z - \frac{1}{4}z^2.
\end{equation*}
\end{lemma}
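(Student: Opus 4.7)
The plan is to convert the inequality into a standard bound on the hyperbolic tangent. First, I would use the identity $\frac{1}{1+e^z} = \frac{1}{2}\bigl(1 - \tanh(z/2)\bigr)$, which follows from multiplying numerator and denominator by $e^{-z/2}$. This gives the clean rewrite $\frac{z}{1+e^z} - \frac{z}{2} = -\frac{z}{2}\tanh(z/2)$, so the desired inequality $\frac{z}{1+e^z} \geq \frac{z}{2} - \frac{z^2}{4}$ is equivalent to $z\,\tanh(z/2) \leq \frac{z^2}{2}$.

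Next, I would observe that both $z\,\tanh(z/2)$ (a product of two odd functions) and $z^2/2$ are even functions of $z$, so without loss of generality I may assume $z \geq 0$. The case $z=0$ is immediate, and for $z>0$ I may divide by $z$ to reduce the claim further to the one-variable inequality $\tanh(u) \leq u$ for all $u := z/2 > 0$.

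The final step is the classical bound on $\tanh$: defining $h(u) := u - \tanh(u)$, we have $h(0)=0$ and $h'(u) = 1 - \tanh'(u) = \tanh^2(u) \geq 0$, so $h$ is nondecreasing on $[0,\infty)$ and hence nonnegative there. I do not anticipate any serious obstacle in this chain; the only point requiring care is the sign-handling when dividing by $z$ (since dividing by a negative number would flip the inequality), which the evenness reduction sidesteps cleanly.
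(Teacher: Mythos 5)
Your proof is correct, and it is essentially the same argument as the paper's: since $\frac{1}{1+e^z}-\frac{1}{2}=-\frac{1}{2}\tanh(z/2)$, your reduction to $\tanh(u)\leq u$ for $u\geq 0$ is exactly the paper's claim that the sigmoid lies above its tangent line $\frac{1}{2}-\frac{z}{4}$ for $z\geq 0$ (and below it for $z\leq 0$). The only cosmetic difference is that the paper handles the sign split via convexity on $[0,\infty)$ and concavity on $(-\infty,0]$, whereas you sidestep it by noting $z\tanh(z/2)$ is even and then using monotonicity of $u-\tanh(u)$.
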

\begin{proof}[Proof of Lemma~\ref{lemma:ineq2}]
Define a function $f: \mathbb{R} \rightarrow \mathbb{R}$ as $f(z) = \frac{1}{1+e^z}$ for each $z \in \mathbb{R}$. We have
\begin{equation*}
f'(z) = -\frac{e^z}{(1+e^z)^2}, f''(z) = \frac{e^z (e^z-1)}{(1+e^z)^3}.
\end{equation*}
Therefore, $f$ is convex on $[0, \infty)$ and concave on $(-\infty, 0]$. Since $z \mapsto -\frac{1}{4}z + \frac{1}{2}$ is tangent line of $f(z)$ at $z=0$, we have
\begin{equation*}
f(z) = \frac{1}{1+e^z} \geq -\frac{1}{4}z + \frac{1}{2},
\end{equation*}
for any $z \geq 0$ and
\begin{equation*}
f(z) = \frac{1}{1+e^z} \leq -\frac{1}{4}z + \frac{1}{2},
\end{equation*}
for any $z \leq 0$. By multiplying $z$ to the inequalities above, we have our conclusion.
\end{proof}
Together with Lemma~\ref{lemma:ineq1}, the following lemma provide us lower bounds on strong convexity constant of expected loss $\mathbb{E}_\kappa [\loss (\vw)]$, $\mathbb{E}_\kappa [\lossmix (\vw)]$ and $\mathbb{E}_\kappa [\lossmask (\vw)]$.  
\begin{lemma} \label{lemma:ineq3}
Let $\vu \in \mathbb{R}^k$ be a vector and $\mM \in \mathbb{R}^{k\times k}$ be a positive definite matrix. For any vector $\vw \in \mathbb{R}^k$ and unit vector $\vv \in \mathbb{R}^k$, we have
\begin{align*}
\mathbb{E}_{\rvx \sim N(\vu,\mM)}\left [(\vv^\top \rvx)^2e^{- (\vw^\top \rvx)^2/2} \right]
\geq &\max \left \{ (\lVert \mM^{-1}\rVert +\lVert \vw \rVert^2)^{-1} ,\frac{1}{2} (\vv^\top \vu)^2 - \lVert \mM\rVert^2 \lVert \vw\rVert^4 \lVert \vu \rVert^2  \right \}\\
&\quad \cdot \left (\lVert \mM \rVert (\lVert \mM^{-1}\rVert + \lVert \vw\rVert^2 ) \right )^{-k/2} \exp \left( -\lVert \vw \rVert^2 \lVert \mM^{-1} \rVert \lVert \mM \rVert \lVert \vu\rVert^2\right).
\end{align*}
\end{lemma}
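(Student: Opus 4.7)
}

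The plan is to complete the square inside the exponent so that the expectation becomes a constant multiple of a second Gaussian moment, and then bound the normalizing constant, the exponential prefactor, and the second moment separately. Writing out the density of $N(\vu,\mM)$, the integrand has exponent $-\tfrac{1}{2}[(\vx-\vu)^\top \mM^{-1}(\vx-\vu)+(\vw^\top \vx)^2]$. Expanding and grouping, this quadratic form can be rewritten using $\mA := \mM^{-1}+\vw\vw^\top$ and $\vc:=\mM^{-1}\vu$ as
\begin{equation*}
(\vx-\mA^{-1}\vc)^\top \mA(\vx-\mA^{-1}\vc) \;-\; \vc^\top \mA^{-1}\vc \;+\; \vu^\top \mM^{-1}\vu.
\end{equation*}
Hence the target expectation equals
\begin{equation*}
\frac{1}{(|\mM|\,|\mA|)^{1/2}}\exp\!\left(-\tfrac{1}{2}\bigl(\vu^\top \mM^{-1}\vu - \vu^\top \mM^{-1}\mA^{-1}\mM^{-1}\vu\bigr)\right)\Bigl[(\vv^\top \mA^{-1}\mM^{-1}\vu)^2 + \vv^\top \mA^{-1}\vv\Bigr],
\end{equation*}
since the second factor inside the brackets is $\mathbb{E}[(\vv^\top \rvy)^2]$ for $\rvy\sim N(\mA^{-1}\mM^{-1}\vu,\mA^{-1})$.

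Next I would lower bound each of the three pieces. For the determinant factor, since $\mA \preceq (\lVert \mM^{-1}\rVert + \lVert \vw\rVert^2)\mI$ and $\mM \preceq \lVert \mM\rVert \mI$, we get $|\mM|\,|\mA|\le \bigl(\lVert\mM\rVert(\lVert\mM^{-1}\rVert+\lVert\vw\rVert^2)\bigr)^k$, which reproduces the required prefactor. For the exponential, the Sherman--Morrison formula gives $\mA^{-1}=\mM-\frac{\mM\vw\vw^\top\mM}{1+\vw^\top \mM\vw}$, so
\begin{equation*}
\vu^\top \mM^{-1}\vu - \vu^\top \mM^{-1}\mA^{-1}\mM^{-1}\vu \;=\; \frac{(\vw^\top \vu)^2}{1+\vw^\top \mM \vw}\;\le\;\lVert \vw\rVert^2\lVert\vu\rVert^2,
\end{equation*}
and the bound $\lVert\mM^{-1}\rVert\lVert\mM\rVert\ge 1$ upgrades this to $\le 2\lVert\vw\rVert^2\lVert\mM^{-1}\rVert\lVert\mM\rVert\lVert\vu\rVert^2$, which after dividing by $2$ yields the stated exponential lower bound.

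Finally, for the bracketed second-moment term, I would produce the two bounds that feed the $\max$. The trivial bound $\vv^\top \mA^{-1}\vv \ge \lVert \mA\rVert^{-1}\ge (\lVert\mM^{-1}\rVert+\lVert\vw\rVert^2)^{-1}$ immediately gives the first entry. For the second, Sherman--Morrison yields $\mA^{-1}\mM^{-1} = \mI - \frac{\mM\vw\vw^\top}{1+\vw^\top \mM\vw}$, hence
\begin{equation*}
\vv^\top \mA^{-1}\mM^{-1}\vu \;=\; \vv^\top \vu - \frac{(\vv^\top \mM\vw)(\vw^\top \vu)}{1+\vw^\top \mM\vw} \;=:\; a - b.
\end{equation*}
Applying $(a-b)^2\ge \tfrac{1}{2}a^2-b^2$ (equivalently $(a-2b)^2\ge 0$) together with $|b|\le \lVert \mM\vw\rVert \lVert\vw\rVert\lVert\vu\rVert \le \lVert\mM\rVert\lVert\vw\rVert^2\lVert\vu\rVert$ gives $(\vv^\top \mA^{-1}\mM^{-1}\vu)^2\ge \tfrac{1}{2}(\vv^\top \vu)^2 - \lVert\mM\rVert^2\lVert\vw\rVert^4\lVert\vu\rVert^2$, which is the second entry of the max. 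Combining the three lower bounds yields the claim.

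The only mildly delicate step is the last one: one must resist the temptation to bound $(a-b)^2$ by discarding $-2ab$, and instead use the AM--GM trick above to retain a full factor of $\tfrac{1}{2}a^2$ while paying only $b^2$. Everything else is routine Woodbury/operator-norm bookkeeping.
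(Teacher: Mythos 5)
Your proposal is correct and follows essentially the same route as the paper's proof: complete the square to reduce the expectation to a product of a determinant factor, an exponential factor, and a Gaussian second moment $m^2+\sigma^2$, then lower-bound the variance term for the first entry of the max and use $(a-b)^2\ge\tfrac12 a^2-b^2$ plus operator-norm bounds for the second. The only difference is stylistic — you invoke Sherman--Morrison to get closed forms such as $\frac{(\vw^\top\vu)^2}{1+\vw^\top\mM\vw}$, where the paper manipulates the matrix identities directly — and all your intermediate bounds check out.
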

\begin{proof}[Proof of Lemma~\ref{lemma:ineq3}]
By changing expectation to integral form, we have
\begin{equation*}
\mathbb{E}_{\rvx \sim N(\vu,\mM)} \left[ (\vv^\top \rvx)^2 e^{-(\vw^\top \rvx)^2/2}\right]= \int_{\mathbb{R}^k} (2 \pi)^{-k/2} \underbrace{\det(\mM)^{-1/2}}_{\mathrm{(a)}} (\vv^\top \vx)^2 \underbrace{\exp \left(-\frac{1}{2} \left(\lVert \vx-\vu\rVert_{\mM^{-1}}^2 + (\vw^\top \vx)^2 \right)\right)}_{\mathrm{(b)}} d\vx.
\end{equation*}
We can rewrite the term (a) as
\begin{align*}
    \det (\mM)^{-1/2} &= \det(\mM(\mM^{-1} +\vw\vw^\top)(\mM^{-1} +\vw\vw^\top)^{-1})^{-1/2}\\
    &= \det ((\mI+\mM\vw\vw^\top)(\mM^{-1} +\vw\vw^\top)^{-1})^{-1/2} \\
    &= \det(\mI+\mM\vw\vw^\top)^{-1/2} \det (\mM^{-1} + \vw\vw^\top )^{1/2},
\end{align*}
and the term (b) as
\begin{align*}
    &\quad \exp \left(-\frac{1}{2} \left(\lVert \vx-\vu\rVert_{\mM^{-1}}^2 + (\vw^\top \vx)^2 \right)\right)\\
    &= \exp \left(-\frac{1}{2} \left( \lVert \vx \rVert_{\mM^{-1} + \vw\vw^\top} - 2\vu^\top \mM^{-1}\vx + \lVert \vu \rVert_{\mM^{-1}} \right) \right)\\
    &=  \exp \left(-\frac{1}{2} \left( \lVert \vx \rVert_{\mM^{-1} + \vw\vw^\top}^2 - 2\vu^\top \mM^{-1} (\mM^{-1} + \vw\vw^\top)^{-1} (\mM^{-1} + \vw\vw^\top) \vx + \lVert \vu \rVert_{\mM^{-1}}^2 \right) \right)\\
    &= \exp \left(-\frac{1}{2} \left( \lVert \vx \rVert_{\mM^{-1} + \vw\vw^\top}^2 - 2\vu^\top  (\mI + \vw\vw^\top \mM)^{-1} (\mM^{-1} + \vw\vw^\top) \vx + \lVert \vu \rVert_{\mM^{-1}}^2 \right) \right)\\
    &= \exp \left( -\frac{1}{2} \left ( \left \lVert \vx- \left (\mI+\mM\vw\vw^\top \right)^{-1} \vu \right \rVert_{\mM^{-1}+ \vw\vw^\top}^2 
    + \lVert \vu \rVert_{\mM^{-1}}^2 - \left \lVert \left(\mI+\mM\vw\vw^\top \right)^{-1}\vu  \right \rVert_{\mM^{-1} + \vw\vw^\top}^2 \right)\right)\\
    &= \exp \left(-\frac{1}{2} \left \lVert \vx- \left (\mI+\mM\vw\vw^\top \right )^{-1}u \right \rVert_{\mM^{-1} + \vw \vw^\top}^2 \right) \cdot \exp \bigg(-\frac{1}{2}  \underbrace{\left(\lVert \vu \rVert_{\mM^{-1}}^2 - \left \lVert  (\mI+ \mM \vw\vw^\top )^{-1} \vu\right \rVert_{\mM^{-1} + \vw\vw^\top}^2 \right)}_{\mathrm{(c)}}\bigg).
\end{align*}
Also, the term (c) can be simplified as
\begin{align*}
&\quad \lVert \vu \rVert_{\mM^{-1}}^2 - \lVert (\mI+\mM\vw\vw^\top)^{-1}\vu \rVert_{\mM^{-1} + \vw\vw^\top}^2\\
&= \vu^\top \mM^{-1} \vu - ((\mI+\mM\vw\vw^\top)^{-1}\vu)^\top (\mM^{-1} + \vw\vw^\top )((\mI+\mM\vw\vw^\top)^{-1}\vu)\\
&= \vu^\top \mM^{-1} \vu - (\vu^\top (\mI+\vw\vw^\top \mM)^{-1}
) (\mM^{-1} (\mI + \mM \vw\vw^\top ))((\mI+\mM\vw\vw^\top)^{-1}\vu)\\
&= \vu^\top \mM^{-1} \vu - \vu^\top (\mI+\vw\vw^\top \mM)^{-1} \mM^{-1}\vu\\
&= \vu^\top (\mI+\vw\vw^\top \mM) (\mI+\vw\vw^\top \mM)^{-1} \mM^{-1} \vu - \vu^\top (\mI+ \vw\vw^\top \mM)^{-1}\mM^{-1}\vu\\
&= \vu^\top (\vw\vw^\top \mM)(\mI+\vw\vw^\top \mM)^{-1} \mM^{-1}\vu\\
&=  \vu^\top (\vw\vw^\top\mM)(\mM+\mM \vw\vw^\top \mM)^{-1}\vu\\
&= \vu^\top (\vw\vw^\top \mM ) ((\mI + \mM\vw\vw^\top) \mM)^{-1} \vu\\
&= \vu^\top \vw\vw^\top (\mI+ \mM\vw\vw^\top)^{-1} \vu.
\end{align*}
Therefore, we have 
\begin{align*}
&\quad \mathbb{E}_{\rvx \sim N(\vu,\mM)} \left[ (\vv^\top \rvx)^2 e^{-(\vw^\top \rvx)^2/2}\right]\\
&= \underbrace{ \int_{\mathbb{R}^d} (2\pi)^{-d/2} \det(\mM^{-1} + \vw \vw^\top)^{1/2} (\vv^\top \vx)^2 \exp\left( -\frac{1}{2} \left(\lVert \vx-(\mI+\mM\vw\vw^\top)^{-1}\vu\right)\rVert _{\mM^{-1}+ \vw\vw^\top}^2 \right) d\vx}_{\mathrm{(d)}}\\
& \quad \cdot \det (\mI+\mM\vw\vw^\top)^{-1/2} \exp \left( -\frac{1}{2} \vu^\top \vw\vw^\top  (\mI+\mM\vw\vw^\top)^{-1} \vu \right).
\end{align*}
By changing integral into expectation, we have
\begin{align*}
\mathrm{(d)} &= \mathbb{E}_{\rvx \sim N\left( (\mI+\mM\vw\vw^\top)^{-1}\vu, (\mM^{-1}+ \vw\vw^\top)^{-1} \right)}\left[\left(\vv^\top \rvx\right)^2\right]\\
&= \mathbb{E}_{\rz \sim N\left( \vv^\top(\mI+\mM\vw\vw^\top)^{-1}\vu, \vv^\top(\mM^{-1}+ \vw\vw^\top)^{-1}\vv \right)}\left[\rz^2\right].
\end{align*}
From $\mathbb{E}_{\rz \sim N(m, \sigma^2)}[\rz^2] = m^2+\sigma^2$ for each $m \in \mathbb{R}, \sigma >0$, note that
\begin{align*}
&\quad \mathbb{E}_{\rz \sim N\left( \vv^\top(\mI+\mM\vw\vw^\top)^{-1}\vu, \vv^\top(\mM^{-1}+ \vw \vw^\top)^{-1}\vv \right)}[\rz^2]\\
&\geq \vv^\top(\mM^{-1}+ \vw\vw^\top)^{-1}\vv \geq \lVert \mM^{-1} + \vw \vw^\top\rVert^{-1}\\
&\geq \left( \lVert \mM^{-1} \rVert + \lVert \vw\rVert^2 \right)^{-1},
\end{align*}
and
\begin{align*}
&\mathbb{E}_{\rz \sim N\left( \vv^\top(\mI+\mM\vw\vw^\top)^{-1}\vu, \vv^\top(\mM^{-1}+ \vw \vw^\top)^{-1} \vv \right)}[\rz^2]\\
&\geq ( \vv^\top(\mI+\mM\vw\vw^\top)^{-1}\vu)^2 = (\vv^\top \vu - \vv^\top \mM\vw\vw^\top (\mI + \mM \vw \vw^\top)^{-1}\vu)^2\\ 
&\geq \frac{1}{2} (\vv^\top \vu)^2 - (\vv^\top \mM \vw\vw^\top (\mI + \mM \vw \vw^\top)^{-1} \vu)^2 \\
&\geq \frac{1}{2}  (\vv^\top \vu)^2 - \lVert \mM \vw \vw^\top \rVert^2 \lVert(\mI + \mM \vw \vw^\top)^{-1} \rVert \lVert \vu \rVert^2\\
&\geq \frac{1}{2} (\vv^\top \vu)^2 - \lVert \mM\rVert^2 \lVert \vw\rVert^4 \lVert \vu \rVert^2.
\end{align*}
Also,
\begin{equation*}
\det(\mI+\mM \vw \vw^\top) = \det(\mM) \det(\mM^{-1} + \vw \vw^\top) \leq (\lVert \mM\rVert\lVert \mM^{-1} + \vw\vw^\top\rVert)^k \leq (\lVert \mM\rVert (\lVert \mM^{-1}\rVert + \lVert \vw\rVert^2))^k,
\end{equation*}
and
\begin{align*}
    \vu^\top \vw\vw^\top (\mI+\mM\vw\vw^\top )^{-1} \vu  &= \vu^\top \vw\vw^\top  (\mM^{-1}+\vw\vw^\top)^{-1} \mM^{-1} \vu\\
    &\leq \lVert \vw\vw^\top  (\mM^{-1}+\vw\vw^\top)^{-1}\mM^{-1} \rVert \lVert \vu \rVert^2\\
    &\leq \lVert \vw\vw^\top \rVert \lVert (\mM^{-1}+\vw\vw^\top)^{-1} \rVert \lVert \mM^{-1} \rVert  \lVert \vu \rVert^2\\
    &\leq \lVert \vw \rVert^2  \lVert \mM \rVert \lVert \mM^{-1} \rVert\lVert \vu\rVert^2.
\end{align*}
Hence, we have our conclusion.
\end{proof}

The following lemma makes us obtain $M$ value in Lemma~\ref{lemma:minimizer_independent} and Lemma~\ref{lemma:minimizer_dependent} when we prove the sample complexity results.

\begin{lemma} \label{lemma:ineq4}
Let $\mM \in \mathbb{R}^{k\times k}$ be a positive definite matrix. Then, we have
\begin{equation*}
    \mathbb{E}_{\rvz \sim N(\vzero,\mM)} \left[ e^{\lVert \rvz \rVert}\right] \leq e^{4 \lVert \mM\rVert} + 2^{k/2}.
\end{equation*}
\end{lemma}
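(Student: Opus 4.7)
The plan is to reduce to a standard Gaussian via the spectral bound, then split the expectation at a threshold chosen so that the two pieces match the two terms of the target bound.

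First, I would couple $\rvz$ to a standard Gaussian: since $\mM$ is positive definite, we may write $\rvz = \mM^{1/2}\rvg$ with $\rvg\sim N(\vzero,\mI_k)$, so that pointwise
\begin{equation*}
\lVert \rvz\rVert^2 = \rvg^\top \mM \rvg \le \lVert\mM\rVert\,\lVert\rvg\rVert^2.
\end{equation*}
In particular, $\lVert\rvz\rVert^2/(4\lVert\mM\rVert) \le \lVert\rvg\rVert^2/4$, which will be the key monotone bound for the tail piece.

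Next, I would split the expectation by thresholding at $4\lVert\mM\rVert$:
\begin{equation*}
\mathbb{E}\bigl[e^{\lVert\rvz\rVert}\bigr] = \mathbb{E}\bigl[e^{\lVert\rvz\rVert}\mathbf{1}_{\lVert\rvz\rVert\le 4\lVert\mM\rVert}\bigr] + \mathbb{E}\bigl[e^{\lVert\rvz\rVert}\mathbf{1}_{\lVert\rvz\rVert> 4\lVert\mM\rVert}\bigr].
\end{equation*}
The first term is at most $e^{4\lVert\mM\rVert}$ trivially. For the second term, on the event $\lVert\rvz\rVert>4\lVert\mM\rVert$ we have $\lVert\rvz\rVert < \lVert\rvz\rVert^2/(4\lVert\mM\rVert)$, so by monotonicity of $e^x$ and the pointwise bound above,
\begin{equation*}
e^{\lVert\rvz\rVert}\mathbf{1}_{\lVert\rvz\rVert> 4\lVert\mM\rVert} \le e^{\lVert\rvz\rVert^2/(4\lVert\mM\rVert)} \le e^{\lVert\rvg\rVert^2/4}.
\end{equation*}

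Finally, I would evaluate $\mathbb{E}[e^{\lVert\rvg\rVert^2/4}]$ using the chi-squared MGF (at parameter $1/4<1/2$), giving $(1-2\cdot 1/4)^{-k/2} = 2^{k/2}$; this is just the Gaussian integral $\int (2\pi)^{-k/2} e^{-\lVert\vg\rVert^2/4}\,d\vg = 2^{k/2}$. Combining the two pieces yields the claimed inequality. There is no real obstacle here — once the threshold $4\lVert\mM\rVert$ is chosen so that the sub-Gaussian tail piece can absorb the linear exponent into a quadratic one with coefficient $1/4$ (the largest coefficient for which the chi-squared MGF remains uniformly bounded by $2^{k/2}$), the rest is routine.
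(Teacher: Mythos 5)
Your proposal is correct and follows essentially the same route as the paper: split at the threshold $4\lVert\mM\rVert$, bound the low part by $e^{4\lVert\mM\rVert}$, and on the tail use $e^{\lVert\rvz\rVert}\le e^{\lVert\rvz\rVert^2/(4\lVert\mM\rVert)}$. The only (cosmetic) difference is in the last step: you whiten to a standard Gaussian and invoke the chi-squared MGF, while the paper evaluates the Gaussian integral directly with covariance $\mM$ and bounds $\det(\mI-\tfrac12\lVert\mM\rVert^{-1}\mM)^{-1/2}\le 2^{k/2}$; both yield the same constant.
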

\begin{proof}[Proof of Lemma~\ref{lemma:ineq4}]
We have
\begin{align*}
\mathbb{E}_{\rvz \sim N(\vzero,\mM)}\left[ e^{\lVert \rvz \rVert}\right] &= \mathbb{E}_{\rvz \sim N(\vzero,\mM)}\left[ e^{\lVert \rvz \rVert} \vone_{\lVert \rvz \rVert \leq 4\lVert \mM \rVert}\right] +  \mathbb{E}_{\rvz \sim N(\vzero,\mM)}\left[ e^{\lVert \rvz \rVert} \vone_{\lVert \rvz \rVert > 4 \lVert \mM \rVert}\right]\\
&\leq e^{4 \lVert \mM \rVert} + \mathbb{E}_{Z \sim N(\vzero,\mM)}\left[ e^{\frac{1}{4} \lVert \mM \rVert^{-1} \lVert \rvz \rVert^2}\right],
\end{align*}
and
\begin{align*}
\mathbb{E}_{\rvz \sim N(\vzero,\mM)}\left[ e^{\frac{1}{4} \lVert \mM\rVert^{-1} \lVert \rvz \rVert^2}\right] &= \int_{\mathbb{R}^k}  (2 \pi)^{-k/2} \det (\mM)^{-1/2} \exp\left(-\frac{1}{2} \vx^\top \left(\mM^{-1}-\frac{1}{2}\lVert \mM\rVert^{-1} \mI\right) \vx\right)d\vx\\
&= \det\left(\mM^{-1}-\frac{1}{2}\lVert \mM \rVert^{-1}\mI\right)^{-1/2}\det (\mM)^{-1/2} = \det\left(\mI - \frac{1}{2} \lVert M \rVert^{-1} \right)^{-1/2} \\
& \leq \left \lVert \left(\mI - \frac{1}{2}\lVert \mM\rVert^{-1}\mM \right)^{-1} \right \rVert^{k/2} \leq 2^{k/2} .
\end{align*}
Hence, we have our conclusion.
\end{proof}

Lastly, we introduce the lemma used in showing uniform convergence of $\mathbb{E}_\kappa[\lossmask(\vw)]$ to $\mathbb{E}_\infty[\lossmask(\vw)]$ as $\kappa \rightarrow \infty$.
\begin{lemma}\label{lemma:ineq5}
For each $m \in \mathbb{R}$ and $\sigma >0$, 
\begin{equation*}
0 \leq \mathbb{E}_{X \sim N(m, \sigma^2)}[l(X)] - l(m) \leq \sigma.
\end{equation*}
\end{lemma}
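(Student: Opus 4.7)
}
The plan is to handle the two inequalities separately, each via a one-line observation about the logistic loss $l(z) = \log(1+e^{-z})$.

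For the lower bound $\mathbb{E}_{X \sim N(m,\sigma^2)}[l(X)] - l(m) \geq 0$, I would apply Jensen's inequality directly: $l$ is convex (indeed $l''(z) = e^z/(1+e^z)^2 > 0$), so $\mathbb{E}[l(X)] \geq l(\mathbb{E}[X]) = l(m)$. This step is routine and is the same tool already invoked throughout the paper.

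For the upper bound, the key observation is that $l$ is $1$-Lipschitz, since $|l'(z)| = \frac{1}{1+e^z} \leq 1$ for all $z \in \mathbb{R}$. Hence $|l(X) - l(m)| \leq |X - m|$ pointwise. Combining with the lower bound already established (so the expectation equals its absolute value), I would write
\begin{equation*}
    \mathbb{E}_{X \sim N(m,\sigma^2)}[l(X)] - l(m) = \mathbb{E}[l(X) - l(m)] \leq \mathbb{E}[|X - m|] = \sigma \sqrt{2/\pi} \leq \sigma,
\end{equation*}
using the standard formula for the mean absolute deviation of a Gaussian in the penultimate step and $\sqrt{2/\pi} < 1$ at the end.

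There is no substantive obstacle here — the proof is a direct two-step chain (convexity for one side, Lipschitzness plus the Gaussian absolute moment for the other). The only mild care needed is in invoking the lower bound first so that the upper-bound computation can pass absolute values through the expectation without loss; this ordering is the cleanest way to avoid an unnecessary constant.
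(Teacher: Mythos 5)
Your proposal is correct and follows essentially the same route as the paper: Jensen's inequality for the lower bound, and the bound $|l'|\leq 1$ together with $\mathbb{E}|X-m|\leq\sigma$ for the upper bound (the paper gets the latter via Cauchy--Schwarz rather than the exact Gaussian formula $\sigma\sqrt{2/\pi}$, a cosmetic difference). Your remark about needing the lower bound first is unnecessary --- $\mathbb{E}[l(X)-l(m)]\leq\mathbb{E}[|l(X)-l(m)|]$ holds regardless of sign --- but it does no harm.
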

\begin{proof}[Proof of Lemma~\ref{lemma:ineq5}]
Since $l(\cdot)$ is convex, by Jensen's inequality, we have
\begin{equation*}
    \mathbb{E}_{X \sim N(m, \sigma^2)}[l(X)] \geq l(\mathbb{E}_{X \sim N(m, \sigma^2)}[X])  = l(m).
\end{equation*}
Also,  we have
\begin{equation*}
    l(m) - \mathbb{E}_{X \sim N(m, \sigma^2)}[l(X)] = \mathbb{E}_{X \sim N(m, \sigma^2)}[l(m)-l(X)] \geq \mathbb{E}_{X \sim N(m, \sigma^2)}[l'(X) (m-X)] \geq -\mathbb{E}[|X-m|],
\end{equation*}
where the last inequality used $|l'(z)|\leq 1$ for all $z \in \mathbb{R}$.
By Cauchy-Schwartz inequality, $\mathbb{E}_{X \sim N(m, \sigma^2)}[|X-m|] \leq \mathbb{E}_{X \sim N(m, \sigma^2)}[(X-m)^2]^{1/2} = \sigma$. Thus, we have our conclusion.
\end{proof}

\subsection{Concentration Bounds}
We introduce concentration bounds for i.i.d. random variables which we use in the proof of Lemma~\ref{lemma:minimizer_independent}.
\begin{lemma}\label{lemma:independent_concentration}
Let $X_1, \dots, X_N \overset{\mathrm{i.i.d.}}{\sim} \mathcal{P}$ where $\mathcal{P}$ is a probability distribution on $\mathbb{R}$. Suppose $\mathbb{E}_{X \sim \mathcal{P}}\left[e^{|X - \mathbb{E}_{X \sim \mathcal{P}}[X]|}\right]\leq M$ for some constant $M>0$. Then, for any $0<\epsilon<1$, 
\begin{equation*}
    \mathbb{P} \left [ \frac{1}{N} \sum_{i=1}^N X_i - \mathbb{E}_{X\sim \mathcal{P}}[X] > \epsilon \right] 
    \leq \exp\left ( -\frac{C \epsilon^2 N}{M} \right ),
\end{equation*}
and
\begin{equation*}
    \mathbb{P} \left [ \frac{1}{N} \sum_{i=1}^N X_i - \mathbb{E}_{X\sim \mathcal{P}}[X] < -\epsilon \right] 
    \leq \exp\left ( -\frac{C \epsilon^2 N}{M} \right ),
\end{equation*}
where $C >0$ is a universal constant.
\end{lemma}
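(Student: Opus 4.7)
The plan is to reduce this to a standard Bernstein-style Chernoff argument, using the assumption $\mathbb{E}_{X\sim\mathcal{P}}[e^{|X-\mathbb{E}[X]|}]\le M$ as a sub-exponential condition on the centered variable. Without loss of generality I center: let $Y_i = X_i - \mathbb{E}_{X\sim\mathcal{P}}[X]$, so $\mathbb{E}[Y_i]=0$ and $\mathbb{E}[e^{|Y_i|}]\le M$. Note also $M\ge 1$ (by Jensen), which together with $\epsilon<1$ keeps all parameter choices in range. I will prove only the upper tail; the lower tail follows from replacing $X_i$ by $-X_i$ (which has the same centered exponential moment).

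The first key step is a moment bound: since $e^{|Y|}\ge |Y|^k/k!$ for every $k\ge 0$, we get $\mathbb{E}[|Y|^k] \le k!\,\mathbb{E}[e^{|Y|}]\le k!\,M$. I will plug this into the Taylor expansion of the MGF. For $|t|\le 1/2$, using $\mathbb{E}[Y]=0$,
\begin{equation*}
\mathbb{E}[e^{tY}] \;=\; 1 + \sum_{k=2}^\infty \frac{t^k \,\mathbb{E}[Y^k]}{k!} \;\le\; 1 + \sum_{k=2}^\infty |t|^k M \;=\; 1 + \frac{M t^2}{1-|t|} \;\le\; 1 + 2 M t^2 \;\le\; e^{2 M t^2}.
\end{equation*}

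Next, I apply the standard Chernoff bound. By independence,
\begin{equation*}
\mathbb{P}\!\left[\tfrac{1}{N}\sum_{i=1}^N Y_i > \epsilon\right] \;\le\; e^{-tN\epsilon}\,\mathbb{E}[e^{tY}]^N \;\le\; \exp\!\big(-tN\epsilon + 2MN t^2\big),
\end{equation*}
valid for every $t\in(0,1/2]$. The minimizer of the exponent in $t$ is $t^\star = \epsilon/(4M)$. Since $M\ge 1$ and $\epsilon<1$, we have $t^\star \le 1/4 < 1/2$, so $t^\star$ is in the admissible range. Substituting back yields the exponent $-\epsilon^2 N/(8M)$, so the upper tail probability is at most $\exp\!\big(-\tfrac{1}{8}\epsilon^2 N/M\big)$. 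The symmetric argument on $-X_i$ yields the same bound for the lower tail, so the universal constant $C=1/8$ works.

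There is essentially no obstacle; the only subtlety is checking that the Chernoff optimizer $t^\star=\epsilon/(4M)$ stays within the range $|t|\le 1/2$ where the MGF bound $\mathbb{E}[e^{tY}]\le e^{2Mt^2}$ is valid. This is automatic from the hypotheses $\epsilon<1$ and $M\ge 1$, which is why the lemma is stated with $0<\epsilon<1$ rather than for arbitrary $\epsilon$. If one wanted a bound for $\epsilon \ge 1$ one would need a separate (linear-in-$\epsilon$) regime from the Bernstein inequality, but this is not needed for the uses in Lemmas~\ref{lemma:minimizer_independent} and~\ref{lemma:minimizer_dependent}, where $\epsilon$ is always taken small.
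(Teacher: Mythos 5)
Your proof is correct and follows essentially the same route as the paper's: a Chernoff bound on the centered i.i.d.\ sum, with the hypothesis $\mathbb{E}[e^{|X-\mathbb{E}[X]|}]\le M$ used to control the moment generating function for small $t$, and the conditions $\epsilon<1$, $M\ge 1$ guaranteeing the chosen $t$ stays in the admissible range. The only differences are cosmetic: you bound $\mathbb{E}[e^{tY}]\le e^{2Mt^2}$ via the Taylor expansion and the moment bounds $\mathbb{E}[|Y|^k]\le k!\,M$ and then optimize $t$, whereas the paper uses the pointwise inequalities $e^x\le 1+x+\tfrac{1}{2}x^2e^{|x|}$ and $x^2\le 4e^{|x|/2}$ with the fixed choice $t=\epsilon/(16M)$, which yields $C=1/32$ rather than your $C=1/8$.
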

\begin{proof}[Proof of Lemma~\ref{lemma:independent_concentration}]
From our definition of $M$, we have $M \geq 1$. Choose $t = \frac{\epsilon}{16M}$, then we have $0 < t <\frac{1}{2}$ since $0 < \epsilon < 1$ and $M \geq 1$. From Chernoff bound, we have
\begin{equation*}
\mathbb{P}\left[ \frac{1}{N}\sum_{i=1}^N X_i - \mathbb{E}_{X \sim \mathcal{P}}[X] > \epsilon \right] \leq \mathbb{E}\left[e^{t\left(\sum_{i=1}^N X_i - \mathbb{E}_{X\sim \mathcal{P}}[X] \right)}\right]e^{-t N \epsilon} ,
\end{equation*}
and
\begin{equation*}
\mathbb{P}\left[ \frac{1}{N}\sum_{i=1}^N X_i - \mathbb{E}_{X \sim \mathcal{P}}[X] < -\epsilon \right] \leq \mathbb{E}\left[e^{t\left(\sum_{i=1}^N \mathbb{E}_{X\sim \mathcal{P}}[X] - X_i  \right)}\right]e^{-t N \epsilon}.
\end{equation*}
Since $X_1, \dots, X_N \overset{\mathrm{i.i.d.}}{\sim} \mathcal{P}$, we have
\begin{equation*}
\mathbb{E}\left[e^{t\left(\sum_{i=1}^N X_i - \mathbb{E}_{X\sim \mathcal{P}}[X] \right)}\right]e^{-t N \epsilon} = \mathbb{E}\left[e^{t\sum_{i=1}^N\left( X_i - \mathbb{E}_{X\sim \mathcal{P}}[X] -\epsilon \right)}\right] = \mathbb{E}_{X \sim \mathcal{P}}\left[e^{t\left( X - \mathbb{E}_{X\sim \mathcal{P}}[X] -\epsilon \right)}\right]^N,
\end{equation*}
and
\begin{equation*}
\mathbb{E}\left[e^{t\left(\sum_{i=1}^N \mathbb{E}_{X\sim \mathcal{P}}[X] - X_i  \right)}\right]e^{-t N \epsilon} = \mathbb{E}\left[e^{t\sum_{i=1}^N\left(\mathbb{E}_{X\sim \mathcal{P}}[X] - X_i -\epsilon \right)}\right] = \mathbb{E}_{X \sim \mathcal{P}}\left[e^{t\left( \mathbb{E}_{X\sim \mathcal{P}}[X]-X -\epsilon \right)}\right]^N.
\end{equation*}
For each $x \in \mathbb{R}$, $e^x \leq 1+x + \frac{1}{2}x^2 e^{|x|}$. Thus,
\begin{align*}
 &\quad \mathbb{E}_{X \sim \mathcal{P}}\left[e^{t\left( X - \mathbb{E}_{X\sim \mathcal{P}}[X] -\epsilon \right)}\right]\\
 &\leq \mathbb{E}_{X \sim \mathcal{P}} \left[ 1 + t\left(X - \mathbb{E}_{X \sim \mathcal{P}}[X]-\epsilon \right) + \frac{t^2}{2} \left(X - \mathbb{E}_{X \sim \mathcal{P}}[X]-\epsilon \right)^2 e^{\left|t\left(X - \mathbb{E}_{X \sim \mathcal{P}}[X]-\epsilon \right) \right|} \right]\\
 &=1 - \epsilon t + \frac{t^2}{2} \mathbb{E}_{X \sim \mathcal{P}} \left [\left(X - \mathbb{E}_{X \sim \mathcal{P}}[X]-\epsilon \right)^2 e^{\left|t \left(X - \mathbb{E}_{X \sim \mathcal{P}}[X]-\epsilon \right) \right|}  \right],
\end{align*}
and
\begin{align*}
 &\quad \mathbb{E}_{X \sim \mathcal{P}}\left[e^{t\left( \mathbb{E}_{X\sim \mathcal{P}}[X]-X -\epsilon \right)}\right]\\
 &\leq \mathbb{E}_{X \sim \mathcal{P}} \left[ 1 + t\left(\mathbb{E}_{X \sim \mathcal{P}}[X]-X-\epsilon \right) + \frac{t^2}{2} \left(\mathbb{E}_{X \sim \mathcal{P}}[X]-X-\epsilon \right)^2 e^{\left|t\left(\mathbb{E}_{X \sim \mathcal{P}}[X]- X-\epsilon \right) \right|} \right]\\
 &=1 - \epsilon t + \frac{t^2}{2} \mathbb{E}_{X \sim \mathcal{P}} \left [\left(\mathbb{E}_{X \sim \mathcal{P}}[X]-X-\epsilon \right)^2 e^{\left|t \left( \mathbb{E}_{X \sim \mathcal{P}}[X]-X-\epsilon \right) \right|}  \right].
\end{align*}
Also, since $x^2 \leq 4 e^{ |x|/2}$ for each $x \in \mathbb{R}$, we have
\begin{align*}
&\quad \mathbb{E}_{X \sim \mathcal{P}} \left [\left(X - \mathbb{E}_{X \sim \mathcal{P}}[X]-\epsilon \right)^2 e^{\left|t\left(X - \mathbb{E}_{X \sim \mathcal{P}}[X]-\epsilon \right) \right|}  \right] \\
&\leq 4 \mathbb{E}_{X \sim \mathcal{P}} \left[ e^{\left(t+\frac{1}{2}\right) \left|X - \mathbb{E}_{X \sim \mathcal{P}}[X]-\epsilon \right|}  \right]
\leq 4 \mathbb{E}_{X \sim \mathcal{P}} \left[ e^{ \left|X- \mathbb{E}_{X \sim \mathcal{P}}[X] \right| +\epsilon }  \right] \\
&\leq 16M,
\end{align*}
and
\begin{align*}
&\quad \mathbb{E}_{X \sim \mathcal{P}} \left [\left(\mathbb{E}_{X \sim \mathcal{P}}[X]-X-\epsilon \right)^2 e^{\left|t\left(\mathbb{E}_{X \sim \mathcal{P}}[X]-X-\epsilon \right) \right|}  \right] \\
&\leq 4 \mathbb{E}_{X \sim \mathcal{P}} \left[ e^{\left(t+\frac{1}{2}\right) \left|\mathbb{E}_{X \sim \mathcal{P}}[X]-X-\epsilon \right|}  \right]
\leq 4 \mathbb{E}_{X \sim \mathcal{P}} \left[ e^{ \left|X- \mathbb{E}_{X \sim \mathcal{P}}[X] \right| +\epsilon }  \right] \\
&\leq 16M.
\end{align*}
Therefore, by substituting $t = \frac{\epsilon}{16M}$ we get
\begin{equation*}
\mathbb{E}_{X \sim \mathcal{P}}\left[e^{t\left( X - \mathbb{E}_{X\sim \mathcal{P}}[X] -\epsilon \right)}\right]  \leq 1- \epsilon t + 8Mt^2 =  1- \frac{\epsilon^2}{32 M} \leq \exp \left (-\frac{\epsilon^2}{32M} \right),
\end{equation*}
and
\begin{equation*}
\mathbb{E}_{X \sim \mathcal{P}}\left[e^{t\left( \mathbb{E}_{X\sim \mathcal{P}}[X]-X -\epsilon \right)}\right]  \leq 1- \epsilon t + 8Mt^2 =  1- \frac{\epsilon^2}{32 M} \leq \exp \left (-\frac{\epsilon^2}{32M} \right).
\end{equation*}
We conclude
\begin{equation*}
    \mathbb{P} \left [ \frac{1}{N} \sum_{i=1}^N X_i - \mathbb{E}_{X\sim \mathcal{P}}[X] > \epsilon \right] \leq \exp\left ( -\frac{C \epsilon^2 N}{M} \right ),
\end{equation*}
and
\begin{equation*}
    \mathbb{P} \left [ \frac{1}{N} \sum_{i=1}^N X_i - \mathbb{E}_{X\sim \mathcal{P}}[X] <- \epsilon \right] \leq \exp\left ( -\frac{C \epsilon^2 N}{M} \right ).
\end{equation*}
where $C = \frac{1}{32}$.
\end{proof}

We extend Lemma~\ref{lemma:independent_concentration} to non i.i.d. setting random variables with special types of dependency using the following two technical lemmas.
\begin{lemma}\label{lemma:partition}
There are disjoint sets $P_1, \dots, P_m \subset [N] \times [N]$ such that $[N] \times [N]  = \bigcup_{i=1}^m P_i \cup \{(1,1), \dots, (N,N) \}$ and 
\begin{align*}
    m = 
    \begin{cases}
    2N &\text{if $N$ is odd},\\
    2(N-1) & \text{if $N$ is even},
    \end{cases}
    \quad
    |P_k| = 
    \begin{cases}
    \frac{N-1}{2} & \text{if $N$ is odd},\\
    \frac{N}{2} & \text{if $N$ is even},
    \end{cases}
    \text{ for all } k \in [m].
\end{align*}
That is, $P_1, \dots, P_m$ and $\{(1,1), \dots, (N,N)\}$ together form a partition of $[N] \times [N]$.
Furthermore, for each $k \in [m]$ and for any distinct $(i_1, j_1), (i_2, j_2) \in P_k$, 
$\{i_1\} \cup \{j_2 \}$ and $\{i_2\} \cup \{j_2\}$ are disjoint.
\end{lemma}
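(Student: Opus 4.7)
The plan is to interpret this as an edge-decomposition problem on a directed complete graph. Identify each ordered off-diagonal pair $(i,j) \in [N]\times[N]$ with $i\neq j$ as a directed edge on vertex set $[N]$. The disjointness condition "$\{i_1,j_1\}\cap\{i_2,j_2\}=\emptyset$ for any two distinct pairs in the same $P_k$" says precisely that each $P_k$, viewed as a set of edges, is an induced matching (no two edges share a vertex), even after ignoring direction. Thus the task reduces to decomposing the edge set of the directed complete graph on $[N]$ (without self-loops) into $m$ matchings of the specified sizes.

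The main tool is the classical round-robin tournament construction, which gives a proper edge coloring of the undirected complete graph $K_N$. For $N$ even, $K_N$ decomposes into $N-1$ perfect matchings, each of size $N/2$ (e.g., fix vertex $N$, rotate the other $N-1$ on a circle, and in round $t\in\{0,\dots,N-2\}$ pair $N$ with the vertex at position $t$ and pair antipodal vertices). For $N$ odd, $K_N$ decomposes into $N$ near-perfect matchings, each of size $(N-1)/2$ (rotate all $N$ vertices on a circle, and in round $t\in\{0,\dots,N-1\}$ take the matching that leaves vertex $t$ unmatched and pairs vertices symmetric about $t$). In either case, let $M_1,\dots,M_{N-1}$ (or $M_1,\dots,M_N$) denote these undirected matchings.

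Now lift to ordered pairs by splitting each matching according to orientation. For each color class $M_c$, define
\begin{equation*}
P_c^{+} = \{(i,j): \{i,j\}\in M_c,\ i<j\}, \qquad P_c^{-} = \{(i,j): \{i,j\}\in M_c,\ i>j\}.
\end{equation*}
These give $2(N-1)$ sets when $N$ is even and $2N$ sets when $N$ is odd, each of cardinality equal to $|M_c|$, which matches the required sizes. Disjointness within each $P_c^{\pm}$ is inherited from the matching property of $M_c$, since if two ordered pairs in $P_c^{\pm}$ shared an index their underlying undirected edges would share a vertex in $M_c$. A simple counting check confirms that $\sum_k |P_k| = N(N-1)$, which together with the obvious exclusion of the diagonal $\{(i,i): i\in[N]\}$ shows these sets partition $[N]\times[N]\setminus\{(1,1),\dots,(N,N)\}$ exactly.

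The main (minor) obstacle is just presenting the round-robin construction with concrete indexing modulo $N-1$ or $N$ and verifying that the resulting collections are matchings; this is entirely elementary but notationally a bit fiddly. If desired, one could alternatively invoke this as a direct consequence of the known chromatic index of $K_N$ (namely $N-1$ for $N$ even, $N$ for $N$ odd) and skip the explicit construction, since the counts of matching sizes in such an edge coloring are determined by the handshake lemma.
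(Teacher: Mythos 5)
Your proposal is correct and is essentially the paper's own argument: the paper's congruence classes $\{(i,j): i+j\equiv k,\ i<j\}$ and $\{(i,j): i+j\equiv k,\ i>j\}$ (mod $N$ for odd $N$, mod $N-1$ with vertex $N$ handled separately for even $N$) are exactly the circle-method round-robin matchings you describe, split by orientation. The only difference is presentational — you phrase it via the proper edge coloring of $K_N$, the paper writes the color classes out explicitly — and your size and disjointness checks match the paper's.
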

\begin{proof}[Proof of Lemma~\ref{lemma:partition}]\mbox{}\\
\textbf{Case 1: $N$ is odd}\\
For each $k \in [N]$, define
\begin{align*}
    P_{2k-1} &= \{(i,j) \mid i+j \equiv k ~(\bmod ~N), i<j\},\\ 
    P_{2k} &= \{(i,j) \mid i+j \equiv k ~(\bmod ~N), i>j \}.
\end{align*}
It can be easily checked that these $P_1, \dots, P_{2N}$ are what we desired.\\
\textbf{Case 2: $N$ is even}\\
For each $k\in [N-1]$, there is unique $i_k\in [N-1]$ such that $2i_k \equiv k ~(\bmod ~(N-1))$. For each $k \in [N-1]$, define
\begin{align*}
    P_{2k-1} &= \{(i,j) \mid i+j \equiv k ~(\bmod ~(N-1)), i<j\} \cup \{(i_k,N)\},\\
    P_{2k} &= \{(i,j) \mid i+j \equiv k ~(\bmod ~(N-1)), i>j \}\cup \{(N,i_k)\},
\end{align*}
and it can be easily checked that these $P_1, \dots, P_{2(N-1)}$ are what we desired. 
\end{proof}
This is a generalized version of Cauchy-Schwartz inequality.
\begin{lemma}\label{lemma:cauchy}
Suppose $X_1, \dots, X_k$ are nonnegative random variables. Then, 
\begin{equation*}
    \mathbb{E}\left [\prod_{i=1}^k X_i \right ] \leq \left( \prod_{i=1}^k \mathbb{E}\left[X_i^k\right]\right)^{\frac{1}{k}}.
\end{equation*}
\end{lemma}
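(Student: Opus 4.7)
The plan is to establish this generalized H\"older-type inequality via a homogeneity normalization combined with the pointwise AM-GM inequality, which yields the cleanest argument without requiring induction or an appeal to H\"older's inequality as a black box.

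First I would dispose of degenerate cases. If $\mathbb{E}[X_i^k] = \infty$ for some $i$, the right-hand side is $\infty$ and the inequality is trivial. If $\mathbb{E}[X_i^k] = 0$ for some $i$, then by nonnegativity $X_i = 0$ almost surely, so $\prod_{i=1}^k X_i = 0$ a.s.\ and the left-hand side equals $0$, matching the right-hand side. It therefore suffices to treat the case $0 < \mathbb{E}[X_i^k] < \infty$ for every $i \in [k]$.

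In this main case, I would exploit the positive homogeneity of both sides of the inequality. Define the normalized variables $Y_i := X_i / \mathbb{E}[X_i^k]^{1/k}$, which are still nonnegative and satisfy $\mathbb{E}[Y_i^k] = 1$ for each $i$. Since the original inequality is invariant under the rescaling $X_i \mapsto c_i X_i$ with $c_i > 0$ (both sides scale by $\prod c_i$), it is equivalent to show $\mathbb{E}\bigl[\prod_{i=1}^k Y_i\bigr] \leq 1$.

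Now the key step is the pointwise AM-GM inequality applied to the nonnegative numbers $Y_1(\omega)^k, \dots, Y_k(\omega)^k$: for every $\omega$ in the sample space,
\begin{equation*}
    \prod_{i=1}^k Y_i(\omega) = \left(\prod_{i=1}^k Y_i(\omega)^k\right)^{1/k} \leq \frac{1}{k}\sum_{i=1}^k Y_i(\omega)^k.
\end{equation*}
Taking expectation on both sides and using linearity yields $\mathbb{E}\bigl[\prod_{i=1}^k Y_i\bigr] \leq \frac{1}{k}\sum_{i=1}^k \mathbb{E}[Y_i^k] = 1$, which is exactly what is required. I do not expect any real obstacle here, since the homogeneity reduction makes the argument essentially a one-line AM-GM application; the only care needed is the bookkeeping for the degenerate cases noted above.
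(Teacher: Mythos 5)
Your proof is correct, but it takes a genuinely different route from the paper's. The paper proves the lemma by induction on $k$: the base case $k=2$ is the Cauchy--Schwarz inequality, and the inductive step splits off the last factor via H\"older's inequality with exponents $\tfrac{m+1}{m}$ and $m+1$, then applies the induction hypothesis to the variables $X_i^{(m+1)/m}$. You instead normalize by homogeneity so that $\mathbb{E}[Y_i^k]=1$ and then apply the pointwise AM--GM inequality to $Y_1(\omega)^k,\dots,Y_k(\omega)^k$, reducing everything to $\mathbb{E}\bigl[\prod_i Y_i\bigr]\le \tfrac1k\sum_i\mathbb{E}[Y_i^k]=1$. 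Your argument is more elementary (no H\"older, no induction) and arguably cleaner; the paper's argument has the mild advantage of only ever invoking the two-function H\"older inequality, which is the form most readers take off the shelf. One small bookkeeping point: you should handle the case $\mathbb{E}[X_j^k]=0$ \emph{before} the case $\mathbb{E}[X_i^k]=\infty$, since if both occur simultaneously the right-hand side is a $0\cdot\infty$ product; treating the zero case first (where $X_j=0$ a.s.\ forces the left-hand side to vanish) makes the claim unambiguous under any convention. This does not affect the validity of the main argument.
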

\begin{proof}[Proof of Lemma~\ref{lemma:cauchy}]
We prove this by using induction on $k$. Note that the case $k=1$ is trivial and $k=2$ is Cauchy-Schwartz inequality. Suppose Lemma~\ref{lemma:cauchy} holds for $k=m$. Let $X_1, \dots, X_{m+1}$ be nonnegative random variables. By H\"older inequality, 
\begin{equation*}
    \mathbb{E}[X_1X_2 \cdots X_{m+1}] \leq \mathbb{E}\left[(X_1 \cdots X_m)^{\frac{m+1}{m}}\right]^\frac{m}{m+1}\mathbb{E}\left[X_{m+1}^{m+1}\right]^{\frac{1}{m+1}}.
\end{equation*}
From the induction hypothesis, we have
\begin{equation*}
    \mathbb{E} \left [(X_1 \cdots X_m)^{\frac{m+1}{m}} \right ] \leq \left( \prod_{i=1}^m \mathbb{E}\left [\left (X_i^\frac{m+1}{m}\right)^m \right] \right)^\frac{1}{m}= \left( \prod_{i=1}^m \mathbb{E}\left [X_i^{m+1} \right] \right)^\frac{1}{m}.
\end{equation*}
Therefore, we have 
\begin{equation*}
    \mathbb{E}[X_1X_2 \cdots X_{m+1}] \leq \left( \prod_{i=1}^{m+1} \mathbb{E}\left [X_i^{m+1} \right] \right)^\frac{1}{m+1}.
\end{equation*}
By the principle of mathematical induction, our conclusion holds for all $k \in \mathbb{N}$.
\end{proof}

Using the two lemmas above, we prove the following lemma which is used in the proof of Lemma~\ref{lemma:minimizer_dependent}.
\begin{lemma}\label{lemma:dependent_concentration}
Let $\{X_{i,j}\}_{i,j\in [N]}$ be real-valued random variables satisfy followings.
\begin{itemize}
\item $\mathbb{E}\left[ e^{|X_{i,j} - \mathbb{E}[X_{i,j}]|} \right] \leq M$ for some $M>0$.
\item If $\{ i_1\} \cup \{ j_1\}$ and $\{i_2\} \cup \{j_1\}$ are disjoint for $i_1, i_2, j_1, j_2 \in [n]$, then $X_{i_1,j_1}$ and $X_{i_2,j_2}$ are independent.
\end{itemize}
Then, for any $0<\epsilon<1$,
\begin{equation*}
    \mathbb{P} \left[ \frac{1}{N^2}\sum_{i,j=1}^N (X_{i,j} - \mathbb{E}[X_{i,j}])  >\epsilon \right] 
    \leq \exp\left (- \frac{C'\epsilon^2 N}{M} \right ),
\end{equation*}
and
\begin{equation*}
    \mathbb{P} \left[ \frac{1}{N^2}\sum_{i,j=1}^N (X_{i,j} -  \mathbb{E}[X_{i,j}]) < -\epsilon \right] 
    \leq \exp \left ( -\frac{C'\epsilon^2 N}{M} \right ),
\end{equation*}
where $C'>0$ is a universal constant.
\end{lemma}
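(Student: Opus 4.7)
The strategy is to use a Chernoff-type moment generating function bound, but since the $X_{i,j}$ are not independent, we cannot directly factor $\mathbb{E}[\exp(t \sum_{i,j}(X_{i,j}-\mathbb{E}[X_{i,j}]))]$. Instead, I will first partition $[N]\times [N]$ into the diagonal $D = \{(i,i) : i \in [N]\}$ and the off-diagonal index set, apply Lemma~\ref{lemma:partition} to further split the off-diagonal part into $m = \mathcal{O}(N)$ batches $P_1, \dots, P_m$ each of size $|P_k| = \mathcal{O}(N)$, where by the hypothesis and the disjointness property guaranteed by the partition, all random variables within a single batch are mutually independent. Writing $Y_k = \sum_{(i,j) \in P_k} (X_{i,j} - \mathbb{E}[X_{i,j}])$ and $Z = \sum_{i=1}^N (X_{i,i} - \mathbb{E}[X_{i,i}])$, we have $\sum_{i,j}(X_{i,j} - \mathbb{E}[X_{i,j}]) = Z + \sum_{k=1}^m Y_k$. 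A union bound then reduces the problem to controlling $Z$ and $\sum_k Y_k$ separately (splitting the target $N^2\epsilon$ between them in a way that leaves a factor of $N^2\epsilon/(m+1) = \Theta(N\epsilon)$ for each of the $m+1$ pieces).

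The diagonal sum $Z$ is a sum of $N$ independent variables, so Lemma~\ref{lemma:independent_concentration} directly gives $\mathbb{P}[Z > N\epsilon/\mathrm{const}] \leq \exp(-C \epsilon^2 N / M)$, which is already of the desired form (and in fact better). The key work is in handling $\sum_k Y_k$, where the batches $Y_k$ are \emph{not} independent of one another. For this I will apply Lemma~\ref{lemma:cauchy} with $k = m$ to the random variables $e^{t Y_1}, \dots, e^{t Y_m}$:
\begin{equation*}
\mathbb{E}\Bigl[\prod_{k=1}^m e^{t Y_k}\Bigr] \leq \Bigl(\prod_{k=1}^m \mathbb{E}\bigl[e^{t m Y_k}\bigr]\Bigr)^{1/m}.
\end{equation*}
Within each batch, independence lets me factor $\mathbb{E}[e^{tm Y_k}] = \prod_{(i,j)\in P_k} \mathbb{E}[e^{tm(X_{i,j}-\mathbb{E}[X_{i,j}])}]$, and the pointwise bound $e^x \leq 1 + x + \tfrac{1}{2}x^2 e^{|x|}$ together with the sub-exponential assumption on $X_{i,j}$ (exactly as in the proof of Lemma~\ref{lemma:independent_concentration}) yields $\mathbb{E}[e^{tm(X_{i,j}-\mathbb{E}[X_{i,j}])}] \leq \exp(C (tm)^2 M)$ for sufficiently small $tm$. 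Combining these gives $\mathbb{E}[e^{t \sum_k Y_k}] \leq \exp(C t^2 m^2 N M)$.

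A standard Chernoff bound then yields $\mathbb{P}[\sum_k Y_k > N^2\epsilon / \mathrm{const}] \leq \exp(-t N^2 \epsilon / \mathrm{const} + C t^2 m^2 N M)$, and optimizing in $t = \Theta(\epsilon/(NM))$ (with $m = \Theta(N)$) produces the bound $\exp(-C' \epsilon^2 N / M)$; the same argument applied to $-X_{i,j}$ handles the lower-tail statement. The main obstacle is bookkeeping: ensuring that the Cauchy--Schwartz-style inflation by a factor of $m$ in the exponent is cancelled by the $(\cdot)^{1/m}$ on the outside, verifying that the chosen $t$ keeps $tm = \Theta(\epsilon/M)$ in the regime where the quadratic Taylor bound on the MGF is valid, and tracking that the final rate comes out as $\epsilon^2 N / M$ (rather than $\epsilon^2 N^2 / M$) due to the $m = \Theta(N)$ factor loss inherent to the dependency structure.
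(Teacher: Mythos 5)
Your proposal is correct and follows essentially the same route as the paper's proof: partition the off-diagonal indices into $\Theta(N)$ independent batches via Lemma~\ref{lemma:partition}, apply the generalized Cauchy--Schwartz inequality (Lemma~\ref{lemma:cauchy}) to decouple the batches at the cost of inflating the exponent by a factor $m$, and bound each individual MGF with the Taylor-type inequality $e^x \leq 1+x+\tfrac12 x^2e^{|x|}$, which is exactly the paper's argument. The only cosmetic differences are that you peel off the diagonal with a union bound and invoke Lemma~\ref{lemma:independent_concentration} for it (the paper simply treats the diagonal as an $(m+1)$-th batch inside the same Cauchy--Schwartz product) and that you optimize $t$ at the end rather than fixing $t=\epsilon/(16(m+1)M)$ up front; both yield the same $\exp(-C'\epsilon^2N/M)$ rate.
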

\begin{proof}[Proof of Lemma~\ref{lemma:dependent_concentration}]\label{proof:dependent_concentration}
From Chernoff bound, we have
\begin{equation*}
\mathbb{P} \left[\frac{1}{N^2} \sum_{i,j=1}^N (X_{i,j}-\mathbb{E}[X_{i,j}]) > \epsilon \right] \leq  \mathbb{E}\left [e^{t \sum_{i,j=1}^N (X_{i,j} - \mathbb{E}[X_{i,j}] - \epsilon)} \right ],
\end{equation*}
and
\begin{equation*}
\mathbb{P} \left [\frac{1}{N^2} \sum_{i,j=1}^N (X_{i,j}-\mathbb{E}[X_{i,j}]) < -\epsilon \right] \leq  \mathbb{E}\left [e^{t \sum_{i,j=1}^N ( \mathbb{E}[X_{i,j}] - X_{i,j} - \epsilon)} \right ],
\end{equation*}
for any $t>0$.
We would like to get an upper bound on $\mathbb{E} \left [e^{t \sum_{i,j=1}^N (X_{i,j} - \mathbb{E}[X_{i,j}] - \epsilon)} \right ]$ and $\mathbb{E} \left [e^{t \sum_{i,j=1}^N ( \mathbb{E}[X_{i,j}]-X_{i,j} - \epsilon)} \right ]$, but the problem is that $\{X_{i,j}\}_{i,j \in [N]}$ are not independent of one another. We overcome this obstacle by applying the two lemmas above.

For our $N$, consider $m$ and $P_1, \dots, P_m \subset [N] \times [N]$ that we can obtain from Lemma~\ref{lemma:partition}. 
From our definition of $M$, we have $M \geq 1$ and then let $t = \frac{\epsilon}{16(m+1)M}$. Since we have $0 < \epsilon < 1$ and $M\geq 1$, we can check that $0 < (m+1)t < \frac{1}{2}$.

By Lemma~\ref{lemma:cauchy}, we have
\begin{align*}
    &\quad \mathbb{E} \left[ e^{t \sum_{i,j=1}^N (X_{i,j} - \mathbb{E}[X_{i,j}]-\epsilon)} \right] \\
    &= \mathbb{E}\left[e^{t \sum_{i=1}^N (X_{i,i} - \mathbb{E}[X_{i,i}]-\epsilon)}\prod_{k=1}^m e^{t \sum_{(i,j)\in P_k} (X_{i,j} - \mathbb{E}[X_{i,j}]-\epsilon)}  \right]\\
    &\leq \left( \mathbb{E}\left[e^{(m+1)t \sum_{i=1}^N (X_{i,i} - \mathbb{E}[X_{i,i}]-\epsilon)} \right] \prod_{k=1}^m \mathbb{E} \left[e^{(m+1)t \sum_{(i,j)\in P_k} (X_{i,j} - \mathbb{E}[X_{i,j}]-\epsilon))}  \right]\right)^{\frac{1}{m+1}}\\
    &= \left( \prod_{i=1}^N \mathbb{E}\left[e^{(m+1)t(X_{i,i} - \mathbb{E}[X_{i,i}]-\epsilon)} \right] \prod_{k=1}^m \left( \prod_{(i,j) \in P_k} \mathbb{E} \left[e^{(m+1)t (X_{i,j} - \mathbb{E}[X_{i,j}]-\epsilon)}  \right]\right)\right)^{\frac{1}{m+1}}\\
    &= \left(\prod_{i,j=1}^N \mathbb{E}\left[e^{(m+1)t (X_{i,j} - \mathbb{E}[X_{i,j}]-\epsilon)} \right]\right)^{\frac{1}{m+1}},
\end{align*}
and 
\begin{align*}
    &\quad \mathbb{E} \left[ e^{t \sum_{i,j=1}^N ( \mathbb{E}[X_{i,j}]-X_{i,j}-\epsilon)} \right] \\
    &= \mathbb{E}\left[e^{t \sum_{i=1}^N ( \mathbb{E}[X_{i,i}]-X_{i,i}-\epsilon)}\prod_{k=1}^m e^{t \sum_{(i,j)\in P_k} ( \mathbb{E}[X_{i,j}]-X_{i,j}-\epsilon)}  \right]\\
    &\leq \left( \mathbb{E}\left[e^{(m+1)t \sum_{i=1}^N ( \mathbb{E}[X_{i,i}]-X_{i,i}-\epsilon)} \right] \prod_{k=1}^m \mathbb{E} \left[e^{(m+1)t \sum_{(i,j)\in P_k} ( \mathbb{E}[X_{i,j}]-X_{i,j}-\epsilon))}  \right]\right)^{\frac{1}{m+1}}\\
    &= \left( \prod_{i=1}^N \mathbb{E}\left[e^{(m+1)t( \mathbb{E}[X_{i,i}]-X_{i,i}-\epsilon)} \right] \prod_{k=1}^m \left( \prod_{(i,j) \in P_k} \mathbb{E} \left[e^{(m+1)t ( \mathbb{E}[X_{i,j}]-X_{i,j}-\epsilon)}  \right]\right)\right)^{\frac{1}{m+1}}\\
    &= \left(\prod_{i,j=1}^N \mathbb{E}\left[e^{(m+1)t ( \mathbb{E}[X_{i,j}]-X_{i,j}-\epsilon)} \right]\right)^{\frac{1}{m+1}}.
\end{align*}
For each $x \in \mathbb{R}, e^x \leq 1+x+\frac{1}{2}x^2 e^{|x|}$and $x^2 \leq 4 e^{|x|/2}$. Thus, for each $(i,j) \in [N] \times [N]$, we have
\begin{align*}
    &\quad \mathbb{E}\left[e^{(m+1)t (X_{i,j} - \mathbb{E}[X_{i,j}]-\epsilon)} \right]\\
    &\leq 1 -\epsilon (m+1) t+\frac{1}{2}(m+1)^2t^2\mathbb{E}\left[(X_{i,j} - \mathbb{E}[X_{i,j}]-\epsilon)^2 e^{|(m+1)t (X_{i,j} - \mathbb{E}[X_{i,j}]-\epsilon)|}\right]\\
    &\leq 1 -\epsilon(m+1)t+2(m+1)^2t^2\mathbb{E}\left[e^{((m+1)t+\frac{1}{2}) |X_{i,j} - \mathbb{E}[X_{i,j}]-\epsilon|}\right]\\
    &\leq 1-\epsilon (m+1)t + 8M(m+1)^2t^2 = 1-\frac{\epsilon^2}{32M}\\
    &\leq \exp \left(-\frac{ \epsilon^2}{32M} \right).
\end{align*}
and
\begin{align*}
    &\quad \mathbb{E}\left[e^{(m+1)t (\mathbb{E}[X_{i,j}]- X_{i,j}-\epsilon)} \right]\\
    &\leq 1 -\epsilon (m+1) t+\frac{1}{2}(m+1)^2t^2\mathbb{E}\left[(\mathbb{E}[X_{i,j}]- X_{i,j}-\epsilon)^2 e^{|(m+1)t (\mathbb{E}[X_{i,j}]- X_{i,j}-\epsilon)|}\right]\\
    &\leq 1 -\epsilon(m+1)t+2(m+1)^2t^2\mathbb{E}\left[e^{((m+1)t+\frac{1}{2}) |\mathbb{E}[X_{i,j}]- X_{i,j}-\epsilon|}\right]\\
    &\leq 1-\epsilon (m+1)t + 8M(m+1)^2t^2 = 1-\frac{\epsilon^2}{32M} \\
    &\leq \exp \left(-\frac{ \epsilon^2}{32M} \right).
\end{align*}
Thus, we obtain 
\begin{equation*}
\mathbb{E}\left [e^{t \sum_{i,j=1}^N (X_{i,j} - \mathbb{E}[X_{i,j}] - \epsilon)}\right ] \leq \exp \left(- \frac{ \epsilon^2N^2}{32(m+1)M} \right) \leq \exp \left(- \frac{ C'\epsilon^2N}{M} \right),
\end{equation*}
and 
\begin{equation*}
\mathbb{E}\left [e^{t \sum_{i,j=1}^N ( \mathbb{E}[X_{i,j}] -X_{i,j} - \epsilon)}\right ] \leq \exp \left(- \frac{ \epsilon^2N^2}{32(m+1)M} \right) \leq \exp \left(- \frac{ C'\epsilon^2N}{M} \right),
\end{equation*}
where $C'>0$ is a universal constant, and we have our conclusion.
\end{proof}

\section{Detailed Experimental Settings and Additional Results of Section~\ref{exp:gaussian}}\label{additional_reults}

\subsection{Detailed Settings }\label{setting:gaussian}
In Section 6.1, we intentionally selected values for $\vmu$ and $\mSigma$ such that $\vmu$ is not an eigenvector of $\mSigma$. This was done to ensure that $\vmu$ is distinct in a direction from $\mSigma^{-1} \vmu$. Our selected value for $\vmu$ and $\mSigma$ is as follows, but we note that any other general choices would work.
\begin{equation*} \vmu = \begin{pmatrix} -0.1067\\ 0.2572\\ -0.2392\\ 0.4135\\ -0.2179\\ -0.3995\\ -0.1437\\ 0.5950\\ 0.1786\\ -0.2839
\end{pmatrix},
\end{equation*}
\begin{align*}
\mSigma =
\begin{pmatrix} 
0.4481& 0.0904& -0.0128& -0.0245& 0.1082& -0.2444& 0.1817& 0.0881& 0.0308& 0.0450\\ 
0.0904& 0.4727& 0.0578& -0.1620& 0.0481& -0.0629& 0.0509& -0.1300& -0.1013& -0.1706\\
-0.0128& 0.0578& 0.2477& -0.0728& -0.0490& 0.1214& 0.0189& 0.0159& 0.0064& 0.1649\\
-0.0245& -0.1620& -0.0728& 0.4457& 0.0462& -0.1026& 0.1188& -0.0066& -0.0757& 0.1065\\
0.1082& 0.0481& -0.0490& 0.0462& 0.2892& 0.0268& 0.1117& -0.1799& 0.0617& 0.1787\\
-0.2444& -0.0629& 0.1214& -0.1026& 0.0268& 0.4248& -0.0868& 0.0565& 0.0482& 0.2182\\
0.1817& 0.0509& 0.0189& 0.1188& 0.1117& -0.0868& 0.3638& -0.0980& -0.0279& 0.1658\\
0.0881& -0.1300& 0.0159& -0.0066& -0.1799& 0.0565& -0.0980& 0.4999& 0.0010& -0.0318\\
0.0308& -0.1013& 0.0064& -0.0757& 0.0617& 0.0482& -0.0279& 0.0010& 0.1550& 0.1723\\
0.0450& -0.1706& 0.1649& 0.1065& 0.1787& 0.2182& 0.1658& -0.0318& 0.1723& 0.6230
\end{pmatrix}
\end{align*}

\subsection{Addtional Results of Section~\ref{exp:gaussian}}\label{result:gaussian}
We provide additional experimental results of Section~\ref{exp:gaussian}. We follow the same setting described in Section~\ref{exp:gaussian} and Appendix~\ref{setting:gaussian} without fixing initial weights for various choices on the number of samples $n = 50,100,200,500,1000,2000$ and the separability constant $\kappa = 0.1,0.2,0.5,1.0,2.0,5.0$. We plot the average cosine similarity between the Bayes optimal direction and learned weights in Figure~\ref{exp:d=10} and one may check that the experiments align with our theoretical findings.

In addition, we provide results on $d=20$ in Figure~\ref{exp:d=20} in order to demonstrate a dependency of a sample complexity on a data dimension $d$.  For the results with $d=20$, we use additional values on the number of samples $n=5000,10000$ and we choose $\vmu \in \R^{20}$ and $\mSigma \in \R^{20 \times 20}$ as
\begin{equation*}
    \vmu = \left(\vmu_0^\top, \vmu_0^\top\right)^\top,  \mSigma = \begin{pmatrix}
\mSigma_0 & \vzero \\
\vzero &\mSigma_0 
\end{pmatrix}, 
\end{equation*}
where $\vmu_0 \in \R^{10}$ and $\mSigma_0 \in \R^{10 \times 10}$ is choice of $\vmu$ and $\mSigma$ described in Appendix~\ref{setting:gaussian}. This choice makes it easy to compare two cases $d=10$ and $d=20$. Comparison between Figure~\ref{exp:d=10} and Figure~\ref{exp:d=20} shows that sample complexities for finding optimal decision boundary significantly increase in dimension $d$.

\begin{figure}[h]
\centering
    \begin{subfigure}[$d=10$]
    {
    \centering
    \includegraphics[width = \textwidth]{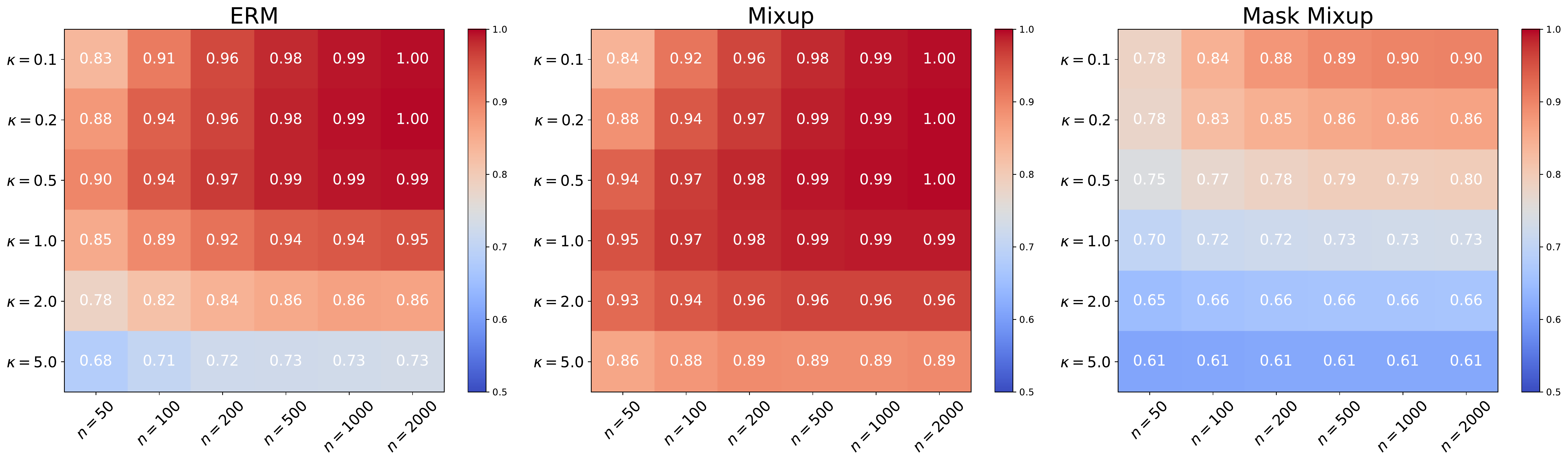}
    \label{exp:d=10}
    }
    \end{subfigure}
    \begin{subfigure}[$d=20$]
    {
    \centering
    \includegraphics[width = \textwidth]{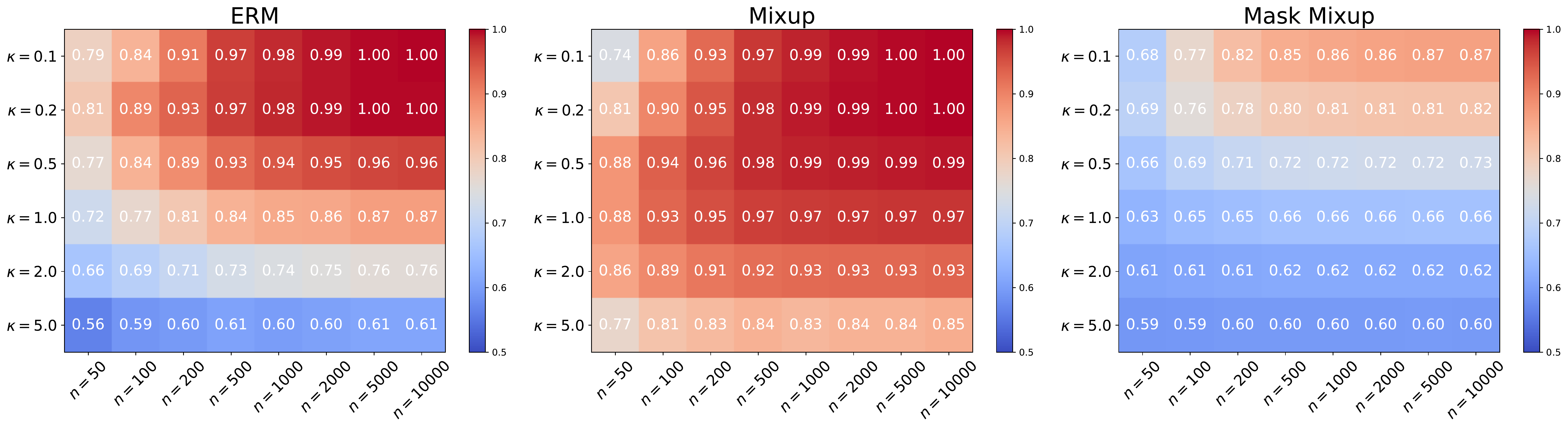}
    \label{exp:d=20}
    }
    \end{subfigure}
    \caption{The average cosine similarity between the Bayes optimal direction and learned weights}
\end{figure}


\end{document}